\title{Stochastic Online Instrumental Variable Regression:\\ Regrets for Endogeneity and Bandit Feedback}
\author{ Riccardo Della Vecchia
\\
Universit\'e de Lille, Inria, CNRS,\\
  F-59000 Lille, France\\
  \texttt{ric.della-vecchia@gmail.com}
  \and
 Debabrota Basu\\
Universit\'e de Lille, Inria, CNRS,\\
  F-59000 Lille, France\\
  \texttt{debabrota.basu@inria.fr} 
  }
 \date{}
\setlist{nosep}
\newlength{\minipagewidth}
\newtheorem{example}{Example}[section]
\newcommand{\cA}{\mathcal{A}}
\newcommand{\cB}{\mathcal{B}}
\newcommand{\cX}{\mathcal{X}}
\newcommand{\cZ}{\mathcal{Z}}
\newcommand{\bA}{\mathbf{A}}
\newcommand{\bB}{\mathbf{B}}
\newcommand{\bE}{\mathbf{E}}
\newcommand{\bG}{\mathbf{G}}
\newcommand{\bH}{\mathbf{H}}
\newcommand{\bP}{\mathbf{P}}
\newcommand{\bX}{\mathbf{X}}
\newcommand{\bZ}{\mathbf{Z}}
\newcommand{\bx}{\boldsymbol x}
\newcommand{\bv}{\boldsymbol v}
\newcommand{\bz}{\boldsymbol z}
\newcommand{\by}{\boldsymbol y}
\newcommand{\be}{\boldsymbol \eta}
\newcommand{\E}{\mathbb{E}}
\newcommand{\I}{\mathbf{I}}
\newcommand{\R}{\mathbb{R}}
\DeclareMathOperator*{\argmin}{argmin}
\DeclareMathOperator*{\argmax}{argmax}
\newcommand{\wbs}{\widehat{\boldsymbol{\Sigma}}}
\newcommand{\bsig}{\boldsymbol{\Sigma}}
\newcommand{\bt}{\boldsymbol \Theta}
\newcommand{\bb}{\boldsymbol \beta}
\newcommand{\beeta}{\boldsymbol \eta}
\newcommand{\indep}{\perp \!\!\! \perp}
\newcommand{\bep}{\boldsymbol{\epsilon}}
\DeclareFontFamily{U}{matha}{\hyphenchar\font45}
\DeclareFontShape{U}{matha}{m}{n}{
<-6> matha5 <6-7> matha6 <7-8> matha7
<8-9> matha8 <9-10> matha9
<10-12> matha10 <12-> matha12
}{}
\DeclareSymbolFont{matha}{U}{matha}{m}{n}
\DeclareFontFamily{U}{mathx}{\hyphenchar\font45}
\DeclareFontShape{U}{mathx}{m}{n}{
<-6> mathx5 <6-7> mathx6 <7-8> mathx7
<8-9> mathx8 <9-10> mathx9
<10-12> mathx10 <12-> mathx12
}{}
\DeclareSymbolFont{mathx}{U}{mathx}{m}{n}
\DeclareMathDelimiter{\vvvert} {0}{matha}{"7E}{mathx}{"17}%
\DeclarePairedDelimiter\norm\lVert\rVert
\DeclarePairedDelimiterX{\normiii}[1]
{\vvvert}
{\vvvert}
{\ifblank{#1}{\:\cdot\:}{#1}}
\DeclarePairedDelimiter{\Prfences}{(}{)}
\newcommand{\llmin}{\operatorname{\lambda_{min}}\Prfences}
\newcommand{\llmax}{\operatorname{\lambda_{max}}\Prfences}
\DeclarePairedDelimiterX{\expectarg}[1]{[}{]}{%
  \ifnum\currentgrouptype=16 \else\begingroup\fi
  \activatebar#1
  \ifnum\currentgrouptype=16 \else\endgroup\fi
}
\newcommand{\innermid}{\nonscript\;\delimsize\vert\nonscript\;}
\newcommand{\activatebar}{%
  \begingroup\lccode`\~=`\|
  \lowercase{\endgroup\let~}\innermid 
  \mathcode`|=\string"8000
}
\DeclareRobustCommand{\bigO}{%
  \text{\usefont{OMS}{cmsy}{m}{n}O}%
}
\newcommand{\tsls}{{\ensuremath{\mathsf{\text{2}SLS}}}}
\newcommand{\otsls}{{\ensuremath{\mathsf{O\text{2}SLS}}}}
\newcommand{\ofuliv}{{\ensuremath{\mathsf{OFUL\text{-}IV}}}}
\newcommand{\oful}{{\ensuremath{\mathsf{OFUL}}}}
\newcommand{\ridge}{{\ensuremath{\mathsf{Ridge}}}}
\newcommand{\erm}{{\ensuremath{\mathsf{ERM}}}}
\newcommand{\mse}{{\ensuremath{\mathsf{MSE}}}}
\newcommand{\mmsee}{{\ensuremath{\mathsf{MMSEE}}}}
\newcommand{\mmse}{{\ensuremath{\mathsf{MMSE}}}}
\newcommand{\ols}{{\ensuremath{\mathsf{OLS}}}}
\newcommand{\vaw}{{\ensuremath{\mathsf{VAWR}}}}
\newtheorem*{rep@theorem}{\rep@title}
\newcommand{\newreptheorem}[2]{%
	\newenvironment{rep#1}[1]{%
		\def\rep@title{\textbf{#2} \ref{##1}}%
		\begin{rep@theorem}}%
		{\end{rep@theorem}}}
\theoremstyle{plain}
\newtheorem{theorem}{Theorem}[section]
\newtheorem{proposition}[theorem]{Proposition}
\newtheorem{lemma}[theorem]{Lemma}
\newtheorem{corollary}[theorem]{Corollary}
\theoremstyle{definition}
\newtheorem{definition}[theorem]{Definition}
\newtheorem{assumption}[theorem]{Assumption}
\theoremstyle{remark}
\newtheorem{remark}[theorem]{Remark}
\newcommand{\red}[1]{{\color{black}#1}}
\begin{document}
\maketitle

\begin{abstract}
Endogeneity, i.e. the dependence of noise and covariates, is a common phenomenon in real data due to omitted variables, strategic behaviours, measurement errors etc.
In contrast, the existing analyses of stochastic online linear regression with unbounded noise and linear bandits depend heavily on exogeneity, i.e. the independence of noise and covariates.
Motivated by this gap, we study the \textit{over- and just-identified Instrumental Variable (IV) regression, specifically Two-Stage Least Squares, for stochastic online learning}, and propose to use an online variant of Two-Stage Least Squares, namely \otsls{}. 
We show that \otsls{} achieves $\bigO(d_{\bx}d_{\bz}\log^2 T)$ identification and $\widetilde{\bigO}(\gamma \sqrt{d_{\bz} T})$ oracle regret after $T$ interactions, where $d_{\bx}$ and $d_{\bz}$ are the dimensions of covariates and IVs, and $\gamma$ is the bias due to endogeneity.
For $\gamma=0$, i.e. under exogeneity, \otsls{} exhibits $\bigO(d_{\bx}^2 \log^2 T)$ oracle regret, which is of the same order as that of the stochastic online ridge.
Then, we leverage \otsls{} as an oracle to design \ofuliv, a stochastic linear bandit algorithm to tackle endogeneity. \ofuliv yields $\widetilde{\bigO}(\sqrt{d_{\bx}d_{\bz}T})$ regret that matches the regret lower bound under exogeneity.
For different datasets with endogeneity, we experimentally show efficiencies of \otsls{} and \ofuliv{}.
\end{abstract}

\section{Introduction}\label{s:intro}
\begin{example}[Learning price-sales dynamics]\label{example1}
    A market analyst wants to learn how price of a food item at a given day $t$ affects the sales of the item, given a daily stream of data of food prices and sales. Further goal is to help any restaurant or food production company to understand how setting price of a food item is going to increase the sales. The analyst decides to run \emph{online linear regression} to learn the relation $\mathrm{Price} \rightarrow \mathrm{Sales}$ (Fig.~\ref{fig:ex_1a}). This is equivalent to learning $\beta$ in the second-stage $\mathrm{Sales}_t = \beta \times \mathrm{Price}_t + \eta_t$ (Figure~\ref{fig:ex_1b}). The analyst considers $\eta_t$ to be an i.i.d. sub-Gaussian noise due to unintentional external factors. They are independent of the covariates in this regression, i.e. $\mathrm{MaterialCost}$ and $\mathrm{Price}$. After running regression, she learns that increasing the price decreases sale of the item, i.e. $\beta=-1$.
\end{example}
Like this example, online regression is a founding component of online learning~\citep{kivinen2004online}, sequential testing~\citep{kazerouni2021best}, contextual bandits~\citep{foster2020beyond}, and reinforcement learning~\citep{ouhamma2022bilinear}. Especially, online linear regression is widely used and analysed to design efficient algorithms with theoretical guarantees~\citep{greene2003econometric,abbasi2011online,hazan2012linear}. 
In linear regression, the \textit{outcome} (or output variable) $Y \in \R$, and the \textit{input features} (or covariates, or treatments) $\bX \in \R^d$ are related by a structural equation: $Y = \bb^T \bX + \eta$,
where $\bb$ is the \textit{true parameter} and $\eta$ is the observational noise with variance $\sigma^2$.
\textit{The goal is to estimate $\bb$ from an observational dataset.}
Two common assumptions in the analysis of linear regression are (i) bounded observations and covariates~\citep{vovk1997competitive,bartlett2015minimax,gaillard2019uniform}, and (ii) \textit{exogeneity}, i.e. independence of the noise $\eta$ and the input features $\bX$ ($\E[\eta|\bX]= 0$)~\citep{abbasi2011online,ouhamma2021stochastic}. Under exogeneity, researchers have studied scenarios where the observational noise is unbounded and has only bounded variance $\sigma^2$. But this setting asks for a different technical analysis than the bounded adversarial setting popular in online regression literature. For example, \cite{ouhamma2021stochastic} analyse online forward and online ridge regressions in the unbounded stochastic setting.

\setlength{\textfloatsep}{2pt}
\begin{repexample}{example1}[Continued]
    If a big festival happens in a city, a restaurant owner can increase price of the food knowing still the sales will increase. This event contradicts the analyst's previously learnt parameter. Thus, as she incorporates the price-sales data of this city, the estimated $\beta$ changes, for example, to $-0.5$. Because her algorithm assumes exogeneity of noise and covariates, while in reality the omitted variable $\mathrm{Event}_t$ affects both price and sales (Fig.~\ref{fig:ex_1a}). Hence, the covariate $\mathrm{Price}_t$, gets correlated with the noise, $\eta_t = \rho_s \mathrm{Event}_t + \eta_{S,t}$ (Fig.~\ref{fig:ex_1b}). $\mathrm{Price}_t$ is called an endogeneous variable.
\end{repexample}

Similar to this example, in real-life, \textit{endogeneity}, i.e. dependence between noise and covariates ($\E[\eta|X]\neq 0$)~\citep{greene2003econometric,angrist1996identification} is often observed due to omitted explanatory variables, strategic behaviours during data generation, measurement errors, the dependence of the output and the covariates on unobserved confounding variables etc.~\citep{wald1940fitting,mogstad2021causal,zhu2022causal}.
Motivated by such scenarios, \textit{we analyse online linear regression aiming to estimate $\bb$ accurately from endogenous observational data, where noise is stochastic and unbounded.}
\setlength{\textfloatsep}{2pt}
\begin{figure*}[t!]
     \centering
     \begin{subfigure}[b]{0.3\textwidth}
         \centering
         \includegraphics[width=1\columnwidth]{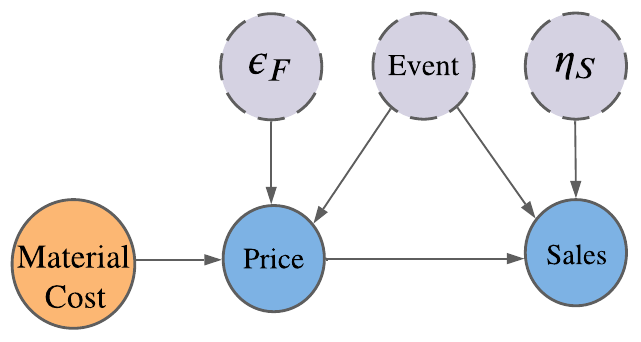}
         \caption{Graphical Model of Example~\ref{example1}}\label{fig:ex_1a}
     \end{subfigure}
     \hfill
     \begin{subfigure}[b]{0.35\textwidth}
         \centering
         \includegraphics[width=\textwidth]{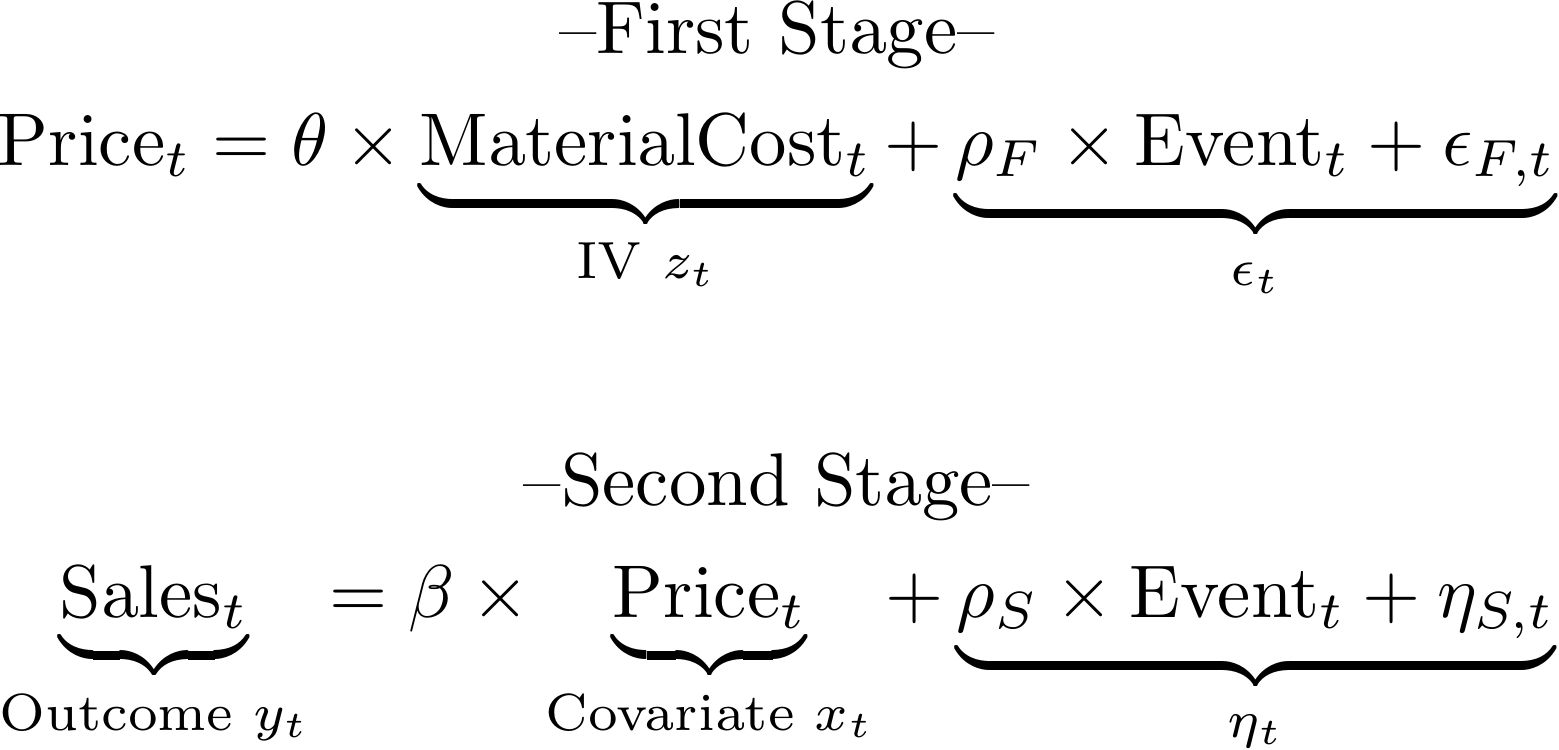}
         \caption{Two-stages of IV Regression}\label{fig:ex_1b}
     \end{subfigure}
     \hfill
     \begin{subfigure}[b]{0.18\textwidth}
         \centering
         \includegraphics[width=0.8\columnwidth]{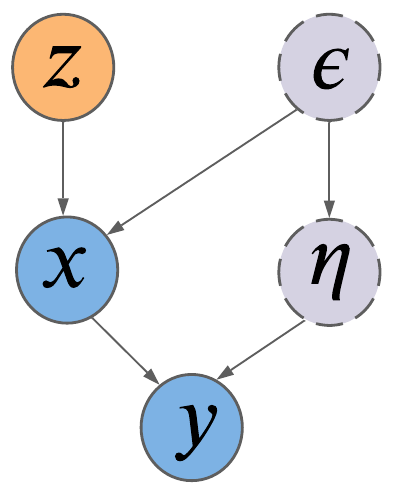}
         \caption{DAG of \tsls{} at $t$}\label{fig:pgm}
     \end{subfigure}
        \caption{Relations between IVs (green), Covariates (blue), and Outcome (blue) in Example~\ref{example1} and in general for Online Two-stage Regression. Unobserved variables are in dotted circles. Observed quantities are in solid circles.}\label{fig:example}
\end{figure*}

\textbf{Instrumental Variable (IV) regression.} Historically, IVs are introduced to identify and quantify the causal effects of endogenous covariates~\citep{newey2003instrumental}. IVs are widely used in economics~\citep{wright1928tariff,mogstad2021causal}, causal inference~\citep{rubin1974estimating,hernan2020causal,harris2022strategic}, bio-statistics and epidemiology~\citep{burgess2017review}.
One popular framework is to choose IVs which heavily influence the covariates but are independent of the exogenous noise.
Following that, one aims to learn the relations between IVs and covariates in one stage, then uses the predicted values of covariates to learn the relation between covariates and the outcome variables in the next stage.
This approach to conducting two stages of linear regression using IVs is called \textit{Two Stage Least Squares Regression} (\tsls{})~\citep{angrist1995two,angrist1996identification}. \tsls{} has become the standard tool in economics, social sciences, and statistics to study the effect of treatments on outcomes involving endogeneity. 
Recently, in machine learning, researchers have extended traditional \tsls{} to nonlinear structures, non-compliant instruments, and corrupted observations using deep learning~\citep{liu2020deep,xu2020learning,xu2021deep}, graphical models~\citep{stirn2018thompson}, and kernel regression~\citep{zhu2022causal}.
\begin{repexample}{example1}[Continued]
To rectify the error due to endogeneity, the analyst further studies the main factors behind pricing and observes that $\mathrm{Price}_t$ depends on material cost of the required components $\mathrm{MaterialCost}_t$. She also observes that $\mathrm{MaterialCost}_t$ is independent of the omitted variable $\mathrm{Event}_t$.
To rectify the error due to endogeneity, the analyst observes that $\mathrm{MaterialCost}_t$ is independent of the omitted variable $\mathrm{Event}_t$. Thus, she chooses $\mathrm{MaterialCost}_t$ to be an \textbf{Instrumental Variable (IV)}. She runs an online linear regression to learn the relation between $\mathrm{Price}_t$ and $\mathrm{MaterialCost}_t$. This is equivalent to learning $\theta$ in the first-stage $\mathrm{Price}_t = \theta \times \mathrm{MaterialCost}_t + \epsilon_t$, where exogeneity is obeyed. She learns that an increase in material cost increases the price, say $\theta = 1$. 
Now, she uses the prices predicted at time $t$, by the first-stage regressor, as the second-stage covariates, and uses them further the relation between price and sales in the second-stage. This leads her to an estimate of the second-stage parameter $\beta$, say $-1.01$, which is accurate even under endogeneity.
\end{repexample}

Thus, we are interested in studying the \tsls{} approach for online learning. But the existing analyses of \tsls{} are asymptotic, i.e. what can be learned if we have access to an infinite number of samples in an offline setting~\citep{singh2020machine,liu2020deep,nareklishvili2022deep}. In applications, this analysis is vacuous as one has access to only finite samples. Additionally, in practice, it is natural to acquire the data sequentially as treatments are chosen on-the-go and then to learn the structural equation from the sequential data~\citep{venkatraman2016online}. This motivates us to analyse the online extension of \tsls{}, named \otsls{}. 

Additionally, as stated in Example~\ref{example1}, the goal of the analyst might be to aid a restaurant owner or food supplier to strategically decide on a price of the food item from a price range and also corresponding raw material cost among multiple suppliers to improve the sales as much as possible. At this point, the estimation of the first- and second-stage parameters aid in sequential decision making, leading to the maximisation of accumulated outcomes over time. This is exactly a linear bandit problem. 
In this case, the analyst only observes the outcome corresponding to the choice of price and material cost. This is referred to as bandit feedback in online learning literature and studied under the linear bandit formulation~\citep{abbasi2011improved}. This motivates us to extend \otsls{} to\textit{ linear bandits, where both bandit feedback and endogeneity occur}. In this paper, we investigate two questions:

1. \textit{What is the upper bound on the loss in performance for deploying parameters estimated by \otsls{} instead of the true parameters $\bb$? How does estimating the true parameters $\bb$ influence different performance metrics under endogeneity?}

2. \textit{Can we design efficient algorithms for linear bandits with endogeneity by using \otsls{}?}

\noindent\textbf{Our contributions.} Our investigation has led to

1. \textit{A non-asymptotic analysis of \otsls{}:} First, we identify three notions of regret: \textit{identification regret}, \textit{oracle regret}, and \textit{population regret}. Though all of them are of same order under exogeneity, we show that the relations are more nuanced under endogeneity and unbounded noise \red{(see Appendix~\ref{sec:reg_big})}. We focus specifically on the identification regret, i.e. the sum of differences between the estimated parameters $\lbrace \bb_t\rbrace_{t=1}^T$, and the true parameter $\bb$, and \textit{oracle regret}, i.e. the sum of differences between the losses incurred by the estimated parameters $\lbrace \bb_t\rbrace_{t=1}^T$, and the true parameter $\bb$. 
In Section~\ref{sec:otsls}, we theoretically show that \otsls{} achieve $\bigO(d_{\bx}d_{\bz} \log^2 T)$ identification regret and $\bigO(d_{\bx}d_{\bz} \log^2 T + \gamma \sqrt{d_{\bz} T \log T})$ oracle regret after receiving $T$ samples from the observational data. Identification regret of \otsls{} is $d_{\bz}$ multiplicative factor higher than regret of online linear regression under exogeneity, and oracle regret is $\bigO(\gamma \sqrt{d_{\bz} T \log T})$ additive factor higher. These are the costs that \otsls{} pay for tackling endogeneity in two stages. In our knowledge, \textit{we are the first to propose a non-asymptotic regret analysis of \otsls{} with stochastic and unbounded noise}. \red{Due to two-stages and endogeneity, we can't rely on standard techniques and need to prove concentration bounds for dependent r.v. (Appendix~\ref{sec:concentration_correlated}) leading to a novel analysis for Thm. \ref{thm:reg_otsls_oracle}.}
 We also experimentally demonstrate efficiency of \otsls{} on synthetic and real-data with endoegeneity (Section~\ref{sec:exp_main}, Appendix~\ref{App:exp}).

2. \textit{\ofuliv{} for linear bandits with endogeneity:} In Section~\ref{sec:ofuliv}, we study the linear bandit problem with endogeneity. \textit{We design an extension of OFUL algorithm used for linear bandit with exogeneity, namely \ofuliv{}}, to tackle this problem. \ofuliv{} uses \otsls{} to estimate the parameters, and corresponding confidence bounds on $\bb$ to balance exploration--exploitation. 
\red{Lemma 3.3 derives a new confidence ellipsoid around O2SLS estimator $\bb_t$ with a new design matrix $\hat{\boldsymbol{H}}_t$. Following existing derivations for ridge would ask algorithm to know tight upper bounds on the hidden paramenter unlike us.}
We show that \ofuliv{} achieve $\bigO(\sqrt{d_{\bx}d_{\bz} T}\log T)$ regret after $T$ interactions. We further experimentally validate that \ofuliv{} incurs lower regret under endeogeneity than \oful{}~\citep{abbasi2011improved} (Section~\ref{sec:exp_main}, Appendix~\ref{App:exp}).

\textbf{Related work: Online regression without endogeneity.} Our analysis of \otsls{} extends the tools and techniques of online linear regression without endogeneity.  Analysis of online linear regression began with~\citep{foster1991prediction,littlestone1991line}. \cite{vovk1997competitive,vovk2001competitive} show that forward and ridge regressions achieve $\bigO(d_{\bx} Y_{\max}^2 \log T)$ for outcomes with bound $Y_{\max}$. \cite{bartlett2015minimax} generalise the analysis further by considering the features known in hindsight. \cite{gaillard2019uniform} improve the analysis further to propose an optimal algorithm and a lower bound. \textit{These works perform an adversarial analysis with bounded outcomes, covariates, and observational noise, while we focus on the stochastic setting.} \cite{ouhamma2021stochastic} study the stochastic setting with bounded input features and unbounded noise. But they need to assume independence of noise and input features. \textit{In this paper, we analyse online \tsls{} under endogeneity and unbounded (stochastic) noise.} We do not assume to know the bound on the outcome and derive high probability bounds for any bounded sequence of features. Previously, \cite{venkatraman2016online} studied \otsls{} for system identification but provided only asymptotic analysis.

\textbf{Related Work: Linear bandits without endogeneity.} Linear bandits generalise the setting of online linear regression under bandit feedback~\citep{abbasi2011improved,abbasi2012online,foster2020beyond}. To be specific, in bandit feedback, the algorithm observes only the outcomes for the input features that it has chosen to draw during an interaction. Popular algorithm design techniques, such as optimism-in-the-face-of-uncertainty and Thompson sampling, are extended to propose OFUL~\citep{abbasi2012online} and LinTS~\citep{Abeille2017LinearTS}, respectively. OFUL and LinTS algorithms demonstrate $\bigO(d\sqrt{T}\log T)$ and $\bigO(d^{1.5}\sqrt{T}\log T)$ regret under exogeneity.
\textit{Here, we use \otsls{} as a regression oracle to develop \ofuliv{} for linear bandits with endogeneity. We prove that \ofuliv{} achieves $\bigO(d\sqrt{T}\log T)$ regret.}

\textbf{Related Work: Instrument-armed bandits.} \cite{kallus2018instrument} is the first to study endogeneity and instrumental variables in a stochastic bandit setting. \cite{stirn2018thompson} propose a Thompson sampling-type algorithm for stochastic bandits, where endogeneity arises due to non-compliant actions. But both \cite{kallus2018instrument} and \cite{stirn2018thompson} study only the finite-armed bandit setting where arms are independent of each other. In this paper, \textit{we study the stochastic linear bandits with endogeneity requiring different techniques of analysis and algorithm design.}
\cite{krishnamurthy2018semiparametric} studies a linear contextual bandit setup close to ours, but they assume arm-independent and bounded noise, and thus, yielding significantly different analysis.

\textbf{Notations:} Matrices and vectors are denoted by bold capital and bold small letters (e.g. $\bA$ and $\mathbf{a}$). $\norm{\cdot}_p$ is $l_p$ norm of a vector. $\sigma_{\min}(\cdot)$ is minimum singular value and $\normiii{\cdot}_2$ is operator norm of a matrix. 
For any vector $y \in \mathbb{R}^n$ and a positive definite matrix $\bA \in \mathbb{R}^{n\times n}$, let us define the norm $\| \by \|_\bA \triangleq \sqrt{\by^T \bA \by}=\sqrt{\langle\by, \bA \by\rangle} $.

\section{Background: Instrumental variables and offline two-stage least squares}
We are given an observational dataset $\{\bx_i, y_i\}_{i=1}^n$ consisting of $n$ pairs of input features and outcomes. Here, $y_i \in \R$ and $\bx_i \in \R^{d_{\bx}}$. Inputs and outcomes are stochastically generated using a linear model
\begin{equation}
	y_{i}=\bb^\top \bx_{i} +\eta_{i}\tag{Second stage},\label{eq:second_stage}
\end{equation}
where $\bb \in \R^{d_{\bx}}$ is the \textit{unknown true parameter vector} of the linear model, and $\eta_{i} \sim \mathcal{N}(0, \sigma_{\eta}^2)$ is the \textit{unobserved error} representing all causes of $y_{i}$ other than $\bx_{i}$. It is assumed that the error terms $\eta_i$ are independent and identically distributed with bounded variance $\sigma_{\eta}^2$.
The parameter vector $\bb$  quantifies the causal effect on $y_{i}$ due to a unit change in a component of $\bx_{i}$, while retaining other causes of $y_{i}$ constant. 
The goal of linear regression is to estimate $\bb$ by \textit{minimising the square loss over dataset}~\citep{Brier1950VERIFICATIONOF}, i.e. $\widehat{\bb} \triangleq \argmin_{\bb'} \sum_{i=1}^n (y_i - \bb'^\top \bx_i)^2$.

The obtained solution is called the Ordinary Least Square (OLS) estimate of $\bb$~\citep{wasserman2004all}, and is a corner stone of online regression~\citep{gaillard2019uniform} and linear bandit algorithms~\citep{foster2020beyond}.
Specifically, if we define the input feature matrix to be $\bX_{n} \triangleq [\bx_{1}, \bx_{2}, \ldots, \bx_{n}]^{\top} \in \R^{n\times d_{\bx}}$, the outcome vector to be $\by_n \triangleq [y_1, \ldots, y_n]^{\top}$, and the noise vector is $\beeta_n \triangleq [\eta_1, \ldots, \eta_n]^{\top}$, OLS estimator is
\begin{align*}
    {\widehat {\bb }}_{\mathrm {OLS} }
&\triangleq
    (\bX_{n}^{\top }\bX_{n})^{-1}\bX_{n}^{\top }\by_n
=
    \bb +(\bX_{n}^{\top }\bX_{n})^{-1}\bX_{n}^{\top }\beeta_n
\end{align*}
\textit{If $\bx_i$ and $\eta_i$ are independent}, i.e. under exogeneity, the OLS estimator is unbiased: $\mathbb E[\widehat \bb_{\mathrm {OLS}} ] = \bb$. Furthermore, it is also asymptotically consistent since $\bX_{n}^{\top }\beeta_n\overset{p}{\rightarrow} 0$ implies ${\widehat {\bb }}_{\mathrm {OLS} } \overset{p}{\rightarrow} \bb$~\citep{greene2003econometric}.

In practice, the input features $\bx$ and the noise $\eta$ are often correlated~\citep[Chapter 8]{greene2003econometric}. As in Figure~\ref{fig:pgm}, this dependence, called \textit{endogeneity}, is modelled with \textit{a confounding unobserved random variable} $\bep$.
To compute an unbiased estimate of $\bb$ under endogeneity, the Instrumental Variables (IVs) $\bz$ are introduced~\citep{angrist1996identification,newey2003instrumental}. IVs are chosen such that they are highly correlated with endogenous components of $\bx$ (relevance condition) but are independent of noise $\eta$ (exogeneity of $\bz$). 
This leads to Two-stage Least Squares (\tsls{}) approach to IV regression~\citep{angrist1995two,angrist1996identification}. We further assume that IVs, $\bZ_n \triangleq [\bz_{1}, \ldots, \bz_{n}]^{\top} \in \R^{n\times d_{\bz}}$, cause linear effects on the endogenous covariates
\begin{equation}
    \bX_{n}= \bZ_{n}\bt  + \bE_n.\tag{First stage}\label{eq:first_stage}
\end{equation}
$\bt\in\mathbb R^{d_{\bz}\times d_{\bx}}$ is the unknown first-stage parameter matrix and $\bE_n \triangleq [\bep_{1}, \ldots, \bep_{n}]^{\top} \in\mathbb R^{n\times d_{\bx}}$ is the unobserved noise matrix leading to confounding in the second stage.
First-stage is a ``classic'' \emph{multiple regression} with covariates $\bz$ independent of the noise $\bep\sim \mathcal{N}(0, \sigma_{\bep}^2 \I_{d_{\bx}})$~\citep[Ch. 13]{wasserman2004all}. Thus, OLS in first-stage yields an estimate $\widehat{\bt}_n \triangleq (\bZ_{n}^{\top }\bZ_{n})^{-1}\bZ_{n}^{\top }\bX_n$, leading to the \tsls{} estimator:
\begin{align}\label{eq:tsls}
\widehat{\bb}_{\mathrm{2SLS}}
&\triangleq
    \left(\widehat{\bt}^{\top}_n \bZ_{n}^\top  \bZ_{n} \widehat{\bt}_n\right)^{-1} \widehat{\bt}^{\top}_n\bZ_{n}^\top \by_{n}\notag\\
&=
    \left(\widehat{\bX}_{n}^\top  \widehat{\bX}_{n} \right)^{-1} \widehat{\bX}_{n}^\top \by_{n}. \tag{2SLS}
\end{align}
Here, $\widehat{\bX}_{n}$ is the predicted covariate from first-stage. Given $\mathbb E[\bz_i  \eta_i]=0$ in the true model, we observe that
$\widehat{\bb}_{\mathrm{2SLS}}
=
    \left(\widehat{\bX}_{n}^\top  \widehat{\bX}_{n} \right)^{-1} \widehat{\bX}_{n}^\top  \bX_{n} \bb+\left(\widehat{\bX}_{n}^\top  \widehat{\bX}_{n} \right)^{-1} \widehat{\bX}_{n}^\top  \beeta_n \overset{p}{\rightarrow} \bb,
$ 
as $n \to \infty$.
Since $\bx$ and $\beeta$ are correlated, \tsls{} estimator is not unbiased in finite-time. When $d_{\bx}=d_{\bz}$, we call $\bz$ \textit{just-identified IVs}. When $d_{\bx} < d_{\bz}$, we call $\bz$ \textit{over-identified IVs}. In this paper, we analyse both of the conditions.

\begin{assumption}\label{assumption:2sls}
The assumptions for conducting \tsls{} are~\citep{greene2003econometric,hernan2020causal}:
\begin{enumerate}[topsep=0pt,parsep=0pt,itemsep=0pt,leftmargin=*]
    \item \textbf{Endogeneity of $\bx$.} The second stage input features $\bx$ and noise $\eta$ are not independent: $\bx\not\indep\eta$.
    \item \textbf{Exogeneity of $\bz$.} IV random variables are independent of the noise in the second stage: $\bz\indep\eta$.
    \item \textbf{Relevance Condition of IVs}. For every $t \in \mathbb N$, the variables $\bz$ and $\bx$ are correlated: $\bz \not\indep \bx$.
\end{enumerate}
\end{assumption}
\citep[Sec.8.2]{greene2003econometric} and \citep[Ch. 16]{hernan2020causal} state these three as the \textit{strictly necessary conditions for the existence of IVs} in offline IV-regression. Thus, we assume only these necessary conditions to hold for our analysis of online \tsls.

\section{\otsls: Online two-stage least squares regression}\label{sec:otsls}
In this section, we describe the problem setting and schematic of \textit{Online Two-Stage Least Squares Regression}, in brief \otsls. Following that, we first define two notions of regret: \textit{identification} and \textit{oracle}, measuring accuracy of estimating the true parameter and the predicted outcomes, respectively. We provide a theoretical analysis of \otsls{} and upper bound the two types of regret (Section~\ref{sec:otsls_theory}). 

\textbf{\otsls.} In the online setting of IV regression, the data $(\bx_1, \bz_1, y_1), \ldots, (\bx_t, \bz_t, y_t), \ldots$ arrives in a stream. 
Following \tsls{} model (Fig.~\ref{fig:pgm}), data is generated endogeneously
\begin{align}\label{model1}
        \bx_t
=   
    \bt^\top\bz_t  + \bep_t &&
    y_t 
= 
    \boldsymbol{\beta}^\top \bx_t + \eta_t,
\end{align}
such that $\bx_t \not\indep \eta_t$ and $\bz_t \indep \eta_t$ for all $t \in \mathbb N$.
At each step $t$, the online IV regression algorithm is served with a new input feature $\bx_t$ and an IV $\bz_t$. At time $t+1$, the algorithm aims to yield an estimate of the parameter $\bb_t$ and predict an outcome $\widehat{y}_{t+1} \triangleq \bb_t^\top \bx_{t+1} \in \R$ using the data $\lbrace (\bx_s, \bz_s, y_s)\rbrace_{s=1}^t$ observed so far. 
Following the prediction, Nature reveals the true outcome $y_{t+1}$. 

To address endogeneity in this problem, we propose an online form of the \tsls{} estimator. Modifying Eq.~\eqref{eq:tsls}, we obtain the \otsls{} estimator that is computed for the prediction at time $t+1$, using information up to time $t$:
\begin{align}\label{eq:IV-estimator}
 \bb_{t}
&\triangleq
    \left(\widehat{\bX}_{t}^\top  \widehat{\bX}_{t} \right)^{-1} \widehat{\bX}_{t}^\top \by
    =
    \left(\sum_{s=1}^{t} \widehat{\bx}_{s} \widehat{\bx}_s^\top\right)^{-1}\,  \sum_{s=1}^{t} \widehat{\bx}_s y_s  \notag\\
&=
    (\I - \bG_{\widehat{\bx},t}^{-1} \widehat \bx_{t} \widehat \bx_{t}^\top ) \bb_{t-1} +\bG_{\widehat{\bx},t}^{-1}  {\widehat\bx}_{t} y_{t}.\tag{\otsls}%
\end{align}
Here, $\bG_{\widehat{\bx},t} \triangleq  \sum_{s=1}^{t} \widehat\bx_s \widehat \bx_s^\top = \bG_{\widehat{\bx},t-1} + \widehat\bx_{t}  \widehat\bx_{t}^\top $. We need the predicted value of covariate $\widehat\bx_{t} \triangleq \widehat{\bt}_{t-1} \bz_{t}$ and output $y_{t}$ for the iterative update of $\bb_t$ given the estimates at previous time $\bb_{t}$ and $\bG_{\widehat{\bx},t}$.
To predict the covariate $\widehat \bx_{t}$, we leverage an iterative update rule of $\widehat{\bt}_{t}$ from $\widehat{\bt}_{t-1}$ by using $\bz_{t}$, $\bx_{t}$ (\Cref{app:2sls}). In brief, for $\lambda>0$,
\begin{align}\label{eq:bt}
\widehat\bt_{t}
&= \left(\sum_{s=1}^{t-1} {\bz}_{s} {\bz}_s^\top + \lambda \I_{d_{\bz}}\right)^{-1}  \sum_{s=1}^{t-1} {\bz}_s \bx_s^{\top} 
= 
(\I -\bG_{{\bz},t-1}^{-1} \bz_t \bz_t^\top) \widehat\bt_{t-1} +\bG_{{\bz},t-1}^{-1}  {\bz}_t \bx_t^\top.
\end{align}
Finally, we use the \otsls{} estimator at step $t+1$ for the prediction $\widehat{y}_{t+1}= \bb_{t}^{\top} \bx_{t+1}$, as in Algorithm~\ref{alg:o2sls}. 

\setlength{\textfloatsep}{2pt}
\begin{algorithm}[h!]
\caption{\otsls{}}\label{alg:o2sls}
\begin{algorithmic}[1]
\State{\textbf{Input:} Initialisation parameters {$\bb_{0}, \widehat\bt_{0}, \lambda$}}
\For{$t = 1, \ldots, T$}
\State{Observe $\bz_t$ generated i.i.d. by Nature, and $\bx_t$ sampled from Eq.~\eqref{model1} given $\bz_{t}$}
\State{Compute first-stage and second-stage estimates $\bb_{t-1}$ and $\widehat\bt_{t-1}$ as per \Cref{eq:bt} and \Cref{eq:IV-estimator}
}
\State{Predict $\widehat{y}_t= \bb_{t-1}^{\top} \bx_t$}
\State{Observe $y_t$ generated by Nature}
\EndFor
\end{algorithmic}
\end{algorithm}

\textbf{Computational complexity.}  For second-stage, we compute $\bG_{\widehat{\bx},t}^{-1}$ from $\bG_{\widehat{\bx},t-1}^{-1}$ in $O(d_{\bx}^2)$ time using the Sherman–Morrison formula. We store $\bG_{\widehat{\bx},t}^{-1}$ after each step, which requires $O(d_{\bx}^2)$ memory. However, the first-stage update has $O(d_{\bz}^2+d_{\bz} d_{\bx})$ time complexity and requires $O(d_{\bz}^2)$ memory.
Thus, \otsls{} exhibits quadratic space and time complexity.

\begin{remark}[\textit{Proper online learnin}g]
As in improper online learning algorithms, we could use $\bx_t$ and $\bz_t$ that we observe before committing to the estimate $\bb_{t}$, and use it to predict $\widehat{y}_t$~\citep{vovk2001competitive}. Since we cannot use $y_t$ for this estimate, we have to modify \tsls{} to incorporate this additional knowledge. We avoid this modification, and follow a proper online learning approach to use $\bb_{t-1}$ to predict. 
\end{remark}
\subsection{Regrets for estimation \& prediction: Identification \& oracle regrets}
To analyse the online regression algorithms, it is essential to define proper performance metrics, specifically \textit{regrets}.
Regret quantifies what an online (or sequential) algorithm cannot achieve as it does not have access to the whole dataset rather observes it step by step.
Here, we discuss and define different regrets that we leverage in our analysis of \otsls{}.

In econometrics and bio-statistics, where \tsls{} is popularly used the focus is the accurate identification of the underlying structural model $\bb$. Identifying $\bb$ leads to understanding of the underlying economic or biological causal relations and their dynamics. In ML, \cite{venkatraman2016online} applied \otsls{} for online linear system identification. Thus, given a sequence of estimators $\{\bb_t\}_{t=1}^T$ and a sequence of covariates $\{\bx_t\}_{t=1}^T$, the cost of identifying the true parameter $\bb$ can be quantified by
\begin{align}\label{eq:id_regret}
\widetilde R_T(\bb) \triangleq \sum\nolimits_{t=1}^T(\bx_t^\top \bb_{t-1}-\bx_t^\top \bb )^2.
\end{align}
We refer to $\widetilde R_T(\bb)$ as \textit{identification regret} over horizon $T$. In the just identified setting that we are considering, the {identification regret} is equivalent to the regret of counterfactual prediction~(Eqn. 5, \cite{hartford2016counterfactual}). Counterfactual predictions are important to study the causal questions: what would have changed in the outcome if Treatment $a$ had been used instead of treatment $b$? One of the modern applications of IVs is to facilitate such counterfactual predictions~\citep{hartford2016counterfactual,bennett2019deep,zhu2022causal}.

Alternatively, one might be interested in evaluating and improving the quality of prediction obtained using an estimator $\{\bb_t\}_{t=1}^T$ with respect to an underlying oracle (or expert), which is typically the case in statistical learning theory and forecasting~\citep{foster1991prediction,cesa2006prediction}. If the oracle has access to the true parameters $\bb$, the cost in terms of prediction that the estimators pay with respect to the oracle is $\bar{r}_t \triangleq (y_t-\bx_t^\top \bb_{t-1})^2- (y_t-\bx_t^\top \bb )^2$.
Thus, the regret in terms of the quality of prediction is 
\begin{align}\label{eq:cumulativeregret}
    \overline{R}_T(\bb) \triangleq \sum\nolimits_{t=1}^T(y_t-\bx_t^\top \bb_{t-1})^2- (y_t-\bx_t^\top \bb )^2.
\end{align}
We refer to $\overline{R}_T(\bb)$ as the \textit{oracle regret}. This regret is studied for stochastic analysis of online regression~\citep{ouhamma2021stochastic} and for analysing bandits~\citep{foster2020beyond}.

As \otsls{} is interesting for learning causal structures~\citep{hartford2016counterfactual,bennett2019deep}, we focus on the identification regret. On the other hand, to compare with the existing results in online linear regression, we also analyse the oracle regret of \otsls{}. Though they are of similar order (w.r.t. $T$) in the exogenous setting, we show 
that they differ significantly for \otsls{} under endogeneity \red{(\Cref{sec:otsls_theory})}.

\begin{remark}[\textbf{Hardness of exogeneity vs. endogeneity for prediction}]
In online learning theory focused on Empirical Risk Minimisation (ERM), another type of regret is considered where the oracle has access to the best offline estimator $\bb_T \triangleq \argmin_{\bb} \sum_{t=1}^T(y_t-\bx_t^\top \bb )^2$ given the observations over $T$ steps~\cite{cesa2006prediction}. Thus, the new formulation of regret becomes
$R_T =  \sum_{t=1}^T(y_t-\bx_t^\top \bb_{t-1})^2- \min_{\bb} \sum_{t=1}^T(y_t-\bx_t^\top \bb )^2.$ 
We refer to it as the \textit{population regret}. \textit{Under exogeneity, \cite{ouhamma2021stochastic} show that oracle regret and population regret differs by $o(\log^2 T)$}. We show that under endogeneity, their expected values differ by $\Omega(T)$. Thus, we avoid studying population regret in this paper, and detail it in \red{Appendix~\ref{sec:reg_big}}.
\end{remark}

\subsection{Theoretical analysis}\label{sec:otsls_theory}
\textbf{Confidence set.} The central result in our analysis is concentration of \otsls{} estimates $\bb_t$ around $\bb$.
\begin{lemma}[Confidence ellipsoid for the second-stage parameters]\label{thm:confidencebeta}
Let us define the \emph{design matrix of IVs} to be  $\bG_{\bz,t} \triangleq \bZ_t^{\top} \bZ_t + \bG_{\bz,0} = \sum_{s=1}^t \bz_s \bz_{s}^\top+ \bG_{\bz,0}$ with $\bG_{\bz,0}=\lambda \mathbf{I}_{d_{\bz}}$ for some $\lambda>0$. Then, for some $\sigma_{\eta}$-sub-Gaussian second stage noise $\eta_t$ and for all $t>0$, the true parameter $\bb$ belongs to the confidence set
\begin{equation}\label{eq:conf_set}
	\mathcal E_{t} \triangleq \left\{\bb \in \mathbb{R}^{d_{\bx}}:
	\| \bb_{t} - \bb \|_{\widehat{\bH}_t}
	\leq    \sqrt{\mathfrak b_{t}(\delta)}
	\right\},
\end{equation}
with probability at least $1-\delta \in (0,1)$. Here, ${\mathfrak b_{t}(\delta)} \triangleq \frac{d_{\bz} \sigma_{\eta}^2}{4} {\log \left(\frac{1+t L^{2}_z / \lambda {d_{\bz}}}{\delta}\right)}$, $\widehat{\bH}_t \triangleq \widehat {\bt}_t^\top \bG_{\bz,t} \widehat {\bt}_t$, and $\widehat {\bt}_t$ is the estimate of the first-stage parameter at time $t$.
\end{lemma}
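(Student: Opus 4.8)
The plan is to mimic the classical self-normalized concentration argument for ridge/OLS regression (à la Abbasi-Yadkori et al.), but carried out on the \emph{predicted} covariates $\widehat{\bx}_s = \widehat{\bt}_{s}^\top \bz_s$ rather than on the true covariates, and to exploit the fact that the second-stage noise $\eta_t$ is exogenous with respect to the instruments $\bz_t$. First I would write the \otsls{} estimator error in closed form. Starting from $\bb_t = (\widehat{\bX}_t^\top \widehat{\bX}_t)^{-1}\widehat{\bX}_t^\top \by_t$ and substituting $\by_t = \bX_t \bb + \beeta_t$, and using $\widehat{\bX}_t = \bZ_t \widehat{\bt}_t$ together with the first-stage normal equations $\bZ_t^\top \bX_t = \bZ_t^\top \widehat{\bX}_t$ (so that $\widehat{\bX}_t^\top \bX_t = \widehat{\bX}_t^\top \widehat{\bX}_t$), I get the clean identity
\[
\bb_t - \bb = \bigl(\widehat{\bX}_t^\top \widehat{\bX}_t\bigr)^{-1} \widehat{\bX}_t^\top \beeta_t = \widehat{\bH}_t^{-1}\,\widehat{\bt}_t^\top \bZ_t^\top \beeta_t,
\]
where $\widehat{\bH}_t = \widehat{\bt}_t^\top \bZ_t^\top \bZ_t \widehat{\bt}_t = \widehat{\bX}_t^\top\widehat{\bX}_t$ (the $\lambda \I_{d_{\bz}}$ in $\bG_{\bz,t}$ enters only through the definition of $\widehat{\bt}_t$ and the ridge-regularized version of this matrix). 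Consequently $\|\bb_t - \bb\|_{\widehat{\bH}_t}^2 = \bigl(\bZ_t^\top \beeta_t\bigr)^\top \widehat{\bt}_t \widehat{\bH}_t^{-1} \widehat{\bt}_t^\top \bigl(\bZ_t^\top \beeta_t\bigr) \le \|\bZ_t^\top \beeta_t\|_{\bG_{\bz,t}^{-1}}^2$, using $\widehat{\bt}_t \widehat{\bH}_t^{-1}\widehat{\bt}_t^\top \preceq \bG_{\bz,t}^{-1}$ — this last positive-semidefinite domination is the key linear-algebra step, following because $\widehat{\bt}_t \bigl(\widehat{\bt}_t^\top \bG_{\bz,t}\widehat{\bt}_t\bigr)^{-1}\widehat{\bt}_t^\top$ is a projection-type operator in the $\bG_{\bz,t}$-inner product.

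Next I would control $\|\bZ_t^\top \beeta_t\|_{\bG_{\bz,t}^{-1}}^2 = \|\sum_{s=1}^t \bz_s \eta_s\|_{\bG_{\bz,t}^{-1}}^2$ by the self-normalized tail inequality for vector-valued martingales: since $\bz_s$ is $\cF_{s-1}$-measurable (the instrument is revealed before the outcome) and $\eta_s$ is conditionally $\sigma_\eta$-sub-Gaussian and, crucially, exogenous with respect to $\bz_s$ (Assumption~\ref{assumption:2sls}, item 2), the process $\sum_{s\le t}\bz_s\eta_s$ is a martingale with the right conditional sub-Gaussian increments in the $d_{\bz}$-dimensional instrument space. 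The standard method-of-mixtures bound then gives, with probability at least $1-\delta$ simultaneously for all $t$,
\[
\Bigl\|\sum\nolimits_{s=1}^t \bz_s\eta_s\Bigr\|_{\bG_{\bz,t}^{-1}}^2 \;\le\; 2\sigma_\eta^2 \log\!\left(\frac{\det(\bG_{\bz,t})^{1/2}\det(\lambda\I_{d_{\bz}})^{-1/2}}{\delta}\right).
\]
Combining with the previous display yields $\|\bb_t-\bb\|_{\widehat{\bH}_t}^2 \le 2\sigma_\eta^2\log(\cdots/\delta)$. To match the stated $\mathfrak b_t(\delta) = \tfrac{d_{\bz}\sigma_\eta^2}{4}\log\bigl(\tfrac{1+tL_z^2/(\lambda d_{\bz})}{\delta}\bigr)$, I would bound the log-determinant ratio: under $\|\bz_s\|_2 \le L_z$ (the bounded-feature assumption), the AM--GM / trace bound gives $\det(\bG_{\bz,t}) \le \bigl(\lambda + tL_z^2/d_{\bz}\bigr)^{d_{\bz}}$, so $\log\bigl(\det(\bG_{\bz,t})^{1/2}\det(\lambda\I)^{-1/2}\bigr) \le \tfrac{d_{\bz}}{2}\log\bigl(1 + tL_z^2/(\lambda d_{\bz})\bigr)$; absorbing constants recovers the claimed form (the precise numerical constant $1/4$ versus $1$ is a matter of which version of the self-normalized bound and which normalization one adopts, and I would simply track constants carefully here).

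The main obstacle I anticipate is the interplay between the two stages: the matrix $\widehat{\bt}_t$ appearing in $\widehat{\bH}_t$ is itself a random, data-dependent quantity estimated from $(\bz_s,\bx_s)$, so I must be careful that the martingale structure used for the self-normalized bound is not contaminated — this is exactly why the argument routes everything through $\bG_{\bz,t}$ (which depends only on the instruments, hence is ``predictable enough'') via the PSD domination $\widehat{\bt}_t\widehat{\bH}_t^{-1}\widehat{\bt}_t^\top \preceq \bG_{\bz,t}^{-1}$, rather than trying to concentrate directly in the $\widehat{\bH}_t$ geometry. A secondary subtlety is invertibility of $\widehat{\bH}_t$ for small $t$ (or when $\widehat{\bt}_t$ is rank-deficient); I would handle this either by the ridge parameter $\lambda>0$ guaranteeing $\bG_{\bz,t}\succ 0$ together with the relevance condition (Assumption~\ref{assumption:2sls}, item 3) giving $\widehat{\bt}_t$ full column rank for $t$ large enough, or by restricting the claim to $t$ past a burn-in and noting the confidence set is trivially all of $\R^{d_{\bx}}$ before then. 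Everything else — the closed-form error identity, the determinant bound, and the final substitution — is routine.
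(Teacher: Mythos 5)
Your proposal is correct and follows essentially the same route as the paper: an exact error identity for the \otsls{} estimator, reduction of $\|\bb_t-\bb\|_{\widehat{\bH}_t}$ to the self-normalized quantity $\|\bZ_t^\top\beeta_t\|_{\bG_{\bz,t}^{-1}}$, then the vector-valued self-normalized martingale bound plus the determinant--trace inequality (the constant mismatch you flag is an inconsistency in the paper itself, whose appendix states the radius as $2d_{\bz}\sigma_\eta^2\log(\cdot)$). The only cosmetic difference is that you obtain the reduction via the PSD domination $\widehat{\bt}_t\widehat{\bH}_t^{-1}\widehat{\bt}_t^\top \preceq \bG_{\bz,t}^{-1}$, whereas the paper reaches the same inequality by Cauchy--Schwarz with the choice $\bx=\widehat{\bH}_t(\bb_t-\bb)$; these are equivalent.
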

Lemma~\ref{thm:confidencebeta} extends the well-known elliptical lemma for OLS and Ridge estimators under exogeneity to the \otsls{} estimator under endogeneity. It shows that the size of the confidence intervals induced by \otsls{} estimate at time $T$ is $\bigO(\sqrt{d_{\bz} \log T})$, which is of the same order as that of the exogenous elliptical lemma~\citep{abbasi2011improved} but while applied on IVs.

\textbf{Identification regret bound.} Now, we state the identification regret upper bound of \otsls{}. 

\begin{theorem}[Identification regret of \otsls{}]\label{thm:reg_otsls_identification}
If Assumption~\ref{assumption:2sls} holds true,
for bounded IVs $\|\bz\|_2^2 \leq L_z^2$ and bounded first-stage parameters $\normiii{\bt}_2 \leq L_{\bt}^2$, the $\sigma_{\eta}$-sub-Gaussian second stage noise $\eta_t$ and the component-wise $\sigma_{\bep}$-sub-Gaussian first stage noise $\bep_{t}$, 
the regret of \otsls{} satisfies with probability at least $1-\delta$

\begin{align*}
\centering
    \widetilde{R}_{T} 
&\leq\quad 
    \sum_{t=1}^T \underset{\text{Estimation}}{\underbrace{\| \bb_{t} - \bb \|^2_{\widehat{\bH}_t}}}\times \underset{\substack{\text{Second-stage}\\ \text{feature norm}}}{\underbrace{ \norm{\bx_t}^2_{\widehat{\bH}_t^{-1}}}}
\leq
    \underset{\bigO(d_{\bz} \log T)}{\underbrace{\mathfrak b_{T-1}(\delta)}}\times \underset{\bigO(d_{\bx} \log T)}{\underbrace{ 
    \sum_{t=1}^T \norm{\bx_t}^2_{\widehat{\bH}_t^{-1}}
    }} = \bigO\left(d_{\bx}d_{\bz} \log^2(T)\right).
\end{align*}
\end{theorem}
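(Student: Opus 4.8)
\medskip

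The plan is to combine the confidence-ellipsoid bound of Lemma~\ref{thm:confidencebeta} with an elliptical-potential (log-determinant) argument applied to the \emph{predicted} first-stage features $\widehat{\bx}_t = \widehat{\bt}_t^\top \bz_t$. First I would note the algebraic identity that the per-step identification cost factorises as
\[
(\bx_t^\top \bb_{t-1} - \bx_t^\top \bb)^2 = \ban{\bb_{t-1}-\bb,\,\bx_t}^2 \leq \norm{\bb_{t-1}-\bb}_{\widehat{\bH}_t}^2 \cdot \norm{\bx_t}_{\widehat{\bH}_t^{-1}}^2
\]
by Cauchy--Schwarz in the inner product induced by $\widehat{\bH}_t = \widehat{\bt}_t^\top \bG_{\bz,t}\widehat{\bt}_t$ (this is exactly the first inequality displayed in the statement). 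Here one must be a little careful that $\widehat{\bH}_t$ is invertible; this is where Assumption~\ref{assumption:2sls}(3), the relevance condition $\sigma_{\min}(\mathrm{Cov}(\bZ_t,\bX_t)) \geq \mathfrak{r} > 0$, together with the ridge regularisation $\lambda \I_{d_{\bz}}$ in $\bG_{\bz,t}$, is used to guarantee $\widehat{\bt}_t$ has full column rank for $t$ large enough (and the initialisation handles small $t$).

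\medskip

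For the first factor, I would invoke Lemma~\ref{thm:confidencebeta}: on the $(1-\delta)$-probability event, $\norm{\bb_{t-1}-\bb}_{\widehat{\bH}_{t-1}}^2 \leq \mathfrak{b}_{t-1}(\delta)$. One then needs to pass from the norm weighted by $\widehat{\bH}_{t-1}$ to the one weighted by $\widehat{\bH}_t$; since $\widehat{\bH}_t$ and $\widehat{\bH}_{t-1}$ differ by the rank-one-ish update coming from $\widehat{\bx}_t\widehat{\bx}_t^\top$ (plus the change in $\widehat{\bt}$), I expect either a monotonicity argument or absorbing the discrepancy into constants; in any case the bound $\mathfrak{b}_{T-1}(\delta) = \bigO(d_{\bz}\log T)$ dominates uniformly in $t \leq T$, so $\sum_t \norm{\bb_{t-1}-\bb}_{\widehat{\bH}_t}^2\norm{\bx_t}_{\widehat{\bH}_t^{-1}}^2 \leq \mathfrak{b}_{T-1}(\delta)\sum_t \norm{\bx_t}_{\widehat{\bH}_t^{-1}}^2$, which is the second inequality in the statement.

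\medskip

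For the second factor, $\sum_{t=1}^T \norm{\bx_t}_{\widehat{\bH}_t^{-1}}^2$, the main work lies in reducing it to a standard elliptical-potential sum. The key observation is that $\widehat{\bH}_t = \widehat{\bX}_t^\top \widehat{\bX}_t$ (modulo the regularisation), so $\norm{\widehat{\bx}_t}_{\widehat{\bH}_t^{-1}}^2$ is precisely the quantity controlled by the classical lemma $\sum_t \norm{\widehat{\bx}_t}_{\widehat{\bH}_t^{-1}}^2 \leq 2\log\frac{\det \widehat{\bH}_T}{\det \widehat{\bH}_0} = \bigO(d_{\bx}\log T)$, using $\norm{\widehat{\bx}_t}^2 \leq L_{\bt}^2 L_z^2$ from the boundedness hypotheses. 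The real obstacle — and the step I would spend the most care on — is that the regret involves $\norm{\bx_t}_{\widehat{\bH}_t^{-1}}$, the \emph{true} feature, not $\norm{\widehat{\bx}_t}_{\widehat{\bH}_t^{-1}}$, the \emph{predicted} one. I would bridge this by writing $\bx_t = \widehat{\bx}_t + (\bx_t - \widehat{\bx}_t)$, bounding $\norm{\bx_t}_{\widehat{\bH}_t^{-1}}^2 \leq 2\norm{\widehat{\bx}_t}_{\widehat{\bH}_t^{-1}}^2 + 2\norm{\bx_t-\widehat{\bx}_t}_{\widehat{\bH}_t^{-1}}^2$, and controlling the prediction-error term $\bx_t - \widehat{\bx}_t = (\bt - \widehat{\bt}_t)^\top \bz_t + \bep_t$ via the first-stage concentration (the component-wise $\sigma_{\bep}$-sub-Gaussian first-stage noise and $\normiii{\bt}_2 \leq L_{\bt}$), showing this contributes only lower-order terms to the potential sum — again leaning on the relevance condition so that $\widehat{\bH}_t^{-1}$ is not too large. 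Alternatively, one may argue directly that $\widehat{\bt}_t$ is consistent fast enough that $\widehat{\bX}_t$ and $\bX_t$ induce comparable norms; I would present whichever is cleanest, but I anticipate this feature-mismatch control to be the technical crux. Multiplying the two $\bigO$ factors gives $\widetilde{R}_T = \bigO(d_{\bx}d_{\bz}\log^2 T)$.
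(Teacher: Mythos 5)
Your first half coincides with the paper's proof: Cauchy--Schwarz in the $\widehat{\bH}_t$-geometry, then Lemma~\ref{thm:confidencebeta} plus monotonicity of $\mathfrak b_t(\delta)$ to pull out the factor $\mathfrak b_{T-1}(\delta)$. The divergence — and the gap — is in how you bound $\sum_{t}\norm{\bx_t}^2_{\widehat{\bH}_t^{-1}}$. The paper does \emph{not} use an elliptical-potential/log-determinant argument at all: it bounds $\norm{\bx_t}^2_{\widehat{\bH}_{t-1}^{-1}}\leq \norm{\bx_t}_2^2\,\lambda_{\max}(\widehat{\bH}_{t-1}^{-1})$, lower-bounds $\lambda_{\min}(\widehat{\bH}_{t-1})\geq \sigma_{\min}(\widehat{\bt}_{t-1})\,\lambda_{\min}(\bG_{\bz,t-1})$ with $\sigma_{\min}(\widehat{\bt}_{t-1})\geq \mathfrak r^2/L_{\bz}^2$ from the relevance condition (Lemma~\ref{lem:boundingfirststage}), and then exploits that the IVs are i.i.d.\ so that covariance concentration (Lemma~\ref{prop:concentration_eigenvalue}, Corollary~\ref{cor:ft}) gives $\lambda_{\min}(\bG_{\bz,t})\gtrsim t\,\lambda_{\min}(\boldsymbol{\Sigma})$, hence $\sum_t \normiii{\bG_{\bz,t-1}^{-1}}_2\leq f(T)=\bigO(\log T)$; the split $\bx_t=\bt^\top\bz_t+\bep_t$ is then handled term by term (Lemmas~\ref{lem:bound_noises} and~\ref{lem:bound_featsx}), with the $\bep_t$-part controlled by sub-exponential concentration of $\norm{\bep_t}_2^2$.

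Your proposed route has two concrete problems. First, the log-determinant telescoping does not apply to $\widehat{\bH}_t=\widehat{\bt}_t^\top\bG_{\bz,t}\widehat{\bt}_t$: because $\widehat{\bt}_t$ is re-estimated every round, all past features are ``re-predicted'', so $\widehat{\bH}_t$ is \emph{not} of the form $\widehat{\bH}_{t-1}+\widehat{\bx}_t\widehat{\bx}_t^\top$ for any fixed stream $\widehat{\bx}_s$, and the standard inequality $\sum_t\norm{\widehat{\bx}_t}^2_{\widehat{\bH}_t^{-1}}\leq 2\log\bigl(\det\widehat{\bH}_T/\det\widehat{\bH}_0\bigr)$ does not follow; the parenthetical ``modulo the change in $\widehat{\bt}$'' is exactly the step that needs an argument and is missing. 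Second, the mismatch term is mis-sized: $\sum_t\norm{\bep_t}^2_{\widehat{\bH}_t^{-1}}$ is not lower order — in the paper it is precisely the term that carries the $d_{\bx}\log T$ dependence ($d_{\bx}\sigma_{\bep}^2 f(T)$, since $\E\norm{\bep_t}_2^2\leq d_{\bx}\sigma_{\bep}^2$) — and controlling it requires the stochastic eigenvalue-growth machinery above ($\lambda_{\min}(\bG_{\bz,t})=\Omega(t)$ from i.i.d.\ IVs, which you never invoke), not a potential argument. Once that machinery is in place, $\norm{\cdot}^2_{\widehat{\bH}_t^{-1}}\lesssim 1/t$ and the elliptical-potential step becomes superfluous, which is why the paper's proof dispenses with it; as written, your argument does not close.
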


\Cref{thm:reg_otsls_identification} entails a regret $\widetilde{R}_{T} = \bigO\left(d_{\bx}d_{\bz} \log^2(T)\right)$. This regret bound is $d_{\bz} \log T$ more than the regret of online ridge regression, i.e. $\bigO(d_{\bx} \log T)$~\citep{gaillard2019uniform}. This is because we perform $d_{\bx}$ linear regressions in the first-stage and use the predictions of the first-stage for the second-stage regression. These two regression steps in cascade induce the proposed regret bound.

\textbf{Oracle regret bound.} Now, we bound the oracle regret, i.e. the goodness of predictions yielded by \otsls{}. Detailed proof is deferred to Appendix~\ref{app:2sls}.
\begin{theorem}[Oracle regret of \otsls{}]\label{thm:reg_otsls_oracle}
Under the same hypothesis of  \Cref{thm:reg_otsls_identification}, Oracle Regret of \otsls{} at step $T > 1$, i.e. $\overline{R}_{T} $ is upper bounded by (ignoring $\log\log$ terms)
\begin{align*}
\centering
&\underset{\substack{\text{Identif.}\\ \text{Regret} \\\bigO(d_{\bx}d_{\bz} \log^2 T)}}{\underbrace{\widetilde{R}_{T}}}
    + 
    \underset{\substack{\text{Estimation} \\\bigO(\sqrt{d_{\bz} \log T})}}{\underbrace{\sqrt{\mathfrak b_{T-1}(\delta)}}}
    \underset{
        \substack{\text{First-stage}\\\text{feature norm} \\\bigO(\sqrt{\log T})}
        }
    {
\Big( 
    \underbrace{C_1 
    \sqrt{
    f(T)}}
    } 
    +
\underbrace{C_2\sqrt{
    2   
    d_{\bx} f(T) 
    }+ 
    \sqrt{d_{\bx}} C_3
    }_{\substack{\text{Correlated noise}\\\text{Concentration term} \\\bigO(\sqrt{d_{\bx}\log T})}} +
    \underset{\substack{\text{Correlated noise}\\ \text{Bias term}\\\bigO(\gamma \sqrt{T} )}}
    {
    \underbrace{
    \gamma C_4 
    \sqrt{T}
    }\Big)}
\end{align*}
with probability at least $1-\delta \in (0,1)$. Here, $\gamma \triangleq\|\boldsymbol{\gamma}\|_2 = \| \mathbb E[\eta_s \bep_s] \|_2$. $C_1$, $C_2$, $C_3$, and $C_4$ are $d_{\bz}$, $d_{\bx}$ and $T$-independent positive constants. $f(T) = \bigO(\log T)$ as defined in \Cref{cor:ft}.
\end{theorem}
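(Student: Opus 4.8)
The plan is to decompose the per-step oracle cost $\bar r_t = (y_t - \bx_t^\top \bb_{t-1})^2 - (y_t - \bx_t^\top \bb)^2$ algebraically and then sum. Writing $y_t = \bx_t^\top \bb + \eta_t$ and letting $\Delta_t \triangleq \bb_{t-1} - \bb$, one gets $\bar r_t = (\bx_t^\top \Delta_t)^2 - 2\eta_t\, \bx_t^\top \Delta_t$. Summing over $t$, the first term is exactly the identification regret $\widetilde R_T$, which Theorem~\ref{thm:reg_otsls_identification} already controls by $\bigO(d_{\bx} d_{\bz} \log^2 T)$. So the whole task reduces to bounding the cross term $-2\sum_{t=1}^T \eta_t\, \bx_t^\top \Delta_t$. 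This is where endogeneity bites: because $\bx_t \not\indep \eta_t$, we cannot treat $\bx_t^\top\Delta_t$ as predictable with respect to the filtration that makes $\eta_t$ a martingale difference, and $\mathbb E[\eta_t \bep_t] = \boldsymbol\gamma \neq 0$ produces a genuine bias that cannot be cancelled.

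The key step is to split $\bx_t = \bt^\top \bz_t + \bep_t$ inside the cross term, giving two pieces: $-2\sum_t \eta_t\, \bz_t^\top \bt \Delta_t$ and $-2\sum_t \eta_t\, \bep_t^\top \Delta_t$. For the first piece, $\bz_t \indep \eta_t$, so $\eta_t$ is a martingale difference against the sigma-algebra generated by $\{\bz_s\}_{s\le t}$ and past noise; however $\Delta_t$ depends on $\bx_{1:t-1}$ hence on past $\eta$'s, so I would handle it by a self-normalized / Cauchy–Schwarz argument: bound $|\sum_t \eta_t \bz_t^\top \bt\Delta_t|$ by $\|\sum_t \eta_t \bt^\top\bz_t\|_{\bG_{\bz,t}^{-1}}$ times $\sqrt{\sum_t \|\bt\Delta_t\|^2_{\ldots}}$-type quantities, invoking the confidence ellipsoid of Lemma~\ref{thm:confidencebeta} (which already packages $\|\bb_t-\bb\|_{\widehat\bH_t}\le\sqrt{\mathfrak b_t(\delta)}$) together with the first-stage feature-norm sum $f(T) = \bigO(\log T)$ from Corollary~\ref{cor:ft}. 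This yields the $\sqrt{\mathfrak b_{T-1}(\delta)}\cdot C_1\sqrt{f(T)}$ term. For the second piece, $\eta_t\bep_t$ has nonzero mean $\boldsymbol\gamma$, so I write $\eta_t\bep_t = \boldsymbol\gamma + (\eta_t\bep_t - \boldsymbol\gamma)$: the centered part $(\eta_t\bep_t-\boldsymbol\gamma)$ is a martingale difference (product of sub-Gaussians has sub-exponential tails, so a Bernstein-type vector concentration applies) contributing the $C_2\sqrt{2 d_{\bx} f(T)} + \sqrt{d_{\bx}}C_3$ concentration term after again pairing with $\|\Delta_t\|$ via Cauchy–Schwarz; the mean part $\boldsymbol\gamma^\top \sum_t \Delta_t$ is the irreducible bias, bounded by $\gamma \cdot \sum_t \|\Delta_t\|_2 \le \gamma \sqrt{T}\sqrt{\sum_t\|\Delta_t\|_2^2}$, and $\sum_t\|\Delta_t\|_2^2$ is controlled by converting the $\widehat\bH_t$-norm bound from Lemma~\ref{thm:confidencebeta} to the Euclidean norm using the relevance condition (Assumption~\ref{assumption:2sls}.3), which brings in the $\llmin*{\bsig_\bz}^{-1/2}$ factor — this gives the $\gamma C_4\sqrt{T}\,\llmin*{\bsig_\bz}^{-1/2}$ bias term.

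I would then collect all four contributions, apply a union bound over the $\bigO(1)$ high-probability events (confidence ellipsoid, the two martingale concentration inequalities) so that everything holds simultaneously with probability $1-\delta$, and read off the stated display. The main obstacle I anticipate is the second piece: correctly establishing a sub-exponential (Bernstein) concentration bound for the vector-valued martingale $\sum_t(\eta_t\bep_t - \boldsymbol\gamma)$ with the right dependence on $d_{\bx}$ and on the feature-norm growth $f(T)$, and ensuring the coupling between $\Delta_t$ (which depends on the entire past, including the $\bep_s$ and $\eta_s$) and the current increment is handled by a clean predictable-sequence / peeling argument rather than a naive independence claim. A secondary subtlety is tracking the relevance constant $\mathfrak r$ versus $\llmin*{\bsig_\bz}$ when passing between the $\widehat\bH_t$-geometry and the ambient Euclidean geometry, so that the bias term's dependence on the IV design spectrum is stated correctly.
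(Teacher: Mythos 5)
Your plan is essentially the paper's own proof: decompose the oracle regret as identification regret plus the cross term $-2\sum_t\eta_t\,\bx_t^\top(\bb_{t-1}-\bb)$, split it via $\bx_t=\bt^\top\bz_t+\bep_t$, centre the $\bep$-part around $\boldsymbol{\gamma}$ to separate a sub-exponential martingale from an irreducible bias, and control everything with the confidence ellipsoid of \Cref{thm:confidencebeta} and the eigenvalue bound $f(T)$ of \Cref{cor:ft}. Two execution details need care. First, for the piece $\sum_t\eta_t\,\bz_t^\top\bt\,(\bb_{t-1}-\bb)$ you cannot literally factor out $\|\sum_t\eta_t\bt^\top\bz_t\|_{\bG_{\bz,T}^{-1}}$ by Cauchy--Schwarz, because the weight $(\bb_{t-1}-\bb)$ changes with $t$; the paper instead applies the scalar self-normalised martingale inequality (\Cref{thm:uniform-bound}) to the predictable weights $w_t=(\bb_{t-1}-\bb)^\top\bt^\top\bz_t$ and then bounds $\sum_t w_t^2\le\mathfrak b_{T-1}(\delta)\sum_t\|\bt^\top\bz_t\|^2_{\widehat\bH_{t-1}^{-1}}$; your ``self-normalised'' hedge points in that direction, but this is the step you must make precise for the $C_1\sqrt{f(T)}$ term. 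Second, for the bias you use $\boldsymbol{\gamma}^\top\sum_t(\bb_{t-1}-\bb)\le\gamma\sqrt{T}\bigl(\sum_t\|\bb_{t-1}-\bb\|_2^2\bigr)^{1/2}$, which costs an extra factor $\sqrt{f(T)}=\bigO(\sqrt{\log T})$ over the stated bound; the paper avoids this (\Cref{lem:biasOsqrtT}) by writing $\sum_t\|\bb_{t-1}-\bb\|_{\widehat\bH_t}\|\boldsymbol{\gamma}\|_{\widehat\bH_t^{-1}}\le\gamma\,\tfrac{\mathfrak{r}}{L_{\bz}}\sqrt{\mathfrak b_{T-1}(\delta)}\sum_t\normiii{\bG_{\bz,t-1}^{-1}}_2^{1/2}$ and summing $t^{-1/2}\le2\sqrt{T}$ directly, which yields exactly the $\gamma C_4\sqrt{T}\,\lambda_{\min}(\boldsymbol{\Sigma})^{-1/2}$ form; with these two repairs your argument matches the paper's.
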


\noindent\textbf{Discussion.}
1. \textit{Hardness of endogeneity vs. exogeneity.} 
Under exogeneity and unbounded stochastic noise, the oracle regret of online linear regression is $\bigO(d^2 \log^2 T)$~\citep{ouhamma2021stochastic}. If we take the just-identified IVs, i.e. $d_{\bx} = d_{\bz} = d$, due to endogeneity, \otsls{} incurs an additive factor of $\bigO(\gamma \sqrt{d_{\bz} T \log T})$ in the oracle regret. This term appears due to the correlation between the second and the first-stage noises, and it is proportional to the degree of correlation between the noises in these two stages. Thus, the bias due to the correlation of noises acts as the dominant term. In \tsls{} literature, this phenomenon is called the self-fulfilling bias~\citep{li2021self}. But we did not find any explicit bound on it in a stochastic and non-asymptotic analysis.

2. \textit{Tightness of analysis.} In the existing literature, we do not have any lower bound for online regression with exogeneity. But we can indicate tightness of the proposed analysis by observing two things. (i) Our identification regret under endoegeneity is of the same order as that of exogeneous case~\citep{abbasi2012online}. (ii) If we assume $\gamma = 0$, i.e. the noises are independent, and we retrieve also the oracle regret of the same order as that of the exogenous case~\citep{ouhamma2021stochastic}.

3. \textit{Bounded vs. unbounded noise.} For bounded noise and covariates, the adversarial setting would subsume the endogeneous setting where the responses are sampled according to dependent bounded noise since covariates and responses are adversarially chosen sequences. If we consider the adversarial linear regression for bounded responses $y_t\leq Y$ and covariates, the the Vovk-Azoury-Warmuth Regressor (VAWR) achieves almost optimal $\mathcal{O}(Y^2 \log T)$ regret~\citep{orabona2019modern}. Thus, the $\gamma \sqrt{T}$ term disappears as we obtain in our oracle regret bound with unbounded and stochastic noise. This contrast indicates non-triviality of our analysis for the unbounded noise and unbounded outcomes. 

\section{Linear bandits with endogeneity: \ofuliv{}}\label{sec:ofuliv}
\begin{repexample}{example1}[Pricing with price-sales dynamics]
    Let revisit the price-sales dynamics. Now, on a day $t$, a restaurant owner wants to decide on a price $x_t$ among a feasible set of prices $\mathcal{X}_t$ to increase the sales $y_t$. The owner also has access to a set of suppliers such that each price correspond to compatible a material cost $z_t$. If the restaurant owner wants the analyst to develop an algorithm to decide on the price dynamically such that the total sales over a year is maximised, her problem is exactly a linear bandit with endoegeneity, where a price corresponds to an arm (action).
\end{repexample}
We formulate \textit{stochastic Linear Bandits with Endogeneity} (\textbf{LBE}) with a two-stage linear model of data generation (Eq.~\eqref{model1}). Here, we illustrate the interactive protocol of LBE. 
\begin{mybox}{
At each round $t = 1, 2, \ldots, T$, the agent 
\begin{enumerate}[topsep=0pt,parsep=0pt,itemsep=0pt]
    \item Observes a sample $\bx_{t,a} \in \cX_t$ of contexts for all $a\in\mathcal A_t$ 
    \item\label{2} Chooses an arm $A_t \in \cA_t$ 
    \item Observes the sample $\bz_{t,A_t}\in\cZ_t$
    \item Obtains a reward $y_t$ 
    \item Updates the parameter estimates $\widehat{\bt}_t$ and $\bb_t$
\end{enumerate}
}
\end{mybox}
Here, $\cX_t \subset \mathbb R^{d_{\bx}}$ and $\cZ_t \subset \mathbb R^{d_{\bz}}$ are the sets of covariates and IVs corresponding to $\cA_t$, i.e. the set of feasible actions at time $t$.
Similar to regression under endoegenity, $\bep_t \indep \bz_t$ and $\bep_t \not\indep \eta_t$ (Eq.~\eqref{model1}). True parameters $\bb\in \mathbb R^{d_{\bx}}$ and $\bt\in\mathbb R^{d_{\bz}\times d_{\bx}}$ are unknown to the agents. This is an extension of stochastic linear bandit~\citep[Ch. 19]{lattimore2020bandit} to the endeogenous setting.

\begin{remark}[Alternative protocol of LBE]
We can alternatively represent the protocol, where at every step $t$, the agent observes covariates $\bx_{t,a}$ and IVs $\bz_{t,a}$ for all the arms $a \in \cA_t$. Then, the agent can use all these information to choose an arm $A_t$ and observes the corresponding outcome $y_t$. But we do not require $\bz_{t,a}$ except $\bz_{t,A_t}$ to update the parameters and select arms. Thus, the alternative is reducible to the protocol above, while asking for less information.
\end{remark}

\noindent\textbf{\ofuliv{}: Algorithm design.}
If the agent had full information in hindsight, she could infer the best arm (a.k.a. action or intervention) in $\mathcal A_t$ as $a^*_t=\argmax_{a\in\mathcal{A}_t} \mathbb E[\bx_{t,a}^\top \bb].$
Choosing $a_t^*$ is equivalent to choosing $\bz_{t,a_t^*}$ and $\bx_{t,a_t^*}$.
But the agent does not know them and aims to select $\lbrace a_t\rbrace_{t=1}^T$ to minimise regret: $
R_{T}\triangleq \mathbb{E}[\sum_{t=1}^T\bb^{\top}(\bx_{t, a_t^*} -\bx_{t,a_t})]$. Now, we extend the \oful{} algorithm minimising regret in linear bandits with exogeneity~\citep{abbasi2011improved}. The core idea is that the algorithm maintains a confidence set $\mathcal B_{t-1} \subset \R^{d_{\bx}}$ around the estimated parameter $\bb_{t-1}$, which is computed only using the observed data. In order to tackle endogeneity, we choose to use the \otsls{} estimate $\bb_{t-1}$ computed using data observed till $t-1$ (Equation~\eqref{eq:IV-estimator}). Then, we build an ellipsoid $\mathcal B_{t-1}$ around it, such that $\mathcal B_{t-1}\triangleq \left\{\bb \in \mathbb{R}^{d_{\bx}}:	\| \bb_{t-1} - \bb \|_{\widehat{\bH}_{t-1}}\leq \sqrt{\mathfrak b'_{t-1}(\delta)}\right\}$, where $\mathfrak b'_{t-1}(\delta)\triangleq 2 \sigma_{\eta}^{2} \log \left(\nicefrac{\operatorname{det}\left(\bG_{\bz,t-1}\right)^{1 / 2} \lambda^{-d_{\bz}/ 2}}{\delta}\right)$ and $\widehat{\bH}_{t-1} = \widehat {\bt}_{t-1}^\top \bG_{\bz,t-1} \widehat {\bt}_{t-1}$. 
Then, the algorithm chooses an optimistic estimate $\widetilde{\bb}_{t-1}$ from that confidence set: $\widetilde{\bb}_{t-1}
\triangleq \argmax_{\bb' \in \mathcal B_{t-1}}\left(\max_{\bx \in \mathcal{X}_t } \bx^\top \bb' \right).$
Then, she chooses the action $A_t$ corresponding to $\bx_{t, A_t}=\argmax_{\bx \in \mathcal{X}_t } \bx^\top \widetilde{\bb}_{t-1}$, which maximises the reward as per the estimate  $\widetilde{\bb}_{t-1}$. In brief, the algorithm chooses the pair
$
    (\bx_{t, A_t}, \widetilde{\bb}_{t-1})
=
    \argmax\nolimits_{{(\bx, \bb') \in \mathcal X_t \times \mathcal B_{t-1}}} \bx^\top \bb'.
$
Given confidence interval, we optimistically choose $A_t$ by solving
\begin{align}\label{eq:optimism}
\hspace*{-1em}   \argmax_{a\in \mathcal A_t} \left\{\left\langle \bx_{t,a}, \bb_{t-1}  \right\rangle+  \sqrt{\mathfrak b'_{t-1}(\delta)} \left\|\bx_{t,a}\right\|_{ \widehat \bH_{t-1}^{-1}}\right\}
\end{align}
This arm selection index together with the \otsls{} estimator of $\bb_{t-1}$ constitute \ofuliv{} (Algo.~\ref{alg:ofuliv}).
\setlength{\textfloatsep}{2pt}%
\begin{algorithm}[h!]
\caption{\ofuliv}\label{alg:ofuliv}
\begin{algorithmic}[1]
\State \textbf{Input:} Initialisation parameters {$\bb_{0}, \widehat\bt_{0},\mathfrak b'_{0}, \lambda$} 
\For{$t = 1, 2, \ldots, T$}
\State{Observe $\bx_{t,a} \in \mathcal{X}_t$ for $a\in\mathcal A_t$}
\State{Compute $\bb_{t-1}$ according to \Cref{eq:IV-estimator} }
\State{Choose action $A_t $ that solves \Cref{eq:optimism}}
\State Observe $\bz_{t,A_t}$ and $y_t$
\State{Update $\bb_{t}\gets\bb_{t-1}, \widehat\bt_{t}\gets \widehat\bt_{t-1},\mathfrak b'_{t}\gets \mathfrak b'_{t-1}$ }
\EndFor
\end{algorithmic}
\end{algorithm}
\begin{theorem}[Regret upper bound of \ofuliv]\label{thm:reg_ofuliv}
Under the assumptions and notations of Thm.~\ref{thm:reg_otsls_identification} and~\ref{thm:reg_otsls_oracle}, for horizon $T > 1$ and with probability $1-\delta$, \Cref{alg:ofuliv} incurs regret
\begin{align*}
    R_T 
&\leq
    2 
    \sqrt{T} \underbrace{ \sqrt{\mathfrak b_{T-1}(\delta)}
    }_{\substack{\text{Estimation}\\~\bigO( \sqrt{d_{\bz} \log T)} }}
    \underset{\substack{\text{Second-stage feature norm}\\~\bigO( \sqrt{d_{\bx} \log T})}}{
    \underbrace{
    \left(\sum_{t=1}^T \norm{\bx_{t, A_t}}^2_{\widehat{H}_{t}^{-1}}\right)^{1/2}
    } 
    } 
=
    \bigO(\sqrt{d_{\bx} d_{\bz} T}\log T).
\end{align*}
\end{theorem}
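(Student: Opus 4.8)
The plan is to run the standard optimism-in-the-face-of-uncertainty argument for linear bandits, but with the \otsls{} confidence ellipsoid of Lemma~\ref{thm:confidencebeta} (equivalently its determinant-based form defining $\mathcal B_{t-1}$) in place of the usual ridge confidence set, and with the second-stage feature-norm control already established inside the proof of Theorem~\ref{thm:reg_otsls_identification} in place of the usual elliptical potential lemma. First I would fix the ``good event'' that $\bb\in\mathcal B_{t-1}$ simultaneously for every $t\le T$, which by Lemma~\ref{thm:confidencebeta} occurs with probability at least $1-\delta$; everything afterwards is deterministic conditional on it. On this event, the selection rule~\eqref{eq:optimism} picks the pair $(\bx_{t,A_t},\widetilde\bb_{t-1})=\argmax_{(\bx,\bb')\in\mathcal X_t\times\mathcal B_{t-1}}\bx^\top\bb'$, which dominates the feasible pair $(\bx_{t,a_t^*},\bb)$, so $\bx_{t,A_t}^\top\widetilde\bb_{t-1}\ge\bx_{t,a_t^*}^\top\bb$ and hence the per-round regret $r_t\triangleq\bb^\top(\bx_{t,a_t^*}-\bx_{t,A_t})\le(\widetilde\bb_{t-1}-\bb)^\top\bx_{t,A_t}$.

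Second, I would bound this inner product in the data-dependent norm given by $\widehat\bH_{t-1}=\widehat\bt_{t-1}^\top\bG_{\bz,t-1}\widehat\bt_{t-1}$. Cauchy--Schwarz gives $r_t\le\|\widetilde\bb_{t-1}-\bb\|_{\widehat\bH_{t-1}}\,\|\bx_{t,A_t}\|_{\widehat\bH_{t-1}^{-1}}$, and since both $\widetilde\bb_{t-1}$ and (on the good event) $\bb$ lie in the ellipsoid $\mathcal B_{t-1}$ centred at $\bb_{t-1}$ with $\widehat\bH_{t-1}$-radius $\sqrt{\mathfrak b_{t-1}(\delta)}$, a triangle inequality yields $\|\widetilde\bb_{t-1}-\bb\|_{\widehat\bH_{t-1}}\le 2\sqrt{\mathfrak b_{t-1}(\delta)}\le 2\sqrt{\mathfrak b_{T-1}(\delta)}$ using monotonicity of $t\mapsto\mathfrak b_t(\delta)$. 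Summing over $t$ and applying Cauchy--Schwarz over the time index then gives, on the good event,
\[
\sum_{t=1}^T r_t\le 2\sqrt{\mathfrak b_{T-1}(\delta)}\sum_{t=1}^T\|\bx_{t,A_t}\|_{\widehat\bH_{t-1}^{-1}}\le 2\sqrt{T}\,\sqrt{\mathfrak b_{T-1}(\delta)}\Bigl(\sum_{t=1}^T\|\bx_{t,A_t}\|^2_{\widehat\bH_{t-1}^{-1}}\Bigr)^{1/2},
\]
which is exactly the intermediate bound displayed in the theorem; the discrepancy between the index $t$ on $\widehat\bH_t$ in the statement and the $t-1$ that naturally appears is immaterial for the order and is absorbed by a one-step re-indexing / standard comparison of the two design matrices.

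Third, it remains to show $\sum_{t=1}^T\|\bx_{t,A_t}\|^2_{\widehat\bH_{t-1}^{-1}}=\bigO(d_{\bx}\log T)$, which is precisely the ``second-stage feature norm'' factor $\sum_t\|\bx_t\|^2_{\widehat\bH_t^{-1}}=\bigO(d_{\bx}\log T)$ already bounded in the proof of Theorem~\ref{thm:reg_otsls_identification}: one relates $\widehat\bH_t$ to the first-stage design matrix $\bG_{\widehat\bx,t}=\sum_{s\le t}\widehat\bx_s\widehat\bx_s^\top$, splits $\bx_{t,A_t}=\widehat\bx_{t,A_t}+(\bx_{t,A_t}-\widehat\bx_{t,A_t})$, and uses the relevance condition of Assumption~\ref{assumption:2sls} together with $\|\bz\|_2^2\le L_z^2$ and $\normiii{\bt}_2\le L_{\bt}^2$ to keep $\widehat\bH_t$ uniformly well-conditioned, after which the $\log$-determinant (elliptical potential) lemma delivers the $\bigO(d_{\bx}\log T)$ bound. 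That argument never uses how $\bx_t$ was generated, so it transfers verbatim to the adaptively chosen $\bx_{t,A_t}$. Plugging $\sqrt{\mathfrak b_{T-1}(\delta)}=\bigO(\sqrt{d_{\bz}\log T})$ into the display gives $\sum_t r_t=\bigO(\sqrt{d_{\bx}d_{\bz}T}\log T)$ on the good event, and the gap to the expectation in the definition of $R_T$ is closed in the usual way (take $\delta=1/T$, so the complementary event contributes only $O(1)$ since instantaneous regret is bounded under bounded features).

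I expect the main obstacle to be not the optimism bookkeeping, which is routine, but ensuring the two imported ingredients are genuinely available: (i) that the confidence set $\mathcal B_{t-1}$ is valid \emph{uniformly} over $t$ despite being built from the first-stage \emph{estimate} $\widehat\bt_{t-1}$ rather than the true $\bt$, which rests on the time-uniform sub-Gaussian martingale concentration underpinning Lemma~\ref{thm:confidencebeta}; and (ii) that the random matrix $\widehat\bH_{t-1}$ stays invertible and stable enough for the potential argument, which is exactly where the relevance condition is essential. Once these are secured, the bandit regret bound is a short reduction to Lemma~\ref{thm:confidencebeta} and Theorem~\ref{thm:reg_otsls_identification}.
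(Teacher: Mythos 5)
Your proposal is correct and follows essentially the same route as the paper: optimism over the \otsls{} ellipsoid $\mathcal B_{t-1}$, Cauchy--Schwarz in the $\widehat\bH_{t-1}$-norm giving $r_t\le 2\sqrt{\mathfrak b_{t-1}(\delta)}\,\|\bx_{t,A_t}\|_{\widehat\bH_{t-1}^{-1}}$, monotonicity of $\mathfrak b_t(\delta)$, Cauchy--Schwarz over $t$, and then the second-stage feature-norm bound imported from the \otsls{} regression analysis (Lemma~\ref{lem:bound_featsx}). The only deviation is cosmetic: the paper does not prove that feature-norm bound via an elliptical-potential argument on $\bG_{\widehat\bx,t}$ as you sketch, but by splitting $\bx_t=\bt^\top\bz_t+\bep_t$ and controlling $\lambda_{\min}(\widehat\bH_{t-1})\ge\lambda_{\min}(\bG_{\bz,t-1})\,\sigma_{\min}(\widehat\bt_{t-1})$ through the relevance condition and the eigenvalue growth of $\bG_{\bz,t}$ (Lemmas~\ref{lem:boundingfirststage} and~\ref{prop:concentration_eigenvalue}, Corollary~\ref{cor:ft}); since you only cite the already-established bound, this does not affect the argument.
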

For just-identified IVs, i.e. $d_{\bx} = d_{\bz} =d$, \ofuliv{} achieves regret of similar order under endogeneity as \oful{} achieves under exogeneity, i.e. $\widetilde{\bigO}(d \sqrt{T})$, and matches the lower bounds for linear bandits w.r.t. $d$ and $T$~\citep{lattimore2020bandit}. The proof details are in App.~\ref{app:ofuliv}.

1. \textit{Handling unbounded outcomes.} We do not need bounded outcomes ($\bb^\top \bx\in[-1,1]$) as OFUL \citep{abbasi2011improved}. Similar to~\citep{ouhamma2021stochastic}, which removes this assumption for exoegeneity, our analysis works for unbounded outcomes showing that the bounded outcome assumption is not needed even under endogeneity.\\
2. \textit{Two-stage OFUL-IV vs. one-stage OFUL under endogeneity.} An alternative proposal than using a two-stage approach for LBE is to reduce it to a one-level linear bandit, i.e. $y_t =(\bt\bb)^{\top} \bz_t+ \bb^\top \bep_t +\eta_t$.
This leads to a composite unknown parameter $\bt\bb$, and a new sub-Gaussian noise with component-wise variance $\sigma_{new}^2 = 2(\|\bb\|_2^2 \sigma_{\bep}^2 + \sigma_{\eta}^2)$. 
Now, if we apply OFUL to this setup, we observe that the confidence bound around the new hidden parameter $\bt\bb$ is proportional to $\sigma_{new}$.
Thus, the confidence interval blows up by $\|\bb\|_2$. Also, using this interval in Equation~\eqref{eq:optimism} requires us to know either $\|\bb\|_2^2$ or a tight upper bound on it. In real-life, we cannot assume access to such knowledge. Additionally, $\|\bb\|_2^2$ grows linearly with $d_{\bx}$. Thus, this approach can incur significantly higher regret for high-dimensional covariates. In contrast, \ofuliv{} does not require to know either $\|\bb\|_2$ or any bound on it. Additionally, the confidence interval of \ofuliv{} depends on $\sigma_{\eta}^2$, which is significantly smaller than $\sigma_{new}^2$. These observations demonstrate the benefit of using \ofuliv{} under endogeneity than \oful{} with a single-stage reduction. 
Single-stage reduction might be helpful if the confounding noise is covariate dependent but is independent of the unknown parameter~\citep{krishnamurthy2018semiparametric}. Numerical results in Fig.~\ref{fig:ofuliv_new_main} further validates this benefit of \ofuliv{} over an one-stage reduction.

\begin{figure}[t!]
  \centering
  \begin{subfigure}[b]{0.3\textwidth}
    \includegraphics[width=\textwidth]{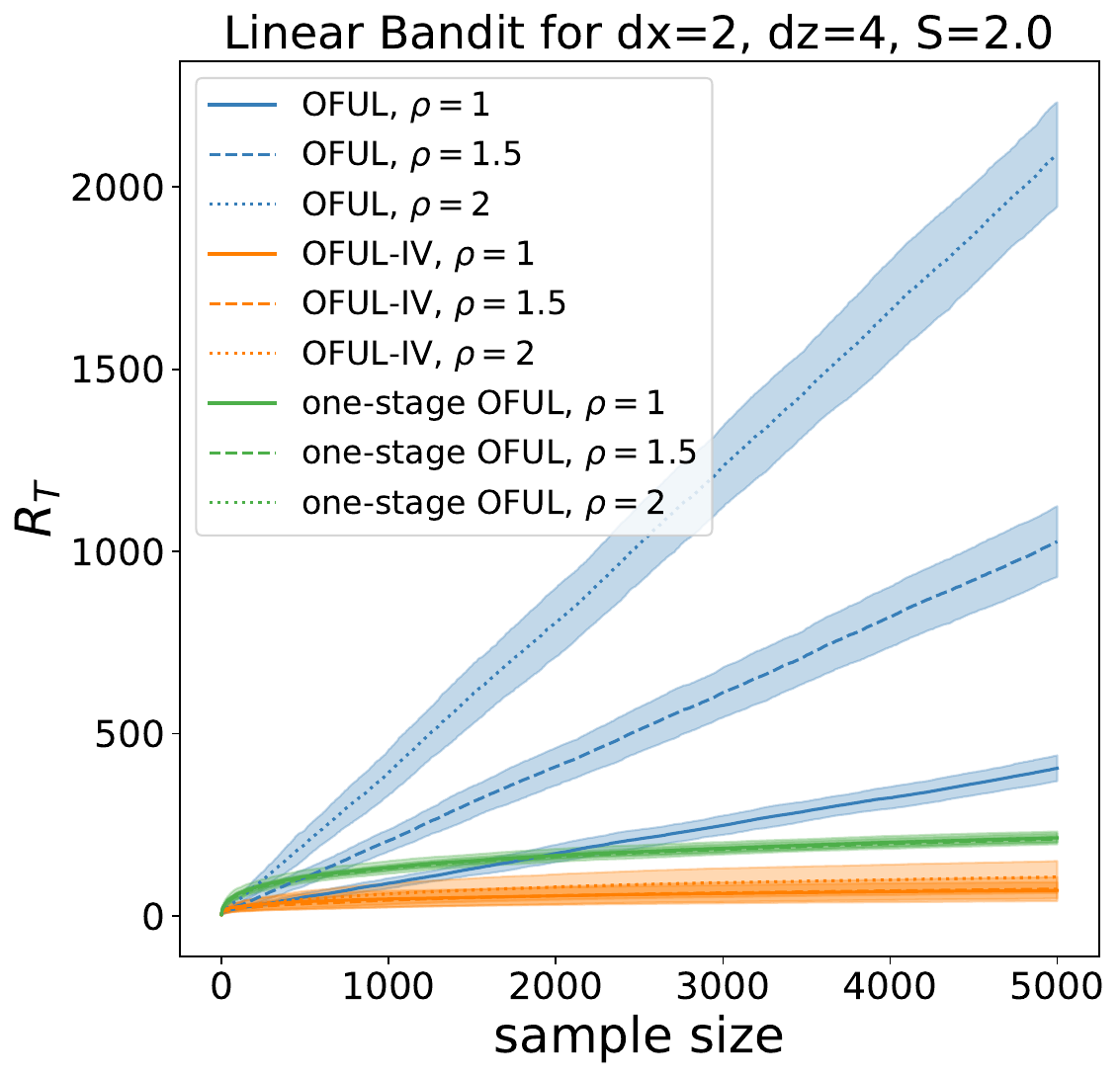}
  \end{subfigure}
  \hfill
  \begin{subfigure}[b]{0.3\textwidth}
    \includegraphics[width=\textwidth]{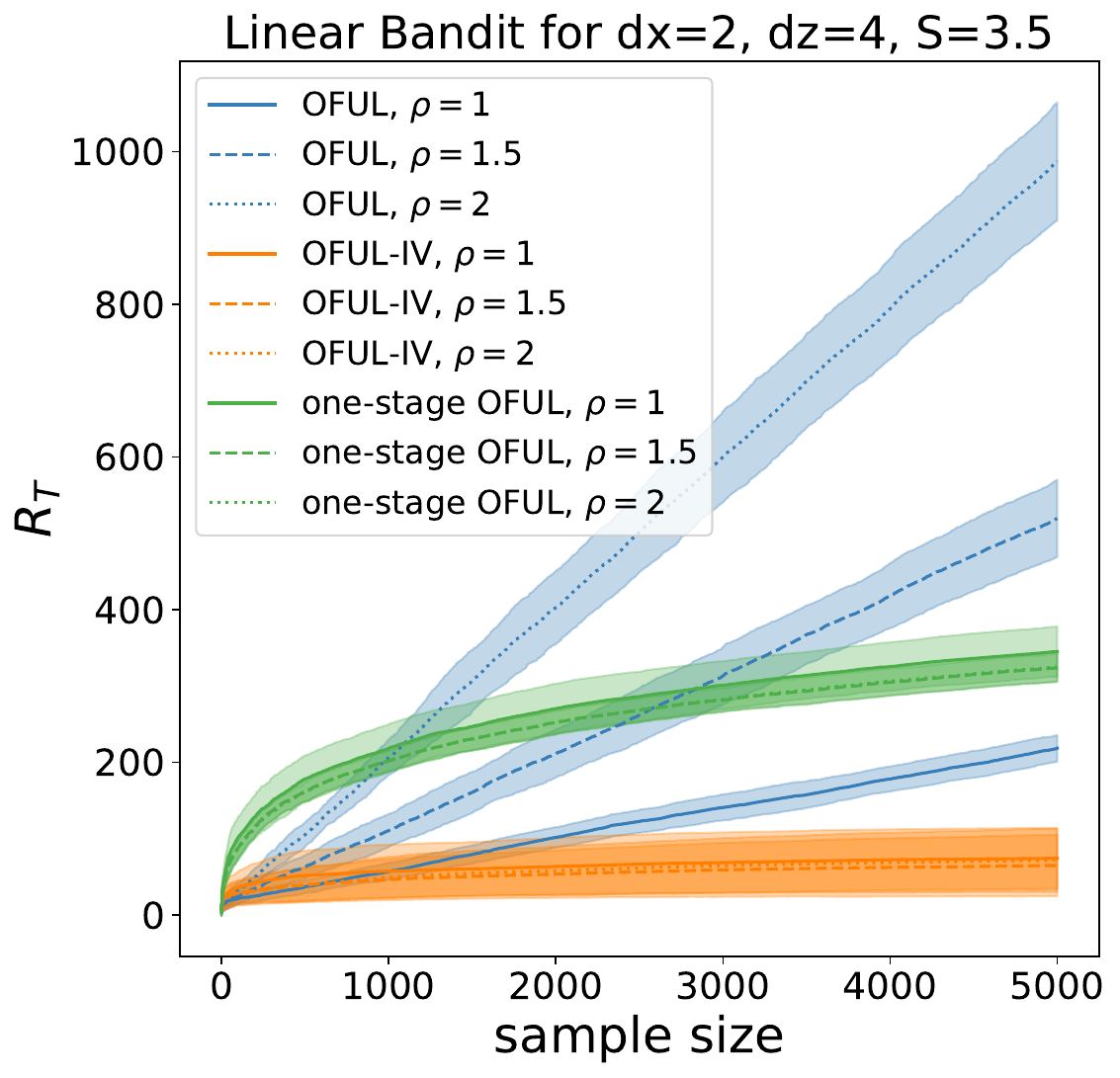}
  \end{subfigure}
  \hfill
  \begin{subfigure}[b]{0.3\textwidth}
    \includegraphics[width=\textwidth]{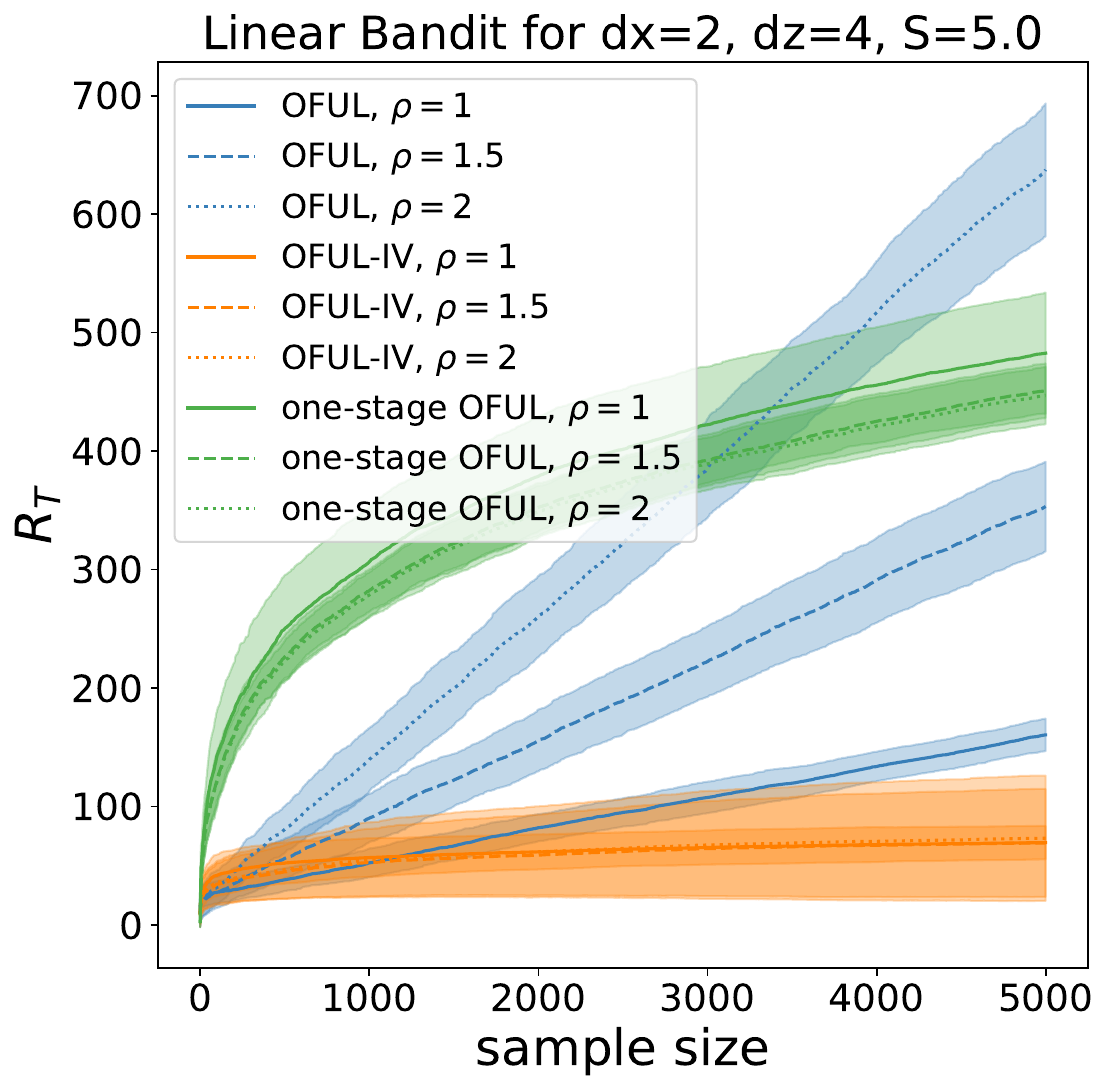}
  \end{subfigure}
  \caption{Regrets due to one-stage OFUL (green), OFUL (blue), and OFUL-IV (orange) for different norms of the hidden parameters ($S=1, 3, 5$ from left to right). Only OFUL-IV (orange) is independent of $S$ and incurs the lowest regret across endogeneity levels $\rho$.}\label{fig:ofuliv_new_main}
\end{figure}

\section{Experimental analysis}\label{sec:exp_main}
We present experimental results for both online regression and linear bandit in \Cref{Fig:learninc},~\ref{fig:lbe} and \ref{fig:ofuliv_new_main}. 
We compare the performance of \otsls{} and Online Ridge Regression (\ridge).
For LBEs, we compare the performance of \ofuliv{} and \oful{}~\citep{abbasi2011improved}.

\textbf{Experimental Setup.} 
We induce endogeneity in the problem in the following arbitrary way: by settings $\eta_{t} = \rho \epsilon_{t,1}+\widetilde\eta_{t}$ where $\epsilon_{t,1}$ indicates the first component of the vector $\bep_t$. Then, we can control the level of endogeneity of the first and second stages through $\rho$.
\setlength{\textfloatsep}{2pt}
\begin{table}[h!]
    \centering
    \begin{tabular}{l|l|l}
        & Online regression & Linear bandit \\\hline
        1st Stage & $\bx_{t} = \bt\, \bz_{t} + \bep_t$ & $\bx_{t,a} = \bt \bz_{t,a} + \bep_t$ \\
        2nd Stage & $y_t = \bb^\top \bx_{t} + \eta_t$ & $y_t = \bb^\top \bx_{t,A_t} + \eta_t$
    \end{tabular}
    \caption{Experimental settings.  $\rho$ controls endogeneity since $\eta_{t} = \rho  \epsilon_{t,1}+\widetilde\eta_{t}$ and $\widetilde\eta_{t}$ is exogenous noise.}\label{t:set}
\end{table}
We choose $d_{\bx}=\{2,5,8\}$ and $d_{\bz}=\{4,10,16\}$ respectively. In our experiments, we choose arbitrarily $\bb$ as a normalised vector with equal negative entries; therefore, the values in the components are uniquely determined by the dimension $d_{\bx}$.
We choose $\bt=\I_{d_{\bz},d_{\bx}}$ which has ones on  the entries $i=j$ and zeros for $i\neq j$.
Then, we sample at each time $t$ (and also for every arm $a$ for the LBE setting) the vectors $\bz_{t} \sim \mathcal N_{d_{\bz}}(\vec{0}, \I_{d_{\bz}})$ ($\bz_{t,a} \sim \mathcal N_{d_{\bz}}(\vec{0}, \I_{d_{\bz}})$), the vector noise   $\bep_{t} \sim \mathcal N_{d_{\bx}}(\vec{0}, \I_{d_{\bx}})$, and the scalar noise $\eta_{t} = \widetilde\eta_{t} + \rho \cdot \epsilon_{t,1}$ where $\widetilde\eta_{t} \sim \mathcal{N}_1(0,1)$. We run the algorithms with the same regularisation parameters, i.e. $\lambda =0.1$.
We repeat our experiments 20 times. 
We average the results, and for each algorithm, we report the mean and standard deviation of the cumulative regret (shaded areas correspond to one standard deviation).
For further experiments and results with both synthetic and real data, see \Cref{App:exp}.

\begin{figure}[t!]
     \centering
     \begin{subfigure}[b]{0.45\textwidth}
         \centering
         \includegraphics[width=\textwidth]{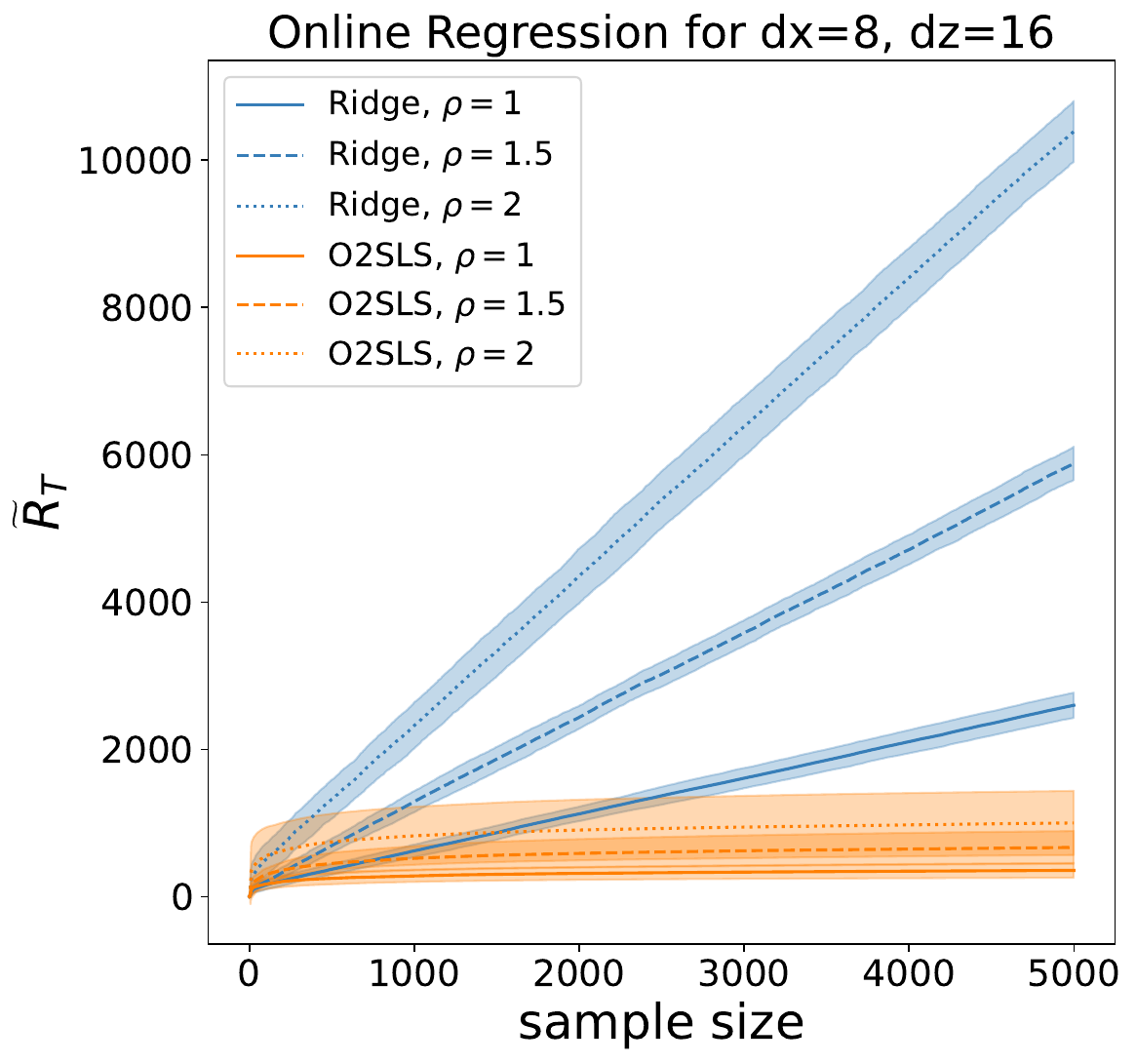}
         \caption{\textbf{Regression:} id. regret.}\label{Fig:learninc}
     \end{subfigure}
     \hfill
     \begin{subfigure}[b]{0.45\textwidth}
         \centering
        \includegraphics[width=\textwidth]{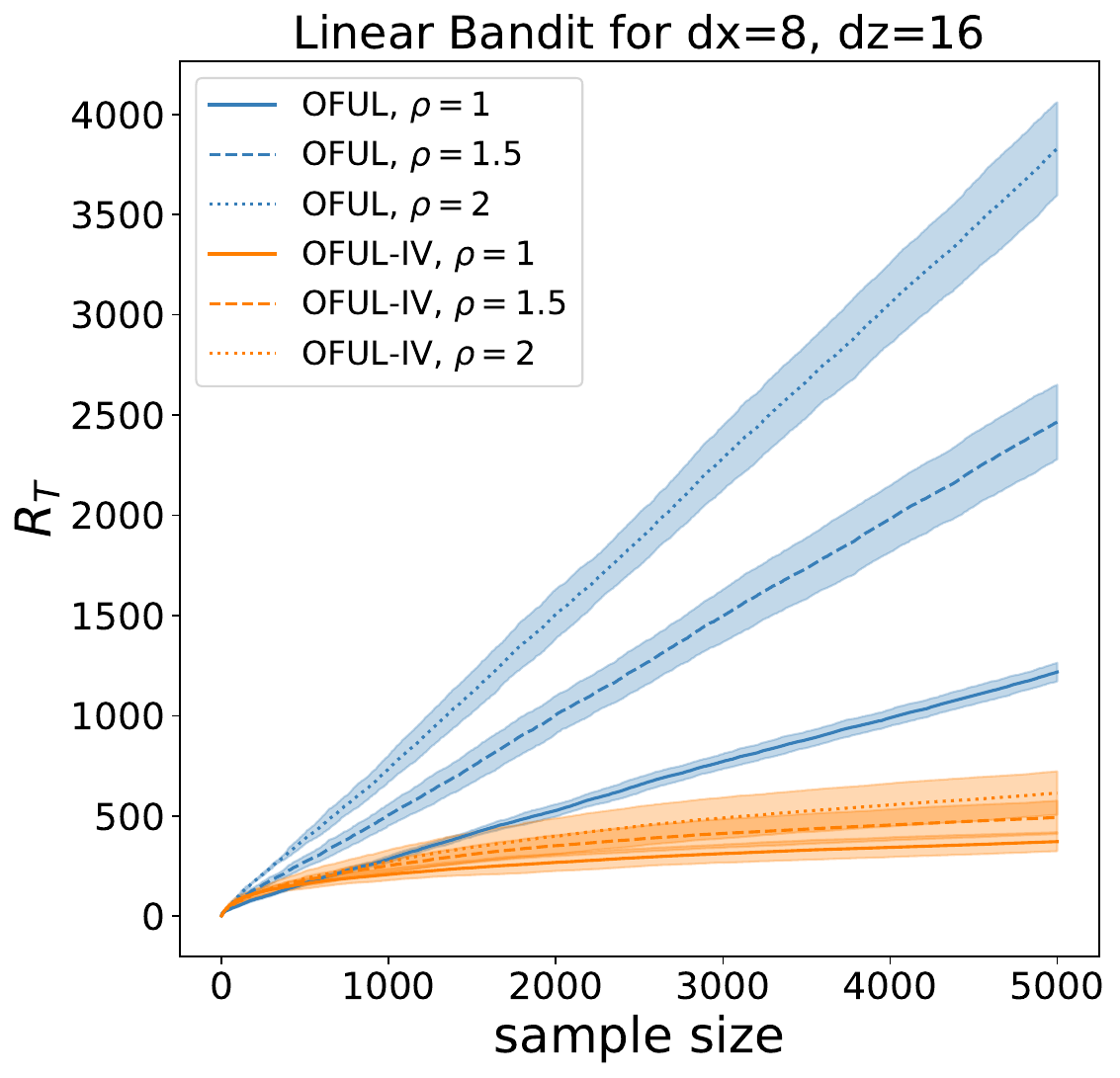}
         \caption{\textbf{LBE:} cumulative regret.}\label{fig:lbe}
     \end{subfigure}
        \caption{(Left) \textbf{Regression:} Identification regrets of Online Ridge (\textit{blue}) and \otsls{} (\textit{orange}) over $T=5000$ steps, and for $\rho = 1, 1.5, 2$. 
       \textit{With the increase in $\rho$, i.e. endogeneity, \otsls{} performs better.} 
        (Right) \textbf{LBE:} Cumulative regrets of \oful{} (\textit{blue}) and \ofuliv{} (\textit{orange}) over $T = 5000$ steps with $\rho = 1, 1.5, 2$. 
        \textit{\ofuliv{} always incurs lower regrets, and improvements w.r.t. \oful{} increases with $\rho$.}
        }\label{fig:ofuliv}%
\end{figure}

\textbf{Summary of Results.} \textit{1. Regression.} \otsls{} outperforms \ridge{} in the whole parameter space under study, and the performance-gain increases with increasing values of the $\rho$ parameter, i.e. with increasing level of endogeneity.
\textit{2. Bandits.} \oful{} builds a confidence ellipsoid centered at $\bb^{\ridge}_{t}$, while \ofuliv{} uses \otsls{} to build an accurate estimate and an ellipsoid that with high probability contains $\bb$. \Cref{fig:ofuliv} indicates that \ofuliv{} incurs lower regret than \oful{}.
\textit{3. Independence of \ofuliv{} from the norm of hidden parameters.} Figure~\ref{fig:ofuliv_new_main} shows that only OFUL-IV's regret is independent of the maximum norm of $\bb$, i.e. $S$. But OFUL (oblivious to endogeneity) and \oful{} applied on one-stage reduction of IV regression incurs higher regret with increasing values of $S$ as indicated by theory.

\section{Conclusion and future works}\label{sec:coda}
We study online IV regression, specifically online \tsls{}, for unbounded noise and endogenous data. We analyse the finite-time identification and oracle regrets of \otsls{}. We observe that \otsls{} incurs $\bigO(d_{\bx}d_{\bz} \log^2 T)$ identification regret, and $\bigO(\gamma \sqrt{d_{\bz} T \log T})$ oracle regret as the correlation between the noises in the two-stages dominate the identification regret. For just-identified IVs, identification regrets are of the same order for exogeneity and endogeneity.  We further propose \ofuliv{} for stochastic linear bandits with endogeneity that uses \otsls{} to estimate the parameters. We show that \ofuliv{} achieves $\bigO(\sqrt{d_{\bx}d_{\bz} T} \log T)$ regret. We experimentally show that \otsls{} and \ofuliv{} are more accurate than Ridge and \oful{}, respectively. 
In future, it would be interesting to extend the stochastic analysis of online IV regression and bandits with endogeneity to non-parametric~\citep{newey2003instrumental}, non-linear~\citep{xu2020learning}, and non-stationary~\citep{russac2019weighted} settings.

\bibliography{uai2024-template}

\newpage

\appendix


\section{Notations and useful results}
We indicate in bold vectors and matrices, e.g. the vector and matrix (matrices are also usually capitalised) $\bv\in \mathbb R^d,\bA\in \mathbb R^{d\times d}$; while scalars do not use the bold notation, e.g. the scalar $s\in \mathbb R$.
We indicate the determinant of matrix $\bA$ with $\operatorname{det}(\bA)$ and its trace with $\operatorname{Tr}(\bA)$. For a $x\in \mathbb{R}_{\geq 0}$, we indicate the function that  takes as input $x$, and gives as output the least integer greater than or equal to $x$ as $\lceil x \rceil$ (ceiling function). We indicate the identity matrix of dimension $d$ with $\I_d$ and when dimensions differ we indicate $\I_{d_1,d_2}$ a matrix  which has ones on  the entries $i=j$ and zeros for $i\neq j$.
We dedicate the following table to index all the notations used in this paper. Note that every notation is defined, when it is introduced as well.

\renewcommand{\arraystretch}{1.2}
\begin{longtable}{l l l}
\caption{Notations}\label{tab:Notation}\\
\hline
 $\bx_t$ &$\triangleq$ & Covariates\\
 $\bz_t$ &$\triangleq$ & Instrumental Variables (IVs) \\
 $y_t$ &$\triangleq$ & Outcome variable\\
 $d_{\bz}$ &$\triangleq$ & Dimension of the IVs\\
 $d_{\bx}$ &$\triangleq$ & Dimension of the covariates\\
 $\bb$ &$\triangleq$ & True parameter for the second-stage\\
 $\bt$ &$\triangleq$ & True parameter for the first-stage\\
 $\bb_t$ &$\triangleq$ & \otsls{} estimate of parameter for the second-stage at time $t$\\
 $\widehat{\bt}_t$ &$\triangleq$ & Estimate of parameter for the second-stage at time $t$\\
 $\mathfrak{r}$ &$\triangleq$ & Relevance parameter \\
 $L_{\bt}$ &$\triangleq$ & Bound on the operator norm of the true first-stage parameter \\
 $L_{\bz}$ &$\triangleq$ & Bound on the $\ell_2$ norms of IVs\\
 $\sigma_\eta^2$ &$\triangleq$ & Bound on the variance of the second-stage noise $\eta$ \\
 $\sigma_{\bep}^2$ &$\triangleq$ & Bound on the variance of the components of the first-stage noise $\bep$\\
 $C_1$ &$\triangleq$ & $\sqrt{2\left(1 + L_{\bt}^2 \mathfrak{r}^2 \right)}$.\\
 $C_2$ &$\triangleq$ & $8e^2\left(\sigma_\eta^2+\sigma_{\bep}^2\right) \frac{\mathfrak{r}}{L_{\bz}} \sqrt{\log \frac{2}{\delta}}$\\
 $C_3$&$\triangleq$ & $8e^2\left(\sigma_\eta^2+\sigma_{\bep}^2\right) \frac{\mathfrak{r}}{L_{\bz}} \sqrt{ \max\left\{ 
        \frac{1}{\lambda},
        \frac{2}{ \llmin{\mathbf{\boldsymbol{\Sigma}}}}
    \right\}} \log \frac{2}{\delta}$ \\
 $C_4$&$\triangleq$ & $ \frac{\mathfrak{r}}{L_{\bz}}
    \left(T_0 \sqrt{\frac{\llmin*{\bsig}}{\lambda}}
    +2 \sqrt{2} \right)$\\
 $C_5$ &$\triangleq$ & A $T$-independent constant of $\bigO(d_{\bx} \sigma_{\bep}^2)$ (Equation~\eqref{eq:c4})\\
 $\bG_{\bz,t}$ &$\triangleq$ & First-stage design matrix $\bG_{\bz,0} +\sum_{s=1}^{t} \bz_{s} \bz_{s}^{\top}$ \\
 $\widehat{\bH}_t$ &$\triangleq$ & Second-stage design matrix $\widehat{\bt}_t^{\top} \bZ_t^{\top} \bZ_t \widehat{\bt}_t$ \\
 $\mathbf{\boldsymbol{\Sigma}}$ &$\triangleq$ & The true covariance matrix of IVs $\E[\bz \bz^\top]$\\
 $\mathfrak b_{t}(\delta)$ &$\triangleq$ & Radius of the second-stage confidence ellipsoid at time $t$ (Equation~\eqref{eq:radi_second_stage})\\
 $\lambda$ &$\triangleq$ & Minimum eigenvalue of $\bG_{\bz,0}$, i.e. the (positive-valued) regularisation parameter\\
 $\boldsymbol{\gamma}$ &$\triangleq$ & Covariance vector of the first-and second-stage noises $\mathbb{E}\left[\bep \eta\right]$\\
 $\gamma$ &$\triangleq$ & $\ell_2$ norm of the covariance vector $\boldsymbol{\gamma}$\\
 $f(T)$ & $\triangleq$ & Bound on the sum of maximum eigenvalues of the inverse of first-stage design\\
 & & matrices till time $T$, i.e. $\mathcal{O}(\log(T))$ (Corollary~\ref{cor:ft})\\
\bottomrule
\end{longtable}

\clearpage
\subsection{Vectors, matrices, and norms}
\begin{definition}[$\ell_p$-norms]
For a vector $\bv\in \mathbb R^n$, we express its $\ell_p$-norm as $\norm{\bv}_p$ for $p\geq 0$. A special case is the Euclidean $\ell_2$-norm denoted as $\norm{\cdot}_2$, which is induced by classical scalar product on $\mathbb{R}^n$ denoted by $\langle \cdot,\cdot \rangle$.
\end{definition}

Given a rectangular matrix $\mathbf{A} \in \mathbb{R}^{n \times m}$ with $n \geq m$, we write its ordered singular values as
$$
\sigma_{\max }(\mathbf{A})=\sigma_1(\mathbf{A}) \geq \sigma_2(\mathbf{A}) \geq \cdots \geq \sigma_m(\mathbf{A})=\sigma_{\min }(\mathbf{A}) \geq 0
$$
The minimum and maximum singular values have the variational characterisation
$$
\sigma_{\max }(\mathbf{A})=\max _{\bv \in \mathbb{S}^{m-1}}\|\mathbf{A}\bv\|_2 \quad \text { and } \quad \sigma_{\min }(\mathbf{A})=\min _{\bv \in \mathbb{S}^{m-1}}\|\mathbf{A}\bv\|_2,
$$
where $\mathbb{S}^{d-1}\triangleq\left\{\bv \in \mathbb{R}^d \mid\|\bv\|_2=1\right\}$ is the Euclidean unit sphere in $\mathbb{R}^d$. 
\begin{definition}[$\ell_2$-operator norm]
The spectral or $\ell_2$-operator norm of $\mathbf{A}$ is defined as  
\begin{equation}
\normiii{\mathbf{A}}_2 \triangleq \sigma_{\max }(\mathbf{A})\;.
\end{equation}
\end{definition}
Since covariance matrices are symmetric, we also focus on the set of symmetric matrices in $\mathbb{R}^d$, denoted $\mathcal{S}^{d \times d}=\left\{\mathbf{Q} \in \mathbb{R}^{d \times d} \mid \mathbf{Q}=\mathbf{Q}^{\mathrm{T}}\right\}$, as well as the subset of positive semidefinite matrices given by
$$
\mathcal{S}_{+}^{d \times d} \triangleq \left\{\mathbf{Q} \in \mathcal{S}^{d \times d} \mid \mathbf{Q} \succeq 0\right\} .
$$
From standard linear algebra, we recall the facts that any matrix $\mathbf{Q} \in \mathcal{S}^{d \times d}$ is diagonalisable via a unitary transformation, and we use $\lambda(\mathbf{Q}) \in \mathbb{R}^d$ to denote its vector of eigenvalues, ordered as
$$
\lambda_{\max }(\mathbf{Q})=\lambda_1(\mathbf{Q}) \geq \lambda_2(\mathbf{Q}) \geq \cdots \geq \lambda_d(\mathbf{Q})=\lambda_{\min }(\mathbf{Q}) .
$$
Note that a matrix $\mathbf{Q}$ is positive semidefinite-written $\mathbf{Q} \succeq 0$ for short-if and only if $\lambda_{\min }(\mathbf{Q}) \geq 0$.

\begin{remark}[Rayleigh-Ritz variational characterisation of eigenvalues]
We remind also the Rayleigh-Ritz variational characterisation of the minimum and maximum eigenvalues-namely
$$
\lambda_{\max }(\mathbf{Q})=\max _{\bv \in \mathbb{S}^{d-1}} \bv^{\top} \mathbf{Q} \bv \quad \text { and } \quad \lambda_{\min }(\mathbf{Q})=\min _{\bv \in \mathbb{S}^{d-1}} \bv^{\top} \mathbf{Q} \bv .
$$
\end{remark}

\begin{remark}
For any symmetric matrix $\mathbf{Q}$, the $\ell_2$-operator norm can be written as
$$
\normiii{\mathbf{Q}}_2=\max \left\{\lambda_{\max }(\mathbf{Q}),\left|\lambda_{\min }(\mathbf{Q})\right|\right\},
$$
by virtue of which it inherits the variational representation
$
\normiii{\mathbf{Q}}_2=\max _{\bv \in \mathbb{S}^{d-1}}\left|\bv^{\top} \mathbf{Q} \bv\right| .
$
\end{remark}

\begin{corollary}\label{cor:eig_and_sing}
Given a rectangular matrix $\mathbf{A} \in \mathbb{R}^{n \times m}$ with $n \geq m$, suppose that we define the $m$ dimensional symmetric matrix $\mathbf{R}=\mathbf{A}^{\mathrm{T}} \mathbf{A}$. We then have the relationship
$$
\lambda_j(\mathbf{R})=\left(\sigma_j(\mathbf{A})\right)^2 \quad \text { for } j=1, \ldots, m
$$
\end{corollary}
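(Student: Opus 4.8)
The plan is to reduce the claim to the diagonalisation supplied by the singular value decomposition of $\mathbf{A}$. First I would invoke the SVD: since $\mathbf{A}\in\mathbb{R}^{n\times m}$ with $n\geq m$, we may write $\mathbf{A}=\mathbf{U}\boldsymbol{\Sigma}\mathbf{V}^{\top}$ with $\mathbf{U}\in\mathbb{R}^{n\times n}$ and $\mathbf{V}\in\mathbb{R}^{m\times m}$ orthogonal and $\boldsymbol{\Sigma}\in\mathbb{R}^{n\times m}$ the rectangular ``diagonal'' matrix with $(\boldsymbol{\Sigma})_{jj}=\sigma_j(\mathbf{A})$ for $j=1,\dots,m$ and all other entries zero, where $\sigma_1(\mathbf{A})\geq\cdots\geq\sigma_m(\mathbf{A})\geq 0$ are exactly the ordered singular values from the statement.

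Next I would compute $\mathbf{R}=\mathbf{A}^{\top}\mathbf{A}=\mathbf{V}\boldsymbol{\Sigma}^{\top}\mathbf{U}^{\top}\mathbf{U}\boldsymbol{\Sigma}\mathbf{V}^{\top}=\mathbf{V}\bigl(\boldsymbol{\Sigma}^{\top}\boldsymbol{\Sigma}\bigr)\mathbf{V}^{\top}$, using $\mathbf{U}^{\top}\mathbf{U}=\mathbf{I}_n$. The matrix $\mathbf{D}\triangleq\boldsymbol{\Sigma}^{\top}\boldsymbol{\Sigma}\in\mathbb{R}^{m\times m}$ is diagonal with $(\mathbf{D})_{jj}=\sigma_j(\mathbf{A})^2$. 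Hence $\mathbf{R}=\mathbf{V}\mathbf{D}\mathbf{V}^{\top}$ is an eigendecomposition of the symmetric matrix $\mathbf{R}$ with orthogonal $\mathbf{V}$, so the multiset of eigenvalues of $\mathbf{R}$ equals $\{\sigma_j(\mathbf{A})^2\}_{j=1}^{m}$. Since $x\mapsto x^2$ is increasing on $[0,\infty)$ and $\sigma_1(\mathbf{A})\geq\cdots\geq\sigma_m(\mathbf{A})\geq 0$, the decreasing ordering is preserved, giving $\lambda_j(\mathbf{R})=\sigma_j(\mathbf{A})^2$ for each $j=1,\dots,m$.

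A variational alternative avoiding explicit appeal to the SVD also works: since $\bv^{\top}\mathbf{R}\bv=\|\mathbf{A}\bv\|_2^2$ for every $\bv$, the Courant--Fischer min--max theorem applied to $\mathbf{R}$ gives $\lambda_j(\mathbf{R})=\max_{\dim S=j}\min_{\bv\in S\cap\mathbb{S}^{m-1}}\|\mathbf{A}\bv\|_2^2$, which coincides with the analogous min--max formula characterising $\sigma_j(\mathbf{A})^2$ (reducing, for $j=1$ and $j=m$, to the two variational identities already recorded in the excerpt). I do not expect any genuine obstacle here; the only points that need care are stating the SVD with the correct rectangular shape of $\boldsymbol{\Sigma}$ and bookkeeping on the indices so that the decreasing orders of the $\sigma_j(\mathbf{A})$ and of the $\lambda_j(\mathbf{R})$ match, which is exactly the monotonicity of squaring on the nonnegative reals.
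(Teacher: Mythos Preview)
Your proof is correct. The paper actually states this corollary without proof, treating it as a standard linear-algebra fact recorded for later use; your SVD-based argument (and the Courant--Fischer alternative you sketch) is exactly the textbook justification and would be perfectly acceptable here.
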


We now introduce norms that are induced by positive semi-definite matrices in the following way.
\begin{definition}
For any vector $y \in \mathbb{R}^n$ and matrix $\bA \in \mathcal{S}_{+}^{n \times n}$, let us define the norm $\| \by \|_\bA \triangleq \sqrt{\by^T \bA \by}=\sqrt{\langle\by, \bA \by\rangle} $.
\end{definition}
 
Throughout the paper we will need often to bound matrix induced norms using $\ell_2$-norms for operators, the following results show how this can be done easily for generic matrices. We specialise this result as we need in the text.
\begin{proposition}\label{prop:tricks_norms}
Take $\bA, \bB\in \mathbb{R}^{n \times n}$, with $\bB\succeq 0 $ positive semi-definite and $\bx\in\mathbb R^n$
\begin{equation}
    \norm{\bA \bx}_\bB^2  
= 
    \norm{\bx}_{\bA^\top \bB \bA}^2 \leq \normiii{\bB}_2 \normiii{\bA}_2^2 \norm{\bx}_2^2
\end{equation}
\end{proposition}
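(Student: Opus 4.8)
The plan is to first establish the stated identity by expanding the definition of the matrix-induced norm, and then to obtain the inequality by chaining two standard operator-norm estimates.

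First I would observe that $\bA^\top \bB \bA \succeq 0$ whenever $\bB \succeq 0$, since $\bx^\top (\bA^\top \bB \bA)\bx = (\bA\bx)^\top \bB (\bA\bx) \ge 0$ for every $\bx$; hence $\norm{\cdot}_{\bA^\top \bB \bA}$ is a genuine (semi)norm and the middle expression makes sense. The equality is then immediate from the definition of $\norm{\cdot}_\bB$: $\norm{\bA\bx}_{\bB}^2 = (\bA\bx)^\top \bB (\bA\bx) = \bx^\top (\bA^\top \bB \bA)\bx = \norm{\bx}_{\bA^\top \bB \bA}^2$.

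For the inequality I would apply the Rayleigh--Ritz bound to the symmetric positive semi-definite matrix $\bB$, namely $\by^\top \bB \by \le \lambda_{\max}(\bB)\,\norm{\by}_2^2$ for all $\by \in \mathbb{R}^n$, together with the fact recalled above that $\normiii{\bB}_2 = \lambda_{\max}(\bB)$ for symmetric PSD $\bB$; taking $\by = \bA\bx$ gives $\norm{\bA\bx}_{\bB}^2 \le \normiii{\bB}_2\,\norm{\bA\bx}_2^2$. Then the variational characterisation of the largest singular value gives $\norm{\bA\bx}_2 \le \sigma_{\max}(\bA)\,\norm{\bx}_2 = \normiii{\bA}_2\,\norm{\bx}_2$, and squaring and substituting yields $\norm{\bA\bx}_{\bB}^2 \le \normiii{\bB}_2\,\normiii{\bA}_2^2\,\norm{\bx}_2^2$, as claimed.

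I do not expect any real obstacle: each step is a one-line invocation of a fact already stated in this section (the definition of $\norm{\cdot}_\bA$, the Rayleigh--Ritz characterisation, and the singular-value variational formula). The only mild subtlety is to check positive semi-definiteness of $\bA^\top \bB \bA$ so that the intermediate quantity is legitimately a norm, and to use that for symmetric PSD $\bB$ the operator norm equals $\lambda_{\max}(\bB)$ (not $|\lambda_{\min}(\bB)|$), so that the Rayleigh--Ritz upper bound introduces exactly the factor $\normiii{\bB}_2$.
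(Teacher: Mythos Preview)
Your proposal is correct and follows essentially the same approach as the paper: both establish the equality by unfolding the inner-product definition and then obtain the inequality by applying the Rayleigh--Ritz bound for $\bB$ followed by the operator-norm bound $\norm{\bA\bx}_2 \le \normiii{\bA}_2\norm{\bx}_2$. Your version is in fact slightly cleaner, since the paper writes the bound via the ratio $\langle \bA\bx,\bB\bA\bx\rangle/\norm{\bA\bx}_2^2$ and must then add a caveat about $\bx$ in the null space of $\bA$, whereas your direct application of $\by^\top\bB\by \le \lambda_{\max}(\bB)\norm{\by}_2^2$ avoids that case split.
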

\begin{proof}
    The equality holds since we can rewrite
    \begin{align*}
        \norm{\bA \bx}_\bB^2
    = 
        \langle \bA \bx,\bB \bA \bx\rangle
    = 
        \langle \bx, \bA^\top \bB \bA \bx\rangle
    =
         \norm{\bx}_{\bA^\top \bB \bA}^2.
    \end{align*}
The inequality follows by the definition of $\ell_2$-norms, where we further substitute $\by = \bA \bx$, to get
\begin{align*}
        \langle \bA \bx,\bB \bA \bx\rangle
    &
    =
        \frac{\langle \bA \bx,\bB \bA \bx\rangle}{\norm{\bA \bx}_2^2}\frac{\norm{\bA \bx}_2^2}{\norm{\bx}_2^2}\norm{\bx}_2^2 
    =    
        \frac{\langle \by,\bB \by\rangle}{\norm{\by}_2^2}\frac{\norm{\bA \bx}_2^2}{\norm{\bx}_2^2}\norm{\bx}_2^2 
       \leq 
        \normiii{\bB}_2 \normiii{\bA}_2^2 \norm{\bx}_2^2.
\end{align*}
We note that the inequality holds trivially for $x$ in the null space of $\bA$, 
 therefore, in the previous case, we can safely divide by $\norm{\bA \bx}_2$ and $\norm{\bx}_2$.
\end{proof}


\begin{lemma}[Determinant-Trace Inequality]\label{lem:det-trace} 
Suppose $\bz_{1}, \bz_{2}, \ldots, \bz_{t} \in \mathbb{R}^{d}$ and for any $1 \leq s \leq t$, $\left\|\bz_{s}\right\|_{2} \leq L_z$. Let $\bG_{\bz,t}=\lambda \I_{d_{\bz}}+\sum_{s=1}^{t} \bz_{s} \bz_{s}^{\top}$ for some $\lambda>0$. Then,
\begin{equation}\label{eq:trace_det}
        \operatorname{det}\left(\bG_{\bz,t}\right) 
\leq
    \left(\lambda+t L^{2}_z / d_{\bz}\right)^{d_{\bz}}
\end{equation}
\end{lemma}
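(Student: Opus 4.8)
The plan is to reduce the determinant bound to a statement about the eigenvalues of $\bG_{\bz,t}$ and then apply the arithmetic--geometric mean inequality. Since $\bG_{\bz,t} = \lambda \I_{d_{\bz}} + \sum_{s=1}^t \bz_s \bz_s^\top$ is a sum of a positive multiple of the identity and positive semi-definite rank-one terms, it is symmetric positive definite, so it has $d_{\bz}$ real positive eigenvalues $\mu_1, \dots, \mu_{d_{\bz}}$. The determinant is their product and the trace is their sum.

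First I would write $\operatorname{det}(\bG_{\bz,t}) = \prod_{i=1}^{d_{\bz}} \mu_i$ and invoke AM--GM on the nonnegative numbers $\mu_1,\dots,\mu_{d_{\bz}}$ to get
\[
\operatorname{det}(\bG_{\bz,t}) = \prod_{i=1}^{d_{\bz}} \mu_i \;\leq\; \left(\frac{1}{d_{\bz}}\sum_{i=1}^{d_{\bz}} \mu_i\right)^{d_{\bz}} = \left(\frac{\operatorname{Tr}(\bG_{\bz,t})}{d_{\bz}}\right)^{d_{\bz}}.
\]
Then I would bound the trace: by linearity, $\operatorname{Tr}(\bG_{\bz,t}) = \lambda\, \operatorname{Tr}(\I_{d_{\bz}}) + \sum_{s=1}^t \operatorname{Tr}(\bz_s \bz_s^\top) = \lambda d_{\bz} + \sum_{s=1}^t \|\bz_s\|_2^2$, using $\operatorname{Tr}(\bz_s\bz_s^\top) = \|\bz_s\|_2^2$. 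The hypothesis $\|\bz_s\|_2 \leq L_z$ gives $\sum_{s=1}^t \|\bz_s\|_2^2 \leq t L_z^2$, hence $\operatorname{Tr}(\bG_{\bz,t}) \leq \lambda d_{\bz} + t L_z^2$.

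Substituting this into the AM--GM bound yields
\[
\operatorname{det}(\bG_{\bz,t}) \;\leq\; \left(\frac{\lambda d_{\bz} + t L_z^2}{d_{\bz}}\right)^{d_{\bz}} = \left(\lambda + t L_z^2 / d_{\bz}\right)^{d_{\bz}},
\]
which is the claim. There is no real obstacle here: the only things to be careful about are confirming positive definiteness so that AM--GM applies to genuine (nonnegative) eigenvalues, and the elementary identity $\operatorname{Tr}(\bz_s\bz_s^\top)=\|\bz_s\|_2^2$; both are routine.
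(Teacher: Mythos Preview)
Your proof is correct and follows essentially the same approach as the paper: apply AM--GM to the positive eigenvalues of $\bG_{\bz,t}$ to get $\det(\bG_{\bz,t}) \leq (\operatorname{Tr}(\bG_{\bz,t})/d_{\bz})^{d_{\bz}}$, then bound the trace by $\lambda d_{\bz} + t L_z^2$ via linearity and the norm hypothesis.
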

\begin{proof} 
Let $\alpha_{1}, \alpha_{2}, \ldots, \alpha_{d_{\bz}}$ be the eigenvalues of $\bG_{\bz,t}$. Since $\bG_{\bz,t}$ is positive definite, its eigenvalues are positive. Also, note that $\operatorname{det}(\bG_{\bz,t})=\prod_{s=1}^{{d_{\bz}}} \alpha_{s}$ and $\operatorname{Tr}\left(\bG_{\bz,t}\right)=\sum_{s=1}^{{d_{\bz}}} \alpha_{s}$. By inequality of arithmetic and geometric means,
\[
\sqrt[{d_{\bz}}]{\alpha_{1} \alpha_{2} \cdots \alpha_{d_{\bz}}} \leq \frac{\alpha_{1}+\alpha_{2}+\cdots+\alpha_{d_{\bz}}}{{d_{\bz}}} .
\]
Therefore, $\operatorname{det}\left(\bG_{\bz,t}\right) \leq\left(\operatorname{Tr}\left(\bG_{\bz,t}\right) / {d_{\bz}}\right)^{{d_{\bz}}}$. 
Now, it remains to upper bound the trace:
\[
    \operatorname{Tr}\left(\bG_{\bz,t}\right)
=
    \operatorname{Tr}(\lambda \I_{d_{\bz}})+\sum_{s=1}^{t} \operatorname{Tr}\left(\bz_{s} \bz_{s}^{\top}\right)
=
    {d_{\bz}} \lambda+\sum_{s=1}^{t}\left\|\bz_{s}\right\|_{2}^{2} 
\leq
    {d_{\bz}} \lambda+t L^{2}_z.
\]
\end{proof}

As a direct corollary of Weyl's theorem for eigenvalues, we have the following \citep{wainwright2019high}. 
\begin{lemma}\label{lem:weyl}
For two symmetric matrices $\mathbf A$ and $\mathbf B$
\begin{equation}\label{ineq_eigen_norm}
    \left|\llmin{\mathbf{A}}-\llmin{\mathbf{B}}\right| 
\leq 
    \normiii*{\mathbf{A}-\mathbf{B}}_2.
\end{equation}
\end{lemma}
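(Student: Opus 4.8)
The plan is to obtain the bound directly from the Rayleigh–Ritz variational characterisation of $\lambda_{\min}$ together with the variational formula $\normiii{\mathbf{Q}}_2=\max_{\bv\in\mathbb{S}^{d-1}}\babs{\bv^\top\mathbf{Q}\bv}$ for symmetric $\mathbf{Q}$, both recalled above; here $\mathbf{A},\mathbf{B}\in\mathcal{S}^{d\times d}$ so that $\mathbf{A}-\mathbf{B}$ is again symmetric. First I would fix a unit vector $\bv_B\in\mathbb{S}^{d-1}$ attaining the minimum of the Rayleigh quotient of $\mathbf{B}$, i.e.\ $\lambda_{\min}(\mathbf{B})=\bv_B^\top\mathbf{B}\bv_B$ (the minimum is attained because $\mathbb{S}^{d-1}$ is compact and $\bv\mapsto\bv^\top\mathbf{B}\bv$ continuous).

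Next I would evaluate the Rayleigh quotient of $\mathbf{A}$ at this particular $\bv_B$ and split off the perturbation:
\[
\lambda_{\min}(\mathbf{A})\;\le\;\bv_B^\top\mathbf{A}\bv_B\;=\;\bv_B^\top\mathbf{B}\bv_B+\bv_B^\top(\mathbf{A}-\mathbf{B})\bv_B\;=\;\lambda_{\min}(\mathbf{B})+\bv_B^\top(\mathbf{A}-\mathbf{B})\bv_B .
\]
Since $\mathbf{A}-\mathbf{B}$ is symmetric, the variational formula for the operator norm gives $\bv_B^\top(\mathbf{A}-\mathbf{B})\bv_B\le\babs{\bv_B^\top(\mathbf{A}-\mathbf{B})\bv_B}\le\normiii{\mathbf{A}-\mathbf{B}}_2$, hence $\lambda_{\min}(\mathbf{A})-\lambda_{\min}(\mathbf{B})\le\normiii{\mathbf{A}-\mathbf{B}}_2$. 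Then I would run the identical argument with the roles of $\mathbf{A}$ and $\mathbf{B}$ interchanged, obtaining $\lambda_{\min}(\mathbf{B})-\lambda_{\min}(\mathbf{A})\le\normiii{\mathbf{B}-\mathbf{A}}_2=\normiii{\mathbf{A}-\mathbf{B}}_2$, and combine the two one-sided inequalities to conclude $\babs{\lambda_{\min}(\mathbf{A})-\lambda_{\min}(\mathbf{B})}\le\normiii{\mathbf{A}-\mathbf{B}}_2$.

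There is no genuine obstacle here — the statement is just the $\lambda_{\min}$ case of Weyl's inequality. The only points requiring (minimal) care are that the extremal vectors in the Rayleigh quotients are actually attained, and the bookkeeping of \emph{which} matrix's extremal vector is substituted into \emph{which} Rayleigh quotient, so that the cross term comes out as exactly the quadratic form $\bv^\top(\mathbf{A}-\mathbf{B})\bv$ of the difference; everything else is immediate from the variational identities quoted in the excerpt.
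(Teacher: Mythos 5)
Your argument is correct: fixing the unit vector attaining $\lambda_{\min}(\mathbf{B})$, evaluating the Rayleigh quotient of $\mathbf{A}$ there, bounding the cross term $\bv_B^\top(\mathbf{A}-\mathbf{B})\bv_B$ by $\normiii{\mathbf{A}-\mathbf{B}}_2$, and then symmetrising gives exactly the stated two-sided bound. The paper offers no written proof — it simply invokes Weyl's eigenvalue perturbation theorem via the citation — and your Rayleigh–Ritz derivation is precisely the standard proof of the $\lambda_{\min}$ case of that theorem, so it matches the intended argument and fills in the details the paper delegates to the reference.
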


\newpage

\subsection{Random variables and concentration bounds}
Random variables follow the previous convention if they are scalar, vectors, or random matrix variables.
We adopt the following convention when we talk about sub-Gaussians and sub-exponential random variables.
\begin{definition}[Sub-Gaussian r.v.]
   A random variable $X$ with mean $\mu=\mathbb{E}[X]$ is sub-Gaussian if there is a positive number $\sigma$ such that
$$
\mathbb{E}\left[e^{\lambda(X-\mu)}\right] \leq e^{\sigma^2 \lambda^2 / 2} \quad \text { for all } \lambda \in \mathbb{R} \text {. }
$$
\end{definition}

\begin{definition}[Sub-exponential r.v.]
    A random variable $X$ with mean $\mu=\mathbb{E}[X]$ is sub-exponential if there are non-negative parameters $(\nu, \alpha)$ such that
$$
\mathbb{E}\left[e^{\lambda(X-\mu)}\right] \leq e^{\nu^2 \lambda^2 / 2}\quad \text { for all }|\lambda|<\frac{1}{\alpha}
$$
\end{definition}

\begin{lemma}[Square and product of non-independent sub-Gaussian random variables]\label{lem:squareproductnnind} Given two non-independent random variables $X\sim$ sub-Gaussian$(\sigma_X)$ and $Y\sim$ sub-Gaussian$(\sigma_Y)$, we have the following properties:\\
\noindent\textit{1.} the squared random variable
$
X^2 $ is sub-exp$\left( 4\sqrt{2}  \sigma^2, 4 \sigma^2\right);
$\\
\noindent\textit{2.} the recentered random variable
$
XY$ is sub-exp$\left(4\sqrt{2} \left(\sigma_X^2+\sigma_Y^2\right), 2\left(\sigma_X^2+\sigma_Y^2\right)\right)
$.
\end{lemma}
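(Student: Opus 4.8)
\textbf{Proof plan for Lemma~\ref{lem:squareproductnnind}.}
The plan is to reduce both statements to standard moment-generating-function (MGF) estimates for sub-Gaussian variables, using a single quadratic-form trick to handle the lack of independence. First I would record the basic fact that a sub-Gaussian random variable $X$ with parameter $\sigma_X$ has centred moments growing like those of a Gaussian: $\expect{|X-\mu_X|^k} \leq (C k)^{k/2}\sigma_X^k$ for a universal constant, and equivalently $\expect{\exp(s(X-\mu_X)^2)} < \infty$ for $s$ small enough, with a quantitative bound of the form $\expect{\exp(s(X-\mu_X)^2)} \leq \exp(c s \sigma_X^2)$ valid for $|s| \leq 1/(c'\sigma_X^2)$. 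This is the textbook ``sub-Gaussian squared is sub-exponential'' computation (e.g. via the series expansion of the exponential and the factorial moment bound, or via the Gaussian integral $\expect{\exp(s Z^2)} = (1-2s)^{-1/2}$ as a comparison); carrying it out carefully pins down the constants $4\sqrt{2}$ and $4$ claimed in part~1 for the (already centred, by the way the statement is phrased) variable $X^2$.

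For part~1, I would simply apply the above to $Y=X$: set $\nu = 4\sqrt 2\,\sigma_X^2$ and $\alpha = 4\sigma_X^2$ and verify $\expect{\exp(\lambda(X^2 - \expect{X^2}))} \leq \exp(\nu^2\lambda^2/2)$ for $|\lambda| < 1/\alpha$ directly from the MGF estimate, absorbing the recentering by $\expect{X^2}$ into the computation (recentering only helps, since it removes the linear-in-$\lambda$ term and can only shrink the relevant bound, up to the choice of constants). The key inequality here is the elementary pointwise bound plus Jensen/Hölder to control the exponential moment near zero.

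For part~2, the main device is the polarisation identity $XY = \tfrac14\big((X+Y)^2 - (X-Y)^2\big)$. I would observe that $X+Y$ is sub-Gaussian with parameter at most $\sigma_X+\sigma_Y$ (since $\norm{X+Y}_{\psi_2} \leq \norm{X}_{\psi_2}+\norm{Y}_{\psi_2}$ by the triangle inequality for the sub-Gaussian norm — crucially this needs \emph{no} independence), and likewise for $X-Y$. Then $XY - \expect{XY}$ is a difference of two recentred squares, each sub-exponential by part~1 applied with parameter $\sigma_X+\sigma_Y$; I would combine their MGFs using Cauchy--Schwarz (split $\exp(\lambda \cdot \tfrac14(X+Y)^2)\exp(-\lambda\cdot\tfrac14(X-Y)^2)$ into an $\expect{\cdot}^{1/2}\expect{\cdot}^{1/2}$ product, which again avoids needing independence), halving the valid range of $\lambda$ and adjusting the variance proxy. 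Bookkeeping the constants through the $\tfrac14$ factors, the squaring lemma's $4\sqrt2$ and $4$, and the Cauchy--Schwarz factor of $2$ should land exactly on $\nu = 4\sqrt2(\sigma_X^2+\sigma_Y^2)$ and $\alpha = 2(\sigma_X^2+\sigma_Y^2)$.

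The main obstacle is purely the constant-chasing: getting the stated numerical constants $4\sqrt2$, $4$, $2$ to come out exactly right requires being careful about (i) whether one uses $(\sigma_X+\sigma_Y)^2 \leq 2(\sigma_X^2+\sigma_Y^2)$ at the right moment, (ii) the precise sub-exponential parameters produced by the squaring step, and (iii) the factor-of-two loss in both the variance proxy and the radius from the Cauchy--Schwarz split. The conceptual content — sub-Gaussian squared is sub-exponential, the sub-Gaussian norm is subadditive without independence, and polarisation reduces products to squares — is routine; I would present the quadratic-form/polarisation reduction in full and state the scalar MGF estimates as cited facts, then verify the constants match.
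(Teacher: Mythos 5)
Your plan matches the paper's proof essentially step for step: part 1 is done there by exactly the moment/series MGF computation you describe (yielding sub-exp$(4\sqrt{2}\sigma^2,4\sigma^2)$), and part 2 by the polarisation identity $XY-\E[XY]=\bigl(\tfrac{X+Y}{2}\bigr)^2-\bigl(\tfrac{X-Y}{2}\bigr)^2$ recentred, with Cauchy--Schwarz on the MGF used twice precisely to avoid independence, first for $X\pm Y$ and then to split the difference of the two sub-exponential squares. The only cosmetic difference is that the paper bounds the sub-Gaussian parameter of $X\pm Y$ directly as $\sqrt{2(\sigma_X^2+\sigma_Y^2)}$ via Cauchy--Schwarz rather than via $\psi_2$-subadditivity followed by $(\sigma_X+\sigma_Y)^2\le 2(\sigma_X^2+\sigma_Y^2)$, which lands on the same constants.
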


\begin{proof}
We establish the proof for both statements sequentially, leveraging the first result to demonstrate the second in the scenario of non-independent random variables. This distinction gives rise to distinct constants compared to the outcome observed for independent random variables.

For the first point  we bound the rescaled $p$-th power of $X$
\begin{align*}
    \mathbb{E}\left[\left|\nicefrac{X}{\sigma_{X} \sqrt{2}}\right|^p\right]
&=
    \int_0^{\infty} \mathbb{P}\left\{\left|\nicefrac{X}{\sigma_{X} \sqrt{2}}\right|^p 
    \geq
    u\right\} d u
    \tag{integral identity for positive r.v.}
    \\
&=
    \frac{1}{\sqrt{2 }\sigma_{X}} \int_0^{\infty} \mathbb{P}\left\{|X| 
    \geq
    t\sqrt{2} \sigma_{X}\right\} pt^{p-1} d t 
    \tag{change of variable $u=t^p$}
    \\
&\leq 
    \int_0^{\infty} 2 e^{-t^2} p t^{p-1} d t \tag{by $\sigma_{X}$-sub-Gaussianity} \\
&=
    p \Gamma(p / 2) \tag{ set $t^2=s$ and use definition of Gamma function) }
\end{align*}
By multiplying both sides by $(\sqrt{2}\sigma)^p$ we obtain
$
    {\mathbb{E}\left[\left| X\right|^p\right]}
\leq 
    p 2^{\frac{p}{2}} \sigma^p \Gamma(p/2).$
    
Let $W=X^2$ and $\mu_W=\mathbb{E}[W]$. By power series expansion and since $\Gamma(r)=(r-1)!$ for an integer $r$, we have:
\begin{align*}
    \mathbb{E}\left[e^{\lambda\left(W-\mu_W\right)}\right] 
&=
    1+\lambda\mathbb{E}\left[W-\mu_W\right]+\sum_{r=2}^{\infty} \frac{\lambda^r \mathbb{E}\left[\left(W-\mu_W\right)^r\right]}{r !} 
 \leq 
    1+\sum_{r=2}^{\infty} \frac{\lambda^r \mathbb{E}\left[|X|^{2 r}\right]}{r !} \\
 &\leq
    1+\sum_{r=2}^{\infty} \frac{\lambda^r 2 r 2^r \sigma^{2 r} \Gamma(r)}{r !} 
=
    1+\sum_{r=2}^{\infty}\lambda^r 2^{r+1} \sigma^{2 r} 
=
    1+\frac{8\lambda^2 \sigma^4}{1-2\lambda \sigma^2}
\end{align*}
By making $|\lambda| \leq 1 /\left(4 \sigma^2\right)$, we have $1 /\left(1-2\lambda \sigma^2\right) \leq 2$. Finally, since $(\forall \alpha) 1+\alpha \leq e^\alpha$, we have that  a sub-Gaussian variable $X$ with parameter $\sigma$ is sub-exponential with parameters $(4\sqrt{2} \sigma^2, 4 \sigma^2)$, in fact we have:
\begin{align*}
     \mathbb{E}\left[e^{\lambda\left(X^2-\mathbb{E}\left[X^2\right]\right)}\right] 
\leq 
     e^{16\lambda^2 \sigma^4}
\quad
    \forall|\lambda| \leq 1 /\left(4 \sigma^2\right).
\end{align*}
To prove the second point 
we define $Z_1 \triangleq \left(\frac{X-Y}{2}\right)^2-\mathbb{E}\left[\left(\frac{X-Y}{2}\right)^2\right]$ and $Z_2 \triangleq \left(\frac{X+Y}{2}\right)^2-\mathbb{E}\left[\left(\frac{X+Y}{2}\right)^2\right]$, then
$    X Y-\mathbb{E}[X Y]
=
 Z_1-Z_2$.
We take care of the dependence between $X$ and $Y$ using Cauchy-Schwarz inequality instead of the product rule for expectations of independent random variables:
\begin{align*}
    \mathbb{E}\left[e^{\lambda(X+Y)}\right]
{\leq} 
    \sqrt{\mathbb{E}\left[e^{2 \lambda X}\right]} \sqrt{\mathbb{E}\left[e^{2 \lambda Y}\right]} 
\leq 
    \sqrt{e^{\frac{4 \lambda^2 \sigma_X^2}{2}}} \sqrt{e^{\frac{4 \lambda^2 \sigma_X^2}{2}}}
=
    e^{\frac{\lambda^2}{2}\left[2\left(\sigma_X^2+\sigma_Y^2\right)\right]}
\end{align*}
where we used the sub-Gaussianity of $X$ and $Y$ separately.
This proves for rescaled variables that
both
$ X+Y$  and $X-Y$ are sub-Gaussian$(\sqrt{2} \sqrt{\sigma_X^2+\sigma_Y^2})$, therefore their rescaled versions
 $\nicefrac{(X+Y)}{2}$ and  $\nicefrac{(X-Y)}{2}$ are   sub-Gaussian$(\nicefrac{\sqrt{\sigma_{X}^2+\sigma_Y^2}}{\sqrt{2}})$.
We use the previous result on the square of sub-Gaussian random variables to show that $Z_1$ and $Z_2$ are both 
$\text{sub-exp}\left( 2\sqrt{2} (\sigma_X^2+\sigma_Y^2), 2(\sigma_X^2+\sigma_Y^2)\right).
$
We use again Cauchy-Schwarz since it holds also for dependent $Z_1$ and $Z_2$ 
$$
    \mathbb{E}\left[e^{\lambda(X Y-E[X Y)]}\right]
=
    \mathbb{E}\left[e^{\lambda\left(Z_1-Z_2\right)}\right]
\leq 
    \sqrt{\mathbb{E}\left[e^{2 \lambda Z_1}\right] \mathbb{E}\left[e^{- 2 \lambda Z_2}\right]} 
\leq 
    e^{+\frac{4 \lambda^2}{2} 8\left(\sigma_X^2+\sigma_Y^2\right)^2} 
$$
which holds for $\lambda \leq \nicefrac{1}{2\left(\sigma_X^2+\sigma_Y^2\right)}$. This proves that  the sub-exponential parameters are indeed
$\nu= 4\sqrt{2} \left(\sigma_X^2+\sigma_Y^2\right)$ and $\alpha= 2\left(\sigma_X^2+\sigma_Y^2\right)$.
\end{proof}

The following are known concentration results, see  \citep{wainwright2019high}.

\begin{theorem}[Concentration of martingale difference sequences]\label{thm:wain_martingale_diff}
Let $\left\{\left(D_k, \mathcal{F}_k\right)\right\}_{k=1}^{\infty}$ be a martingale difference sequence, and suppose that $\mathbb{E}\left[e^{\lambda D_k} \mid \mathcal{F}_{k-1}\right] \leq e^{\lambda^2 \nu_k^2 / 2}$ almost surely for any $|\lambda|<1 / \alpha_k$. Then the following hold:\\
\noindent\textit{1.} the sum $\sum_{k=1}^n D_k$ is sub-exponential$(\sqrt{\sum_{k=1}^n \nu_k^2},\alpha_*)$ where $\alpha_*\triangleq \max\nolimits_{k=1, \ldots, n} \alpha_k$;\\
\noindent \textit{2.} the sum satisfies the concentration inequality
$$
\mathbb{P}\left[\left|\sum_{k=1}^n D_k\right| \geq t\right] \leq \begin{cases}2 e^{-\frac{r^2}{2 \sum_{k=1}^n \nu_k^2}} &  \text{if}  \quad 0 \leq t \leq \frac{\sum_{k=1}^n \nu_k^2}{\alpha_*} \\ 2 e^{-\frac{1}{2 \alpha_*}} &  \text{if} \quad t>\frac{\sum_{k=1}^n \nu_k^2}{\alpha_*} \end{cases}.
$$
\end{theorem}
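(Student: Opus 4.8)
The plan is to proceed in two stages that mirror the statement: first establish the sub-exponential moment generating function (MGF) bound for the partial sum $S_n \triangleq \sum_{k=1}^n D_k$ by peeling off one increment at a time via the tower property, and then derive the two-regime tail inequality by a Chernoff argument optimised in $\lambda$ subject to the admissibility constraint $|\lambda| < 1/\alpha_*$. Throughout, note that the martingale-difference property gives $\mathbb{E}[D_k] = 0$, hence $\mathbb{E}[S_n] = 0$.

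\textbf{Part 1.} I would show by induction on $n$ that $\mathbb{E}[e^{\lambda S_n}] \le \exp\big(\tfrac{\lambda^2}{2}\sum_{k=1}^n \nu_k^2\big)$ for every $|\lambda| < 1/\alpha_*$. For the inductive step, condition on $\mathcal{F}_{n-1}$: since $S_{n-1}$ is $\mathcal{F}_{n-1}$-measurable and $e^{\lambda S_{n-1}} \ge 0$, one has $\mathbb{E}[e^{\lambda S_n}] = \mathbb{E}\big[e^{\lambda S_{n-1}}\, \mathbb{E}[e^{\lambda D_n}\mid \mathcal{F}_{n-1}]\big] \le e^{\lambda^2 \nu_n^2/2}\, \mathbb{E}[e^{\lambda S_{n-1}}]$, where the per-increment hypothesis applies because $|\lambda| < 1/\alpha_* \le 1/\alpha_n$. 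Iterating down to the base case $n=1$ (which is the hypothesis itself after taking expectations) yields the bound, and since $\mathbb{E}[S_n]=0$ this is exactly the assertion that $S_n$ is sub-exponential with parameters $(\sqrt{\sum_k \nu_k^2}, \alpha_*)$. Observe that the hypothesis is symmetric under $\lambda \mapsto -\lambda$, so the same MGF bound holds for $-S_n$ as well.

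\textbf{Part 2.} Write $\nu_\star^2 \triangleq \sum_{k=1}^n \nu_k^2$. For $0 \le \lambda < 1/\alpha_*$, Markov's inequality applied to $e^{\lambda S_n}$ gives $\mathbb{P}[S_n \ge t] \le e^{-\lambda t}\,\mathbb{E}[e^{\lambda S_n}] \le \exp\!\big(-\lambda t + \tfrac{\lambda^2}{2}\nu_\star^2\big)$. The unconstrained minimiser of the exponent over $\lambda$ is $\lambda = t/\nu_\star^2$. If $t \le \nu_\star^2/\alpha_*$ this choice is admissible and gives $\mathbb{P}[S_n \ge t] \le e^{-t^2/(2\nu_\star^2)}$; if $t > \nu_\star^2/\alpha_*$ the minimiser is infeasible, so one lets $\lambda \uparrow 1/\alpha_*$ to obtain $\mathbb{P}[S_n \ge t] \le \exp\!\big(-t/\alpha_* + \nu_\star^2/(2\alpha_*^2)\big) \le e^{-t/(2\alpha_*)}$, using $\nu_\star^2/(2\alpha_*^2) \le t/(2\alpha_*)$ in this regime. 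Running the identical argument on $-S_n$ and taking a union bound introduces the factor $2$ and produces both cases of the stated inequality (with the exponents read as $-t^2/(2\nu_\star^2)$ and $-t/(2\alpha_*)$, respectively).

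The argument is essentially routine; the only points needing care are (i) the measurability bookkeeping that makes the conditioning in Part 1 legitimate, and (ii) in Part 2, handling the boundary of the $\lambda$-optimisation cleanly — in particular justifying the limit $\lambda \uparrow 1/\alpha_*$ in the Chernoff bound and the elementary inequality $\nu_\star^2/(2\alpha_*^2) \le t/(2\alpha_*)$ in the large-deviation regime. I do not anticipate a genuine obstacle here: this is the standard iterated-conditioning-plus-Chernoff proof for sums of martingale differences.
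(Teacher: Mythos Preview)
Your proof is correct and is precisely the standard argument. Note, however, that the paper does not actually prove this theorem: it is stated as a known concentration result with a citation to Wainwright's textbook, so there is no ``paper's own proof'' to compare against. Your iterated-conditioning-plus-Chernoff derivation is exactly the textbook proof, and you have also correctly interpreted the exponents in the two regimes as $-t^2/(2\nu_\star^2)$ and $-t/(2\alpha_*)$ despite the apparent typos ($r^2$ for $t^2$, and the missing $t$ in the second case) in the paper's statement.
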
 

\begin{theorem}[Estimation of covariance matrices]\label{thm:estimation_cov}
Let $\bz_1, \ldots, \bz_t$ be i.i.d. zero-mean random vectors with covariance $\boldsymbol{\Sigma}$ such that $\left\|\bz_s\right\|_2 \leq L_{\bz}$ almost surely. Then for all $\delta>0$, the sample covariance matrix $\wbs_t=\frac{1}{t} \sum_{s=1}^t \bz_s \bz_s^{\top}$ satisfies
\begin{equation*}
    \mathbb{P}\left[
    \normiii*{
    \wbs_t-\boldsymbol{\Sigma}
    }_2 
\geq 
    \delta\right] \leq 2 d \exp \left(-\frac{t \delta^2}{\left(2 L_{\bz}^2 \normiii*{\boldsymbol{\Sigma}}_2+\delta\right)}\right),
\end{equation*}
this means that with probability at least $1-\delta$
\begin{equation*}
    \normiii*{
    \wbs_t-\boldsymbol{\Sigma}
    }_2 
\leq
    \frac{4L_{\bz}^2}{t} \log \left( \frac{2d}{\delta}\right)
    + 
    2 \sqrt{
    \frac{2L_{\bz}^2}{t} \log \left( \frac{2d_{\bz}}{\delta}\right) \normiii*{\boldsymbol{\Sigma}}_2
    }.
\end{equation*}
\end{theorem}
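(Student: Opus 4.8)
The plan is to read this bound off from a matrix Bernstein inequality applied to the sum of i.i.d.\ centred rank-one matrices $\bM_s \triangleq \bz_s\bz_s^\top - \boldsymbol{\Sigma}$, for which $\wbs_t - \boldsymbol{\Sigma} = \tfrac{1}{t}\sum_{s=1}^t \bM_s$. Since the $\bM_s$ are i.i.d., symmetric and mean-zero, the concentration is governed by two quantities: a uniform almost-sure bound on $\lambda_{\max}(\bM_s)$, and the matrix variance statistic $v \triangleq \normiii*{\sum_{s=1}^t \mathbb{E}[\bM_s^2]}_2$. First I would pin down both of these from the boundedness hypothesis $\|\bz_s\|_2 \le L_{\bz}$.

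For the uniform bound: because $\boldsymbol{\Sigma}\succeq 0$, we have $\bM_s \preceq \bz_s\bz_s^\top \preceq \|\bz_s\|_2^2\,\I_{d} \preceq L_{\bz}^2\,\I_{d}$, so $\lambda_{\max}(\bM_s)\le L_{\bz}^2$ almost surely; and by Jensen's inequality $\normiii{\boldsymbol{\Sigma}}_2 = \normiii{\mathbb{E}[\bz\bz^\top]}_2 \le L_{\bz}^2$ as well. For the variance: expanding the square gives $\mathbb{E}[\bM_s^2] = \mathbb{E}[\|\bz_s\|_2^2\,\bz_s\bz_s^\top] - \boldsymbol{\Sigma}^2 \preceq L_{\bz}^2\,\mathbb{E}[\bz_s\bz_s^\top] = L_{\bz}^2\,\boldsymbol{\Sigma}$, using $\|\bz_s\|_2^2 \le L_{\bz}^2$ pointwise and $\bz_s\bz_s^\top\succeq 0$. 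Since $0 \preceq \mathbb{E}[\bM_s^2] \preceq L_{\bz}^2\boldsymbol{\Sigma}$ and the summands are identically distributed, monotonicity of $\normiii{\cdot}_2$ on the PSD cone yields $v \le t\,L_{\bz}^2\,\normiii{\boldsymbol{\Sigma}}_2$.

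With $R = L_{\bz}^2$ and this $v$, I would invoke the matrix Bernstein inequality. Concretely this comes from the matrix Laplace-transform method: the Chernoff step $\mathbb{P}[\lambda_{\max}(\sum_s\bM_s)\ge u] \le \inf_{\theta>0} e^{-\theta u}\,\mathbb{E}\operatorname{Tr}\exp(\theta\sum_s\bM_s)$, then Lieb's concavity theorem to factor the trace-exponential over the independent summands, then the standard matrix moment-generating-function bound for bounded centred matrices, then optimisation over $\theta$, and finally a union bound over $\pm\sum_s\bM_s$; this gives $\mathbb{P}[\normiii*{\sum_{s=1}^t\bM_s}_2 \ge u] \le 2d\exp\!\big(-\tfrac{u^2/2}{\,v + Ru/3\,}\big)$. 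Substituting $u = t\delta$ and cancelling a factor $t$ between numerator and denominator produces the first displayed inequality, up to the usual absolute constants. For the second display I would invert the first: equating its right-hand side to the target confidence level $\delta$ gives a quadratic in the deviation radius, and solving it and then using $\sqrt{a+b}\le\sqrt a+\sqrt b$ separates out the $\tfrac1t$-term from the $\sqrt{\tfrac{\log(1/\delta)}{t}}$-term, yielding the claimed bound (again up to absolute constants, and reading $d = d_{\bz}$).

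The only non-routine ingredient is the matrix Bernstein inequality itself --- in particular the matrix mgf estimate for bounded centred summands together with Lieb's concavity theorem --- but this is standard and I would just cite it from \citep{wainwright2019high} (or Tropp's user-friendly tail bounds). Everything else --- the rank-one variance computation and the algebraic inversion of the tail bound --- is elementary, so the only real care needed is in tracking the constants $L_{\bz}^2$, $\normiii{\boldsymbol{\Sigma}}_2$, $t$, and $\delta$ cleanly through the substitution and inversion steps.
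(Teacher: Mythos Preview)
Your proposal is correct and is precisely the standard derivation of this bound via matrix Bernstein applied to the centred rank-one summands $\bM_s=\bz_s\bz_s^\top-\boldsymbol{\Sigma}$. Note, however, that the paper does not actually give a proof of this theorem: it is listed among ``known concentration results, see \citep{wainwright2019high}'' and simply cited without argument. So in this instance you have supplied the (standard) proof sketch that the paper chose to omit; there is nothing to compare against beyond the citation, and your route is exactly the one that reference would give.
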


Finally, we have two important theorems whose proofs can be found in \citep{abbasi2011improved}.

\begin{theorem}[Self-Normalised Bound for Vector-Valued Martingales]\label{thm:vecmartbound}
Let $\left\{\mathcal F_{t}\right\}_{t=0}^{\infty}$ be a filtration. Let $\left\{\eta_{t}\right\}_{t=1}^{\infty}$ be a real-valued stochastic process such that $\eta_{t}$ is $\mathcal F_{t}$-measurable and $\eta_{t}$ is conditionally $\sigma_{\eta}$-sub-Gaussian for some $\sigma_{\eta}\geq 0$ i.e.
$
\forall \lambda \in \mathbb{R} $ holds 
$\mathbb E\left[e^{\lambda \eta_{t}} \mid \mathcal F_{t-1}\right] \leq \exp \left(\nicefrac{\lambda^{2} \sigma_{\eta}^{2}}{2}\right)
.$
Let $\left\{\bz_t\right\}_{t=1}^{\infty}$ be an $\mathbb{R}^{d}$-valued stochastic process such that $\bz_t$ is $\mathcal F_{t-1}$-measurable. For any $t \geq 0$, define
$
    \bv_{t}=\sum_{s=1}^{t} \eta_{s} \bz_s.
$
Then, for any $\delta>0$, with probability at least $1-\delta$, for all  $t \geq 0$,
\[
    \left\|\bv_{t}\right\|_{\bG_{\bz,t}^{-1}} 
\leq 
    \sqrt{2 \sigma_{\eta}^{2} \log \left(\frac{\operatorname{det}\left(\bG_{\bz,t}\right)^{1 / 2} \lambda^{-d / 2}}{\delta}\right)}
\leq 
    \sqrt{ 2 {d_{\bz}\sigma_{\eta}^2} \log \left(\frac{1+t L^{2}_z / d_{\bz} \lambda}{\delta}\right)}
\]
\end{theorem}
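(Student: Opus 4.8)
The plan is to use the method of mixtures (pseudo-maximisation) of \citet{abbasi2011improved}. Fix an auxiliary parameter $\boldsymbol{\xi}\in\mathbb R^{d_{\bz}}$ and set $M_t^{\boldsymbol{\xi}}\triangleq\exp\bigl(\boldsymbol{\xi}^{\top}\bv_t-\tfrac{\sigma_{\eta}^2}{2}\boldsymbol{\xi}^{\top}(\sum_{s=1}^t\bz_s\bz_s^{\top})\boldsymbol{\xi}\bigr)$ with $M_0^{\boldsymbol{\xi}}\triangleq1$. Because $\bz_s$ is $\mathcal F_{s-1}$-measurable while $\eta_s$ is conditionally $\sigma_{\eta}$-sub-Gaussian, the conditional MGF bound gives $\mathbb E[\exp(\eta_s\boldsymbol{\xi}^{\top}\bz_s)\mid\mathcal F_{s-1}]\le\exp(\tfrac{\sigma_{\eta}^2}{2}(\boldsymbol{\xi}^{\top}\bz_s)^2)$, so each multiplicative increment of $M_t^{\boldsymbol{\xi}}$ has conditional expectation at most one and $(M_t^{\boldsymbol{\xi}})_{t\ge0}$ is a non-negative super-martingale with $\mathbb E[M_t^{\boldsymbol{\xi}}]\le1$.

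Next I would mix over $\boldsymbol{\xi}$ against the density $h$ of $\mathcal N(\mathbf 0,(\lambda\sigma_{\eta}^2)^{-1}\I_{d_{\bz}})$, defining $\bar M_t\triangleq\int M_t^{\boldsymbol{\xi}}\,h(\boldsymbol{\xi})\,\mathrm d\boldsymbol{\xi}$. Tonelli's theorem lets one interchange this integral with the conditional expectation, so $\bar M_t$ is again a non-negative super-martingale with $\mathbb E[\bar M_t]\le1$. The integrand's exponent is quadratic in $\boldsymbol{\xi}$ with Hessian $\sigma_{\eta}^2\bG_{\bz,t}$; completing the square in $\boldsymbol{\xi}$ and evaluating the Gaussian normalising constant yields the closed form
\[
 \bar M_t=\Bigl(\tfrac{\det(\lambda\I_{d_{\bz}})}{\det\bG_{\bz,t}}\Bigr)^{1/2}\exp\!\Bigl(\tfrac{\|\bv_t\|_{\bG_{\bz,t}^{-1}}^2}{2\sigma_{\eta}^2}\Bigr),
\]
so that $\bar M_t$ is, up to a deterministic factor depending only on $\bG_{\bz,t}$, the exponential of the self-normalised quantity we wish to bound.

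Finally, since $\bar M_t$ is a non-negative super-martingale with $\mathbb E[\bar M_0]\le1$, Ville's maximal inequality gives $\mathbb P[\sup_{t\ge0}\bar M_t\ge1/\delta]\le\delta$ (alternatively one runs the stopping-time argument of \citet{abbasi2011improved}: stop $\bar M_t$ at a time $\tau$, apply optional stopping and Markov's inequality, then send the horizon to infinity with Fatou). On the complementary event, taking logarithms of the closed form above and rearranging gives, simultaneously for all $t\ge0$, $\|\bv_t\|_{\bG_{\bz,t}^{-1}}^2\le2\sigma_{\eta}^2\log\bigl(\det(\bG_{\bz,t})^{1/2}\lambda^{-d_{\bz}/2}/\delta\bigr)$, which is the first claimed inequality (the case $t=0$ is trivial since $\bv_0=\mathbf 0$). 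The second inequality then follows by monotonicity of $\log$ and the determinant--trace bound $\det(\bG_{\bz,t})\le(\lambda+tL_z^2/d_{\bz})^{d_{\bz}}$ of Lemma~\ref{lem:det-trace}, which gives $\det(\bG_{\bz,t})^{1/2}\lambda^{-d_{\bz}/2}\le(1+tL_z^2/(d_{\bz}\lambda))^{d_{\bz}/2}$.

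The main obstacle, in my view, is obtaining the \emph{anytime} form of the bound: a direct Markov inequality only controls $\bar M_t$ at a fixed $t$, whereas the statement asserts the bound for all $t\ge0$ simultaneously on a single high-probability event, so the argument genuinely needs a maximal inequality for super-martingales (Ville) or the more hands-on stopping-time-plus-Fatou device, with care required in the limiting step. A lesser technical point is justifying the Tonelli interchange that endows the mixture $\bar M_t$ with the super-martingale property, and carrying out the completion-of-the-square bookkeeping that produces the exact $\det(\lambda\I_{d_{\bz}})/\det\bG_{\bz,t}$ prefactor.
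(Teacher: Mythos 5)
Your proposal is correct and is essentially the proof the paper relies on: Theorem~\ref{thm:vecmartbound} is quoted from \citep{abbasi2011improved}, whose argument is exactly your method-of-mixtures construction (the exponential super-martingale $M_t^{\boldsymbol{\xi}}$, Gaussian mixing giving the $\bigl(\det(\lambda\I_{d_{\bz}})/\det\bG_{\bz,t}\bigr)^{1/2}$ prefactor, and Ville's maximal inequality or the stopping-time-plus-Fatou device for the anytime statement), followed by the determinant--trace bound of Lemma~\ref{lem:det-trace}. The only detail you leave implicit is that the last displayed inequality replaces the exponent $d_{\bz}/2$ by $d_{\bz}$, which uses $\tfrac{d_{\bz}}{2}\log x+\log(1/\delta)\le d_{\bz}\log(x/\delta)$ for $x\ge 1$ and $\delta\le 1$ and is immediate.
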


\begin{theorem}[Self-normalised Bound for Scalar Valued Martingales]\label{thm:uniform-bound}
Under the same assumptions as the previous theorem, let $\left\{\mathcal F_{t}\right\}_{t=0}^{\infty}$ be a filtration. Let $\left\{\eta_{t}\right\}_{t=1}^{\infty}$ be a real-valued stochastic process such that $\eta_{t}$ is $\mathcal F_{t}$-measurable and $\eta_{t}$ is conditionally $\sigma_{\eta}$-sub-Gaussian for some $\sigma_{\eta}\geq 0$ i.e.
$
\forall \lambda \in \mathbb{R} $ holds 
$\mathbb E\left[e^{\lambda \eta_{t}} \mid \mathcal F_{t-1}\right] \leq \exp \left(\nicefrac{\lambda^{2} \sigma_{\eta}^{2}}{2}\right)
.$
Let $\left\{w_t\right\}_{t=1}^{\infty}$ be a scalar-valued stochastic process such that $w_t$ is $\mathcal F_{t-1}$-measurable. 
Then, for any $\delta>0$, with probability at least $1-\delta$, for all  $t \geq 0$,

\begin{equation}
    \left|\sum_{s=1}^{t} \eta_{t} w_t \right| 
\leq
    \sigma_{\eta} \sqrt{2
    \left(
    1 / \sigma_{\eta}^2
    +\sum_{s=1}^{t}w_s
    \right) 
    \log \left(\frac{\sqrt{1+\sigma_{\eta}^2 \sum_{s=1}^{t}w_s^{2}}}{\delta}\right)}
\end{equation}
\end{theorem}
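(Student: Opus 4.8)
The plan is to establish this scalar self-normalised inequality by the \emph{method of mixtures}, which is exactly how the vector-valued bound of Theorem~\ref{thm:vecmartbound} is obtained; I will reproduce the one-dimensional mixture argument because it makes the constants transparent, and note in passing that the result is also the $d_{\bz}=1$ specialisation of Theorem~\ref{thm:vecmartbound}. (The statement carries two harmless typos: the summand should read $\eta_s w_s$ and the quantity appearing in the radius should be the quadratic variation $\sum_{s=1}^t w_s^2$, not $\sum_{s=1}^t w_s$, so that it is consistent with the $\sqrt{1+\sigma_\eta^2\sum w_s^2}$ inside the logarithm.) Throughout I write $S_t \triangleq \sum_{s=1}^t \eta_s w_s$ and $V_t \triangleq \sum_{s=1}^t w_s^2$.

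First I would fix $\lambda \in \mathbb R$ and define the exponential process
\[
M_t^\lambda \triangleq \exp\left(\lambda S_t - \tfrac{\lambda^2 \sigma_\eta^2}{2}V_t\right).
\]
Since $w_s$ is $\mathcal F_{s-1}$-measurable and $\eta_s$ is conditionally $\sigma_\eta$-sub-Gaussian, the increment satisfies $\mathbb E[\exp(\lambda \eta_s w_s)\mid \mathcal F_{s-1}] \le \exp(\lambda^2 \sigma_\eta^2 w_s^2/2)$, so $\mathbb E[M_s^\lambda \mid \mathcal F_{s-1}] \le M_{s-1}^\lambda$; hence $\{M_t^\lambda\}_{t\ge 0}$ is a nonnegative supermartingale with $M_0^\lambda = 1$ and $\mathbb E[M_t^\lambda]\le 1$.

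Next I would integrate out $\lambda$ against a mean-zero Gaussian prior of variance $\tau^2$, setting $\bar M_t \triangleq \int_{\mathbb R} M_t^\lambda\,\mathrm d\mu(\lambda)$. By Tonelli, $\bar M_t$ is again a nonnegative supermartingale with $\mathbb E[\bar M_t]\le 1$, and completing the square in the Gaussian integral yields the closed form
\[
\bar M_t = \frac{1}{\sqrt{1 + \tau^2\sigma_\eta^2 V_t}}\exp\left(\frac{\tau^2 S_t^2}{2(1 + \tau^2\sigma_\eta^2 V_t)}\right).
\]
Choosing $\tau^2 = 1$ makes the normaliser $\sqrt{1 + \sigma_\eta^2 V_t}$ and the exponent denominator $2(1+\sigma_\eta^2 V_t)$ match the stated radius exactly; equivalently, this is Theorem~\ref{thm:vecmartbound} with $d_{\bz}=1$, $z_s = w_s$ and regulariser $\lambda = 1/\sigma_\eta^2$, which one checks gives the identical final bound. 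The last step is to upgrade $\mathbb E[\bar M_t]\le 1$ to a \emph{uniform-in-$t$} high-probability statement via Ville's maximal inequality for nonnegative supermartingales, $\mathbb P[\sup_{t\ge 0}\bar M_t \ge 1/\delta]\le \delta$ (realised in Abbasi-Yadkori et al.\ through a stopping-time argument on $\bar M_t$). On the complementary event, taking logarithms of the closed form gives $\frac{S_t^2}{2(1+\sigma_\eta^2 V_t)} < \log\!\big(\sqrt{1+\sigma_\eta^2 V_t}/\delta\big)$, and rearranging using $1 + \sigma_\eta^2 V_t = \sigma_\eta^2(1/\sigma_\eta^2 + V_t)$ yields precisely
\[
|S_t| \le \sigma_\eta\sqrt{2\Big(\tfrac{1}{\sigma_\eta^2} + V_t\Big)\log\left(\frac{\sqrt{1+\sigma_\eta^2 V_t}}{\delta}\right)}.
\]

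I expect the main obstacle to be the uniform-in-$t$ control rather than the algebra: the supermartingale property and the Gaussian integral are routine, but making the inequality hold simultaneously for all $t \ge 0$ (as the statement demands) requires the maximal/stopping-time inequality, together with checking that the mixture $\bar M_t$ is well defined (integrability of $M_t^\lambda$ in $\lambda$) and that the prior variance is calibrated to the $1/\sigma_\eta^2$ regularisation so the constants come out as written. Since the increments $\eta_s w_s$ form a martingale difference sequence, one could alternatively route the tail bound through Theorem~\ref{thm:wain_martingale_diff}, but that gives only a fixed-$t$, non-self-normalised sub-exponential tail with looser constants, so the mixture argument is the preferable path.
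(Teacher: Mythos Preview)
Your proposal is correct. The paper does not give an independent proof of this statement but records it as a known result whose proof ``can be found in \citep{abbasi2011improved}''; the method-of-mixtures argument you sketch (exponential supermartingale, Gaussian mixing, Ville/stopping-time maximal inequality) is exactly the proof in that reference, and your observation that the scalar case is the $d_{\bz}=1$ instance of Theorem~\ref{thm:vecmartbound} with regulariser $1/\sigma_\eta^2$ is the intended reduction.
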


\newpage
\section{Regret analysis of \otsls{}}\label{app:2sls}

Instrumental variables is a powerful technique employed in econometrics to address endogeneity concerns when estimating causal relationships between variables. One computational approach commonly used is \tsls{}. In the first stage of 2SLS, each endogenous covariate in the equation of interest is regressed on all exogenous variables in the model, including both exogenous covariates within the equation of interest and the excluded instruments. This step aims to establish the relationship between the endogenous variables and the instruments. In the second stage, the regression of interest is estimated, wherein each endogenous covariate is substituted with the predicted values obtained from the first stage. By employing this two-stage procedure, 2SLS effectively addresses endogeneity issues and provides consistent estimates of the causal effects of interest.

In the following we show the relation between the just-identified and over-identified case of IVs and show the form that the \tsls{} assumes in the two cases.

\subsection{A primer on just-identified and over-identified IVs and \tsls{}}

\paragraph{Just-identified case [$d_{\bx}=d_{\bz}=d$]:} In the just-identified case, instruments and covariates have the same dimension $d$, i.e. for each covariate, there is an instrument that is correlated to it. Given an observational dataset $\{\bx_s, y_s\}_{s=1}^t$ consisting of $t$ pairs of input features and outcomes, such that $y_s \in \R$ and $\bx_s \in \R^d$. These inputs and outcomes are stochastically generated using a linear model
\begin{equation*}
	y_{s}=\bb^\top \bx_{s} +\eta_{s},
\end{equation*}
It is assumed that the error terms $\eta_i$ are independently and identically distributed and have bounded variance $\sigma^2$.
In the presence of endogeneity ($\bx$ and the noise $\beeta$ are correlated) Ordinary Least Square $\bb$~\citep{wasserman2004all} defined as $\widehat{\bb} \triangleq \argmin_{\bb'} \sum_{s=1} (y_s - \bb'^\top \bx_s)^2$
is biased and asymptotically inconsistent.
To compute an unbiased estimate of $\bb$ under endogeneity, a popular technique is to introduce the Instrumental Variables (IVs). IVs are chosen such that they are highly correlated with endogenous components of $\bx$ (relevance condition) but are independent of the noise $\eta$ (exogeneity condition for $\bz$).
In practice, this dependence is modelled with a confounding unobserved random variable $\bep$ by 
\begin{equation*}
    \bX_{t}= \bZ_{t}\bt  + \bE_t,
\end{equation*}
where $\bt\in\mathbb R^{d_{\bz}\times d_{\bx}}$ is an unknown first-stage parameter matrix and $\bE_t \triangleq [\bep_{1}, \ldots, \bep_{t}]^{\top}$ is the unobserved noise matrix leading to confounding in the second stage.
The most common IV specification uses the following estimator:
\begin{equation}
    \widehat{\bb}_{\mathrm{IV},t}
\triangleq
    \left(\bZ_{t}^\top  \bX_{t}\right)^{-1} \bZ_{t}^\top \by_{t}. \tag{IV}
\end{equation}
As long as $\mathbb E[\bz_i  \eta_i]=\boldsymbol{0}$ in the true model, 
$    \widehat{\bb}_{\mathrm{IV},t}
=
    \left(\bZ_{t}^\top  \bX_{t}\right)^{-1} \bZ_{t}^\top  \bX_{t} \bb+\left(\bZ_{t}^\top  \bX_{t}\right)^{-1} \bZ_{t}^\top  \beeta_t \overset{p}{\rightarrow} \bb,$
as $t \to \infty$.
This works because IV solves for the unique parameter that satisfies $\frac{1}{n}\bZ_{t}^\top \beeta_t\overset{p}{\rightarrow}\boldsymbol{0}$. Since $\bx$ and $\beeta$ are correlated, IV estimator is not unbiased in finite-time. 

\red{\emph{Relevance condition} for IVs ensures this. It basically means that \textbf{IVs and covariates must have some correlation}, and thus, IVs explain some of the variance observed in the endogenous covariates $\bx_t$~\citep[Chapter 16]{hernan2020causal}. Since we are interested in finite time guarantees, we assume ``positive-definite covariance" between IVs and covariates at any finite time. 

Specifically, we use the following form of the relevance condition:
\begin{align}\label{eq:relevance}
    \sigma_{\min}\left( \mathrm{Cov}(\bZ_t, \bX_t) \right) = \sigma_{\min}\left(\frac{1}{t} \sum_{s=1}^t \bz_s\bx_s^{\top}\right)\geq {\mathfrak{r}} > 0\,,\qquad \forall t \in \mathbb{N}.
\end{align}
}
\begin{remark}[Requirement of the relevance condition]
\red{Here, we discuss different perspectives and results justifying the use of the relevance condition in the online IV regression setting like its offline counterpart.

\textit{1)} \textit{Causality perspective:} Relevance is one of the three \textit{strictly necessary conditions for the existence of IVs} even in offline IV-regression~\citep{greene2003econometric,hernan2020causal}. Without relevance, IV regressor is not defined. For linear case, [Martens et al.'06, Epidemiology] show that relevance exists for IVs causally associated with covariates.

\textit{2)} \textit{Experimental grounding:} We confirm that this assumption is good and realistic. Because our regression analysis and the bandit algorithm's design are based on this assumption, and in both settings, our algorithms outperform the natural baselines (plus the one-stage reduction in fig) while not needing to know the exact amount of relevance.

\textit{3)} \textit{Expressing relevance:} \citep{greene2003econometric} states that the relevance condition can also be written as a condition of rank maximality for covariance matrices, which we express as the minimum singular value being bounded away from zero a.s.}
\end{remark}

\paragraph{Over-identified case [$d_{\bx}<d_{\bz}$]:} Suppose that there are more instruments than there are covariates in the equation of interest, so that $\bZ_t$ is a $t \times d_{\bz}$ matrix with $d_{\bz}>d_{\bx}$. This is often called the over-identified case. In this case, the generalised method of moments (GMM) can be used. The GMM IV estimator is
\begin{align}\label{eq:gmm_estimate}
    {\bb}_{\mathrm{GMM},t}\triangleq\left(\bX_t^\top  \bP_{t,\bZ_t} \bX_t\right)^{-1} \bX_{t}^\top  \bP_{t,\bZ_t} \by_t,\tag{GMM IV}
\end{align}
where $\bP_{t,\bZ_t}$ refers to the projection matrix $\bP_{t,\bZ_t}\triangleq\bZ_t\left(\bZ_t^\top  \bZ_t\right)^{-1} \bZ_t^\top $.
This expression collapses to the first when the number of instruments is equal to the number of covariates in the equation of interest. The over-identified IV is, therefore a generalisation of the just-identified IV. Defining GMM IV estimator requires an additional well-behavedness assumption on IVs.
\begin{assumption}[Well-behaved IVs]
    For any $t>0$, the matrix $\bZ_t^{\top}\bZ_t$ is full rank with $\lambda_{\min}(\bG_{\bz,0}) \triangleq \lambda_0 >0$.
\end{assumption}

\begin{proposition}
If $d_{\bx} = d_{\bz}$, i.e. in the just-identified case, $\bb_{\mathrm{GMM},t}$ reduces to $\bb_{\mathrm{IV},t}$ .
\end{proposition}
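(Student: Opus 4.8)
The plan is a direct algebraic manipulation: substitute the explicit form of the projection matrix $\bP_{t,\bZ_t}=\bZ_t(\bZ_t^\top\bZ_t)^{-1}\bZ_t^\top$ into the \eqref{eq:gmm_estimate} formula and exploit the fact that, in the just-identified case $d_{\bx}=d_{\bz}=d$, the matrix $\bZ_t^\top\bX_t$ is square (of size $d\times d$) and, by the relevance condition in Assumption~\ref{assumption:2sls}, invertible for all $t$ large enough; $\bZ_t^\top\bZ_t$ is invertible by the well-behaved IVs assumption. These invertibility facts are what I would state carefully up front, since they are exactly what fails in the over-identified case and hence what makes the reduction specific to $d_{\bx}=d_{\bz}$.

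First I would rewrite the two building blocks of $\bb_{\mathrm{GMM},t}$. For the Gram term,
\[
    \bX_t^\top \bP_{t,\bZ_t}\bX_t
=
    \bX_t^\top\bZ_t(\bZ_t^\top\bZ_t)^{-1}\bZ_t^\top\bX_t
=
    (\bZ_t^\top\bX_t)^\top (\bZ_t^\top\bZ_t)^{-1}(\bZ_t^\top\bX_t),
\]
which is now a product of three invertible $d\times d$ matrices, so its inverse factorises as
$(\bX_t^\top \bP_{t,\bZ_t}\bX_t)^{-1} = (\bZ_t^\top\bX_t)^{-1}(\bZ_t^\top\bZ_t)(\bX_t^\top\bZ_t)^{-1}$. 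For the cross term, $\bX_t^\top \bP_{t,\bZ_t}\by_t = \bX_t^\top\bZ_t(\bZ_t^\top\bZ_t)^{-1}\bZ_t^\top\by_t$.

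Then I would just multiply the two:
\[
    \bb_{\mathrm{GMM},t}
=
    (\bZ_t^\top\bX_t)^{-1}(\bZ_t^\top\bZ_t)(\bX_t^\top\bZ_t)^{-1}\,\bX_t^\top\bZ_t(\bZ_t^\top\bZ_t)^{-1}\bZ_t^\top\by_t,
\]
and observe that $(\bX_t^\top\bZ_t)^{-1}\bX_t^\top\bZ_t = \I_d$ and $(\bZ_t^\top\bZ_t)(\bZ_t^\top\bZ_t)^{-1} = \I_d$ cancel in succession, leaving
$\bb_{\mathrm{GMM},t} = (\bZ_t^\top\bX_t)^{-1}\bZ_t^\top\by_t = \widehat{\bb}_{\mathrm{IV},t}$, as claimed.

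The computation itself is routine; the only real care needed is the invertibility bookkeeping. I expect the ``main obstacle'' — really a point to flag rather than a difficulty — is justifying that $\bZ_t^\top\bX_t$ is invertible: this is where the square shape ($d_{\bx}=d_{\bz}$) together with the relevance condition $\sigma_{\min}(\tfrac1t\sum_s \bz_s\bx_s^\top)\ge\mathfrak r>0$ is used, and it is precisely the step that has no analogue when $d_{\bx}<d_{\bz}$ (there the outer factors $\bZ_t^\top\bX_t$ are rectangular, so the inverse of the Gram term cannot be split into a product of inverses, and the collapse does not occur).
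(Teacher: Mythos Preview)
Your proposal is correct and follows essentially the same route as the paper: substitute the explicit projection $\bP_{t,\bZ_t}=\bZ_t(\bZ_t^\top\bZ_t)^{-1}\bZ_t^\top$, use that in the just-identified case $\bZ_t^\top\bX_t$, $\bZ_t^\top\bZ_t$, and $\bX_t^\top\bZ_t$ are square so the inverse of their product splits via $(AB)^{-1}=B^{-1}A^{-1}$, and cancel. If anything, you are more careful than the paper in flagging the invertibility hypotheses (relevance condition for $\bZ_t^\top\bX_t$, well-behaved IVs for $\bZ_t^\top\bZ_t$) that make the splitting legitimate.
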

\begin{proof}
Expanding the $\bb_{GM M}$ expression:
\[
{\bb}_{\mathrm{GMM},t}=\left(\bX_{t}^\top  \bZ_t\left(\bZ_t^\top  \bZ_t\right)^{-1} \bZ_t^\top  \bX_{t}\right)^{-1} \bX_{t}^\top  \bZ_t\left(\bZ_t^\top  \bZ_t\right)^{-1} \bZ_t^\top  \by_t
\]
In the just-identified case, we have as many instruments as covariates, so that the dimension of $\bX_{t}$ is the same as that of $\bZ_t$. Hence, $\bX_{t}^\top  \bZ_t, \bZ_t^\top  \bZ_t$ and $\bZ_t^\top  \bX_{t}$ are all squared matrices of the same dimension. We can expand the inverse, using the fact that, for any invertible $n$-by-n matrices $\bA$ and $\bB,(\bA \bB)^{-1}=\bB^{-1} \bA^{-1}$:
\begin{align*}
{\bb}_{\mathrm{GMM},t} &=\left(\bZ_t^\top \bX_{t}\right)^{-1}\left(\bZ_t^\top  \bZ_t\right)\left(\bX_{t}^\top  \bZ_t\right)^{-1} \bX_{t}^\top  \bZ_t\left(\bZ_t^\top  \bZ_t\right)^{-1} \bZ_t^\top  \by_t \\
&=\left(\bZ_t^\top  \bX_{t}\right)^{-1}\left(\bZ_t^\top  \bZ_t\right)\left(\bZ_t^\top  \bZ_t\right)^{-1} \bZ_t^\top  \by_t \\
&=\left(\bZ_t^\top  \bX_{t}\right)^{-1} \bZ_t^\top  \by_t \\
&=\bb_{\mathrm{IV},t}
\end{align*}
\end{proof}

\paragraph{Interpretation as two-stage least-squares.}
One computational method which can be used to calculate IV estimates is two-stage least squares \tsls{}. In the first stage, each explanatory variable that is an endogenous covariate in the equation of interest is regressed on all of the exogenous variables in the model, including both exogenous covariates in the equation of interest and the excluded instruments. The predicted values from these regressions are obtained. Then, in the second stage, the regression of interest is estimated as usual, except that in this stage each endogenous covariate is replaced with the predicted values from the first stage.

\begin{proposition}
For $d_{\bx} \leq d_{\bz}$, i.e. in both just- and over-identified cases, $\bb_{\mathrm{GMM},t}= \bb_{\textrm{2SLS},t} $.
\end{proposition}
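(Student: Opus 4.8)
The plan is to recognise that the proposition is a purely algebraic identity, obtained by identifying the first-stage predicted covariate matrix $\widehat{\bX}_t$ with the orthogonal projection of $\bX_t$ onto the column span of $\bZ_t$. By the definition of the first-stage least-squares estimate, $\widehat{\bX}_t = \bZ_t\widehat{\bt}_t$ with $\widehat{\bt}_t = (\bZ_t^\top\bZ_t)^{-1}\bZ_t^\top\bX_t$, so $\widehat{\bX}_t = \bZ_t(\bZ_t^\top\bZ_t)^{-1}\bZ_t^\top\bX_t = \bP_{t,\bZ_t}\bX_t$. The well-behavedness assumption on the IVs ($\bZ_t^\top\bZ_t$ full rank, $\lambda_{\min}(\bG_{\bz,0})>0$) guarantees $\bZ_t^\top\bZ_t$ is invertible, hence $\bP_{t,\bZ_t}$ is a genuine projection matrix.

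Next I would invoke the two standard properties of $\bP_{t,\bZ_t}$: it is symmetric, $\bP_{t,\bZ_t}^\top=\bP_{t,\bZ_t}$, and idempotent, $\bP_{t,\bZ_t}^2=\bP_{t,\bZ_t}$. From these, $\widehat{\bX}_t^\top\widehat{\bX}_t = \bX_t^\top\bP_{t,\bZ_t}^\top\bP_{t,\bZ_t}\bX_t = \bX_t^\top\bP_{t,\bZ_t}\bX_t$, and similarly $\widehat{\bX}_t^\top\by_t = \bX_t^\top\bP_{t,\bZ_t}^\top\by_t = \bX_t^\top\bP_{t,\bZ_t}\by_t$. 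Substituting both expressions into the 2SLS formula $\bb_{\mathrm{2SLS},t} = (\widehat{\bX}_t^\top\widehat{\bX}_t)^{-1}\widehat{\bX}_t^\top\by_t$ gives $(\bX_t^\top\bP_{t,\bZ_t}\bX_t)^{-1}\bX_t^\top\bP_{t,\bZ_t}\by_t$, which is exactly $\bb_{\mathrm{GMM},t}$ from Equation~\eqref{eq:gmm_estimate}. This holds verbatim whether $d_{\bx}=d_{\bz}$ or $d_{\bx}<d_{\bz}$, so it covers both the just- and over-identified cases at once, and combined with the preceding proposition it also recovers $\bb_{\mathrm{IV},t}=\bb_{\mathrm{2SLS},t}$ in the just-identified case.

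I do not expect a real obstacle here; the only point requiring care is bookkeeping of the invertibility hypotheses so that both estimators are simultaneously well-defined: $\bZ_t^\top\bZ_t$ must be nonsingular (from the well-behaved-IV assumption) for $\bP_{t,\bZ_t}$ to exist, and $\widehat{\bX}_t^\top\widehat{\bX}_t = \bX_t^\top\bP_{t,\bZ_t}\bX_t$ must be nonsingular — the finite-sample analogue of the relevance condition in Assumption~\ref{assumption:2sls}, which ensures $\widehat{\bX}_t$ has full column rank $d_{\bx}$ for $t$ sufficiently large. Once these are in place, the identity is immediate from symmetry and idempotence of the projection, with no estimation or probabilistic content involved.
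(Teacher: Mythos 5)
Your proposal is correct and follows essentially the same route as the paper's proof: writing $\widehat{\bX}_t=\bP_{t,\bZ_t}\bX_t$ and then using symmetry and idempotence of the projection matrix to rewrite the second-stage OLS formula as $\left(\bX_t^\top\bP_{t,\bZ_t}\bX_t\right)^{-1}\bX_t^\top\bP_{t,\bZ_t}\by_t$. Your additional remarks on the invertibility of $\bZ_t^\top\bZ_t$ and of $\widehat{\bX}_t^\top\widehat{\bX}_t$ are a careful bookkeeping point that the paper leaves implicit, but they do not change the argument.
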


\begin{proof}
The proof follows in two steps by simple substitution of the relevant quantities.

\textit{Stage 1:} Regressing each column of $\bX_{t}$ on $\bZ_t$ with a regularisation parameter $\lambda$, in the first stage 
$$\bX_{t}=\bZ_t \boldsymbol{\Theta}+\bE_t$$
yields
\begin{align}\label{eq:first_stage_estimate}
\widehat{\bt}_t\triangleq\left(\bZ_t^\top  \bZ_t+ \bG_{\bz,0}\right)^{-1} \bZ_t^\top  \bX_t    
\end{align}
and save the predicted values:
$$
\widehat{\bX}_t\triangleq\bZ_t \widehat{\bt}_t=\bZ_t\left(\bZ_t^\top  \bZ_t + \bG_{\bz,0}\right)^{-1} \bZ_t^\top  \bX_t \triangleq \bP_{t,\bZ_t} \bX_t .
$$

\textit{Stage 2:} Regress $\by_t$ on the predicted values from the first stage:
$$
\by_t=\widehat{\bX}_{t} \bb+ \be_t
$$
which gives:
$$
\bb_{\textrm{2SLS},t}=\left(\widehat \bX_{t}^\top \widehat \bX_{t}\right)^{-1} \widehat \bX_{t}^\top  \by_t = \left(\bX_{t}^\top  \bP_{t,\bZ_t} \bX_{t}\right)^{-1} \bX_{t}^\top  \bP_{t,\bZ_t} \by_t
$$
where the second equality follows from the fact that 
the usual OLS estimator is: $(\widehat{\bX}_{t}^\top  \widehat{\bX}_{t})^{-1} \widehat{\bX}_{t}^\top  \by_t$, and replacing $\widehat{\bX}_{t}=\bP_{t,\bZ_t} \bX_{t}$ and noting that $\bP_{t,\bZ_t}$ is a symmetric and idempotent matrix, so that $\bP_{t,\bZ_t}^\top  \bP_{t,\bZ_t}=\bP_{t,\bZ_t} \bP_{t,\bZ_t}=\bP_{t,\bZ_t}$ we have 
\begin{align*}
    \bb_{\textrm{2SLS},t}
&=
    \left(\widehat{\bX}_{t}^\top  \widehat{\bX}_{t}\right)^{-1} \widehat{\bX}_{t}^\top  \by_t=\left(\bX_{t}^\top  \bP_{t,\bZ_t}^\top  \bP_{t,\bZ_t} \bX_{t}\right)^{-1} \bX_{t}^\top  \bP_{t,\bZ_t}^\top  \by_t\\
&=
    \left(\bX_{t}^\top  \bP_{t,\bZ_t} \bX_{t}\right)^{-1} \bX_{t}^\top  \bP_{t,\bZ_t} \by_t\\
&=
    \bb_{\textrm{GMM},t}
\end{align*}
\end{proof}

For brevity, in the proofs hereafter, we denote an \otsls{} estimate at time $t$, i.e. $\bb_{\text{2SLS}, t}$ as $\bb_t$.

\newpage
\subsection{Parameter estimation and concentration in the first-stage}\label{app:fs}

The first stage consists of a set of multiple regressions, which with the  choice of regularizer $\lambda \operatorname{Tr}(\bt'^{\top} \bt')$, keeps the $d_{\bx}$ regressions independent and corresponds to a ridge regularization for each of them. The derivation of the confidence set is an extension of the results from~\citep{abbasi2011improved} for the first stage parameter and uses techniques from self-normalized processes to estimate the confidence ellipsoid. Define
$$
L_t(\bt')=\lambda \operatorname{Tr}\left(\bt'^{\top} \bt'\right)+\sum_{s=1}^{t} \operatorname{Tr}\left(\left(\bx_{s}-\bt'^{\top} \bz_s\right)\left(\bx_{s}-\bt'^{\top} \bz_s\right)^{\top}\right) .
$$
Let $\widehat{\bt}_t$ be the $\ell^2$-regularized least-squares estimate of $\bt$ with regularization parameter $\lambda>0$ :
\begin{align}
    \widehat{\bt}_t
=
    \underset{\bt'}{\operatorname{argmin}}\, L_t(\bt')=\bG_{\bz,t}^{-1} \bZ_t^{\top} \bX_t \: , \label{eq:first-stage-ridge}
\end{align}
where $\bG_{\bz,t}\triangleq \bG_{\bz,0} + \sum_{s=1}^{t} \bz_s \bz_s^{\top} = \lambda \mathbf{I}_{d_{\bz}}+\sum_{s=1}^{t} \bz_s \bz_s^{\top}$.

\begin{theorem}[Confidence ellipsoid for columns in first-stage]\label{thm:first-stage-ridge}
Consider the $\ell^2$-regularized least-squares parameter estimate $\widehat{\bt}_t$ with regularization coefficient $\lambda>0$. Let
$
\bG_{\bz,t}=\lambda \mathbf{I}_{d_{\bz}}+\sum_{s=1}^{t} \bz_s \bz_s^{\top}
$
be the regularized design matrix of the IVs and $\operatorname{Tr}\left(\bt^{\top} \bt\right) \leq S^2$. Define
$$
\sqrt{\beta_t(\delta)}=d_{\bx} \sigma_{\bep}  \sqrt{2 \log \left(\frac{\operatorname{det}\left(\bG_{\bz, t}\right)^{1 / 2} \operatorname{det}(\lambda \mathbf{I}_{d_{\bz}})^{-1 / 2}}{\delta}\right)}+\lambda^{1 / 2} S .
$$
Then, for any $0<\delta<1$, with probability at least $1-\delta$,
$$
\operatorname{Tr}\left(\left(\widehat{\bt}_t-\bt\right)^{\top} \bG_{\bz,t}\left(\widehat{\bt}_t-\bt\right)\right) \leq \beta_t(\delta) .
$$
In particular, $\mathbb{P}\left(\bt \in \mathcal{C}_t(\delta), t=1,2, \ldots\right) \geq 1-\delta$, where
$$
\mathcal{C}_t(\delta)=\left\{\bt \in \mathbb{R}^{d_{\bz} \times d_{\bx}}: \operatorname{Tr}\left(\left(\bt-\widehat{\bt}_t\right)^{\top} \bG_{\bz,t}\left(\bt-\widehat{\bt}_t\right)\right) \leq \beta_t(\delta)\right\} .
$$
\end{theorem}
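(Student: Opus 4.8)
The plan is to observe that the first-stage fit \eqref{eq:first-stage-ridge} decouples into $d_{\bx}$ independent scalar ridge regressions and to control each one with the self-normalised martingale bound of Theorem~\ref{thm:vecmartbound}. Write $\bt^{(j)}$ for the $j$-th column of $\bt$, $\bX_t^{(j)}$ for the $j$-th column of $\bX_t$, and $\bep_s^{(j)}$ for the $j$-th component of the first-stage noise $\bep_s$, so that $\bX_t^{(j)} = \bZ_t\bt^{(j)} + \bE_t^{(j)}$ with $\bE_t^{(j)} = (\bep_1^{(j)},\dots,\bep_t^{(j)})^\top$. Since the regulariser $\lambda\operatorname{Tr}(\bt'^\top\bt')$ is separable over columns, the minimiser in \eqref{eq:first-stage-ridge} satisfies $\widehat{\bt}_t^{(j)} = \bG_{\bz,t}^{-1}\bZ_t^\top\bX_t^{(j)}$ for each $j$, i.e. each column is the $\ell^2$-regularised least-squares estimate for the regression of $\bX_t^{(j)}$ on $\bZ_t$ with design matrix $\bG_{\bz,t}$.

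First I would record the standard ridge decomposition per column. Substituting $\bX_t^{(j)} = \bZ_t\bt^{(j)} + \bE_t^{(j)}$ and using $\bG_{\bz,t}^{-1}\bZ_t^\top\bZ_t = \I_{d_{\bz}} - \lambda\bG_{\bz,t}^{-1}$ gives $\widehat{\bt}_t^{(j)} - \bt^{(j)} = \bG_{\bz,t}^{-1}\bZ_t^\top\bE_t^{(j)} - \lambda\bG_{\bz,t}^{-1}\bt^{(j)}$. Hence, by the triangle inequality for $\|\cdot\|_{\bG_{\bz,t}}$,
\[
\|\widehat{\bt}_t^{(j)} - \bt^{(j)}\|_{\bG_{\bz,t}} \le \big\|\textstyle\sum_{s=1}^t \bep_s^{(j)}\bz_s\big\|_{\bG_{\bz,t}^{-1}} + \lambda\,\|\bt^{(j)}\|_{\bG_{\bz,t}^{-1}}.
\]
The noise term is precisely the quantity controlled by Theorem~\ref{thm:vecmartbound}: with the filtration $\mathcal F_s = \sigma(\bz_1,\dots,\bz_s,\bep_1,\dots,\bep_{s-1})$, so that $\bz_s$ is $\mathcal F_{s-1}$-measurable and $\bep_s^{(j)}$ is conditionally $\sigma_{\bep}$-sub-Gaussian, with probability at least $1-\delta/d_{\bx}$, for all $t\ge 0$,
\[
\big\|\textstyle\sum_{s=1}^t \bep_s^{(j)}\bz_s\big\|_{\bG_{\bz,t}^{-1}} \le \sigma_{\bep}\sqrt{2\log\!\Big(\tfrac{\det(\bG_{\bz,t})^{1/2}\det(\lambda\I_{d_{\bz}})^{-1/2}}{\delta/d_{\bx}}\Big)}.
\]

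Then I would union-bound over the $d_{\bx}$ columns and recombine. On the intersection event (probability $\ge 1-\delta$), for all $t$,
\[
\operatorname{Tr}\!\big((\widehat{\bt}_t - \bt)^\top\bG_{\bz,t}(\widehat{\bt}_t - \bt)\big) = \sum_{j=1}^{d_{\bx}} \|\widehat{\bt}_t^{(j)} - \bt^{(j)}\|_{\bG_{\bz,t}}^2 \le \Big(\sum_{j=1}^{d_{\bx}} \|\widehat{\bt}_t^{(j)} - \bt^{(j)}\|_{\bG_{\bz,t}}\Big)^2 .
\]
For the bias part I would bound it at the matrix level rather than column by column: since $\bG_{\bz,t}^{-1} \preceq \lambda^{-1}\I_{d_{\bz}}$, one has $\lambda^2\operatorname{Tr}(\bt^\top\bG_{\bz,t}^{-1}\bt) \le \lambda\operatorname{Tr}(\bt^\top\bt) \le \lambda S^2$, so the bias contributes at most $\sqrt\lambda\,S$ after taking square roots. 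Collecting the $d_{\bx}$ noise bounds together with this $\sqrt\lambda\,S$ yields $\operatorname{Tr}((\widehat{\bt}_t - \bt)^\top\bG_{\bz,t}(\widehat{\bt}_t - \bt)) \le \beta_t(\delta)$ with $\sqrt{\beta_t(\delta)}$ of the stated form (the stated bound is a deliberately loose consolidation: replacing $\delta/d_{\bx}$ by $\delta$ and keeping a factor $d_{\bx}$ outside the square root only weakens it); replacing $\log\det(\bG_{\bz,t})$ by the determinant--trace bound of Lemma~\ref{lem:det-trace} gives the $\mathcal O(d_{\bx}d_{\bz}\log t)$-type explicit version used in later sections. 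The characterisation of $\mathcal C_t(\delta)$ is then immediate.

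The main obstacle I anticipate is not conceptual but a matter of care in two places: (i) choosing the filtration so that $\bz_s$ is genuinely predictable while $\bep_s^{(j)}$ remains conditionally $\sigma_{\bep}$-sub-Gaussian — this matters in the bandit application, where the IV observed at step $t$ depends on the action chosen from past data — and (ii) the recombination step, where one must track how the per-column confidence level $\delta/d_{\bx}$ and the passage from the vector of column norms to the trace produce the claimed dependence on $d_{\bx}$ and on $S$ without hidden losses.
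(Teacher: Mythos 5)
The paper never spells this proof out -- the theorem is stated as an extension of the self-normalised confidence ellipsoid of Abbasi-Yadkori et al., with the first stage viewed (as the paper remarks later) as $d_{\bx}$ decoupled ridge regressions -- so your route (per-column ridge decomposition, triangle inequality in $\|\cdot\|_{\bG_{\bz,t}}$, Theorem~\ref{thm:vecmartbound} per column at level $\delta/d_{\bx}$, union bound) is exactly the intended argument, and those core steps are correct. One small index slip: with your filtration $\mathcal F_s=\sigma(\bz_1,\dots,\bz_s,\bep_1,\dots,\bep_{s-1})$, $\bz_s$ is $\mathcal F_s$- (not $\mathcal F_{s-1}$-) measurable and $\bep_s^{(j)}$ is not $\mathcal F_s$-measurable; relabel so that $\bz_s\in\mathcal F_{s-1}$ and $\bep_s\in\mathcal F_s$, as the paper does elsewhere (e.g.\ in the Term~2 analysis of the oracle regret), and the hypothesis of Theorem~\ref{thm:vecmartbound} is met.

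The gap is in the final consolidation, in two places. (i) Your parenthetical justification is backwards: replacing $\delta/d_{\bx}$ by $\delta$ inside the logarithm makes the radius \emph{smaller}, since $\log(d_{\bx}D/\delta)>\log(D/\delta)$ with $D\triangleq\det(\bG_{\bz,t})^{1/2}\det(\lambda\I_{d_{\bz}})^{-1/2}$. What your union bound actually delivers is $d_{\bx}\sigma_{\bep}\sqrt{2\log(d_{\bx}D/\delta)}$, which is \emph{larger} than the stated noise term $d_{\bx}\sigma_{\bep}\sqrt{2\log(D/\delta)}$; so as written you prove a slightly weaker inequality than the theorem, and the claimed "deliberately loose consolidation" does not follow. (ii) The bias: having committed to $\sqrt{\operatorname{Tr}\bigl((\widehat{\bt}_t-\bt)^{\top}\bG_{\bz,t}(\widehat{\bt}_t-\bt)\bigr)}\le\sum_j\|\widehat{\bt}_t^{(j)}-\bt^{(j)}\|_{\bG_{\bz,t}}$, the bias contribution is $\lambda\sum_j\|\bt^{(j)}\|_{\bG_{\bz,t}^{-1}}\le\sqrt{\lambda}\sum_j\|\bt^{(j)}\|_2\le\sqrt{\lambda d_{\bx}}\,S$, not $\sqrt{\lambda}S$; the matrix-level inequality $\lambda^2\operatorname{Tr}(\bt^\top\bG_{\bz,t}^{-1}\bt)\le\lambda S^2$ only yields $\sqrt{\lambda}S$ if you recombine via the Frobenius triangle inequality $\|\bG_{\bz,t}^{1/2}(\widehat{\bt}_t-\bt)\|_F\le\|\bG_{\bz,t}^{-1/2}\bZ_t^{\top}\bE_t\|_F+\lambda\|\bG_{\bz,t}^{-1/2}\bt\|_F$, and that route replaces your $d_{\bx}$ noise factor by $\sqrt{d_{\bx}}$ (because $\|\bG_{\bz,t}^{-1/2}\bZ_t^{\top}\bE_t\|_F^2=\sum_j\|\bZ_t^{\top}\bE_t^{(j)}\|_{\bG_{\bz,t}^{-1}}^2$), so the two consolidations cannot be mixed. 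The clean fix is to work entirely at the Frobenius level, which gives $\sqrt{d_{\bx}}\,\sigma_{\bep}\sqrt{2\log(d_{\bx}D/\delta)}+\sqrt{\lambda}S$; this implies the stated $\beta_t(\delta)$ whenever $(d_{\bx}-1)\log(D/\delta)\ge\log d_{\bx}$ (e.g.\ for $\delta\le 1/2$, since $D\ge1$), but not literally for every $\delta\in(0,1)$, so either add that small domination argument or state the radius with $\log(d_{\bx}D/\delta)$ inside. None of the downstream regret bounds use the exact constants of $\beta_t(\delta)$, so this is a bookkeeping repair rather than a conceptual one -- but the step as you wrote it does not establish the stated bound.
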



\clearpage

\subsection{Elliptical lemma for the second-stage in endogenous setting}

\begin{replemma}{thm:confidencebeta}[Second-stage confidence ellipsoid for endogeneous setting]
Given an \otsls{} estimate $\bb_t$ at step $t>0$, the true parameter $\bb$ with probability at least $1-\delta$ belongs to
\begin{equation}
	\mathcal E_{t}=\left\{\bb \in \mathbb{R}^{d_{\bx}}:
	\| \bb_{t} - \bb \|_{\widehat{\bH}_t}
	\leq    \sqrt{\mathfrak b_{t}(\delta)}
	\right\},
\end{equation}
Here, we define the second-stage design matrix to be $\widehat{\bH}_t \triangleq \widehat{\bt}_t^{\top} \bZ_t^{\top} \bZ_t \widehat{\bt}_t$, and the first-stage design matrix to be $\bG_{\bz,t} \triangleq \bZ_t^{\top} \bZ_t + \bG_{\bz,0} = \sum_{s=1}^t \bz_s \bz_{s}^\top+ \bG_{\bz,0}$ with $\bG_{\bz,0}=\lambda \mathbf{I}_{d_{\bz}}$ for any $t>0$, and
\begin{align}\label{eq:radi_second_stage}
\mathfrak b_{t}(\delta)
&\triangleq 2 {d_{\bz}\sigma^2_\eta} \log \left(\frac{1+t L^{2}_{\bz} / d_{\bz}\lambda}{\delta}\right) \: .
\end{align}
\end{replemma}

\begin{proof} $ $ \\
\textbf{Step 1:} First, we start by observing the \otsls{} estimate of the second-stage parameter $\bb$:
\begin{align*}
\bb_t &=\left(\bX_t^{\top} \bZ_t\left(\bZ_t^{\top} \bZ_t + \bG_{\bz,0}\right)^{-1} \bZ_t^{\top} \bX_t\right)^{-1} \bX_t^{\top} \bZ_t\left(\bZ_t^{\top} \bZ_t + \bG_{\bz,0}\right)^{-1} \bZ_t^{\top} \by_t \\
& =\bb+\left(\bX_t^{\top} \bZ_t\left(\bZ_t^{\top} \bZ_t + \bG_{\bz,0}\right)^{-1} \bZ_t^{\top} \bX_t\right)^{-1}\bX_t^{\top} \bZ_t\left(\bZ_t^{\top} \bZ_t + \bG_{\bz,0}\right)^{-1} \bZ_t^{\top} \be_t
\end{align*}
We would like to remind that following the classical GMM estimator, we assume $\bZ_t^{\top} \bZ_t +\bG_{\bz,0}$ is invertible for any $t>0$, and $\lambda_{\min}(\bG_{\bz,0}) = \lambda >0$.

\textbf{Step 2:} Thus, for any vector $\bx \in \R^{d_{\bx}}$, we can express
\begin{align*}
\bx^{\top}\left(\bb_t -\bb\right)
& =\bx^{\top}\left(\bX_t^{\top} \bZ_t\left(\bZ_t^{\top} \bZ_t + \bG_{\bz,0}\right)^{-1} \bZ_t^{\top} \bX_t\right)^{-1} \left(\bX_t^{\top} \bZ_t\right)\left(\bZ_t^{\top} \bZ_t + \bG_{\bz,0}\right)^{-1} \bZ_t^{\top} \be_t \\
& =\bx^{\top}\left(\widehat{\bt}_t^{\top}\left(\bZ_t^{\top} \bZ_t + \bG_{\bz,0}\right)^{\top}\left(\bZ_t^{\top} \bZ_t + \bG_{\bz,0}\right)^{-1}\left(\bZ_t^{\top} \bZ_t + \bG_{\bz,0}\right) \widehat{\bt}_t\right)^{-1} \\
& \quad \left(\bX_t^{\top} \bZ_t\right)\left(\bZ_t^{\top} \bZ_t + \bG_{\bz,0}\right)^{-1} \bZ_t^{\top} \be_t \\
& =\bx^{\top}\left(\widehat{\bt}_t^{\top} (\bZ_t^{\top} \bZ_t + \bG_{\bz,0}) \widehat{\bt}_t\right)^{-1} \left(\bX_t^{\top} \bZ_t\right)\left(\bZ_t^{\top} \bZ_t + \bG_{\bz,0}\right)^{-1} \bZ_t^{\top} \be_t \\
&=\langle\left(\bZ_t^{\top} \bX_t\right)(\underbrace{\widehat{\bt}_t^{\top} (\bZ_t^{\top} \bZ_t + \bG_{\bz,0}) \widehat{\bt}_t}_{\widehat{\bH}_t})^{-\top} \bx, (\underbrace{\bZ_t^{\top} \bZ_t + \bG_{\bz,0}}_{\bG_{\bz,t}})^{-1} \bZ_t^{\top} \be_t\rangle \\
& =\left\langle\left(\bZ_t^{\top} \bX_t\right) \widehat{\bH}_t^{-\top} \bx, \bG_{\bz,t}^{-1} \bZ_t \be_t\right\rangle\\
&\underset{(a)}{\leq} \norm{\left(\bZ_t^{\top} \bX_t\right) \widehat{\bH}_t^{-\top} \bx}_{{\bG_{\bz,t}}^{-1}} \norm{{\bG_{\bz,t}}^{-1} \bZ_t \be_t}_{{\bG_{\bz,t}}}\\
&\underset{(b)}{=} \norm{\left(\bZ_t^{\top} \bX_t\right) \widehat{\bH}_t^{-\top} \bx}_{{\bG_{\bz,t}}^{-1}} \norm{\bZ_t \be_t}_{{\bG_{\bz,t}}^{-1}}\\
&= \left(\bx^{\top} \widehat{\bH}_t^{-1} (\bX_t^{\top} \bZ_t) \left(\bZ_t^{\top} \bZ_t\right)^{-1}\left(\bZ_t^{\top} \bX_t\right) \widehat{\bH}_t^{-\top} \bx\right)\norm{\bZ_t \be_t}_{{\bG_{\bz,t}}^{-1}}\\
&\underset{(c)}{=} \left(\bx^{\top} \widehat{\bH}_t^{-1} (\bX_t^{\top} \bZ_t) \widehat{\bt}_t^{\top} \widehat{\bH}_t^{-\top} \bx\right)\norm{\bZ_t \be_t}_{{\bG_{\bz,t}}^{-1}}\\
&= \left(\bx^{\top} \widehat{\bH}_t^{-1} \widehat{\bt}_t^{\top} (\bZ_t^{\top} \bZ_t)^{-1} \widehat{\bt}_t \widehat{\bH}_t^{-\top} \bx\right)\norm{\bZ_t \be_t}_{{\bG_{\bz,t}}^{-1}}\\
&= \left(\bx^{\top} \widehat{\bH}_t^{-1} \widehat{\bH}_t \widehat{\bH}_t^{-1} \bx\right)\norm{\bZ_t \be_t}_{{\bG_{\bz,t}}^{-1}}\\
&= \norm{\bx}_{\widehat{\bH}_t^{-1}} \norm{\bZ_t \be_t}_{{\bG_{\bz,t}}^{-1}}
\end{align*}
Here, we define the first-stage design matrix to be $\bG_{\bz,t} \triangleq \bZ_t^{\top} \bZ_t$, and the second-stage design matrix to be $\widehat{\bH}_t \triangleq \widehat{\bt}_t^{\top} (\bZ_t^{\top} \bZ_t + \bG_{\bz,0})\widehat{\bt}_t= \widehat{\bt}_t^{\top} \bG_{\bz,t} \widehat{\bt}_t$ for any $t>0$.

Inequality (a) is obtained from the Cauchy-Schwartz inequality.
Equality (b) is due to the fact that $\norm{\mathbf{A}^{-1} \bx}_{\mathbf{A}} =\norm{\bx}_{\mathbf{A}^{-1}}$. Equality (c) is due to the definition of $\widehat{\bt}_t$ (Eq.~\eqref{eq:first_stage_estimate}).

\textbf{Step 3:} Now, let us choose $\bx \triangleq \widehat{\bH}_t\left(\bb_t-\bb\right)$.

This leads to
\begin{align*}
    \mathrm{LHS} = \left(\bb_t-\bb\right)^{\top}\widehat{\bH}_t\left(\bb_t-\bb\right) = \left\|\bb_t-\bb\right\|^2_{\widehat{\bH}_t},
\end{align*}
and
\begin{align*}
\mathrm{RHS} &= \norm{\widehat{\bH}_t(\bb_t-\bb)}_{\widehat{\bH}_t^{-1}} \normiii{\bZ_t \be_t}_{{\bG_{\bz,t}}^{-1}}\\
&=\norm{\bb_t-\bb}_{\widehat{\bH}_t} \normiii{\bZ_t \be_t}_{{\bG_{\bz,t}}^{-1}}
\end{align*}

\textbf{Step 4:} Combining the two results above lead to
\begin{align*}
\left\|\bb_t-\bb\right\|_{\widehat{\bH}_t} &\leq\left\|\bZ_t \be_t\right\|_{\bG_{\bz,t}^{-1}}
\end{align*}
 We bound the last term assigning $\eta_0=0$, using \Cref{thm:vecmartbound} for the first inequality, and \Cref{lem:det-trace} in the second inequality.
\begin{align*}
    \norm*{\bZ_{t}^\top \be_{t}}_{\bG_{\bz,t}^{-1}}
=
    \norm*{\sum_{s=1}^{t} \eta_{s} \bz_s}_{\bG_{\bz,t}^{-1}}
&\leq 
    \sqrt{2 \sigma_{\eta}^{2} \log \left(\frac{\operatorname{det}\left(\bG_{\bz,t}\right)^{1 / 2} \lambda^{-d_{\bz} / 2}}{\delta}\right)}\\
&\leq 
    \sqrt{ 2 {d_{\bz}\sigma_{\eta}^2} \log \left(\frac{1+t L^{2}_z / d_{\bz} \lambda}{\delta}\right)}.
\end{align*}

The second inequality is obtained from the concentration of the vector-valued martingales with probability at least $1-\delta$.
The final inequality is obtained by assuming $\bG_{\bz,0}=\lambda \mathbf{I}_{d_{\bz}}$. 
\end{proof}

\begin{remark}[$t$ and $d$ dependence.]
We note that the ellipsoid bound has the following order in $d$ and $t$ while neglecting the constants:
\begin{align*}
    \mathfrak b_{t}(\delta)
&=  
    \mathcal O
    \left(
        d_{\bz} \log(t)
    \right).
\end{align*}
Thus, $\mathfrak b_{t}(\delta)$ is monotonically increasing with $t$. We widely leverage this property in the proofs.
\end{remark}

\newpage
\subsection{Concentration of the minimum eigenvalue of the design matrices}\label{app:concentrationdesignmatrix}
The aim of the section is to find a concentration result for the minimum eigenvalue of the design matrix, which, in turn, gives us a concentration of the $\ell_2$-norm of the inverse of the design matrix $\normiii*{\bG_{\bz,t}^{-1}}_2$. 
We use a classical concentration result for the covariance matrix \Cref{thm:estimation_cov} together with the bound on the difference of the minimum eigenvalues of two symmetric matrices  \Cref{lem:weyl} in order to bound the maximum eigenvalue of the inverse of the design matrix.

\begin{remark}[Diversity condition of IVs]
\red{In this section, we consider the covariance $\boldsymbol{\Sigma}$, i.e. the true covariance matrix of the IVs vectors, to be only positive definite a.s.
This is widely used in the stochastic online regression and linear bandit literature~\citep{papini2021leveraging,tirinzoni2022scalable}, and is known as the \emph{diversity condition}.}
\end{remark}

\begin{lemma}[Well-behavedness of First-stage Design Matrix]\label{prop:concentration_eigenvalue} 
Let $\bz_1, \ldots, \bz_t$ be i.i.d. zero-mean random vectors with covariance $\boldsymbol{\Sigma}$ 
such that $\left\|\bz_s\right\|_2 \leq L_{\bz}$ almost surely. We denote the regularised design matrix as $\bG_{\bz,t}=\lambda \I_{d_{\bz}}+\sum_{s=1}^{t} \bz_{s} \bz_{s}^{\top}$. 
For all $\delta>0$ and regularisation parameter $\lambda>0$,
we observe that
\begin{align*}
     \normiii*{\bG_{\bz,t}^{-1}}_2  
=   \llmax*{ \bG_{\bz,t}^{-1}} 
\leq 
    \begin{cases}
        \frac{1}{\lambda} \text{ if } t\leq T_0\\
        \frac{2}{t 
    \llmin{\mathbf{\boldsymbol{\Sigma}}}} \text{ if } t > T_0
    \end{cases}.
\end{align*}
Here, $T_0>0$ is a constant defined by Equation~\eqref{eq:defn_c3} and $\llmin{\mathbf{\boldsymbol{\Sigma}}}$ is the minimum eigenvalue of the true covariance matrix of $\bz$, i.e. $\boldsymbol{\Sigma} \triangleq \E[\bz \bz^\top]$.
\end{lemma}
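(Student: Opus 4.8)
The plan is to translate the claim about the operator norm of the inverse design matrix into a lower bound on the smallest eigenvalue of $\bG_{\bz,t}$ itself, and then to obtain that lower bound by comparing the empirical second-moment matrix of the IVs with its population version. First I would note that, since $\bG_{\bz,t}$ is symmetric positive definite, so is $\bG_{\bz,t}^{-1}$, and hence $\normiii*{\bG_{\bz,t}^{-1}}_2 = \llmax*{\bG_{\bz,t}^{-1}} = 1/\llmin*{\bG_{\bz,t}}$; it therefore suffices to show $\llmin*{\bG_{\bz,t}} \ge \lambda$ for $t \le T_0$ and $\llmin*{\bG_{\bz,t}} \ge \tfrac{t}{2}\llmin*{\boldsymbol{\Sigma}}$ for $t > T_0$. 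Writing $\bG_{\bz,t} = \lambda\I_{d_{\bz}} + t\,\wbs_t$ with $\wbs_t = \tfrac1t\sum_{s=1}^{t}\bz_s\bz_s^\top \succeq 0$, the Rayleigh--Ritz characterisation of the minimum eigenvalue gives $\llmin*{\bG_{\bz,t}} \ge \lambda + t\,\llmin*{\wbs_t} \ge \lambda$, which already settles the first case for every $t$.

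For the regime of large $t$ I would invoke Theorem~\ref{thm:estimation_cov} applied to the i.i.d.\ bounded vectors $\bz_1,\dots,\bz_t$: with probability at least $1-\delta$ one has $\normiii*{\wbs_t - \boldsymbol{\Sigma}}_2 \le \varepsilon_t$, where $\varepsilon_t \triangleq \tfrac{4L_{\bz}^2}{t}\log(2d_{\bz}/\delta) + 2\sqrt{(2L_{\bz}^2/t)\log(2d_{\bz}/\delta)\,\normiii*{\boldsymbol{\Sigma}}_2}$. Combining this with Lemma~\ref{lem:weyl} (Weyl's inequality) applied to the symmetric matrices $\wbs_t$ and $\boldsymbol{\Sigma}$ yields $\llmin*{\wbs_t} \ge \llmin*{\boldsymbol{\Sigma}} - \varepsilon_t$.

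Since $\varepsilon_t \to 0$ as $t \to \infty$, there is a finite threshold $T_0$ --- namely the smallest $t$ for which $\varepsilon_t \le \tfrac12\llmin*{\boldsymbol{\Sigma}}$, obtained by forcing each of the two summands of $\varepsilon_t$ below $\tfrac14\llmin*{\boldsymbol{\Sigma}}$ and solving for $t$, which gives the constant of Equation~\eqref{eq:defn_c3}, of order $L_{\bz}^2\log(2d_{\bz}/\delta)\max\{\llmin*{\boldsymbol{\Sigma}}^{-1},\,\normiii*{\boldsymbol{\Sigma}}_2\,\llmin*{\boldsymbol{\Sigma}}^{-2}\}$ --- such that for all $t > T_0$ we have $\llmin*{\wbs_t} \ge \tfrac12\llmin*{\boldsymbol{\Sigma}}$. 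Plugging this back into Step~1, $\llmin*{\bG_{\bz,t}} \ge \lambda + \tfrac t2\llmin*{\boldsymbol{\Sigma}} \ge \tfrac t2\llmin*{\boldsymbol{\Sigma}}$, and inverting delivers the second case.

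I do not expect a genuine obstacle; the only point requiring care is that Theorem~\ref{thm:estimation_cov} is a fixed-$t$ concentration statement whereas the lemma is phrased uniformly in $t$. One can either take a union bound over $t$ (which inflates $\varepsilon_t$ only by a $\log t$ factor and leaves $T_0$ unchanged up to constants) or, as is all the downstream regret proofs actually require, apply the bound separately at each $t$ inside the sums where $\normiii*{\bG_{\bz,t}^{-1}}_2$ occurs. Everything else is routine linear algebra and algebraic bookkeeping.
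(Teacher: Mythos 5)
Your proposal is correct and follows essentially the same route as the paper: reduce to a lower bound on $\llmin*{\bG_{\bz,t}}$, use the covariance concentration result (Theorem~\ref{thm:estimation_cov}) together with Weyl's inequality (Lemma~\ref{lem:weyl}) to get $\llmin*{\wbs_t}\geq\llmin*{\boldsymbol{\Sigma}}-\varepsilon_t$, and define $T_0$ as the time after which the deviation is at most $\tfrac12\llmin*{\boldsymbol{\Sigma}}$, keeping the trivial bound $\lambda$ before that. The only difference is bookkeeping: the paper obtains its $T_0$ in Equation~\eqref{eq:defn_c3} by solving a quadratic in $\sqrt{t}$ (and taking a max with $\lceil 2\lambda/\llmin{\boldsymbol{\Sigma}}\rceil$), whereas you force each summand of $\varepsilon_t$ below $\tfrac14\llmin*{\boldsymbol{\Sigma}}$, giving a threshold of the same order; your remark about the per-$t$ versus uniform-in-$t$ phrasing applies equally to the paper's own proof.
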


\begin{proof}
First, we aim to find a lower bound for the smallest eigenvalue of the design, matrix where we set the regularisation parameter $\lambda$ to zero. We denote the `non-regularised' design matrix as $\bG_{\bz,t}^{\lambda=0}$. For $t\geq 1$, we observe that $\bG_{\bz,t}^{\lambda=0}/t \triangleq \wbs_t$.
Thus, by applying Equation~\eqref{ineq_eigen_norm}, we obtain
\begin{align*}
    \left|\llmin*{\nicefrac{ \bG_{\bz,t}^{\lambda=0}}{t}}-\llmin{\mathbf{\boldsymbol{\Sigma}}}\right| 
\leq
    \frac{4L_{\bz}^2}{t} \log \left( \frac{2d_{\bz}}{\delta}\right)
    + 
    2 \sqrt{
    \frac{2L_{\bz}^2}{t} \log \left( \frac{2d_{\bz}}{\delta}\right) \normiii*{\boldsymbol{\Sigma}}_2
    } \, .
\end{align*}
Further substituting $A \triangleq 2L_{\bz}^2 \log \left( \frac{2d_{\bz}}{\delta}\right)$ leads to the following lower bound for the minimum eigenvalue
\begin{align*}
    \llmin*{ \bG_{\bz,t}^{\lambda=0}}
\geq
     \max \left\lbrace 0, t 
     \left(
    \llmin{\mathbf{\boldsymbol{\Sigma}}} 
    -
    \nicefrac{2A}{t}
    - 
    2 \sqrt{
    \nicefrac{A\llmax*{\bsig} }{t}
    }
    \right)\right\rbrace.
\end{align*}
Here, $\llmax{\mathbf{\boldsymbol{\Sigma}}}$ and $\llmin{\mathbf{\boldsymbol{\Sigma}}}$ is the maximum and minimum eigenvalues of the true covariance matrix of $\bz$, i.e. $\boldsymbol{\Sigma} \triangleq \E[\bz \bz^\top]$. 

Now, from the variational definition of the minimum eigenvalues, we have  $\llmin*{ \bG_{\bz,t}} \geq \llmin*{ \bG_{\bz,t}^{\lambda=0}} + \lambda,$ which implies that $\llmin*{ \bG_{\bz,t}}\geq \lambda$ for all $t \geq 0$, with equality for $t=0$. Thus, we have
\begin{align}\label{eq:lminll}
    \llmin*{ \bG_{\bz,t}}
&\geq
    \max \left\lbrace \lambda, 
    \lambda +
    t \left(
    \llmin{\mathbf{\boldsymbol{\Sigma}}} 
    -
    \nicefrac{2A}{t}
    - 
    2 \sqrt{
    \nicefrac{A  \llmax*{\bsig}}{t}
    }
    \right)\right\rbrace.
\end{align}

Let us consider the second term inside the maximum of Equation~\eqref{eq:lminll}, and we split it in the following way
\begin{align*}
    \lambda +
    t
    \llmin{\mathbf{\boldsymbol{\Sigma}}} 
    -
    2A
    - 
    2 \sqrt{
    tA  \llmax*{\bsig}
    }
&=
    t
    \underbrace{\frac{\llmin{\mathbf{\boldsymbol{\Sigma}}}}{2}
    }_{\text{Term (A)}}
    +
    \underbrace{\left(
    \frac {t \llmin{\mathbf{\boldsymbol{\Sigma}}} }{2}
    - 
    2 \sqrt{t
    A  \llmax*{\bsig}
    }
    +\lambda -
    2A
    \right)}_{\text{Term (B)}}
\end{align*}

Now we study for which values Term (B) is non-negative. The corresponding second-order polynomial equation is obtained substituting $u = \sqrt{t}$, and is
$    u^2\llmin{\mathbf{\boldsymbol{\Sigma}}}
    - 
    4u \sqrt{
    A \llmax*{ \boldsymbol{\Sigma}}
    }
    +
    2(\lambda-2A)
=
    0\;,$
which has two solutions given by 
   $ u_{\pm} = 
    \nicefrac{
    2\sqrt{
    A \llmax*{ \boldsymbol{\Sigma}}}
    \pm 
    \sqrt{
    4A \llmax*{ \boldsymbol{\Sigma}}
    +2(2A-\lambda)\llmin{\mathbf{\boldsymbol{\Sigma}}}}
    }
    {\llmin{\mathbf{\boldsymbol{\Sigma}}}}\;.$
In particular 
for $t> \lceil u_+\rceil $, $\text{Term (B)} \geq 0$, and \Cref{eq:lminll} simplifies in
\begin{align*}
    \llmin*{ \bG_{\bz,t}}
&\geq
     \max \left\lbrace \lambda, 
    t 
    \llmin{\mathbf{\boldsymbol{\Sigma}}} 
    /2 \right\rbrace.
\end{align*}
Therefore, for $t>\lceil 2\lambda/\llmin{\boldsymbol{\Sigma}}\rceil$ and $t> \lceil u_+\rceil $ we have that 
$    \llmin*{ \bG_{\bz,t}}
\geq
    t 
    \llmin{\mathbf{\boldsymbol{\Sigma}}} 
    /2 $.

Putting the results together, we conclude that 
\begin{equation}\label{eq:defn_c3}
     \llmin*{ \bG_{\bz,t}}
\geq
    t 
    \llmin{\mathbf{\boldsymbol{\Sigma}}} 
    /2 
\quad \text{for}\quad 
    t>T_0 \triangleq \max\left \{ \lceil 2\lambda/\llmin{\boldsymbol{\Sigma}}\rceil,  \lceil u_+\rceil  \right\},
\end{equation}
while for $t\leq T_0$, we retain the trivial lower bound of the minimum eigenvalue, i.e. $\lambda$.

In summary, we have
\begin{align*}
    \llmin*{ \bG_{\bz,t}} 
\geq 
    \begin{cases}
        \lambda \text{ if } t\leq T_0\\
        t 
    \llmin{\mathbf{\boldsymbol{\Sigma}}} 
    /2 \text{ if } t > T_0
    \end{cases}
\iff
    \llmax*{ \bG_{\bz,t}^{-1}} 
\leq 
    \begin{cases}
        \frac{1}{\lambda} \text{ if } t\leq T_0\\
        \frac{2}{t 
    \llmin{\mathbf{\boldsymbol{\Sigma}}}} \text{ if } t > T_0
    \end{cases}.
\end{align*}
\end{proof}

\begin{corollary}[Bound on the sum of maximum eigenvalues of the inverses of the first-stage design matrices]\label{cor:ft}
For every $T>0$, we have
\begin{align*}
    \sum_{s=0}^T\normiii*{\bG_{\bz,s}^{-1}}_2
\leq
    f(T) 
= \mathcal{O}(\log(T))\: ,
\end{align*} 
where $f(t)\triangleq\frac{T_0+1}{\lambda} + \frac{2 (\log(T)+1)}{\llmin*{\bsig}}$.
\end{corollary}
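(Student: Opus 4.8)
The plan is a direct application of \Cref{prop:concentration_eigenvalue} term by term, splitting the sum at the fixed threshold $T_0$. Recall that, on the high-probability event underlying that lemma, $\normiii{\bG_{\bz,s}^{-1}}_2 = \llmax*{\bG_{\bz,s}^{-1}} \le 1/\lambda$ whenever $s \le T_0$, and $\normiii{\bG_{\bz,s}^{-1}}_2 \le 2/(s\,\llmin*{\bsig})$ whenever $s > T_0$; in particular the $s=0$ term equals $1/\lambda$ exactly, since $\bG_{\bz,0} = \lambda \I_{d_{\bz}}$.

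First I would dispose of the trivial case $T \le T_0$: here each of the $T+1 \le T_0+1$ summands is at most $1/\lambda$, so $\sum_{s=0}^T \normiii{\bG_{\bz,s}^{-1}}_2 \le (T_0+1)/\lambda \le f(T)$. For $T > T_0$, split $\sum_{s=0}^T = \sum_{s=0}^{T_0} + \sum_{s=T_0+1}^{T}$. The first block consists of $T_0+1$ terms, each at most $1/\lambda$, and therefore contributes at most $(T_0+1)/\lambda$. For the tail block, I factor out $2/\llmin*{\bsig}$ and bound the remaining partial harmonic sum by the standard integral comparison,
\[
    \sum_{s=T_0+1}^{T} \frac{1}{s} \le \sum_{s=1}^{T} \frac{1}{s} \le 1 + \int_1^T \frac{\dif x}{x} = 1 + \log T,
\]
so the tail block contributes at most $2(1+\log T)/\llmin*{\bsig}$. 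Adding the two bounds gives exactly $f(T) = (T_0+1)/\lambda + 2(1+\log T)/\llmin*{\bsig}$, and since $T_0$, $\lambda$ and $\llmin*{\bsig}$ are constants independent of $T$ (and of $d_{\bx}, d_{\bz}$), this is $\mathcal{O}(\log T)$, as claimed.

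There is no real obstacle here beyond bookkeeping; the only points worth keeping in mind are that the estimate holds on the same probability-$(1-\delta)$ event as \Cref{prop:concentration_eigenvalue} (inherited from \Cref{thm:estimation_cov}), and that the split index $T_0$ is a \emph{fixed} constant, so the ``bad'' regime where only the crude bound $1/\lambda$ is available involves a bounded number of terms and contributes merely an additive constant, while the summable logarithmic growth comes entirely from the harmonic tail.
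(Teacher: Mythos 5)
Your proposal is correct and follows essentially the same route as the paper: split the sum at the fixed index $T_0$, bound each early term by $1/\lambda$ and each late term by $2/(s\,\llmin*{\bsig})$ via \Cref{prop:concentration_eigenvalue}, and control the harmonic tail with the standard integral comparison $\sum_{k=1}^{n}1/k \le \log n + 1$. Your additional remark that the bound lives on the probability-$(1-\delta)$ event inherited from \Cref{thm:estimation_cov} is a fair (and slightly more careful) bookkeeping point than the paper makes explicit.
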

\begin{proof}
For $T\leq T_0$,
    \begin{equation}
    \sum_{s=0}^T\normiii*{\bG_{\bz,s}^{-1}}_2
=   
    \sum_{s=0}^{T}
    \frac{1}{\lambda}
\leq
    \frac{T_0+1}{\lambda}
=
    \mathcal O (1).
\end{equation}
If $T>T_0$
\begin{equation}
    \sum_{s=0}^T\normiii*{\bG_{\bz,s}^{-1}}_2
=   
    \sum_{s=0}^{T_0}
    \frac{1}{\lambda}
    +
    \sum_{s=T_0+1}^{T} \frac{2}{\llmin*{\bsig}s}
\leq 
    \frac{T_0+1}{\lambda}
    +
    \frac{2 (\log(T)+1)}{\llmin*{\bsig}} 
=
    \mathcal O (\log(T))
\end{equation}

where the last inequality follows from $\sum_{k=1}^n \frac{1}{k}\leq\int_1^n \frac{d x}{x}+1=\log(n)+1 $.
\end{proof}

\newpage
\subsection{Upper bounding the feature norms}

In this section, we present some useful Lemmas that we used in the proofs of the regret bounds of \otsls{} and \ofuliv{}. 

\begin{remark}
In \Cref{app:fs}, we describe that the first stage regression in \otsls{} can be expressed as running $d$ independent ridge regressions for each column of $\bt$ (Equation~\eqref{eq:first-stage-ridge}). Since the standard analysis of each of the ridge regressions assume independent and sub-Gaussian noise added in the linear model (cf. \Cref{thm:first-stage-ridge};~\citep{ouhamma2021stochastic}), we assume that each component of the first stage noise, i.e. $\bep_{t,i}$, corresponding to the $i$-th ridge regression is $\operatorname{sub-Gauss(\sigma_{\bep})}$. Thus, we obtain that $\mathbb E \norm{\bep_{t}}^2_2 \leq d_{\bx}\sigma_{\bep}^2$. We use this result throughout this section.
\end{remark}

\begin{lemma}[Bounding the First-stage Estimates]\label{lem:boundingfirststage}
Under Assumptions~\ref{assumption:2sls} and IVs being bounded $\|\bz\|_2^2\leq L_{\bz}^2$, the minimum singular value of the estimated parameter in the first-stage regression  (\Cref{eq:first-stage-ridge})  is lower bounded by:
\begin{align}\label{eq:minm_singular_firststage}
     \sigma_{\min}(\widehat \bt_t)
\geq
    \frac{\mathfrak{r}^2}{L_{\bz}^2},
\end{align}
 \red{where we have from the relevance condition of IVs, $ \sigma_{\min}\left( \mathrm{Cov}(\bZ_t, \bX_t) \right) = \sigma_{\min}\left(\frac{1}{t} \sum_{s=1}^t \bz_s\bx_s^{\top}\right)\geq {\mathfrak{r}} > 0$ (Eq.~\eqref{eq:relevance})}.
\end{lemma}
\begin{proof}
We start with the variational definition of the minimum singular value and then we lower bound this quantity:
\begin{align*}
    \sigma_{\min}(\widehat \bt_t)
&=  
    \sigma_{\min}\left(\left(\bZ_{t}^{\top} \bZ_{t}\right)^{-1}  \left( \bZ_{t}^{\top} \bX_{t}\right)\right)\\
&=   
    \min_{\bv\in \mathbb S^{d_{\bx}-1}} 
    \norm{
    \left(\bZ_{t}^{\top} \bZ_{t}\right)^{-1}  \left( \bZ_{t}^{\top} \bX_{t}\right) \bv}_2\\
&\underset{(a)}{=}   
    \min_{\bv\in \mathbb S^{d_{\bx}-1}}
    \frac{\norm{
    \left(\bZ_{t}^{\top} \bZ_{t}\right)^{-1}  \left( \bZ_{t}^{\top} \bX_{t}\right) \bv}_2}{\norm{\left( \bZ_{t}^{\top} \bX_{t}\right) \bv}_2}
    \norm{\left( \bZ_{t}^{\top} \bX_{t}\right) \bv}_2
    \\
&\geq
    \sigma_{\min}\left(\left(\bZ_{t}^{\top} \bZ_{t}\right)^{-1} \right) 
    \sigma_{\min}\left( \bZ_{t}^{\top} \bX_{t}\right)
    \\
&=
    \lambda_{\min}\left(\left(\bZ_{t}^{\top} \bZ_{t}\right)^{-1} \right) 
    \sigma_{\min}\left( \bZ_{t}^{\top} \bX_{t}\right)\\
&=
    \frac{\sigma_{\min}\left( \bZ_{t}^{\top} \bX_{t}\right)}{\lambda_{\max}\left(\bZ_{t}^{\top} \bZ_{t} \right) } \: .
\end{align*}
$(a)$ holds because according to \Cref{assumption:2sls} it is assumed that $\left( \bZ_{t}^{\top} \bX_{t}\right)$ has maximum rank and therefore we are not dividing by a zero norm vector.
Then, by the variational definition of the biggest eigenvalue
\begin{align*}
    \lambda_{\max} \left(\bZ_{t}^{\top} \bZ_{t}\right) 
&= 
    \max _{\bv \in \mathbb{S}^{d_{\bz}-1}} \left\langle \bv , \sum_{s=1}^t\bz_{s} \bz_s^\top \bv  \right\rangle
= 
    \max _{\bv \in \mathbb{S}^{d_{\bz}-1}} \sum_{s=1}^t \left\langle \bv , \bz_{s}  \right\rangle^2
\leq 
    \sum_{s=1}^t \norm{\bz_{s}}_2^2  
\leq 
    t L_{\bz}^2
\end{align*}
Finally, we note that the quantity  $
    \sigma_{\min}\left(\bZ_{t}^{\top} \bX_{t}\right) 
\geq
    t \mathfrak{r}
$ 
by the definition of relevance, which implies 
\begin{align*}
     \sigma_{\min}(\widehat \bt_t)
&=
    \frac{\sigma_{\min}\left( \bZ_{t}^{\top} \bX_{t}\right)}{\lambda^2_{\max}\left(\bZ_{t}^{\top} \bZ_{t} \right) }
\geq
    \frac{t \mathfrak{r}^2}{t L_{\bz}^2}
\geq 
    \frac{\mathfrak{r}^2}{L_{\bz}^2} \: .
\end{align*}
\end{proof}


\red{

\begin{lemma}\label{lem:operator_norms_invers_H}
For any $t\geq 1$, we get
    \begin{align}
    \normiii*{\widehat{\bH}_{t-1}^{-1}}_2
=
    \llmax*{\widehat \bH_{t-1}^{-1}}
=   
    \frac{1}{\llmin*{\widehat \bH_{t-1}}}
&\leq  
    \frac{1}{\llmin*{\bG_{\bz,t-1}}   \: \sigma_{\min}{^2}\left({\widehat{\bt}_{t-1}}\right)}.\label{eq:lower_eig_seond_stage}
\end{align}
\end{lemma}
\begin{proof}
The equalities follow directly from \Cref{cor:eig_and_sing}, by the definition of the operator norm of a symmetric matrix, and using the fact that the maximum eigenvalue of an inverse matrix equals the inverse of the minimum eigenvalue of the matrix.
For the inequality, we start by lower bounding the minimum eigenvalue of the second stage design matrix $\widehat \bH_{t-1}$ and applying the definitions:
\begin{align*}
\llmin*{\widehat \bH_{t-1}}
=    
\llmin*{\widehat{\bt}_{t-1}^\top \bG_{\bz,t-1} \widehat{\bt}_{t-1}}
&=
    \min_{\bv\in\mathbb S^{d_{\bx}-1}}
    \left\langle 
    \bv,
    \widehat{\bt}_{t-1}^\top \bG_{\bz,t-1} \widehat{\bt}_{t-1}
    \bv
    \right\rangle \notag\\
&=  
    \min_{\bv\in\mathbb S^{d_{\bx}-1}}
    \frac{
    \left\langle 
    \bv,
    \widehat{\bt}_{t-1}^\top \bG_{\bz,t-1} \widehat{\bt}_{t-1}
    \bv
    \right\rangle}{\left\langle
    {\widehat{\bt}_{t-1}}
    \bv,
    \widehat{\bt}_{t-1}\bv
    \right\rangle} 
    \left\langle 
    {\widehat{\bt}_{t-1}}
    \bv,
    \widehat{\bt}_{t-1}
    \bv
    \right\rangle \notag\\
&\geq  
     \llmin*{\bG_{\bz,t-1}} 
     \min_{\bv\in\mathbb S^{d_{\bx}-1}}
    \norm{
    \widehat{\bt}_{t-1}
    \bv
    }^2 \notag\\
&\geq  
    \llmin*{\bG_{\bz,t-1}}   \: \sigma_{\min}{^2}\left({\widehat{\bt}_{t-1}}\right)
\end{align*}
and we conclude taking the inverses.
\end{proof}

}

\begin{lemma}[Bounding the impact of first-stage noise]\label{lem:bound_noises}
For first stage noises that is component-wise sub-Gaussian($\sigma_{\bep}$), and bounded IVs, i.e. $\norm{\bz}_2^2 \leq L_{\bz}^2$,
we have that 
\begin{align*}
    \sum_{t=1}^{T}\left\|\bep_{t}\right\|_{\widehat{\bH}_{t-1}^{-1}}^{2}
&\leq
    \underbrace{\frac{\mathfrak{r}\red{^4}}{L_{\bz}\red{^4}} \left( d_{\bx} \sigma^{2}_{\bep} 
    f(T) + C_5 \right)}_{\bigO(d_{\bx} \log T)}
\end{align*}
with probability at least $1-\delta$. Here, $C_5$ is a constant of $\bigO(d_{\bx})$ as defined in \Cref{eq:c4}.
\end{lemma}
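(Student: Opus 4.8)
The plan is to bound the summand pointwise by $\normiii{\widehat{\bH}_{t-1}^{-1}}_2\,\|\bep_t\|_2^2$, strip the first-stage design matrix off $\widehat{\bH}_{t-1}$ via \Cref{lem:boundingfirststage}, and then split $\|\bep_t\|_2^2$ into its mean and a martingale-difference remainder, controlling the former by \Cref{cor:ft} and the latter by a self-normalised concentration inequality. Concretely, since $\widehat{\bH}_{t-1}^{-1}$ is symmetric positive definite, Rayleigh--Ritz (equivalently \Cref{prop:tricks_norms} with $\bA=\I_{d_{\bx}}$) gives $\|\bep_t\|_{\widehat{\bH}_{t-1}^{-1}}^2\le\normiii{\widehat{\bH}_{t-1}^{-1}}_2\,\|\bep_t\|_2^2$. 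Writing $\widehat{\bH}_{t-1}=\widehat{\bt}_{t-1}^{\top}\bG_{\bz,t-1}\widehat{\bt}_{t-1}$, for any unit vector $\bv$ one has $\bv^{\top}\widehat{\bH}_{t-1}\bv=\|\widehat{\bt}_{t-1}\bv\|_{\bG_{\bz,t-1}}^2\ge\lambda_{\min}(\bG_{\bz,t-1})\,\sigma_{\min}(\widehat{\bt}_{t-1})^2$, so $\normiii{\widehat{\bH}_{t-1}^{-1}}_2\le\sigma_{\min}(\widehat{\bt}_{t-1})^{-2}\,\normiii{\bG_{\bz,t-1}^{-1}}_2$; \Cref{lem:boundingfirststage} bounds $\sigma_{\min}(\widehat{\bt}_{t-1})^{-2}$ by a constant depending only on $\mathfrak{r}$ and $L_{\bz}$ (this also certifies invertibility of $\widehat{\bH}_{t-1}$, together with $\lambda_{\min}(\bG_{\bz,t-1})\ge\lambda$). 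Summing over $t$ reduces the claim to bounding $\sum_{t=1}^{T}\normiii{\bG_{\bz,t-1}^{-1}}_2\,\|\bep_t\|_2^2$ up to that prefactor.

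For the latter, let $\mathcal{F}_{t-1}\triangleq\sigma(\bz_1,\dots,\bz_{t-1},\bep_1,\dots,\bep_{t-1})$, so that $\bG_{\bz,t-1}$, $\widehat{\bt}_{t-1}$, hence $\normiii{\bG_{\bz,t-1}^{-1}}_2$, are $\mathcal{F}_{t-1}$-measurable while $\bep_t$ is independent of $\mathcal{F}_{t-1}$. Decompose $\|\bep_t\|_2^2=\mathbb{E}\|\bep_t\|_2^2+\xi_t$ with $\xi_t\triangleq\|\bep_t\|_2^2-\mathbb{E}\|\bep_t\|_2^2$ a martingale difference for $\{\mathcal{F}_t\}$. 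Using component-wise sub-Gaussianity, $\mathbb{E}\|\bep_t\|_2^2\le d_{\bx}\sigma_{\bep}^2$, whence by \Cref{cor:ft} one gets $\sum_{t=1}^{T}\normiii{\bG_{\bz,t-1}^{-1}}_2\,\mathbb{E}\|\bep_t\|_2^2\le d_{\bx}\sigma_{\bep}^2\sum_{s=0}^{T-1}\normiii{\bG_{\bz,s}^{-1}}_2\le d_{\bx}\sigma_{\bep}^2\,f(T)=\bigO(d_{\bx}\log T)$, the leading term of the bound.

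It remains to control the weighted martingale $\sum_{t=1}^{T}\normiii{\bG_{\bz,t-1}^{-1}}_2\,\xi_t$. By \Cref{lem:squareproductnnind} each $\bep_{t,i}^2$ is sub-exponential with parameters $(4\sqrt{2}\sigma_{\bep}^2,4\sigma_{\bep}^2)$; since the $d_{\bx}$ coordinates of $\bep_t$ are independent, $\xi_t$ is conditionally sub-exponential with parameters of order $(\sqrt{d_{\bx}}\,\sigma_{\bep}^2,\sigma_{\bep}^2)$, and multiplying by the $\mathcal{F}_{t-1}$-measurable weight $\normiii{\bG_{\bz,t-1}^{-1}}_2$ keeps it a martingale difference with parameters scaled by that weight. \Cref{thm:wain_martingale_diff} then bounds the sum, with probability at least $1-\delta$, by a quantity governed by $\sum_{t=1}^{T}\nu_t^2$ with $\nu_t=\normiii{\bG_{\bz,t-1}^{-1}}_2\cdot\bigO(\sqrt{d_{\bx}}\,\sigma_{\bep}^2)$. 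This sum is $T$-independent: on the high-probability event of \Cref{prop:concentration_eigenvalue} one has $\normiii{\bG_{\bz,s}^{-1}}_2\le1/\lambda$ for $s\le T_0$ and $\normiii{\bG_{\bz,s}^{-1}}_2\le2/(s\,\lambda_{\min}(\bsig))$ for $s>T_0$, so $\sum_{t}\normiii{\bG_{\bz,t-1}^{-1}}_2^2\le T_0/\lambda^2+\tfrac{4}{\lambda_{\min}(\bsig)^2}\sum_{s>T_0}s^{-2}$ converges. A union bound over this event and the martingale tail event (re-scaling $\delta$) shows the martingale part is $\bigO(d_{\bx}\sigma_{\bep}^2)$; folding this into $C_5$ and combining with the mean term and the $\sigma_{\min}(\widehat{\bt})^{-2}$ prefactor of the first paragraph yields the stated inequality.

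The main obstacle is this last step: to obtain a $T$-\emph{independent} concentration term (so it enters the constant $C_5$ rather than degrading the $\bigO(d_{\bx}\log T)$ rate) one must exploit the eventual $1/t$ decay of $\normiii{\bG_{\bz,t-1}^{-1}}_2$, i.e. \Cref{prop:concentration_eigenvalue}, which holds only with high probability and therefore forces careful union-bounding, and one has to track the conditional sub-exponential parameters $(\nu_t,\alpha_t)$ precisely enough to invoke \Cref{thm:wain_martingale_diff}. Everything else is routine linear algebra together with the already-established bounds.
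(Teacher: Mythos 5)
Your proof is correct and takes essentially the same route as the paper's: the pointwise bound $\left\|\bep_t\right\|_{\widehat{\bH}_{t-1}^{-1}}^{2}\le \normiii{\widehat{\bH}_{t-1}^{-1}}_2\,\|\bep_t\|_2^2$, stripping $\widehat{\bt}_{t-1}$ via \Cref{lem:boundingfirststage}, then splitting $\|\bep_t\|_2^2$ into its mean (bounded by $d_{\bx}\sigma_{\bep}^2 f(T)$ through \Cref{cor:ft}) and a centred sub-exponential fluctuation that becomes a $T$-independent constant $C_5$ thanks to the eventual $1/t$ decay of $\normiii{\bG_{\bz,t-1}^{-1}}_2$ from \Cref{prop:concentration_eigenvalue}. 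The only differences are cosmetic: you keep the (standard) square $\sigma_{\min}(\widehat{\bt}_{t-1})^{2}$, so your explicit prefactor in $\mathfrak{r},L_{\bz}$ differs from the printed one, which follows the paper's own first-power convention, and you package the fluctuation term via \Cref{thm:wain_martingale_diff} with predictable weights plus a union bound rather than splitting the sum at $T_0$ as the paper does.
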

\begin{proof}

The proof follows using chain of inequalities
\begin{align*}
    \sum_{t=1}^{T}\left\| \bep_t\right\|_{\widehat \bH_{t-1}^{-1}}^{2} 
    &\leq \sum_{t=1}^{T}\left\| \bep_t\right\|_2^2 \lambda_{\max}\left({\widehat \bH_{t-1}^{-1}}\right)
    \red{= \sum_{t=1}^{T}\frac{\left\| \bep_t\right\|_2^2 }{\lambda_{\min}\left({\widehat \bH_{t-1}}\right)}}\\
    &\leq \sum_{t=1}^{T} 
    \frac{\norm{\bep_t}^2_2}{\llmin*{\bG_{\bz,t-1}}  \: 
    \sigma_{\min}{^2}\left({\widehat{\bt}_{t-1}}\right)}
    \leq 
    \frac{\mathfrak{r}{^4}}{L_{\bz}{^4}} 
    \sum_{t=1}^{T} 
    \frac{\norm{\bep_t}^2_2}{\llmin*{\bG_{\bz,t-1}}}\\
    &=
     \frac{\mathfrak{r}{^4}}{L_{\bz}{^4}} 
     \left(\underbrace{
     \sum_{t=1}^{T}
     \normiii*{\bG_{\bz,t-1}^{-1}}_2  
     \left(
     \left\| \bep_t\right\|_2^2
     - \mathbb E \left\| \bep_t\right\|_2^2
     \right)
     }_{\textbf{Term I}}
     + 
     \underbrace{
     \sum_{t=1}^{T}
     \normiii*{\bG_{\bz,t-1}^{-1}}_2 
     \mathbb E
     \left\| \bep_t\right\|_2^2
     }_{\textbf{Term II}}\right)
\end{align*}
\red{where in the second inequality we used \Cref{lem:operator_norms_invers_H}, \Cref{lem:boundingfirststage} in the third, and we simply split in two terms in the final passage the summation.}

\paragraph{Term I:} The squared $\ell_2$-norms of sub-Gaussian random variables are sub-exponential \citep{wainwright2019high}, therefore $\left\|\bep_t\right\|_2^2-\mathbb{E}\left\|\bep_t\right\|_2^2 \sim \operatorname{sub-exp} (\nu, \alpha)$ where the correct values of $\nu,\alpha$ are derived in \Cref{lem:squareproductnnind}, in the result on the square of sub-Gaussians variables, and gives $\nu\triangleq 4\sqrt{2} d_{\bx} \sigma^2_{\bep}$ and $\alpha \triangleq 4\sigma^2_{\bep}$.
Furthermore, given an $X \sim \operatorname{sub-exp}(\nu, \alpha) $ and a constant $c$, then $ \nicefrac{X}{c} \sim \operatorname{sub-exp}  \left(\nicefrac{v}{c}, \nicefrac{c}{\alpha}\right)
$ which follows by considering a new parameter $\lambda/c$ 
in the definitions of sub-exponentiality
   $ \mathbb{E}\left[e^{\lambda X/c}\right] \leq e^{\lambda^2 \nu^2 / 2 c^2}$,
where
$
    |\lambda / c| \leq \frac{1}{\alpha} 
$
iff
$|\lambda| \leq \frac{c}{\alpha}$
Using this, we rescale by the factor $t$ 
\begin{align*}
    \mathbb{P}\left[\sum_{t=T_0}^T \frac{\left\|\bep_t\right\|_2^2-\mathbb{E}\left\|\bep_t\right\|_2^2}{t} \geq \mu\right] 
&\underset{(a)}{\leq}               
    \mathbb{E}\left[e^{\lambda \sum_{t=T_0}^T \frac{\left\| \bep_t\right\|^2-\mathbb{E}\left\| \epsilon_t \right\|^2}{t}}\right]e^{-\lambda \mu} \\
&\underset{(b)}{\leq}
    e^{\sum_{t=T_0}^T \lambda^2 \nu^2 / 2 t^2-\lambda \mu}\\
&\underset{(c)}{\leq} 
    e^{\frac{\lambda^2 \nu^2}{2}\left(\frac{1}{T_0-1}-\frac{1}{T}\right)-\lambda \mu}
\end{align*} 
where (a) is Markov's inequality, (b) uses the product rule for expectations and  
$\left\|\bep_t\right\|_2^2-\mathbb{E}\left\|\bep_t\right\|_2^2 \sim \operatorname{sub-exp} (\nu=4\sqrt{2} d_{\bx} \sigma^2_{\bep}, \alpha=4\sigma^2_{\bep})$, and (c) holds $\forall|\lambda| \leq \frac{m}{\alpha}$ thanks to the following series of inequalities 
$
\sum_{T_0}^T \frac{1}{t^2} \leq \int_{t_0-1}^T 1 / t^2 d t=(-\frac{1}{t}|_{T_0-1}^T=-\frac{1}{T}+\frac{1}{T_0-1}.
$
This proves that 
\begin{equation}
    \sum_{t=T_0}^T \frac{\left\|\eta_t\right\|_2^2-\mathbb{E}\left\|\eta_t\right\|_2^2}{t} 
\sim 
    \operatorname{sub-exp} \Big({\nu \sqrt{\frac{1}{T_0-1}-\frac{1}{T}}}, {\frac{T_0}{\alpha}}\Big)
\end{equation}
We bound the following summation using \Cref{prop:concentration_eigenvalue}:
\begin{multline*}
\sum_{t=1}^{T}
     \normiii*{\bG_{\bz,t-1}^{-1}}_2  \left(
     \left\| \bep_t\right\|_2^2
     - \mathbb E \left\| \bep_t\right\|_2^2
     \right)\\
\begin{aligned}
&=
    \sum_{t=0}^{T_0}  \llmax*{  \bG_{\bz,t}^{-1}   }
    \left(
     \left\| \bep_{t+1}\right\|_2^2
     - \mathbb E \left\| \bep_{t+1}\right\|_2^2
     \right)
+
    \sum_{t=T_0+1}^{T-1}  \llmax*{  \bG_{\bz,t}^{-1}   } 
    \left(
     \left\| \bep_{t+1}\right\|_2^2
     - \mathbb E \left\| \bep_{t+1}\right\|_2^2
     \right)
    \\
&\leq 
        \frac{1}{\lambda}
        \sum_{t=0}^{T_0} 
    \left(
     \left\| \bep_{t+1}\right\|_2^2
     - \mathbb E \left\| \bep_{t+1}\right\|_2^2
     \right)
      +
      \frac{2}{\llmin*{\bsig}} \sum_{t=T_0+1}^{T-1} \frac{\left\| \bep_{t+1}\right\|_2^2
     - \mathbb E \left\| \bep_{t+1}\right\|_2^2}{t}\\
&\leq
        \frac{T_0+1}{\lambda} 
        \left(
        4\sqrt{2} \sigma^2_{\bep} d_{\bx}
        \sqrt{2 \log (1 / \delta)}+\frac{1}{2 \sigma^2_{\bep}} \log (1 / \delta)
        \right)
        \\
&\qquad\qquad\qquad 
    + 
    \frac{2}{\llmin*{\bsig}}
    \left(
    \sqrt{2 \nu^2\left(\frac{1}{T_0}-\frac{1}{T}\right) \log (1 / \delta)}+\frac{2 T_0}{\alpha} \log (1 / \delta)
    \right)
\end{aligned}
\end{multline*}
Since Term I can be upper-bounded by a constant $\bigO(d_{\bx})$, we just name this constant $C_5$ where we also substitute back the definitions $\nu = 4\sqrt{2} d_{\bx} \sigma^2_{\bep}$ and $\alpha = 4\sigma^2_{\bep}$. Specifically,
\begin{align}
    \textbf{Term I} &\leq 
     \frac{T_0+1}{\lambda} 
        \left(
        4\sqrt{2} d_{\bx} \sigma^2_{\bep}
        \sqrt{2 \log (1 / \delta)}+\frac{1}{2 \sigma^2_{\bep}} \log (1 / \delta)
        \right)\notag\\
    &\quad\quad\quad\quad\quad+ 
    \frac{2}{\llmin*{\bsig}}
    \left(
    4\sqrt{2} d_{\bx} \sigma^2_{\bep}
    \sqrt{\frac{2}{T_0} \log (1 / \delta)}+\frac{2 T_0}{4\sigma^2_{\bep}} \log (1 / \delta)
    \right) \triangleq C_5,\label{eq:c4}
\end{align}
which is a $T$-independent constant of $\bigO(d_{\bx} \sigma_{\bep}^2)$.

\paragraph{Term II:}
The proof step-wise applies: (d) sub-Gaussianity of $\bep_t$, (e) the high-probability bound on the sum of the minimum eigenvalue of the first-stage design matrix in \Cref{cor:ft}
\begin{align*}
 \textbf{Term II} &\leq \sum_{t=1}^{T}
     \normiii*{\bG_{\bz,t-1}^{-1}}_2 
     \mathbb E
     \left\| \bep_t\right\|_2^2 \\
&\underset{(d)}{\leq} 
    d_{\bx}\sigma_{\bep}^2 
    \sum_{t=1}^{T} 
    \llmax*{\bG_{\bz,t-1}^{-1} }\\
&\underset{(e)}{\leq} 
    d_{\bx} \sigma^{2}_{\bep} 
    f(T) \: .
\end{align*}

Therefore, putting \textbf{Term I} and \textbf{Term II} together, we obtain
\begin{align*}
   \sum_{t=1}^{T}\left\| \bep_t\right\|_{\widehat \bH_{t-1}^{-1}}^{2} 
    &\leq \underbrace{ \frac{\mathfrak{r}{^4}}{L_{\bz}{^4}}  \left( d_{\bx} \sigma^{2}_{\bep} 
    f(T) + C_5 \right)}_{\bigO(d_{\bx} \log T)}
\end{align*}

\end{proof}

\begin{lemma}[Bounding the sum of second-stage feature norms]\label{lem:bound_featsx}
Under the same conditions of \Cref{lem:bound_noises} 
plus first-stage parameters with bounded $\ell_2$-norm $\normiii*{\bt}_2
\leq L_{\bt}$ and bounded IVs $\|\bz\|^2 \leq L_z^2$
we have that 
\begin{align*}
\sum_{t=1}^{T}\left\|\bx_{t}\right\|_{\widehat{\bH}_{t-1}^{-1}}^{2}  
&\leq 
\underbrace{
\left(L_{\bt}^2 \red{\frac{\mathfrak{r}^4}{L_{\bz}^2}} + \red{\frac{\mathfrak{r}^4}{L_{\bz}^4}} d_{\bx} \sigma_{\bep}^2\right)
f(T)  
+ \red{\frac{\mathfrak{r}^4}{L_{\bz}^4}} C_5
}_{\mathcal{O}({d_{\bx} \log T})}
\end{align*}
\end{lemma}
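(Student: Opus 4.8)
The plan is to reduce $\sum_{t=1}^T\|\bx_t\|_{\widehat{\bH}_{t-1}^{-1}}^2$ to two pieces that are already under control: the noise sum bounded in \Cref{lem:bound_noises}, and a ``signal'' sum bounded by the first‑stage eigenvalue estimates. First I would plug in the first‑stage model $\bx_t=\bt^\top\bz_t+\bep_t$ and use the triangle inequality for $\|\cdot\|_{\widehat{\bH}_{t-1}^{-1}}$ together with $(a+b)^2\le 2a^2+2b^2$ to get, for each $t$,
\[
\left\|\bx_t\right\|_{\widehat{\bH}_{t-1}^{-1}}^2 \;\le\; 2\left\|\bt^\top\bz_t\right\|_{\widehat{\bH}_{t-1}^{-1}}^2 + 2\left\|\bep_t\right\|_{\widehat{\bH}_{t-1}^{-1}}^2 ,
\]
so summing over $t$ it suffices to bound $\sum_{t=1}^T\|\bep_t\|_{\widehat{\bH}_{t-1}^{-1}}^2$ and $\sum_{t=1}^T\|\bt^\top\bz_t\|_{\widehat{\bH}_{t-1}^{-1}}^2$ separately; the numerical constant $2$ (and the mean‑zero cross term $\langle\bt^\top\bz_t,\widehat{\bH}_{t-1}^{-1}\bep_t\rangle$, which since $\bep_t\indep\bz_t$ only contributes a lower‑order $\bigO(\sqrt{d_{\bx}}\log T)$ term) are absorbed into the $T$‑independent constants of the statement.

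For the noise part, \Cref{lem:bound_noises} gives directly $\sum_{t=1}^T\|\bep_t\|_{\widehat{\bH}_{t-1}^{-1}}^2 \le \frac{\mathfrak{r}^2}{L_{\bz}^2}\bigl(d_{\bx}\sigma_{\bep}^2 f(T)+C_5\bigr)$ with probability at least $1-\delta$, which supplies the $\frac{\mathfrak{r}^2}{L_{\bz}^2}d_{\bx}\sigma_{\bep}^2 f(T)$ and $\frac{\mathfrak{r}^2}{L_{\bz}^2}C_5$ contributions. For the signal part I would argue termwise: by \Cref{prop:tricks_norms} (taking the identity in the role of $\bA$) and the bounds $\normiii{\bt}_2\le L_{\bt}$, $\|\bz_t\|_2^2\le L_{\bz}^2$,
\[
\left\|\bt^\top\bz_t\right\|_{\widehat{\bH}_{t-1}^{-1}}^2 \le \normiii{\widehat{\bH}_{t-1}^{-1}}_2\,\normiii{\bt}_2^2\,\|\bz_t\|_2^2 \le L_{\bt}^2 L_{\bz}^2\,\normiii{\widehat{\bH}_{t-1}^{-1}}_2 .
\]
Then, exactly as in the proof of \Cref{lem:bound_noises}, since $\widehat{\bH}_{t-1}=\widehat{\bt}_{t-1}^\top\bG_{\bz,t-1}\widehat{\bt}_{t-1}$ I would use $\normiii{\widehat{\bH}_{t-1}^{-1}}_2\le \sigma_{\min}(\widehat{\bt}_{t-1})^{-1}\normiii{\bG_{\bz,t-1}^{-1}}_2$ and invoke \Cref{lem:boundingfirststage} to control $\sigma_{\min}(\widehat{\bt}_{t-1})^{-1}$ by $\mathfrak{r}^2/L_{\bz}^2$, yielding $\|\bt^\top\bz_t\|_{\widehat{\bH}_{t-1}^{-1}}^2\le L_{\bt}^2\mathfrak{r}^2\,\normiii{\bG_{\bz,t-1}^{-1}}_2$; summing and applying \Cref{cor:ft} gives $\sum_{t=1}^T\|\bt^\top\bz_t\|_{\widehat{\bH}_{t-1}^{-1}}^2\le L_{\bt}^2\mathfrak{r}^2\sum_{s=0}^{T-1}\normiii{\bG_{\bz,s}^{-1}}_2\le L_{\bt}^2\mathfrak{r}^2 f(T)$.

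Adding the two bounds and collecting constants into the displayed form gives the claim, and since $f(T)=\bigO(\log T)$ and $C_5=\bigO(d_{\bx})$ the whole bound is $\bigO(d_{\bx}\log T)$. The step I expect to be the real obstacle is the uniform-in-$t$ control of $\normiii{\widehat{\bH}_{t-1}^{-1}}_2$: it rests on the relevance condition through \Cref{lem:boundingfirststage} to keep $\sigma_{\min}(\widehat{\bt}_{t-1})$ bounded away from $0$, and, more delicately, on the minimum-eigenvalue concentration of the IV design matrix giving $\sum_s\normiii{\bG_{\bz,s}^{-1}}_2=\bigO(\log T)$ (\Cref{cor:ft}), where the burn-in time $T_0$ enters; the decomposition and the norm inequalities above are then routine.
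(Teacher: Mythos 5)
Your proposal follows essentially the same route as the paper's proof: decompose $\bx_t=\bt^\top\bz_t+\bep_t$, bound the noise sum directly via \Cref{lem:bound_noises}, and bound the signal sum by controlling $\normiii{\widehat{\bH}_{t-1}^{-1}}_2$ through $\sigma_{\min}(\widehat{\bt}_{t-1})$ (\Cref{lem:boundingfirststage}) and the eigenvalue sum $f(T)$ (\Cref{cor:ft}). The only deviation is that you retain the factor $2$ from $(a+b)^2\le 2a^2+2b^2$ (or a cross-term argument), whereas the paper simply drops the cross term; this changes only the constants multiplying $f(T)$, not the $\bigO(d_{\bx}\log T)$ order, so the argument is sound.
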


\begin{proof}
    We start by substituting the first-stage equations inside the norm and using triangle inequality
\begin{align*} 
    \sum_{t=1}^{T}\left\|\bx_{t}\right\|_{\widehat{\bH}_{t-1}^{-1}}^{2}  
&=
    \sum_{t=1}^{T}\left\|\bt^\top\bz_t  + \bep_t
    \right\|_{\widehat{\bH}_{t-1}^{-1}}^{2}
    \\
&\leq
     \underbrace{\sum_{t=1}^{T}\left\|{\bt}^\top\bz_t 
    \right\|_{\widehat{\bH}_{t-1}^{-1}}^{2}}_{\text{Term 1}}
    +
    \underbrace{\sum_{t=1}^{T}\left\| \bep_t\right\|_{\widehat{\bH}_{t-1}^{-1}}^{2}}_{\text{Term 2}}
\end{align*}

\textbf{Step 1: Bounding Term 1.} First, we upper bound the Term 1
\begin{align*}
\sum_{t=1}^{T}\left\|{\bt}^\top\bz_t 
    \right\|_{ \widehat \bH_{t-1}^{-1}}^{2}
&=
    \sum_{t=1}^T
    \left\langle \bt^\top \bz_t, \widehat \bH_{t-1}^{-1} \bt^\top \bz_t \right\rangle\\
&\leq
    \sum_{t=1}^{T} 
    \norm{\bt^\top \bz_t}_2^2
    \llmax*{\widehat \bH_{t-1}^{-1} }
\end{align*}

\red{Therefore, by application of \Cref{lem:operator_norms_invers_H} in the first inequality, we have }
\begin{align}
    \sum_{t=1}^{T}\left\|{\bt}^\top \bz_t 
    \right\|_{ \widehat \bH_{t-1}^{-1}}^{2}
&\leq
    \sum_{t=1}^{T} 
    \frac{\norm{\bt^\top \bz_t}^2_2}{\llmin*{\bG_{\bz,t-1}}  
    \: \sigma_{\min}^{\red{2}}\left({\widehat{\bt}_{t-1}}\right)} \notag \\
&\underset{(a)}{\leq}
    \red{\frac{\mathfrak{r}^4}{L_{\bz}^4}}
    \sum_{t=1}^{T} 
    \frac{\norm{\bt^\top \bz_t}^2_2}{\llmin*{\bG_{\bz,t-1}}  } \notag\\
&\underset{(b)}{\leq}
    L_{\bt}^2 L_{\bz}^2
    \frac{\mathfrak{r}^{\red{4}}}{L_{\bz}^{\red{4}}}
    \sum_{t=1}^{T} 
    \frac{1}{\llmin*{\bG_{\bz,t-1}}  }\notag\\
&=
    \red{
    \frac{L_{\bt}^2 
    \mathfrak{r}^4}{L_{\bz}^2}
    }
    \sum_{t=1}^{T} 
    \frac{1}{\llmin*{\bG_{\bz,t-1}}  } \label{eq:theta_z}\\
&\underset{(c)}{\leq}
    \red{
    \frac{L_{\bt}^2 
    \mathfrak{r}^4}{L_{\bz}^2}
    f(T)
    }\notag
\end{align}
Inequality (a) is due to Lemma~\ref{lem:boundingfirststage} that bounds the minimum singular value of the first-stage parameter estimates.
Inequality (b) is direct consequence of boundedness of IVs and the true first-stage parameter, \red{and (c) follows from \Cref{cor:ft}}.

\noindent\textbf{Step 2: Bounding Term 2.} This term is bounded directly in in Lemma~\ref{lem:bound_noises} and we conclude that

\red{\begin{align*}
    \sum_{t=1}^{T}\left\|\bep_{t}\right\|_{\widehat{\bH}_{t-1}^{-1}}^{2}
&\leq
    \frac{\mathfrak{r}\red{^4}}{L_{\bz}\red{^4}} \left( d_{\bx} \sigma^{2}_{\bep} 
    f(T) + C_5 \right)
\end{align*}
}

\noindent\textbf{Step 3: Assembling the results.} Now combining the upper bounds for Term 1 and Term 2, we get

\begin{align*}
\sum_{t=1}^{T}\left\|\bx_{t}\right\|_{\widehat \bH_{t-1}^{-1}}^{2}  
&\leq 
    \left(L_{\bt}^2 \red{\frac{\mathfrak{r}^4}{L_{\bz}^2}} + \red{\frac{\mathfrak{r}^4}{L_{\bz}^4}} d_{\bx} \sigma_{\bep}^2\right)f(T) 
    + \red{\frac{\mathfrak{r}^4}{L_{\bz}^4}} C_5
\end{align*}
\end{proof}

\newpage

\subsection{Upper bounding correlation between first and second stage noises}\label{sec:concentration_correlated}

We start deriving a concentration result for the quantity 
$ 
    S_t
\triangleq
    \sum_{s=1}^t \Delta \bb_{s-1}^{\top} \allowbreak \left(\bep_s \eta_s-\boldsymbol{\gamma}\right).
$
We can prove that $S_t$ is a martingale adapted to the filtration $
    \mathcal F_t
\triangleq
    \sigma\left(\bep_{s: t}, \eta_{1: t}, z_{1: t}\right) 
$,
by proving that $\mathbb{E}\left[\left|S_t\right|\right] \leq \infty$ and $\mathbb{E}\left[S_{t+1} \mid  \mathcal F_t\right]=S_t$. The first condition is immediate and the second can be easily verified since: 
    \begin{align*}
    \mathbb{E}\left[S_{t+1} \mid \mathcal F_t\right]
&=
    \mathbb{E}\left[\Delta \bb_t^{\top}\left(\bep_{t_1} \eta_{t+1}-\boldsymbol{\gamma}\right)+\sum_{s=1}^t \Delta \bb_{s-1}^{\top}\left(\bep_s \eta_s-\boldsymbol{\gamma}\right) \Bigg| \mathcal F_t\right] 
 =
    0+S_t.
\end{align*}



\begin{lemma}[Concentration of correlated first and second-stage noise]\label{lem:selffulfilling}
Under Assumption~\ref{assumption:2sls}, and for sub-Gaussian first- and second-stage noises with parameters $\sigma_\eta$ and $\sigma_{\bep}$, for given $T > 1$
\begin{align*}
    \left|\sum_{t=1}^T \Delta \bb_{t-1}^{\top}\left(\bep_t \eta_t-\boldsymbol{\gamma}\right)\right|
& \leq
    8e^2\left(\sigma_\eta^2+\sigma_{\bep}^2\right) 
    \sqrt{d_{\bx}} 
    \red{\frac{\mathfrak{r}^2}{L_{\bz}^2}}
    \sqrt{\mathfrak b_{T-1}(\delta)}
    \left(
    \sqrt{2 f(T) \log\frac{2}{\delta}}
    \right. \\
&\qquad\qquad\qquad\qquad\qquad\qquad\qquad\qquad +
    \left.
    \sqrt{
    \max\left\{ 
        \frac{1}{\lambda},
        \frac{2}{ \llmin{\mathbf{\boldsymbol{\Sigma}}}}
    \right\}
    }
\log \frac{2}{\delta}
    \right)
\end{align*}
with probability at least $1-\delta \in [0,1)$.
\end{lemma}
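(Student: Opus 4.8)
The plan is to apply the Bernstein-type martingale inequality of \Cref{thm:wain_martingale_diff} to $S_T=\sum_{s=1}^T D_s$, where $D_s\triangleq \Delta\bb_{s-1}^{\top}(\bep_s\eta_s-\boldsymbol{\gamma})$ and $\Delta\bb_{s-1}\triangleq \bb_{s-1}-\bb$. As already observed, $\{(S_t,\mathcal F_t)\}$ with $\mathcal F_t=\sigma(\bep_{1:t},\eta_{1:t},\bz_{1:t})$ is a martingale; moreover $\Delta\bb_{s-1}$ is $\mathcal F_{s-1}$-measurable (a function of $\bz_{1:s-1},\bep_{1:s-1},\eta_{1:s-1}$ only) while $(\bep_s,\eta_s)$ is independent of $\mathcal F_{s-1}$. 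Three ingredients are needed: a conditional sub-exponential bound for each increment $D_s$; a deterministic bound on $\|\Delta\bb_{s-1}\|_2$ valid on a high-probability event; and the inversion of the two-regime tail of \Cref{thm:wain_martingale_diff}.

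For the conditional sub-exponentiality, write $\Delta\bb_{s-1}^{\top}=\|\Delta\bb_{s-1}\|_2\,\widehat\bv_s^{\top}$ with $\widehat\bv_s$ an $\mathcal F_{s-1}$-measurable unit vector, so $D_s=\|\Delta\bb_{s-1}\|_2\big((\widehat\bv_s^{\top}\bep_s)\eta_s-\widehat\bv_s^{\top}\boldsymbol{\gamma}\big)$. Conditionally on $\mathcal F_{s-1}$, $\widehat\bv_s^{\top}\bep_s$ is $\sigma_{\bep}$-sub-Gaussian (a linear combination of the independent component-wise sub-Gaussian entries of $\bep_s$), $\eta_s$ is $\sigma_{\eta}$-sub-Gaussian, and $\widehat\bv_s^{\top}\boldsymbol{\gamma}=\mathbb E[(\widehat\bv_s^{\top}\bep_s)\eta_s\mid\mathcal F_{s-1}]$; hence \Cref{lem:squareproductnnind}(2), applied conditionally, shows $(\widehat\bv_s^{\top}\bep_s)\eta_s-\widehat\bv_s^{\top}\boldsymbol{\gamma}$ is conditionally sub-exponential with both parameters proportional to $\sigma_{\bep}^2+\sigma_{\eta}^2$, so $D_s$ is conditionally sub-exponential with $\nu_s$ and $\alpha_s$ each proportional to $(\sigma_{\eta}^2+\sigma_{\bep}^2)\,\|\Delta\bb_{s-1}\|_2$. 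Collecting the numerical constants (and the dimensional factor arising when passing from unit-vector to $\ell_2$ scalings, using $\mathbb E\|\bep_s\|_2^2\le d_{\bx}\sigma_{\bep}^2$) reproduces the prefactor $8e^2(\sigma_{\eta}^2+\sigma_{\bep}^2)\sqrt{d_{\bx}}$ in the statement.

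To make these parameters deterministic, work on $\cA\triangleq\bigcap_{s\ge1}\{\bb\in\mathcal E_s\}$, which by \Cref{thm:confidencebeta} has probability at least $1-\delta$ and on which $\|\Delta\bb_{s-1}\|_{\widehat{\bH}_{s-1}}^2\le \mathfrak{b}_{s-1}(\delta)\le \mathfrak{b}_{T-1}(\delta)$ by monotonicity of $\mathfrak{b}_t(\delta)$ in $t$. Using $\|\bv\|_2^2\le \|\bv\|_{\widehat{\bH}_{s-1}}^2/\llmin{\widehat{\bH}_{s-1}}$ together with $\llmin{\widehat{\bH}_{s-1}}\ge \llmin{\bG_{\bz,s-1}}\,\sigma_{\min}(\widehat{\bt}_{s-1})$ (as in \Cref{lem:bound_featsx}) and the first-stage bound $\sigma_{\min}(\widehat{\bt}_{s-1})\ge \mathfrak{r}^2/L_{\bz}^2$ of \Cref{lem:boundingfirststage}, one gets $\|\Delta\bb_{s-1}\|_2^2\le \tfrac{L_{\bz}^2}{\mathfrak{r}^2}\,\mathfrak{b}_{T-1}(\delta)\,\normiii*{\bG_{\bz,s-1}^{-1}}_2$; and on the event $\cH$ of \Cref{prop:concentration_eigenvalue} (probability $\ge 1-\delta$), $\normiii*{\bG_{\bz,s-1}^{-1}}_2\le h(s-1)$ for a deterministic sequence with $h(t)\le\max\{1/\lambda,\,2/\llmin{\bsig}\}$ for all $t$ and $\sum_{s\ge0}h(s)\le f(T)$ by \Cref{cor:ft}. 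Thus on $\cA\cap\cH$ we have $\|\Delta\bb_{s-1}\|_2\le B_{s-1}$ with $B_{s-1}$ deterministic, $\sum_{s=1}^T B_{s-1}^2=\bigO\big(\mathfrak{b}_{T-1}(\delta)f(T)\big)$ and $\max_s B_{s-1}^2=\bigO\big(\mathfrak{b}_{T-1}(\delta)\max\{1/\lambda,2/\llmin{\bsig}\}\big)$, the hidden constants carrying $\mathfrak r$ and $L_{\bz}$.

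Finally, since $B_{s-1}$ is deterministic, $\{\|\Delta\bb_{s-1}\|_2\le B_{s-1}\}\in\mathcal F_{s-1}$, so $\widetilde D_s\triangleq D_s\,\mathbbm{1}\{\|\Delta\bb_{s-1}\|_2\le B_{s-1}\}$ is again a martingale difference sequence and, by the first step, $\mathbb E[e^{\lambda \widetilde D_s}\mid \mathcal F_{s-1}]\le e^{\lambda^2\nu_s^2/2}$ for $|\lambda|<1/\alpha_s$ with now-deterministic $\nu_s\propto(\sigma_{\eta}^2+\sigma_{\bep}^2)\sqrt{d_{\bx}}\,B_{s-1}$ and $\alpha_s\propto(\sigma_{\eta}^2+\sigma_{\bep}^2)B_{s-1}$. \Cref{thm:wain_martingale_diff} applied to $\{\widetilde D_s\}$ and inversion of its two-regime tail at confidence $1-\delta$ give $|\sum_s\widetilde D_s|\le \sqrt{2\big(\sum_s\nu_s^2\big)\log(2/\delta)}+2\alpha_*\log(2/\delta)$ with $\alpha_*=\max_s\alpha_s$; substituting the two bounds from the previous step and factoring out $(\sigma_{\eta}^2+\sigma_{\bep}^2)\sqrt{d_{\bx}}\,\tfrac{\mathfrak{r}}{L_{\bz}}\sqrt{\mathfrak{b}_{T-1}(\delta)}$ yields precisely the summands $\sqrt{2f(T)\log(2/\delta)}$ and $\sqrt{\max\{1/\lambda,2/\llmin{\bsig}\}}\log(2/\delta)$. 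On $\cA\cap\cH$ we have $\widetilde D_s=D_s$ for all $s$, so $S_T$ obeys the same bound; a union bound over the three events (absorbing the resulting factor into $\delta$, as is standard) gives the claim with probability at least $1-\delta$. The main obstacle is exactly this interplay: $\nu_s,\alpha_s$ depend on the random quantities $\|\Delta\bb_{s-1}\|_2$ and $\normiii*{\bG_{\bz,s-1}^{-1}}_2$, so \Cref{thm:wain_martingale_diff} is not directly applicable; the truncation by the $\mathcal F_{s-1}$-measurable threshold $B_{s-1}$ combined with intersecting $\cA$ and $\cH$ is what makes the parameters admissible, and the $\bigO(1/s)$ decay of $\normiii*{\bG_{\bz,s-1}^{-1}}_2$ from \Cref{prop:concentration_eigenvalue} is what keeps $\sum_s B_{s-1}^2$ of order $\mathfrak{b}_{T-1}(\delta)f(T)$ rather than growing linearly in $T$.
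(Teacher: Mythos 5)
Your proposal is correct and follows essentially the same route as the paper's proof: the same martingale $S_t=\sum_{s\le t}\Delta\bb_{s-1}^{\top}(\bep_s\eta_s-\boldsymbol{\gamma})$, conditional sub-exponentiality of the increments via \Cref{lem:squareproductnnind}, the Bernstein-type bound of \Cref{thm:wain_martingale_diff}, and the same two quantities $\sum_t\|\Delta\bb_{t-1}\|^2$ and $\max_t\|\Delta\bb_{t-1}\|$ controlled through \Cref{thm:confidencebeta}, \Cref{lem:boundingfirststage}, \Cref{prop:concentration_eigenvalue} and \Cref{cor:ft}. You deviate in two places. First, you obtain the conditional sub-exponential parameters by projecting $\bep_s$ onto the unit vector $\widehat\bv_s$, which yields parameters proportional to $\|\Delta\bb_{s-1}\|_2$; the paper instead works componentwise, gets parameters proportional to $\|\Delta\bb_{s-1}\|_1$, and the $\sqrt{d_{\bx}}$ then enters through $\|\cdot\|_1\le\sqrt{d_{\bx}}\|\cdot\|_2$. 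Your attribution of the $\sqrt{d_{\bx}}$ factor to $\mathbb{E}\|\bep_s\|_2^2\le d_{\bx}\sigma_{\bep}^2$ is therefore misplaced—on your route no such factor is needed—but this is harmless, since your (tighter) bound implies the stated one. Second, your truncation of $D_s$ by the $\mathcal F_{s-1}$-measurable deterministic thresholds $B_{s-1}$, combined with intersecting the ellipsoid event and the eigenvalue-concentration event, is a genuine refinement: the paper applies \Cref{thm:wain_martingale_diff} directly with $\nu_s,\alpha_*$ depending on the random quantities $\|\Delta\bb_{s-1}\|_1$ and only bounds them afterwards, which is exactly the looseness you identify; your device makes that step rigorous at the cost of a union bound and a constant rescaling of $\delta$. (Your intermediate factor $L_{\bz}^2/\mathfrak{r}^2$ versus the statement's $\mathfrak{r}/L_{\bz}$ mirrors an inversion already present in the paper's own use of \Cref{lem:boundingfirststage}, so it is not a defect of your argument.)
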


\begin{proof}

The proof proceeds in three steps and the main technical difficulty arises from the dependencies between the random variables, which we tackle using \Cref{lem:squareproductnnind} plus some techniques derived from the equivalent characterisations of sub-Gaussian and sub-exponential random variables \citep{vershynin2018high}.

\textbf{Step 1: Finding the sub-exponential parameters.}  We start deriving the sub-exponential parameters for $\Delta \bb_{s-1}^T (\bep_s \eta_s-\boldsymbol{\gamma})$. 

First, we bound the p-th moment of the random variable
$\phi_i \triangleq\left(\epsilon_{s, i} \eta_s-\gamma_i\right)$. This is the centred product of two sub-Gaussians non-independent random variables $\eta_s$ and $\epsilon_{s,i}$ and from \Cref{lem:squareproductnnind} is $\phi_i\sim\operatorname{sub-exp} \left(\nu_i, \alpha_i\right)$ with $\nu_i\triangleq 4\sqrt{2} \left(\sigma_{\bep}^2+\sigma_{\eta}^2\right), \alpha_i\triangleq  2 \left(\sigma_{\bep}^2+\sigma_{\eta}^2\right)$. 
We can rewrite the previous sub-exponentiality condition using a unique parameter $
    K_1 \triangleq \max \left\{\alpha_i, \nicefrac{\nu_i}{\sqrt{2}}\right\}
 =
    4\left(\sigma_{\bep}^2+\sigma_{\eta}^2\right)$
which implies
$    \mathbb E\left[e^{\lambda \phi_i}\right] \leq e^{\lambda^2 K_1^2}$ for all $|\lambda| \leq \nicefrac{1}{K_1}$. Using the general inequality $|x|^p \leq p^p\left(e^x+e^{-x}\right)$ valid for $x\in \mathbb R$ we obtain 
$$
\mathbb{E}\left[\left|\frac{\phi_i}{K_1}\right|^p\right] \leq \mathbb{E}\left[p^p\left(\exp\left({\frac{\phi_i}{K_1}}\right)+\exp\left({-\frac{\phi_i}{K_1}}\right)\right)\right] {\leq} p^p 2 e^{K_1^2 / K_1^2} \leq 2 e p^p
$$
The previous inequality directly implies an inequality on the p-th norm of the $\phi_i$
\begin{equation}
    \sqrt[p]{\mathbb{E}\left[\left|\epsilon_{s, i} \eta_s-\gamma_i\right|^p\right]} 
\leq 
    \sqrt[p]{2 e} K_1 p \leq 2 e K_1 p  
=
    4 e\left(\sigma_{\bep}^2+\sigma_{\eta}^2\right) p
\triangleq
    K_2 p
\end{equation}\label{eq:pnorm}
Now, we find the sub-exponential parameters for the scalar product $\Delta \bb_{s-1}^T (\bep_s \eta_s-\boldsymbol{\gamma})$. Using the sub-exponential characterisation with $L^p$ norm that we just derived in \Cref{eq:pnorm} we have $\forall p \geq 1 $
\begin{align}\label{eq:Ks}
    \left\|\Delta \bb_{s-1}^T\left(\bep_s \eta_s-\boldsymbol{\gamma}\right)\right\|_p
\leq 
    \sum_{i=1}^d\left|\Delta \bb_{s-1, i}\right|\left\|\epsilon_{s, i} \eta_s-\boldsymbol{\gamma}\right\|_p
\leq 
    \sum_{i=1}^d\left|\Delta \bb_{s-1, i}\right| K_2p
\triangleq
    K_{3,s} p \quad 
\end{align}
where $K_{3,s}\triangleq\sum_{i=1}^d\left|\Delta \bb_{s-1, i}\right| K_2 =  4 e\left(\sigma_{\bep}^2+\sigma_{\eta}^2\right) \sum_{i=1}^d\left|\Delta \bb_{s-1, i}\right|$.

We are ready to bound the moment-generating function and derive the 
\begin{align*}
    \mathbb{E}\left[e^{\lambda \sum_i \Delta \bb_{s-1, i}\left(\epsilon_{s, i} \eta_s-\gamma_i\right)} \mid \mathcal F_{s-1}\right]
&\underset{(a)}{=} 
    \mathbb{E}\left[\sum_{p=0}^{\infty} \frac{\lambda^p\left(\Delta \bb^{\top}(\bep_s \eta_s-\boldsymbol{\gamma})\right)^p}{p !}\right]
\underset{(b)}{=}
    \sum_{p=0}^{\infty} \frac{\lambda^p}{p !} \mathbb{E}\left[\left(\Delta \bb^{\top}(\bep_s \eta_s-\boldsymbol{\gamma})\right)^p\right]\\
&\underset{(c)}{=}
    1+\sum_{p=2}^{\infty} \frac{\lambda^p \mathbb{E}\left[\left(\Delta \bb^{\top}(\bep_s \eta_s-\boldsymbol{\gamma})\right)^p\right]}{p !}
\underset{(d)}{\leq}
    1+\sum_{p=2}^{\infty} \frac{\lambda^p K_{3,s}^p p^p}{p!} \\
&\underset{(e)}{\leq}
    1+\sum_{p=2}^{\infty} \lambda^p {K_{3,s}}^p e^p 
\underset{(f)}{=}
    1+\frac{\left(\lambda K_{3,s} e\right)^2}{1-\lambda K_{3,s} e} \\
&\underset{(g)}{\leq} 
    1+2\left(\lambda K_{3,s} e\right)^2 
\underset{(h)}{\leq} 
    e^{2 \lambda^2 K_{3,s}^2 e^2}
\end{align*}
where $(a)$ is a series expansion, $(b)$ is from the linearity of expectation, $(c)$ comes from the first moment equal to zero, $(d)$ is obtained by substituting \Cref{eq:Ks}, $(e)$ is Stirling's approximation $p!\geq\frac{p^p}{e^p}$, $(f)$ is valid $\forall| \lambda | \leq \frac{1}{K_{3,s} e}$, $(g)$ is valid for $| \lambda | \leq \frac{1}{2 K_{3,s} e}$, $(h)$ uses $1+x\leq e^{x}  \forall x\in\mathbb R$ .

Therefore we have that $\Delta \bb_{s-1}^T (\bep_s \eta_s-\boldsymbol{\gamma})$ is sub-exp($2K_{3,s}e$, $2K_{3,s}e$).
We finally substitute for $K_{3,s}$ and we obtain
\begin{align*}
    \Delta \bb_{s-1}^T (\bep_s \eta_s-\boldsymbol{\gamma})
\sim 
    \text{sub-exp}\left(  
    8 e^2\left(\sigma_{\bep}^2+\sigma_{\eta}^2\right)\left\|\Delta \bb_{s-1}\right\|_1, \, 8 e^2\left(\sigma_{\bep}^2+\sigma_{\eta}^2\right)\left\|\Delta \bb_{s-1}\right\|_1 
\right)
\end{align*}

\textbf{Step 2: Concentration.} We are now ready to derive the concentration for the sum of the martingale using \Cref{thm:wain_martingale_diff} and we obtain 
$\left|\sum_{t=1}^T S_t\right| \leq \sqrt{2 \log (2 / \delta) \sum_t \nu_s^2}+2 \alpha^* \log (2 / \delta)$
 with probability bigger than $1-\delta$ where
\begin{align*}
    S_t &\triangleq \Delta \bb_{t-1}^{\top}\left(\bep_t \eta_t-\boldsymbol{\gamma}\right),\\
    \alpha_*&\triangleq 8 e^2\left(\sigma_{\bep}^2+\sigma_{\eta}^2\right)\max_s{\left\|\Delta \bb_{s-1}\right\|_1},\\
    \nu_s &\triangleq 8 e^2\left(\sigma_{\bep}^2+\sigma_{\eta}^2\right){\left\|\Delta \bb_{s-1}\right\|_1}
\end{align*}
we obtain that with probability bigger than $1-\delta$
\begin{align*}
    \left|\sum_{t=1}^T \Delta \bb_{t-1}^{\top}\left(\bep_t \eta_t-\boldsymbol{\gamma}\right)\right|
& \leq
    \sqrt{2 \log \left(\frac{2}{\delta}\right)\left(8e^2\right)^2\left(\sigma_{\eta}^2+\sigma_{\bep}^2\right)^2 \sum_{t=1}^T\left\|\Delta \bb_{t-1}\right\|_1^2} \\
&\qquad\qquad\qquad\qquad\qquad\quad+8e^2\left(\sigma_\eta^2+\sigma_{\bep}^2\right) \max _t\left\|\Delta \bb_{t-1}\right\|_1 \log (2 / \delta)
\end{align*}

\textbf{Step 3: Bounding the norms of the estimator. } We study the term $\left\|\Delta {\bb_{t-1}}\right\|_1$, its sum and maximum over $t$, which we need to substitute in the previous concentration bound. We can bound $\left\|\Delta {\bb_{t-1}}\right\|_1^2 \leq d_{\bx} \left\|\Delta {\bb_{t-1}}\right\|_2^2$ and then use some matrix tricks in the following for the individual terms
\begin{align*}
    \left\|\Delta \bb_{t-1}\right\|_2^2 
& =
    \left\|\widehat{\bH}^{-1 / 2} \widehat{\bH}^{1 / 2} \Delta \bb_{t-1}\right\|_2^2 
\underset{(a)}{\leq}
    \normiii*{\widehat{\bH}_{t-1}^{-1 / 2}}_2^2\left\|\Delta \bb_{t-1}\right\|_{\widehat{\bH}_{t-1}}^2\\
 &\underset{(b)}{\leq}
    \normiii*{\widehat{\bH}_{t-1}^{-1}}_2 \mathfrak b_{t-1}(\delta)
\underset{(c)}{\leq}
   \frac{\mathfrak{r}\red{^4}}{L_{\bz}\red{^4}}
   \normiii*{\bG_{\bz,t-1}^{-1}}_2 
   \mathfrak b_{T-1}(\delta)
\end{align*}
where $(a)$   follows from \Cref{prop:tricks_norms}, $(b)$ follows from the definition of the confidence ellipsoid in \Cref{eq:conf_set} \red{and $(c)$ from \Cref{lem:operator_norms_invers_H}}. Then, we compute the maximum and sum of these norms over $T$ rounds.

\textit{Sum:} When we take the sum over the rounds, we use in $(a)$ \Cref{prop:tricks_norms}, \red{\Cref{lem:operator_norms_invers_H} and \Cref{lem:boundingfirststage}}, and in $(b)$ \Cref{prop:concentration_eigenvalue}:
\begin{align*}
    \sum_{t=1}^T\left\|\Delta \bb_{t-1}\right\|_1^2
&\leq
    d_{\bx} \sum_{t=1}^T\left\|\Delta \bb_{t-1}\right\|_2^2
\underset{(a)}{\leq}   
   \frac{ d_{\bx}\mathfrak{r}\red{^4}}{L_{\bz}\red{^4}}
    \mathfrak b_{T-1}(\delta)
    \sum_{t=1}^T
    \normiii*{\bG_{\bz,t-1}^{-1}}_2 
\underset{(b)}{\leq}   
    \frac{ d_{\bx} \mathfrak{r}\red{^4}}{L_{\bz}\red{^4}}
    \mathfrak b_{T-1}(\delta)
    f(T)
\end{align*}

\textit{Maximum:} For the maximum, we have instead
\begin{align*}
    \max _t\left\|\Delta \bb_{t-1}\right\|_1
&\leq   
    \sqrt{d_{\bx}}
    \frac{\mathfrak{r}\red{^2}}{L_{\bz\red{^2}}}
    \sqrt{\mathfrak b_{T-1}(\delta)
    \normiii{\bG_{\bz,t-1}^{-1}}_2} 
\leq
    \sqrt{d_{\bx}}
    \frac{\mathfrak{r}\red{^2}}{L_{\bz\red{^2}}}
    \sqrt{\mathfrak b_{T-1}(\delta)
    \max\left\{ 
        \frac{1}{\lambda},
        \frac{2}{ \llmin{\mathbf{\boldsymbol{\Sigma}}}}
    \right\}}
\end{align*}
\textbf{Step 4: Assembling the results.} Finally, we can substitute back these expressions in the bound at the previous point. We have that with a probability bigger than $1-\delta$
\begin{align*}
   &\quad \left|\sum_{t=1}^T \Delta \bb_{t-1}^{\top}\left(\bep_t \eta_t-\boldsymbol{\gamma}\right)\right| \\
   & \leq
    \sqrt{2 \log (2 / \delta)\left(8e^2\right)^2\left(\sigma_{\eta}^2+\sigma_{\bep}^2\right)^2 {d_{\bx}}
    \frac{\mathfrak{r}\red{^4}}{L_{\bz}\red{^4}}
    \mathfrak b_{T-1}(\delta)
    f(T)
    } \\
&~~~~~~~~~
+\left(8e^2\right)\left(\sigma_\eta^2+\sigma_{\bep}^2\right) 
\sqrt{d_{\bx}}
    \frac{\mathfrak{r}\red{^2}}{L_{\bz\red{^2}}}
    \sqrt{\mathfrak b_{T-1}(\delta)
    \max\left\{ 
        \frac{1}{\lambda},
        \frac{2}{ \llmin{\mathbf{\boldsymbol{\Sigma}}}}
    \right\}}
\log (2 / \delta)\\
&= \left(8e^2\right)\left(\sigma_\eta^2+\sigma_{\bep}^2\right) 
\sqrt{d_{\bx}} 
\frac{\mathfrak{r}\red{^2}}{L_{\bz\red{^2}}}
\sqrt{\mathfrak b_{T-1}(\delta)}
\left(\sqrt{2 f(T) \log \frac{2}{\delta}} 
+ \sqrt{\max\left\{\frac{1}{\lambda}, \frac{2}{ \llmin{\mathbf{\boldsymbol{\Sigma}}}}\right\}}\log \frac{2}{\delta}\right)
\end{align*}

\end{proof}

\begin{lemma}[Bias of Correlated First and Second-stage Noise]\label{lem:biasOsqrtT}
    Under the same hypothesis as that of \Cref{thm:confidencebeta}, the bias term for any $T>1$ is bounded as
    \begin{align*}
      \sum_{t=1}^T \Delta \bb_{t-1}^{\top} \boldsymbol{\gamma}  
\leq 
    \|\boldsymbol{\gamma}\|_2 
    \frac{\mathfrak{r}\red{^2}}{L_{\bz}\red{^2}} 
    \sqrt{\mathfrak b_{T-1}(\delta)}
    \left(
    \frac{T_0}{\sqrt{\lambda}}
    +2
    \frac{\sqrt{2T}}{\sqrt{\llmin*{\bsig}}}
    \right)
    \end{align*}
\end{lemma}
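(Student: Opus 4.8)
The plan is to bound the bias sum term by term with Cauchy--Schwarz and then reuse the norm estimate already assembled inside the proof of \Cref{lem:selffulfilling}. Writing $\Delta\bb_{t-1}\triangleq\bb_{t-1}-\bb$, Cauchy--Schwarz gives $\Delta\bb_{t-1}^{\top}\boldsymbol{\gamma}\le\|\boldsymbol{\gamma}\|_2\,\|\Delta\bb_{t-1}\|_2$, so it suffices to control $\sum_{t=1}^T\|\Delta\bb_{t-1}\|_2$. For this, write $\|\Delta\bb_{t-1}\|_2=\|\widehat{\bH}_{t-1}^{-1/2}\widehat{\bH}_{t-1}^{1/2}\Delta\bb_{t-1}\|_2$ and apply \Cref{prop:tricks_norms} together with the confidence ellipsoid $\bb\in\mathcal E_{t-1}$ from \Cref{thm:confidencebeta} and the first-stage singular-value bound of \Cref{lem:boundingfirststage}; this yields $\|\Delta\bb_{t-1}\|_2^2\le\frac{\mathfrak{r}^2}{L_{\bz}^2}\normiii*{\bG_{\bz,t-1}^{-1}}_2\,\mathfrak b_{t-1}(\delta)\le\frac{\mathfrak{r}^2}{L_{\bz}^2}\normiii*{\bG_{\bz,t-1}^{-1}}_2\,\mathfrak b_{T-1}(\delta)$, where the second inequality uses monotonicity of $\mathfrak b_t(\delta)$ in $t$. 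Taking square roots, $\|\Delta\bb_{t-1}\|_2\le\frac{\mathfrak{r}}{L_{\bz}}\sqrt{\mathfrak b_{T-1}(\delta)}\,\normiii*{\bG_{\bz,t-1}^{-1}}_2^{1/2}$.

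Next I sum $\sum_{t=1}^T\normiii*{\bG_{\bz,t-1}^{-1}}_2^{1/2}$, splitting at the threshold $T_0$ exactly as in the ``Maximum/Sum'' step of \Cref{lem:selffulfilling} but with a square root inserted. For $t-1\le T_0$ the deterministic bound $\normiii*{\bG_{\bz,t-1}^{-1}}_2\le 1/\lambda$ from \Cref{prop:concentration_eigenvalue} contributes at most $T_0/\sqrt{\lambda}$ (up to a harmless $+1$). For $t-1>T_0$, $\normiii*{\bG_{\bz,t-1}^{-1}}_2\le 2/\big((t-1)\llmin*{\bsig}\big)$, so those terms contribute at most $\sqrt{2/\llmin*{\bsig}}\sum_{k=1}^{T}k^{-1/2}\le 2\sqrt{2T/\llmin*{\bsig}}$ by the integral bound $\sum_{k=1}^{n}k^{-1/2}\le 2\sqrt{n}$. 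Adding the two pieces gives $\sum_{t=1}^T\normiii*{\bG_{\bz,t-1}^{-1}}_2^{1/2}\le \frac{T_0}{\sqrt{\lambda}}+2\sqrt{2T}/\sqrt{\llmin*{\bsig}}$, and plugging this back into $\sum_t\Delta\bb_{t-1}^{\top}\boldsymbol{\gamma}\le\|\boldsymbol{\gamma}\|_2\sum_t\|\Delta\bb_{t-1}\|_2$ reproduces exactly the claimed bound, since $\|\boldsymbol{\gamma}\|_2=\gamma$.

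I do not expect a real obstacle here: this is essentially the ``Sum'' computation of \Cref{lem:selffulfilling} with a square root in the exponent of $\normiii*{\bG_{\bz,t-1}^{-1}}_2$. The only care needed is (i) the off-by-one index shift — the step-$t$ design matrix is $\bG_{\bz,t-1}$, so the $T_0$-splitting is on $t-1$ rather than $t$; (ii) tracking the constant $2\sqrt{2}$ coming out of $\sum_{k\le n}k^{-1/2}\le 2\sqrt{n}$; and (iii) the high-probability bookkeeping: the per-step norm bound needs $\bb\in\mathcal E_{t-1}$ for all $t$ (simultaneously true with probability at least $1-\delta$ by \Cref{thm:confidencebeta}) and the eigenvalue split needs the event of \Cref{prop:concentration_eigenvalue}, both of which are absorbed into the statement's $1-\delta$ by a union bound, as elsewhere in the appendix.
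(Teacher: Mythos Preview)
Your proposal is correct and essentially follows the paper's proof. The only cosmetic difference is where Cauchy--Schwarz is applied: the paper uses the $\widehat{\bH}_t$-weighted inner product, bounding $\|\Delta\bb_{t-1}\|_{\widehat{\bH}_t}$ by the ellipsoid radius and $\|\boldsymbol{\gamma}\|_{\widehat{\bH}_t^{-1}}$ via \Cref{prop:tricks_norms} and \Cref{lem:boundingfirststage}, whereas you use the $\ell_2$ inner product and bound $\|\Delta\bb_{t-1}\|_2$ by the same chain of inequalities (exactly as in Step~3 of \Cref{lem:selffulfilling}); both routes land on the identical per-term bound $\|\boldsymbol{\gamma}\|_2\,\frac{\mathfrak{r}}{L_{\bz}}\sqrt{\mathfrak b_{T-1}(\delta)}\,\normiii*{\bG_{\bz,t-1}^{-1}}_2^{1/2}$, after which the $T_0$-split and the integral bound $\sum_{k\le n}k^{-1/2}\le 2\sqrt{n}$ are identical to the paper's.
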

\begin{proof}
The bias term has a much simpler analysis 

\begin{align*}
    \sum_{t=1}^T \Delta \bb_{t-1}^{\top} \boldsymbol{\gamma}
&\underset{(a)}{\leq}   
    \sum_{t=1}^T \|\Delta \bb_{t-1}\|_{\widehat{\bH}_t} \|\boldsymbol{\gamma}\|_{\widehat{\bH}_t^{-1}}
\underset{(b)}{\leq}   
    \sqrt{\mathfrak b_{T-1}(\delta)}
    \sum_{t=1}^T \|\boldsymbol{\gamma}\|_{\widehat \bt_t^{-1} \bG_{\bz,t}^{-1}  \widehat \bt_t^{-\top}}
    \\
&\underset{(c)}{\leq}   
    \sqrt{\mathfrak b_{T-1}(\delta)}
     \|\boldsymbol{\gamma}\|_{2} 
     \sum_{t=1}^T 
    \sqrt{ 
    \frac{\mathfrak{r}\red{^4}}{L_{\bz}\red{^4}}
    \normiii*{\bG_{\bz,t-1}^{-1}}_2 
    }
    \\
&\underset{(d)}{\leq}   
      \|\boldsymbol{\gamma}\|_{2} 
      \frac{\mathfrak{r}\red{^2}}{L_{\bz}\red{^2}}
      \sqrt{\mathfrak b_{T-1}(\delta)}
    \left(
    \sum_{t=1}^{T_0}
    \frac{1}{\sqrt{\lambda}}
    +
    \sum_{t=T_0+1}^{T-1} 
    \sqrt{\frac{2}{\llmin*{\bsig}t}}
    \right)
    \\
&\leq 
   \|\boldsymbol{\gamma}\|_{2} 
   \frac{\mathfrak{r}\red{^2}}{L_{\bz}\red{^2}}
   \sqrt{\mathfrak b_{T-1}(\delta)}
    \left(
    \frac{T_0}{\sqrt{\lambda}}
    +2
    \frac{\sqrt{2T}}{\sqrt{\llmin*{\bsig}}}
    \right)
\end{align*}
where in $(a)$ we used Cauchy-Schwarz, in $(b)$ \Cref{thm:confidencebeta} and monontonicty of $\mathfrak b_{t}$, in $(c)$ \Cref{prop:tricks_norms}, in $(d)$ \Cref{prop:concentration_eigenvalue} and finally in
the last inequality follows from $\sum_{k=1}^n \frac{1}{\sqrt{k}}=\sum_{k=1}^n \int_{k-1}^k \frac{d x}{\sqrt{k}} \leq \sum_{k=1}^n \int_{k-1}^k \frac{d x}{\sqrt{x}}=\int_0^n \frac{d x}{\sqrt{x}}=2 \sqrt{n}$.

\end{proof}

\newpage
\subsection{Upper bounding the identification regret of \otsls{}}

In this section, we analyse the identification regret of \otsls, and derive a high-probability upper bound on it.

\begin{reptheorem}{thm:reg_otsls_identification}[Identification regret of \otsls{}] 
If Assumption~\ref{assumption:2sls} holds true,
first-stage noise is a component-wise sub-Gaussian($\sigma_{\bep}$) r.v., second-stage noise is also sub-Gaussian($\sigma_{\eta}$), the true first-stage parameter has bounded $\ell_2$-norm $\normiii*{\bt}_2
\leq L_{\bt}$, IVs are bounded $\|\bz\|^2 \leq L_z^2$, and a two-stage correlation level $\mathbb E[\eta_s \bep_s]\triangleq \boldsymbol{\gamma} \in \mathbb R^d$, the regret of \otsls{} at step $T$ is
\begin{align*}
    \widetilde{R}_{T} 
&\leq
   \underbrace{\mathfrak b_{T-1}(\delta)}_{\substack{\text{Estimation}\\ d_{\bz}\log T}} \underbrace{\left(
   \left(
    L_{\bt}^2 \red{\frac{\mathfrak{r}^4}{L_{\bz}^2}} + \red{\frac{\mathfrak{r}^4}{L_{\bz}^4}} d_{\bx} \sigma_{\bep}^2\right) f(T)  
    + \red{\frac{\mathfrak{r}^4}{L_{\bz}^4}} C_5
    \right)}_{\substack{\text{Second-stage feature norm}\\d_{\bx}\log T}}= \bigO(d_{\bx}d_{\bz}\log^2 T) \: .
\end{align*}
with probability at least $1-\delta \in [0,1)$. 
Here, $\mathfrak b_{T-1}(\delta)$ is the confidence interval defined by \Cref{thm:confidencebeta}, $f(T)$ is defined in \Cref{cor:ft}, $C_5$ in \Cref{lem:bound_noises}.
\end{reptheorem}

\begin{proof}[Proof of Theorem~\ref{thm:reg_otsls_identification}]
We bound it (a) using the confidence bound to control the concentration of $\bb_t$ around 
$\bb$, and (b) by bounding the sum of feature norms according to the following decomposition.

\textit{Step 1:} By applying Cauchy-Schwarz inequality, we first decouple the effect of parameter estimation and the feature norms
\begin{align}
	\left(\bb_{t-1}-\bb\right)^{\top} \bx_{t} 
&{\leq}
    \left\|\bb_{t-1}-\bb\right\|_{\widehat{\bH}_t} 
    \left\|\bx_{t}\right\|_{\widehat{\bH}_t^{-1}}  
\leq
	 \sqrt{\mathfrak b_{t-1}(\delta)}\left\|\bx_{t}\right\|_{\widehat{\bH}_t^{-1}} \label{eq:decouple_2nd}
\end{align}
The above inequality is due to \Cref{thm:confidencebeta}, and holds 
with probability at least $1-\delta$. 
Since $\mathfrak b_{t}$ is monotonically increasing in $t$, by Equation~\eqref{eq:decouple_2nd},
\begin{align*}
    \sum_{t=1}^T \left(\left(\bb_{t-1}-\bb\right)^{\top} \bx_{t} \right)^2
&\leq 
    \mathfrak b_{T-1}(\delta) \sum_{t=1}^{T}\left\|\bx_{t}\right\|_{\widehat{\bH}_t^{-1}}^{2}.
\end{align*}

\textit{Step 2:} Now, we need to bound the sum of the feature norms, for which we can directly use the result of \Cref{lem:bound_featsx}

\begin{align*} 
    \sum_{t=1}^{T}\left\|\bx_{t}\right\|_{\widehat{\bH}_t^{-1}}^{2} 
\leq
    \left(L_{\bt}^2 \red{\frac{\mathfrak{r}^4}{L_{\bz}^2}} + \red{\frac{\mathfrak{r}^4}{L_{\bz}^4}} d_{\bx} \sigma_{\bep}^2\right)
f(T)  
+ \red{\frac{\mathfrak{r}^4}{L_{\bz}^4}} C_5
= 
    \bigO(d_{\bx}\log T)
\end{align*}

\textit{Step 3:} By combining the results of the previous steps and considering the definition of $\mathfrak b_{T-1}(\delta)$, we conclude that we can bound the Identification Regret as follows, and its orders is $\mathcal{O}\left(d_{\bx}d_{\bz} \log^2(T)\right)$
\begin{align*}
    \sum_{t=1}^T \left(\left(\bb_{t-1}-\bb\right)^{\top} \bx_{t} \right)^2
&\leq
    \underbrace{\mathfrak b_{T-1}(\delta)}_{d_{\bz}\log T} 
    \underbrace{\left(
    \left(L_{\bt}^2 \red{\frac{\mathfrak{r}^4}{L_{\bz}^2}} + \red{\frac{\mathfrak{r}^4}{L_{\bz}^4}} d_{\bx} \sigma_{\bep}^2\right)
f(T)  
+ \red{\frac{\mathfrak{r}^4}{L_{\bz}^4}} C_5
    \right)}_{d_{\bx}\log T}\\
&= \bigO(d_{\bz}d_{\bx}\log^2 T) \: .
\end{align*}
\end{proof}

\newpage

\subsection{Upper bounding the oracle regret of \protect{\otsls{}}}\label{ra2sls}
In this section, we elaborate on the proofs and techniques to bound the oracle regret of \otsls{}.

\begin{reptheorem}{thm:reg_otsls_oracle}[Oracle regret of \otsls]
   Under the same assumptions of  \Cref{thm:reg_otsls_identification}, the oracle regret of \otsls{} at step $T > 1$ is upper bounded by
\begin{align*}
\centering
\underset{\substack{\text{Identif.}\\ \text{Regret} \\\bigO(d_{\bx}d_{\bz} \log^2 T)}}{\underbrace{\widetilde{R}_{T}}}
    + 
    \underset{\substack{\text{Estimation} \\\bigO(\sqrt{d_{\bz} \log T})}}{\underbrace{\sqrt{\mathfrak b_{T-1}(\delta)}}}
    \underset{
        \substack{\text{First-stage}\\\text{feature norm} \\\bigO(\sqrt{\log T})}
        }
    {
\left( 
    \underbrace{C_1 
    \sqrt{
    f(T)}}
\right.
    }+
\underbrace{C_2\sqrt{
    2   
    d_{\bx} f(T) 
    } + \sqrt{d_{\bx}} C_3
    }_{\substack{\text{Correlated noise}\\\text{Concentration term} \\\bigO(\sqrt{d_{\bx}\log T})}} 
+
    \underset{\substack{\text{Correlated noise}\\ \text{Bias term}\\\bigO(\gamma \sqrt{T} )}}
    {\left.
    \underbrace{
    \gamma C_4 
    \sqrt{T}
    }\right)}
\end{align*}
with probability at least $1-\delta \in (0,1)$. 
Here, $\mathfrak b_{T-1}(\delta)$ is the confidence interval defined by \Cref{thm:confidencebeta}, $f(T)$ is defined in \Cref{cor:ft}, $C_5$ in \Cref{lem:bound_noises}, $\gamma \triangleq\|\boldsymbol{\gamma}\|_2 = \| \mathbb E[\eta_s \bep_s] \|_2$. $C_1$, $C_2$, $C_3$, and $C_4$ are $d_{\bz}$, $d_{\bx}$ and $T$-independent positive constants. 
\end{reptheorem}
\begin{proof}
Using Equation~\eqref{eq:cumulativeregret}, and defining $\Delta \bb_{t-1} \triangleq \left(\bb_{t-1}-\bb\right)$, the instantaneous regret at step $t$ is
\begin{align*}
    \overline{r}_{t}
&\triangleq
    \ell_{t}\left(\bb_{t-1}\right)-\ell_{t}\left(\bb\right)
=
    \left(
        y_t - \bb_{t-1}^{\top} \bx_{t}
    \right)^{2}- 
    \left(
    y_t - \bb^{\top} \bx_{t}
    \right)^{2}\\
&=
    \left(
        \left(\bb_{t-1}- \bb\right)^{\top} \bx_t - \eta_t 
    \right)^{2}- 
    \eta_t^{2}
=
    \left(
        \left(\bb_{t-1}- \bb\right)^{\top} \bx_t \right)^{2}
    +
    2 \eta_{t}\left(\bb_{t-1}- \bb\right)^{\top} \bx_t\\
&=
    \left(
        \Delta \bb_{t-1}^{\top} \bx_t \right)^{2}
    +
    2 \eta_{t}\Delta \bb_{t-1}^{\top} \bx_t
\end{align*}
 Since $\bx_{t} = \bt^\top \bz_t + \bep_t$ by \eqref{eq:first_stage}, the second term can be rewritten substituting  as 
\begin{align*}
    2\eta_{t}\Delta \bb_{t-1}^{\top} \bx_{t}
=
    2\eta_{t}\Delta \bb_{t-1}^{\top} \bt^\top \bz_t 
    + 
    2\eta_{t}\Delta \bb_{t-1}^{\top} \bep_t 
\end{align*}

Therefore, the cumulative oracle regret $ \overline{R}_{T}$ by horizon $T$ is
\begin{align}
\overline{R}_{T}
=      \sum_{t=1}^{T} \overline{r}_{t}
=
    \underbrace{
    \sum_{t=1}^{T}\left(\Delta \bb_{t-1}^{\top} \bx_{t}\right)^{2}
    }_{(\bullet 1\bullet)}
    +
    2\underbrace{\sum_{t=1}^{T} \eta_{t}\Delta \bb_{t-1}^{\top} \bt^\top \bz_t}_{(\bullet 2 \bullet )} 
    + 
    2\underbrace{\sum_{t=1}^{T} \eta_{t}\Delta \bb_{t-1}^{\top} \bep_t}_{(\bullet 3 \bullet )} \label{eq:decompose}
\end{align}
The proof proceeds by bounding each of the three terms individually.

\paragraph{Term 1: Second-stage regression error.}\label{s1}
The first term $(\bullet 1\bullet)$ quantifies the error introduced by the second stage regression. This is exactly equal to the identification regret. Thus, by \Cref{thm:reg_otsls_identification},
\begin{align*}
(\bullet 1 \bullet)
&\leq
\underbrace{\mathfrak b_{T-1}(\delta)}_{d_{\bz}\log T} 
    \underbrace{\left(
    \left(L_{\bt}^2 \red{\frac{\mathfrak{r}^4}{L_{\bz}^2}} + \red{\frac{\mathfrak{r}^4}{L_{\bz}^4}} d_{\bx} \sigma_{\bep}^2\right)
f(T)  
+ \red{\frac{\mathfrak{r}^4}{L_{\bz}^4}} C_5
    \right)}_{d_{\bx}\log T}
= \bigO(d_{\bx}d_{\bz}\log^2 T).
\end{align*}

\paragraph{Term 2: Coupling of first-stage features and second-stage parameter estimation.}
Now, we bound $(\bullet 2 \bullet )$ using martingale inequalities similar to the ones used for the confidence intervals to derive a uniform high probability bound.

\textit{Step 1:} Following \Cref{thm:uniform-bound}, we define \[
    w_s \triangleq \left(\bb_{s-1}-\bb\right)^{\top} \bt^\top \bz_s 
\quad\text{ and }\quad 
    \mathcal F_{t-1}\triangleq\sigma\left(\bz_1, \bep_1, \eta_1,\ldots, \bz_{t-1}, \bep_{t-1}, \eta_{t-1}, \bz_t   \right).\]
It is immediate to verify that the hypothesis are satisfied, since $w_t$ is $\mathcal F_{t-1}$-measurable as $\bb_{t-1}$ and $\bz_t$ are too. 
Bearing in mind this substitution we have

\begin{align}\label{eq:term2_martingale}
    \left|
    \sum_{t=1}^{T} \eta_{t} {w}_t
    \right| 
&\leq
   \sqrt{2\left(1 
    +\sigma_{\eta}^2\sum_{t=1}^{T}{w}_t^{2}\right)\log \left(\frac{\sqrt{1+\sigma_{\eta}^2 \sum_{t=1}^{T} {w}_t^{2}}}{\delta}\right)}
\end{align}
with probability at least $1-\delta$.

\textit{Step 2:}  Now, the problem reduces to upper bounding $\sum_{t=1}^T w_t^2$.
We proceed like for the first term in the \textit{Step 2} for \textbf{Term 1}:
\begin{align*}
    \sum_{t=1}^T w_t^2
=    
\sum_{t=1}^T\left\langle\bb_{t-1}-\bb, \bt^\top \bz_t\right\rangle^2 
&\underset{(a)}{\leq} 
    \sum_{t=1}^{T} \mathfrak b_{t-1}(\delta) \left\|\bt^\top \bz_t\right\|_{\widehat{\bH}_{t-1}^{-1}}^{2} \\ 
&\underset{(b)}{\leq}
    \mathfrak b_{T-1}(\delta) \sum_{t=1}^{T}  \left\|\bt^\top \bz_t\right\|_{\widehat{\bH}_{t-1}^{-1}}^{2}\\ 
&\underset{(c)}{\leq} 
\mathfrak b_{T-1}(\delta)  \red{
    \frac{L_{\bt}^2 
    \mathfrak{r}^4}{L_{\bz}^2}
    }
    \sum_{t=1}^{T} 
    \frac{1}{\llmin*{\bG_{\bz,t-1}}  }\\
    &{\leq} 
\mathfrak b_{T-1}(\delta)  \red{
    \frac{L_{\bt}^2 
    \mathfrak{r}^4}{L_{\bz}^2}
    } f(T)
    =
     \mathcal{O}\left(d_{\bz} \log^2 T \right) \: .
\end{align*}
where $(a)$ comes from Cauchy-Schwarz and \Cref{thm:confidencebeta}, $(b)$ from monotonicity of $\mathfrak  b_{t-1}(\delta)$ and $(c)$ from Equation~\eqref{eq:theta_z}.

\textit{Step 3:} Thus, substituting the bound of \textit{Step 2} into \Cref{eq:term2_martingale}, we obtain
\begin{align*}
   2\sum_{t=1}^{T} \eta_{t}( \bb_{t-1}- \bb)^{\top} \bt^\top \bz_t
&\leq
   \underbrace{
   \sqrt{2\left(1 
    +\red{\sigma_{\eta}^2 }\mathfrak b_{T-1}(\delta)  
    \red{
    \frac{L_{\bt}^2 
    \mathfrak{r}^4}{L_{\bz}^2}
    f(T)}\right)}}_{\bigO(\sqrt{d_{\bz}} \log T)}
\underbrace{
\sqrt{\log \Bigg( {\delta}^{-1}
    \sqrt{
    1 + \red{\sigma_{\eta}^2 }\mathfrak b_{T-1}(\delta)  
    \red{
    \frac{L_{\bt}^2 
    \mathfrak{r}^4}{L_{\bz}^2}
    f(T)}\Bigg)}}}_{\bigO\left(\sqrt{\log(\sqrt{d_{\bz}} \log T)}\right)}
    \\
&=
    \bigO\left(  
    \sqrt{d_{\bz}} \log T
    \sqrt{\log(\sqrt{d_{\bz}} \log T)}\right)
\end{align*}
with probability at least $1-\delta$.

Thus, the order of $(\bullet 2 \bullet )$ is negligible with respect to term $(\bullet 1 \bullet )$.

\paragraph{Term 3: Coupling of First- and Second-stage Noises.}\label{s3}
We bound the term $(\bullet 3 \bullet)$ containing the self-fulfilling bias, i.e. the correlation between the first- and second-stage noise, by splitting it into two contributions.
\begin{align}
    \sum_{t=1}^T \eta_t ( \underbrace{\bb_{t-1}-\bb }_{\Delta \bb_{t-1}})^\top \bep_t 
&=
    \sum_{t=1}^T \Delta \bb_{t-1}^{\top}\left(\bep_t \eta_t\right) + \sum_{t=1}^T \Delta \bb_{t-1}^{\top} \mathbb{E}_t\left[\bep_t \eta_t\right]-  \sum_{t=1}^T \Delta \bb_{t-1}^{\top} \mathbb{E}_t\left[\bep_t \eta_t\right]\notag \\
& =
    \sum_{t=1}^T \Delta \bb_{t-1}^{\top} \bep_t \eta_t-\sum_{t=1}^T \Delta \bb_{t-1}^{\top} \boldsymbol{\gamma}+\sum_{t=1}^T \Delta \bb_{t-1}^{\top} \boldsymbol{\gamma} \notag\\
& =
    \underbrace{\sum_{t=1}^T \Delta \bb_{t-1}^T\left(\bep_t n_t-\boldsymbol{\gamma}\right)}_{\substack{\text{Martingale Concentration Term}}}
    +
    \underbrace{
    \sum_{t=1}^T \Delta \bb_{t-1}^{\top} \boldsymbol{\gamma}
    }_{\substack{\text{Bias Term}}}
\label{eq:decomp_subg_3}
\end{align}
Now we can use \Cref{lem:selffulfilling} and \Cref{lem:biasOsqrtT} to conclude that term $ (\bullet 3 \bullet ) $ is bounded by the following quantity
\begin{align*}
   &~~~~\sum_{t=1}^T \eta_t ( \underbrace{\bb_{t-1}-\bb }_{\Delta \bb_{t-1}})^\top \bep_t\\  
   &\leq
    \underbrace{
      8e^2\left(\sigma_\eta^2+\sigma_{\bep}^2\right) 
    \sqrt{d_{\bx}} 
    \red{\frac{\mathfrak{r}^2}{L_{\bz}^2}}
    \sqrt{\mathfrak b_{T-1}(\delta)}
    \left(
    \sqrt{2 f(T) \log (2 / \delta)}
    + \sqrt{
    \max\left\{ 
        \frac{1}{\lambda},
        \frac{2}{ \llmin{\mathbf{\boldsymbol{\Sigma}}}}
    \right\}
    }
    \log (2 / \delta)\right)
    }_{\sqrt{d_{\bx}d_{\bz}} \log(T)}
\\
 &\quad   +
    \underbrace{ 
    \|\boldsymbol{\gamma}\|_2 \red{\frac{\mathfrak{r}^2}{L_{\bz}^2}} \sqrt{\mathfrak b_{T}(\delta)}
    \left(
    \frac{T_0}{\sqrt{\lambda}}
    +2
    \frac{\sqrt{2T}}{\sqrt{\llmin*{\bsig}}}
    \right)
    }_{\bigO(\|\boldsymbol{\gamma}\|_2 \sqrt{d_{\bz} T \log T} )}
\end{align*}

\noindent\textbf{Final Step: Assembling the Results.} Thus, by assembling the upper bounds on the three terms, we get
\begin{align*}
\overline{R}_{T}&=\sum_{t=1}^{T} \overline{r}_{t}\\
&\leq
\underbrace{\mathfrak b_{T-1}(\delta)}_{d_{\bz}\log T} \underbrace{\left(\left(\red{
    \frac{L_{\bt}^2 
    \mathfrak{r}^4}{L_{\bz}^2}} + \red{
    \frac{\mathfrak{r}^4}{L_{\bz}^4}} d_{\bx} \sigma_{\bep}^2\right) f(T)  
+ \red{
    \frac{\mathfrak{r}^4}{L_{\bz}^4}} C_5\right)}_{d_{\bx}\log T}\\
&+   \underbrace{
   \sqrt{2\left(1 
    +\red{\sigma_{\eta}^2 }\mathfrak b_{T-1}(\delta)  
    \red{
    \frac{L_{\bt}^2 
    \mathfrak{r}^4}{L_{\bz}^2}
    f(T)}\right)}}_{\bigO(\sqrt{d_{\bz}} \log T)}
    \underbrace{
\sqrt{\log \Bigg( {\delta}^{-1}
    \sqrt{
    1 + \red{\sigma_{\eta}^2 }\mathfrak b_{T-1}(\delta)  
    \red{
    \frac{L_{\bt}^2 
    \mathfrak{r}^4}{L_{\bz}^2}
    f(T)}\Bigg)}}}_{\bigO\left(\sqrt{\log(\sqrt{d_{\bz}} \log T)}\right)}\\
&+
    \underbrace{
      8e^2\left(\sigma_\eta^2+\sigma_{\bep}^2\right) 
    \sqrt{d_{\bx}} 
    \red{\frac{\mathfrak{r}^2}{L_{\bz}^2}}
    \sqrt{\mathfrak b_{T-1}(\delta)}
    \left(
    \sqrt{2 f(T) \log \frac{2}{\delta}
    }
    + \sqrt{
    \max\left\{ 
        \frac{1}{\lambda},
        \frac{2}{ \llmin{\mathbf{\boldsymbol{\Sigma}}}}
    \right\}
    }
    \log\frac{2}{\delta}\right)
    }_{\bigO(\sqrt{d_{\bx}d_{\bz}} \log(T))}\\
    &\quad   +
    \underbrace{ 
    \|\boldsymbol{\gamma}\|_2 \red{\frac{\mathfrak{r}^2}{L_{\bz}^2}} \sqrt{\mathfrak b_{T-1}(\delta)}
    \left(
    \frac{T_0}{\sqrt{\lambda}}
    +2
    \frac{\sqrt{2T}}{\sqrt{\llmin*{\bsig}}}
    \right)
    }_{\bigO(\|\boldsymbol{\gamma}\|_2 \sqrt{d_{\bz} T \log T} )}\\
&\underset{(a)}{\leq}
\mathfrak b_{T-1}(\delta)
\left(\left(\red{
    \frac{L_{\bt}^2 
    \mathfrak{r}^4}{L_{\bz}^2}} + \red{
    \frac{\mathfrak{r}^4}{L_{\bz}^4}} d_{\bx} \sigma_{\bep}^2\right) f(T)  
+ \red{\frac{\mathfrak{r}^4}{L_{\bz}^4}} C_5\right)\\
&+ \sqrt{\mathfrak b_{T-1}(\delta)} \Bigg(C_1 \sqrt{f(T)} +  C_2 \sqrt{2 {d_{\bx}} f(T)}
+ T_0 \sqrt{d_{\bx}} + \|\boldsymbol{\gamma}\|_2 C_4
{\sqrt{T}}\Bigg)\\
&=    \underset{\substack{\text{Identification}\\ \text{Regret} \\\bigO(d_{\bx}d_{\bz} \log^2 T)}}{\underbrace{\bigO(\mathfrak b_{T-1}(\delta) d_{\bx} f(T))}}
    + 
    \underset{\substack{\text{Estimation} \\\bigO(\sqrt{d_{\bz} \log T})}}{\underbrace{\sqrt{\mathfrak b_{T-1}(\delta)}}}
    \underset{
        \substack{\text{First-stage}\\\text{feature norm} \\\bigO(\sqrt{\log T})}
        }
    {
\left( 
    \underbrace{C_1 
    \sqrt{
    f(T)}}
\right.
    }+
\underbrace{C_2\sqrt{
    2   
    d_{\bx} f(T) 
    } + \sqrt{d_{\bx}} C_3
    }_{\substack{\text{Correlated noise}\\\text{Concentration term} \\\bigO(\sqrt{d_{\bx}\log T})}} 
+
    \underset{\substack{\text{Correlated noise}\\ \text{Bias term}\\\bigO(\gamma \sqrt{T} )}}
    {\left.
    \underbrace{
    \gamma C_4 
    \sqrt{T}
    }\right)}
\end{align*}

For simpler presentation of the result, we ignore the $\log\log$ terms and define in inequality (a) the constants
\begin{align*}
    C_1 &\triangleq \sqrt{2\left(1 + \red{\sigma_{\eta}^2 \frac{L_{\bt}^2 \mathfrak{r}^4}{L_{\bz}^2}} \right)}\\
    C_2 &\triangleq 8e^2\left(\sigma_\eta^2+\sigma_{\bep}^2\right) \red{\frac{\mathfrak{r}^2}{L_{\bz}^2}} \sqrt{\log \frac{2}{\delta}}\\
    C_3 &\triangleq 8e^2\left(\sigma_\eta^2+\sigma_{\bep}^2\right) \red{\frac{\mathfrak{r}^2}{L_{\bz}^2}} 
    \sqrt{ \max\left\{ 
        \frac{1}{\lambda},
        \frac{2}{ \llmin{\mathbf{\boldsymbol{\Sigma}}}}
    \right\}} \log \frac{2}{\delta}\\
    C_4 &\triangleq  \red{\frac{\mathfrak{r}^2}{L_{\bz}^2}}
    \left(T_0 {\lambda}^{-1/2}
    +2 \sqrt{2} \llmin*{\bsig}^{-1/2} \right)\: .
\end{align*}
$C_1$, $C_2$, $C_3$, and $C_4$ are dimension $d_{\bx}$, $d_{bz}$, and horizon $T$ independent constants leading to a regret upper bound of order
\[\bigO(d_{\bx}d_{\bz}\log^2 T) +  \bigO(\|\boldsymbol{\gamma}\|_2 \sqrt{d_{\bz} T \log T} ).\]

\end{proof}

\begin{remark}[Just-identified case.] For the just-identified IVs, $d_{\bx} = d_{\bz} = d$. Thus, the bound on oracel regret of \otsls{} reduces to $\bigO(d^2 \log^2 T) +  \bigO(\gamma \sqrt{d T \log T} )$.     
\end{remark}

\begin{remark}[Exeogeneous case.]
    For the exogeneous case, $\gamma = 0$. Thus, we \otsls{} achieves an oracle regret bound $\bigO(d^2 \log^2 T)$. This is comparable in terms of dependence on $d$ and $T$ to the oracle regret of online ridge regression and online VAWR in the exoegeneous setting.
\end{remark}

\begin{remark}[A discussion on $\lambda$] In the design matrix, we use the fact that $\bG_{\bz,t} \triangleq \bZ_t^{\top} \bZ_t = \sum_{s=1}^t \bz_s \bz_{s}^\top+ \bG_{\bz,0}$, such that $\lambda_{\min}(\bG_{\bz,0})\geq \lambda$. We discuss different ways to achieve it and standard rectifications, if we remove this condition.

\textit{1. Adding a regulariser:} We can ensure $\bG_{\bz,0} = \lambda \mathbf{I}_{d_{\bz}}$, and thus, $\lambda_{\min}(\bG_{\bz,0}) = \lambda$. This can be achieved by using a $\ell_2$ regulariser, i.e. $\lambda \|\bb\|_2^2$, in the second-stage, such that $\lambda > 0$.

\textit{2. Data augmentation:} The other way to ensure is to consider a data augmentation approach. Rather than begining with $\bz_0 = \mathbf{0}$, i.e. a vector with all zero entries, we can choose a $\bz_0$ during initialisation, such that $\lambda_{\min}(\bG_{\bz,0}) = \normiii{\bz_0 \bz_0^{\top}}_2 \geq \lambda$, and $\norm{\bz_0}_2^2 = \bz_0^{\top} \bz_0 \leq L_{\bz}^2$.

If we want to remove this condition that $\lambda_{\min}(\bG_{\bz,0})\geq \lambda$, and redefine $\bG_{\bz,t}$ as $\sum_{s=1}^t \bz_s \bz_{s}^\top$, then we can modify the bounds in two ways:

\textit{1. Well-behavedness of IVs:} It is common in econometrics and OLS regression literature to assume well-behavedness of the covariates for simplifying the analysis (Assumption A.2. in Chapter 8.2~\citep{greene2003econometric}; Theorem 13.13.~\citep{wasserman2004all}. In our case, well-behavedness means that for any $t>0$, $\llmin{\sum_{s=1}^t \bz_s \bz_{s}^\top} > 0$, i.e. the first-stage design matrix dictated by the IVs is invertible at every step. In this case, we can proceed with our analysis by substituting $\lambda = \inf_{t\geq 0} \llmin{\sum_{s=1}^t \bz_s \bz_{s}^\top}$.

\textit{2. Added cost for initial design matrices:} Now, if we want to be more flexible and not to consider the well-behavedness of IVs assumption, we have to follow the rectifications as in the analysis of unregularised OLS by~\cite{maillard2016self} or the analysis of unregularised forward (VAWR) algorithm by~\cite{ouhamma2021stochastic}. Specifically, if we assume $t = T_0 > 1$ is the first time step when the design matrix becomes positive definite, the identification regret bound becomes $\widetilde{R}_T = \bigO\left(\log(\delta^{-1} C_{\text{peeling}} (T L_{\bz}^2/\gamma d_{\bz})^{d_{\bz}}) \log(\mathrm{det}(G_{\bz, T})/\mathrm{det}(G_{\bz, T_0}))\right)$ with probability at least $1-\delta$ as $T \gg T_0$. Here, $C_{\text{peeling}}$ is some dimension and $T$-independent constant invoked by the peeling technique in Theorem 5.4. of~\citep{maillard2016self}. Thus, we can asymptotically maintain the $\bigO( d_{\bx} d_{\bz} \log^2 T)$ identification regret bound for \otsls{}. Similar relaxations can be derived for all other bounds as an interesting theoretical exercise.
\end{remark}

\newpage

\subsection{Discussion on different definitions of regret}\label{sec:reg_big}

\red{
Under exogeneity, \cite[Theorem 3.1]{ouhamma2021stochastic} show that the oracle and population regrets differ by $o(\log^2 T)$. \textit{In this section, we show it to be not true under endogeneity}.
Thus, \textit{studying the three regrets shows} that the \textit{hardness of identification of underlying model (the identification regret $\widetilde R_T$), quality of predictions due to unavailability of oracle/true model (oracle regret $\overline{R}_T$), and quality of predictions w.r.t. a fixed dataset/population (population regret $R_T$) can differ significantly under endogeneity}, whereas they are of similar order under exogeneity.
}

Historically, econometrics  has focused on (asymptotic) correct identification of the estimator $\bb$ in presence of endogeneity in order to assign causal meaning to the parameters in a regression model \citep{angrist1995two}.

Instead, the primary concern of the statistics and statistical learning community has arguably been generalisation (see for example \citep{shalev2014understanding}), with the online learning literature initiated by the works by \cite{foster1991prediction} and \cite{littlestone1991line}.

Under exogeneity, the problem of identifying the true parameter $\bb$ is solved at the same time as minimizing the \mse{} by the \ols{} estimator, which is in fact a consistent estimator and is the Minimum Mean Squared Error Estimator (\mmse{}). This is not true if we introduce the more realistic assumption of endogeneity. In fact the estimator is biased and the  \mmsee{} is different from the oracle $\bb$ in this case:    
\begin{align*}
    \bb^{\mmsee} 
&=
    \mathbb{E}\left[\bx \bx^{\top}\right]^{-1} \mathbb{E}[\bx y]
 =
    \bb+\mathbb{E}\left[\bx \bx^{\top}\right]^{-1} \mathbb{E}[\bx \eta],
\end{align*}
where we isolated the bias term  $\mathbb{E}\left[\bx \bx^{\top}\right]^{-1} \mathbb{E}[\bx \eta]$.
The expression simply follows by solving for $\bb'$  in
$\partial_{\beta'} \mathbb{E}_{\bx y}\left[\left(y-\bx \bb'\right)^2\right]=0 
$.
Under exogeneity, $\mathbb{E}[\bx \eta] = 0$ and the \mmsee{} conicedes with $\bb$, while, in the more realistic case of endogenous noise $\mathbb{E}[\bx \eta] \neq 0$ and the estimator is biased for $\bb$.

Therefore, the oracle is  not the best estimator in terms of prediction accuracy. 
Instead, we should aim for a consistent estimator  of the \mmse{}, like the Empirical Risk Minimiser (\erm) which leads to the \ols{} estimator. 
In the endogenous setting, the \ols{} estimator is biased for the oracle $\bb$, and this poses the problem of assigning causal meaning to it.



\textit{We are interested in finding notions  of regret that preserve the causal interpretation of the estimators while allowing a study of their generalisation properties}. This has led us to extend the \emph{instrumental variables} analysis to the online setting and to the inspection of different notions of regret. The regrets that we introduce below, differ from the \emph{population regret} which is usually considered in online learning under the exogeneity assumption. The population regret is defined as
\begin{equation}
    R_T(\bb_T)
\triangleq
    \sum_{t=1}^T(y_t-\bx_t^\top \bb_{t-1})^2- \min_{\bb} \sum_{t=1}^T(y_t-\bx_t^\top \bb )^2
\end{equation}
for a time-dependent estimator $\bb_{t-1}$ and where we indicated with $\bb_T \triangleq \arg\min_{\bb} \sum_{t=1}^T(y_t-\bx_t^\top \bb )^2$. 
Instead, we introduce two  alternative regrets that   measure the performance of an estimator $\bb_{t-1}$ with respect to the oracle $\bb$ (instead of $\bb_T$). Respectively, they are the \emph{oracle regret} $\overline{R}_T(\bb)$ and the \emph{identification regret} $\widetilde{R}_T(\bb)$ below
\begin{align*}
    \overline{R}_T(\bb) 
= 
    \sum_{t=1}^T(y_t-\bx_t^\top \bb_{t-1})^2- \sum_{t=1}^T(y_t-\bx_t^\top \bb )^2,\quad  \quad
\quad
     \widetilde R_T(\bb)
=
    \sum_{t=1}^T(\bx_t^\top \bb_{t-1}-\bx_t^\top \bb )^2.
\end{align*}

 \Cref{thm:lower} shows that an estimator that performs well in terms of \emph{oracle regret} is instead a bad choice for the \emph{population regret}, and the other way around.
\begin{theorem}\label{thm:lower}
For any estimator, the quantity $\Delta \mathfrak R_T \triangleq R_T\left(\bb_T\right)-\overline{R}_T(\bb) $ is lower bound in expectation by
$\mathbb E[\Delta \mathfrak R_T] = \Omega(T)$
\end{theorem}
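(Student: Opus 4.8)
The plan is to show that $\Delta\mathfrak R_T = R_T(\bb_T) - \overline R_T(\bb)$ has expectation growing linearly in $T$ by exploiting that $R_T(\bb_T)$ and $\overline R_T(\bb)$ both contain the same sequential term $\sum_{t=1}^T(y_t-\bx_t^\top\bb_{t-1})^2$, so that
\[
\Delta\mathfrak R_T = \sum_{t=1}^T(y_t-\bx_t^\top\bb)^2 - \min_{\bb'}\sum_{t=1}^T(y_t-\bx_t^\top\bb')^2.
\]
Thus the estimator $\bb_{t-1}$ cancels entirely and the quantity is deterministic in the data: it is the in-sample excess risk of the oracle $\bb$ over the empirical risk minimizer $\bb_T$. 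First I would write this difference in closed form. Since $\bb_T$ is the OLS/ERM solution on the $T$ samples, standard least-squares algebra gives
\[
\sum_{t=1}^T(y_t-\bx_t^\top\bb)^2 - \sum_{t=1}^T(y_t-\bx_t^\top\bb_T)^2 = (\bb-\bb_T)^\top\Big(\sum_{t=1}^T\bx_t\bx_t^\top\Big)(\bb-\bb_T) = \|\bb - \bb_T\|_{\bX_T^\top\bX_T}^2,
\]
which is manifestly nonnegative; the whole task is to show its expectation is $\Omega(T)$.

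Next I would lower bound this quadratic form. Using $\bb_T - \bb = (\bX_T^\top\bX_T)^{-1}\bX_T^\top\beeta_T$ (from the OLS identity already displayed in the Preliminaries), we get $\Delta\mathfrak R_T = \beeta_T^\top\bX_T(\bX_T^\top\bX_T)^{-1}\bX_T^\top\beeta_T = \|\bP_{\bX_T}\beeta_T\|_2^2$, the squared norm of the projection of the noise onto the column space of the covariates. Under exogeneity this would have expectation $d_{\bx}\sigma_\eta^2 = \bigO(1)$ — that is precisely why oracle and population regret coincide there. Under endogeneity, however, $\bx_t$ and $\eta_t$ are correlated with $\E[\bep_t\eta_t] = \boldsymbol\gamma \neq 0$, so $\bX_T^\top\beeta_T$ has a systematic component of order $T$. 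Concretely I would write $\bx_t = \bt^\top\bz_t + \bep_t$, so $\frac1T\bX_T^\top\beeta_T = \frac1T\sum_t(\bt^\top\bz_t+\bep_t)\eta_t \to \boldsymbol\gamma$ in probability (the $\bz_t\eta_t$ term vanishes by exogeneity of the IVs and a LLN / martingale argument), while $\frac1T\bX_T^\top\bX_T \to \bSigma_{\bx} = \bt^\top\bSigma_{\bz}\bt + \sigma_{\bep}^2\I$ in probability. Therefore $\frac1T\Delta\mathfrak R_T = \frac1T(\bX_T^\top\beeta_T)^\top(\bX_T^\top\bX_T)^{-1}(\bX_T^\top\beeta_T)\cdot\frac1T \to \boldsymbol\gamma^\top\bSigma_{\bx}^{-1}\boldsymbol\gamma > 0$, giving $\Delta\mathfrak R_T = \Omega(T)$ almost surely, and hence in expectation by a uniform-integrability or direct-moment argument.

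To make the expectation bound rigorous rather than just the a.s. statement, I would instead lower bound directly: $\Delta\mathfrak R_T \ge \lambda_{\min}\big((\bX_T^\top\bX_T)^{-1}\big)\,\|\bX_T^\top\beeta_T\|_2^2 = \|\bX_T^\top\beeta_T\|_2^2 / \lambda_{\max}(\bX_T^\top\bX_T)$, bound the denominator by $T L_{\bx}^2$ (or use $\lambda_{\max}(\bX_T^\top\bX_T)\le \operatorname{Tr} = \sum_t\|\bx_t\|_2^2$ and its expectation), and lower bound $\E\|\bX_T^\top\beeta_T\|_2^2 \ge \|\E[\bX_T^\top\beeta_T]\|_2^2 = T^2\|\boldsymbol\gamma\|_2^2$ by Jensen, so that $\E[\Delta\mathfrak R_T] \gtrsim T^2\gamma^2 / (T\cdot\bigO(1)) = \Omega(\gamma^2 T)$. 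The main obstacle is handling the ratio of two correlated random matrices/vectors cleanly in expectation — the naive $\E[\text{ratio}]\neq \text{ratio of }\E$ — which I expect to resolve either by (i) the deterministic sandwich $\Delta\mathfrak R_T \ge \|\bX_T^\top\beeta_T\|_2^2/\sum_t\|\bx_t\|_2^2$ combined with a high-probability concentration of $\sum_t\|\bx_t\|_2^2$ around $T\operatorname{Tr}(\bSigma_{\bx})$ and of $\|\bX_T^\top\beeta_T\|_2$ around $T\gamma$ (via Theorem~\ref{thm:estimation_cov} and the martingale concentration results already available), or (ii) invoking the a.s. convergence above together with Fatou's lemma, $\liminf_T \E[\Delta\mathfrak R_T/T] \ge \E[\liminf_T \Delta\mathfrak R_T/T] = \boldsymbol\gamma^\top\bSigma_{\bx}^{-1}\boldsymbol\gamma$. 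Either route yields $\E[\Delta\mathfrak R_T] = \Omega(T)$ whenever $\gamma>0$, completing the proof.
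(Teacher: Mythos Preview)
Your reduction to $\Delta\mathfrak R_T=\|\bb-\bb_T\|^2_{\bX_T^\top\bX_T}$ matches the paper's first step (the paper's sketch displays a constant $3$ in front, but the algebra gives $1$ as you have it; the scale is immaterial for an $\Omega(T)$ bound). From there the two arguments diverge. The paper lower-bounds $\|\bb-\bb_T\|^2_{\bG_{\bx,T}}\ge \lambda_{\min}(\bG_{\bx,T})\,\|\bb-\bb_T\|_2^2$, uses the design-matrix eigenvalue concentration (Lemma~\ref{prop:concentration_eigenvalue}) to get $\lambda_{\min}(\bG_{\bx,T})=\Omega(T)$, and then invokes a biased Cram\'er--Rao inequality to keep $\E\|\bb-\bb_T\|_2^2$ bounded away from zero via the nonzero bias of OLS under endogeneity. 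You instead rewrite the quantity as $\beeta_T^\top\bP_{\bX_T}\beeta_T$ and pass to the limit of empirical moments, $\tfrac1T\bX_T^\top\beeta_T\to\boldsymbol\gamma$ and $\tfrac1T\bX_T^\top\bX_T\to\bSigma_{\bx}$, closing with Fatou to get $\liminf_T \E[\Delta\mathfrak R_T/T]\ge \boldsymbol\gamma^\top\bSigma_{\bx}^{-1}\boldsymbol\gamma>0$. Your route is more elementary---it avoids Cram\'er--Rao, which in the biased form requires a parametric likelihood and differentiability assumptions not spelled out here---and it delivers the explicit limiting constant $\boldsymbol\gamma^\top\bSigma_{\bx}^{-1}\boldsymbol\gamma$; the paper's route is quicker to state but leaves the constant implicit. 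Both are valid; your Fatou argument is the cleanest way to convert the a.s.\ limit into the expectation lower bound, and is preferable to the ratio-of-expectations approach you also sketch.
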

\textit{Proof Sketch.} Solving for $\bb_T$ leads to the OLS estimator for data up to time $T$:
$\bb_{T}=\bb +\left(\sum_{t=1}^T\bx_{t} \bx_{t}^{\top}\right)^{-1} \sum_{t=1}^T \bx_{t}^{\top }\eta_t$ provided that we can invert the design matrix in the previous expression.
Using this expression we ca rewrite $\Delta \mathfrak R_T$ using $\Delta \bb_T \triangleq \bb-\bb_T$ and $\bG_{\bx, T} \triangleq \sum_{t=1}^T\bx_{t} \bx_{t}^{\top}$ as follows
\begin{align*}
    \Delta \mathfrak R_T
&=
    \sum_{t=1}^T\left(\bx_t^{\top}\Delta \bb_T\right)^2+2 \sum_{t=1}^T\eta_t\bx_t^{\top}\Delta \bb_T
= 
    3 \|\Delta \bb_T\|^2_{\bG_{\bx, T}}.
\end{align*}
Thanks to \Cref{prop:concentration_eigenvalue}, we know that the minimum eigenvalue of $\bG_{\bx, T}$ is $\Omega(T)$ which implies $\|\Delta \bb_T\|^2_{\bG_{\bx, T}}\geq \|\Delta \bb_T\|^2_2 \llmin{\bG_{\bx, T}} \gtrsim \|\Delta \bb_T\|^2_2 T $. Furthermore, we can bound away from zero in expectation $\|\Delta \bb_T\|_2^2$ by using Cramér–Rao bound on each component of the estimator $\bb_T$, for which we have $\mathbb{E}\left[(\bb_{T,i}-\bb_i)^2\right] \geq \frac{\left[1+b^{\prime}(\bb_i)\right]^2}{I(\bb_i)}+b(\bb_i)^2$ where $b(\bb_i)= \mathbb E[\bb_{T,i}]-\bb_{i}$ is the bias of the estimator and $I(\bb_i)$ is the Fisher Information evaluated at $\bb_i$. We know that $\bb_T$ is a biased estimator of $\bb$ in the endogenous setting, therefore the bias is strictly positive for at least one component, and this concludes the proof.

\newpage
\section{Regret Analysis for IV Linear Bandits: \ofuliv{}}\label{app:ofuliv}

\begin{reptheorem}{thm:reg_ofuliv}[Regret of \ofuliv{}]
Under the same assumptions as that of Theorem~\ref{thm:reg_otsls_oracle}, for horizon $T > 1$, \Cref{alg:ofuliv} incurs a regret
\begin{equation*}
    R_T 
= 
    \sum_{t=1}^T r_t 
\leq 
     2 \sqrt{T} 
    \sqrt{\mathfrak b_{T-1}(\delta)}
    \sqrt{\left(\left(\red{
    \frac{L_{\bt}^2 
    \mathfrak{r}^4}{L_{\bz}^2}} + \red{
    \frac{\mathfrak{r}^4}{L_{\bz}^4}} d_{\bx} \sigma_{\bep}^2\right) f(T)  
+ \red{\frac{\mathfrak{r}^4}{L_{\bz}^4}} C_5\right)}\: ,
\end{equation*}
with probability at least $1-\delta$. Here, $C_5$ is $\bigO(d_{\bx})$ constant as defined in \Cref{eq:c4}.
\end{reptheorem}
\begin{proof} 

The instantaneous regret $ r_{t} $ reads
\begin{align*}
    \left\langle\bb, \bx_{*}\right\rangle-\left\langle\bb, \bx_{t}\right\rangle 
&\leq
    \left\langle\widetilde{\bb}_{t-1}, \bx_{t}\right\rangle-\left\langle\bb, \bx_{t}\right\rangle 
    \tag{since $(\bx_{t}, \widetilde{\bb}_{t-1})$ is optimistic inside $\mathcal X_t \times \mathcal B_t$}\\
&=
    \left\langle\bb_{t-1}-\bb, \bx_{t}\right\rangle+\left\langle\widetilde{\bb}_{t-1}-\bb_{t-1}, \bx_{t}\right\rangle 
    \tag{summing and subtracting $\bb_{t-1}$}
    \\
& \leq
    \left\|\bb_{t-1}-\bb\right\|_{ \widehat\bt_{t-1}^\top \bG_{\bz,t-1}  \widehat\bt_{t-1}}
    \left\|\bx_{t}\right\|_{\left({ \widehat\bt_{t-1}^\top \bG_{\bz,t-1}  \widehat\bt_{t-1}} \right)^{-1}} \notag\\
    &\quad+
    \left\|\widetilde{\bb}_{t-1}-\bb_{t-1}\right\|_{\widehat\bt_{t-1}^\top \bG_{\bz,t-1}  \widehat\bt_{t-1}}
    \left\|\bx_{t}\right\|_{\left({ \widehat\bt_{t-1}^\top \bG_{\bz,t-1}  \widehat\bt_{t-1}} \right)^{-1}} \tag{Cauchy-Schwarz } \\
& \leq 
    2 \sqrt{\mathfrak b_{t-1}(\delta)}
    \left\|\bx_{t}\right\|_{\widehat\bH_{t-1}^{-1}}\tag{\Cref{thm:confidencebeta}}
\end{align*}
The last inequality uses the concentration of $\bb_t$ around the true value $\bb$, and the fact that we choose $\widetilde{\bb}_{t-1}$ inside $\cB_{t-1}$. In both cases, the two norms are bounded by the radius of the ellipsoid, i.e. $\sqrt{\mathfrak b_{t-1}(\delta)}$.
Since we already know in this case how to concentrate the sum of the features norms $\sum_{t=1}^{T}\left\|\bx_{t}\right\|_{\widehat\bH_{t-1}^{-1}}^{2}  $ from \Cref{lem:bound_featsx}, we bound the cumulative regret using Cauchy-Schwarz inequality in the first inequality below, and we substitute the bound on the instantaneous regret that we just obtained: 
\begin{align}
    R_{T} 
&{\leq} 
    \sqrt{T \sum_{t=1}^{T} r_{t}^{2}} \notag
\leq
    2 \sqrt{T \sum_{t=1}^{T} \mathfrak b_{t-1}(\delta)
    \left\|\bx_{t}\right\|_{\widehat\bH_{t-1}^{-1}}^2}\notag
\leq
    2 \sqrt{T} \sqrt{\mathfrak b_{T-1}(\delta)}
    \sqrt{
    \sum_{t=1}^T 
    \left\|\bx_{t}\right\|_{\widehat\bH_{t-1}^{-1}}^2
    }\label{eq:monotone_b}
\end{align}
where in the second inequality, we used the fact that the radius $\mathfrak{b}_{t-1}(\delta)$ is monotonically increasing in $t$.
Now, we can use \Cref{lem:bound_featsx} to bound the sum of feature norms, and putting all together, we obtain the following bound:
\begin{align*}
    R_{T} 
&\leq
    2 \sqrt{T} 
    \underbrace{\sqrt{\mathfrak b_{T-1}(\delta)}}_{\mathcal{O}(\sqrt{d_{\bz} \log T})}
    \underbrace{
    {\left(\left(\red{
    \frac{L_{\bt}^2 
    \mathfrak{r}^4}{L_{\bz}^2}} + \red{
    \frac{\mathfrak{r}^4}{L_{\bz}^4}} d_{\bx} \sigma_{\bep}^2\right) f(T)  
+ \red{\frac{\mathfrak{r}^4}{L_{\bz}^4}} C_5\right)}^{1/2}
    }_{\mathcal{O}(\sqrt{d_{\bx} \log T})}  
= 
    \mathcal{O}\left( \sqrt{d_{\bx} d_{\bz} T} \log T 
\right).
\end{align*}

\end{proof}

\newpage
\section{Experimental analysis}\label{App:exp}


\textbf{Experimental setup.} We deploy the online regression and bandit experiments in Python3 on a Intel(R) Core(TM) i7-8665U CPU @ 1.90GHz.
The code and the real data for the experiments can be found in the \texttt{Experiments} folder within the supplementary material. 
These files contain the necessary code for: 
\begin{enumerate}[label=\roman*.]
    \item online regression and linear bandit experiments with varying covariate dimensions and endogeneity levels;
    \item the simulations for price-sales dynamics, both in a regression setting and in the pricing case described in \Cref{example1};
    \item online estimation for real data of gasoline consumption.
\end{enumerate}
We refer to the code for a detailed understanding of the implementation and to reproduce our results.

\noindent
\textbf{Notation:} 
We denote the normal distribution with mean $\mu$ and standard deviation $\sigma$ as $\mathcal N(\mu,\sigma)$, with $\mathcal N_n$ we indicate its multivariate extension to $n$ dimensions. Therefore, the normal distribution of $n$ dimensions with mean $\boldsymbol{\mu}$ and standard deviation $\boldsymbol{\Sigma}$ is $\mathcal N_n(\boldsymbol{\mu},\boldsymbol{\Sigma})$. In a similar way, $\mathcal U_n$ is the uniform distribution over $[0,1]^n$. The vector with components all equal to one is denoted by $\vec{1}$.

\subsection{Regression with endogeneity on synthetic data}\label{app:or}
We conduct an empirical comparison of the performance of \otsls{}, and that of two other algorithms: Online Ridge Regression (\ridge) and the Vovk-Azoury-Warmuth Forecaster (\vaw) \citep{orabona2019modern}. The difference between the two algorithms lies in the design matrices' inverse. The Vovk-Azoury-Warmuth Forecaster, in the inverse of the design matrix, uses the information at time $t$ by adding the covariate available at that time. This addition guarantees better regret bounds on adversarial losses, while on the stochastic setup, the two algorithms are expected to perform similarly, as we confirm experimentally in the following.
Specifically, we compare the following online algorithms and their respective estimators: 
\begin{align}
\bb_{t}^{\ridge} &=  (\bX_{t}^\top\bX_{t} + \lambda_{\ridge} \I_d)^{-1} \bX_{t}^\top \by_{t} , \tag{Online Ridge}\\
\bb_{t}^{\vaw} &=  (\bX_{t+1}^\top\bX_{t+1} + \lambda_{\vaw} \I_d)^{-1} \bX_{t}^\top \by_{t} ,\tag{\vaw{}}\\
 \bb_{t}^{\otsls{}}
&=
    (\widehat{\bX}_{t}^\top  \widehat{\bX}_{t})^{-1} \widehat{\bX}_{t}^\top \by_{t}=
    (\widehat{\bt}_{t}^{\top} \bZ_{t}^\top  \bZ_{t} \widehat{\bt}_{t})^{-1} \widehat{\bt}^{\top}_{t}\bZ_{t}^\top \by_{t} . 
\tag{\otsls} 
\end{align}


\noindent\textbf{Experimental setup.}
Here, we adopt the following experimental setup and generative model.
\begin{align*}
    \bx_{t} 
&
=
     \bt\, \bz_{t} + \bep_t    
    \tag{First Stage}
\\
    y_t
&= 
    \bb^\top \bx_{t} + \eta_t
= 
    \bb^\top \bx_{t} + (\rho \cdot \epsilon_{t,1}+\widetilde\eta_{t}),
    \tag{Second Stage}
\end{align*}

where $\epsilon_{t,1}$ indicates the first component of the vector $\bep_t$, and $\eta_{t} = \rho \cdot \epsilon_{t,1}+\widetilde\eta_{t}$.
We choose $d_{\bx}=\{2,5,8\}$ and $d_{\bz}=\{4,10,16\}$, respectively. In our experiments, we choose arbitrarily $\bb$ as a normalised vector with equal negative entries. Therefore, the values in the components are uniquely determined by the dimension $d_{\bx}$.
We choose $\bt=\I_{d_{\bz},d_{\bx}}$ which has ones on  the entries $i=j$ and zeros for $i\neq j$.
Then, we sample at each time $t$, the vectors $\bz_{t} \sim \mathcal N_{d_{\bz}}(\vec{0}, \I_{d_{\bz}})$, the vector noise   $\bep_{t} \sim \mathcal N_{d_{\bx}}(\vec{0}, \I_{d_{\bx}})$, and the scalar noise $\widetilde\eta_{t} \sim \mathcal{N}_1(0,1)$. We run the algorithms with the same regularisation parameters, i.e. $\lambda=10^{-1}$.\footnote{One can choose to tune $\lambda$ by line search or other hyperparameter tuning algorithms. We chose to set $\lambda=10^{-1}$ and use it in all the experiments.}
We repeat our experiments 20 times. 
For each algorithm, we report the mean and standard deviation of the cumulative regret over the 20 runs (shaded areas correspond to one standard deviation).
In \Cref{f:rcr}, we present the learning curves for the identification regret $\widetilde R_T$. In \Cref{f:rmse}, we plot the  $MSE=\norm{\bb_t-\bb}_2$, which shows the estimation error of the two. In each of the figures, we further compare for different values of increasing endogeneity $\rho$.

\begin{figure}[ht]
\centering
\begin{subfigure}[b]{.3\columnwidth}
    \includegraphics[width=\textwidth]{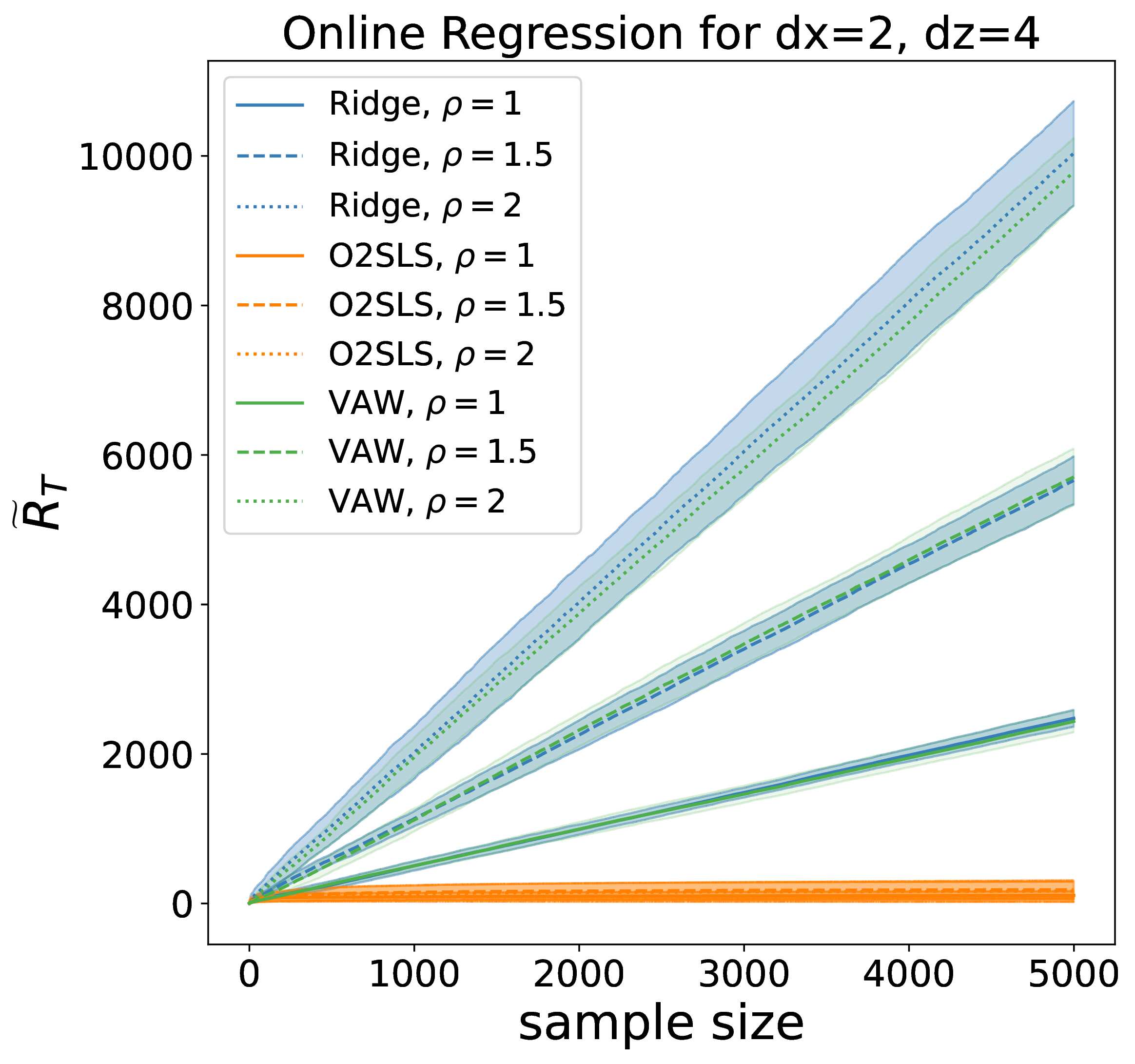}
    \caption{Cum. reg., $d_{\bx}=2, d_{\bz}=4$}
  \label{fig:3a}
\end{subfigure}
\hfill
\begin{subfigure}[b]{.3\columnwidth}
    \includegraphics[width=\columnwidth]{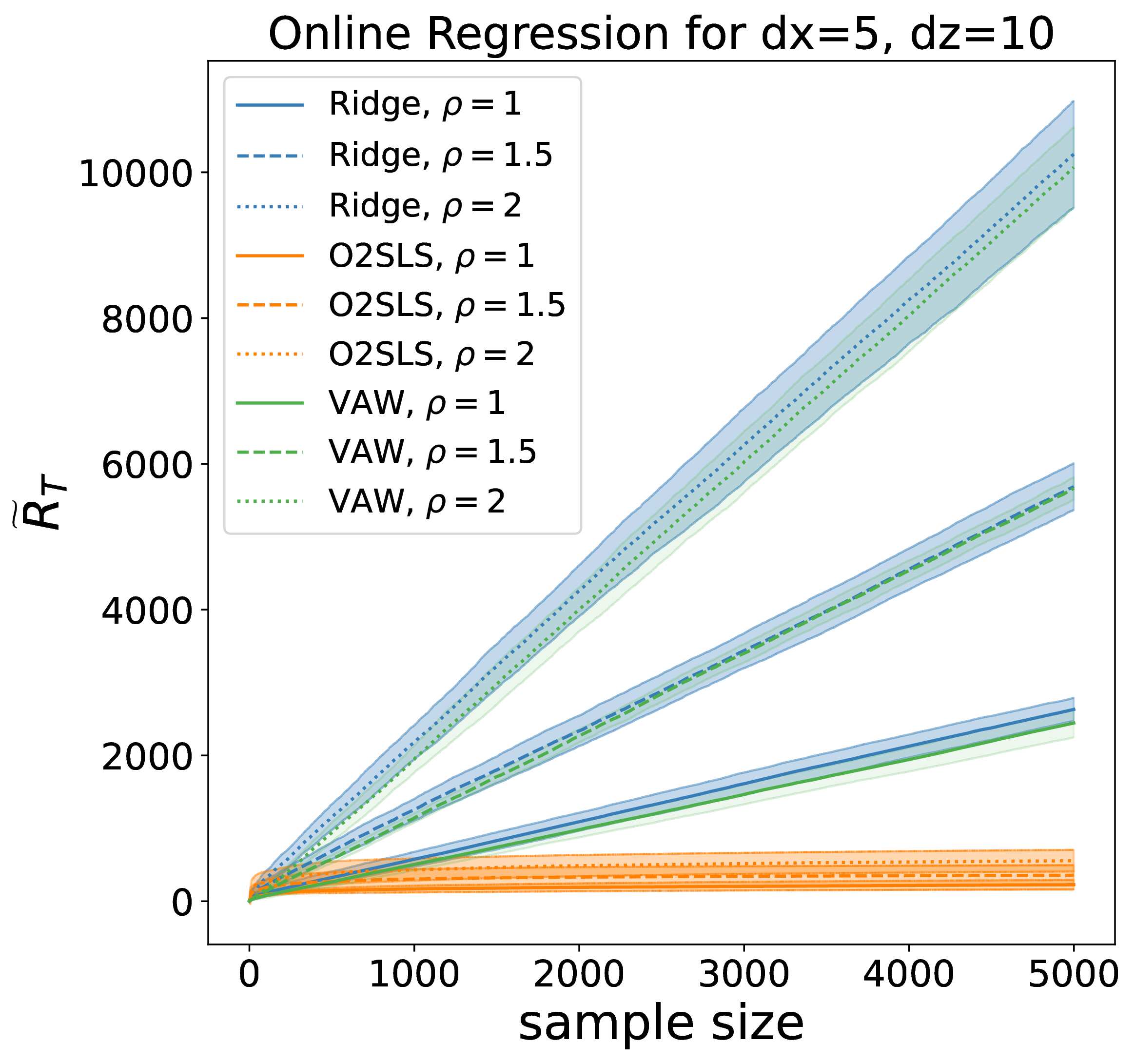}
    \caption{Cum. reg., $d_{\bx}=5, d_{\bz}=10$}
  \label{fig:3b}
\end{subfigure}%
\hfill
\begin{subfigure}[b]{.3\columnwidth}
    \includegraphics[width=\textwidth]{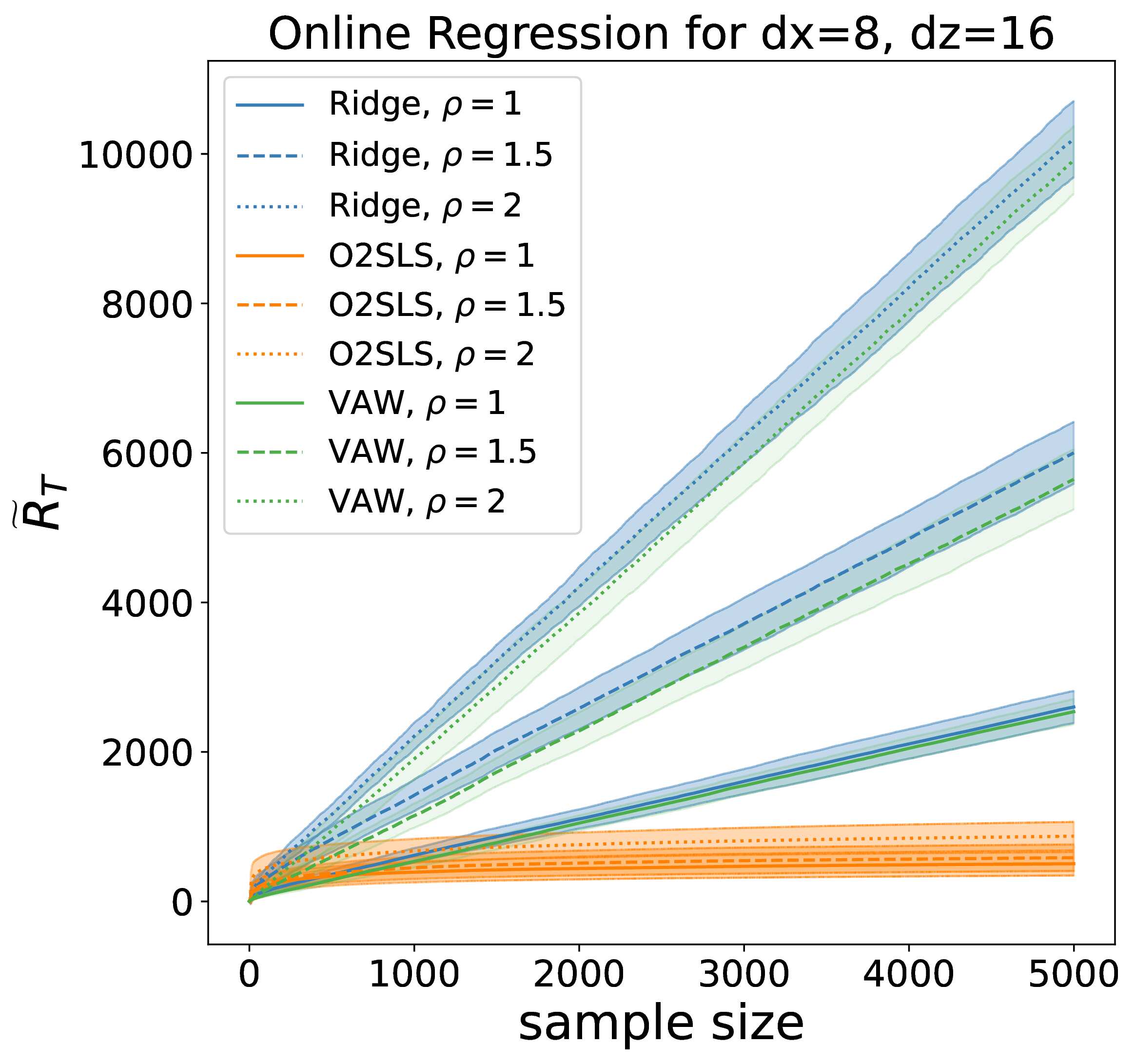}
    \caption{Cum. reg., $d_{\bx}=8, d_{\bz}=16$}
  \label{fig:3c}
\end{subfigure}%
\caption{Cumulative identification regret for an online regression setting of \otsls{}, \ridge{} and \vaw{} for different endogeneity levels and covariates' dimension.}\label{f:rcr}
\end{figure}

\begin{figure}[ht]
\centering
\begin{subfigure}[b]{.3\columnwidth}
    \includegraphics[width=\textwidth]{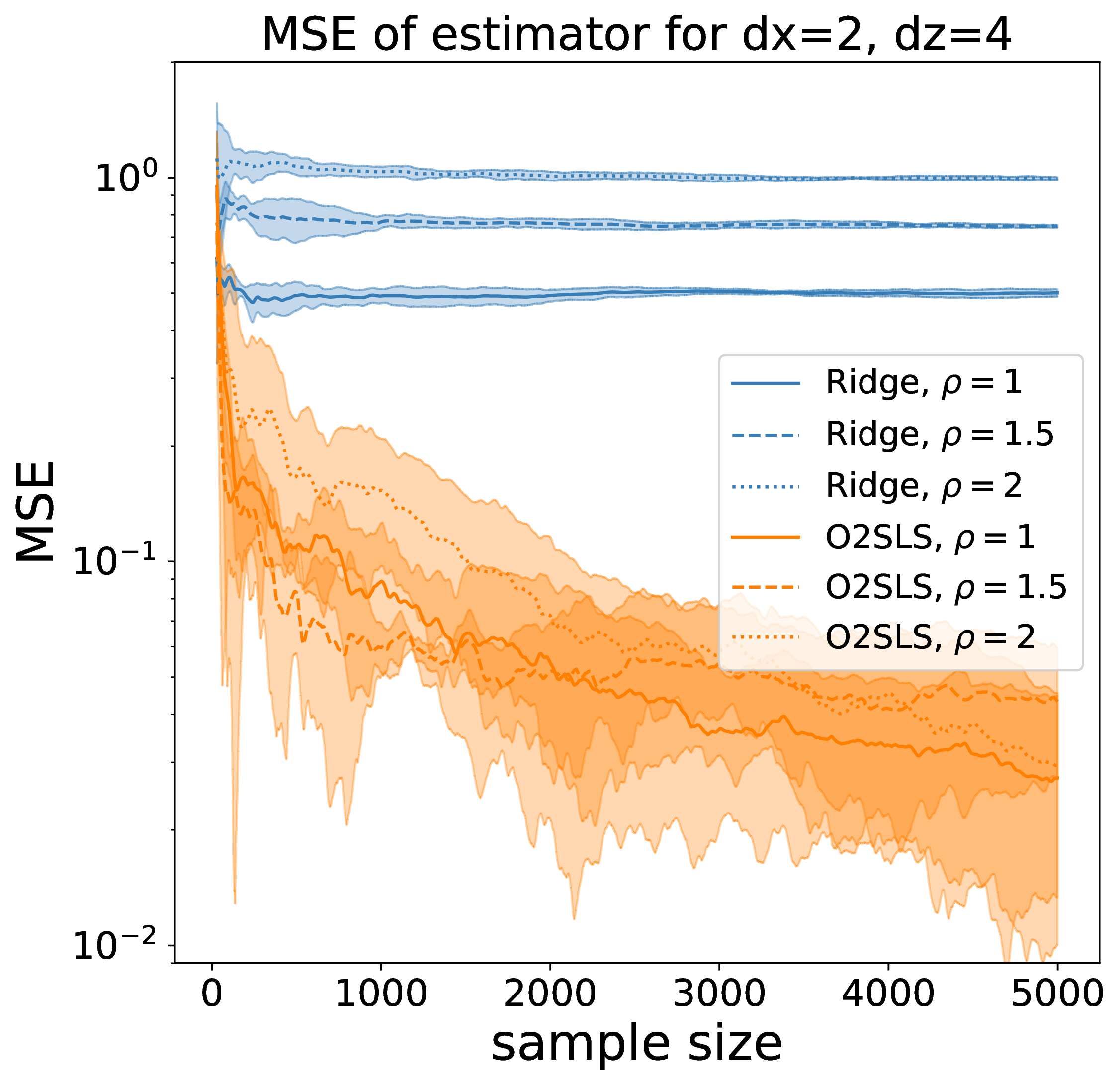}
    \caption{MSE, $d_{\bx}=2, d_{\bz}=4$}
  \label{fig:4a}
\end{subfigure}
\hfill
\begin{subfigure}[b]{.3\columnwidth}
    \includegraphics[width=\columnwidth]{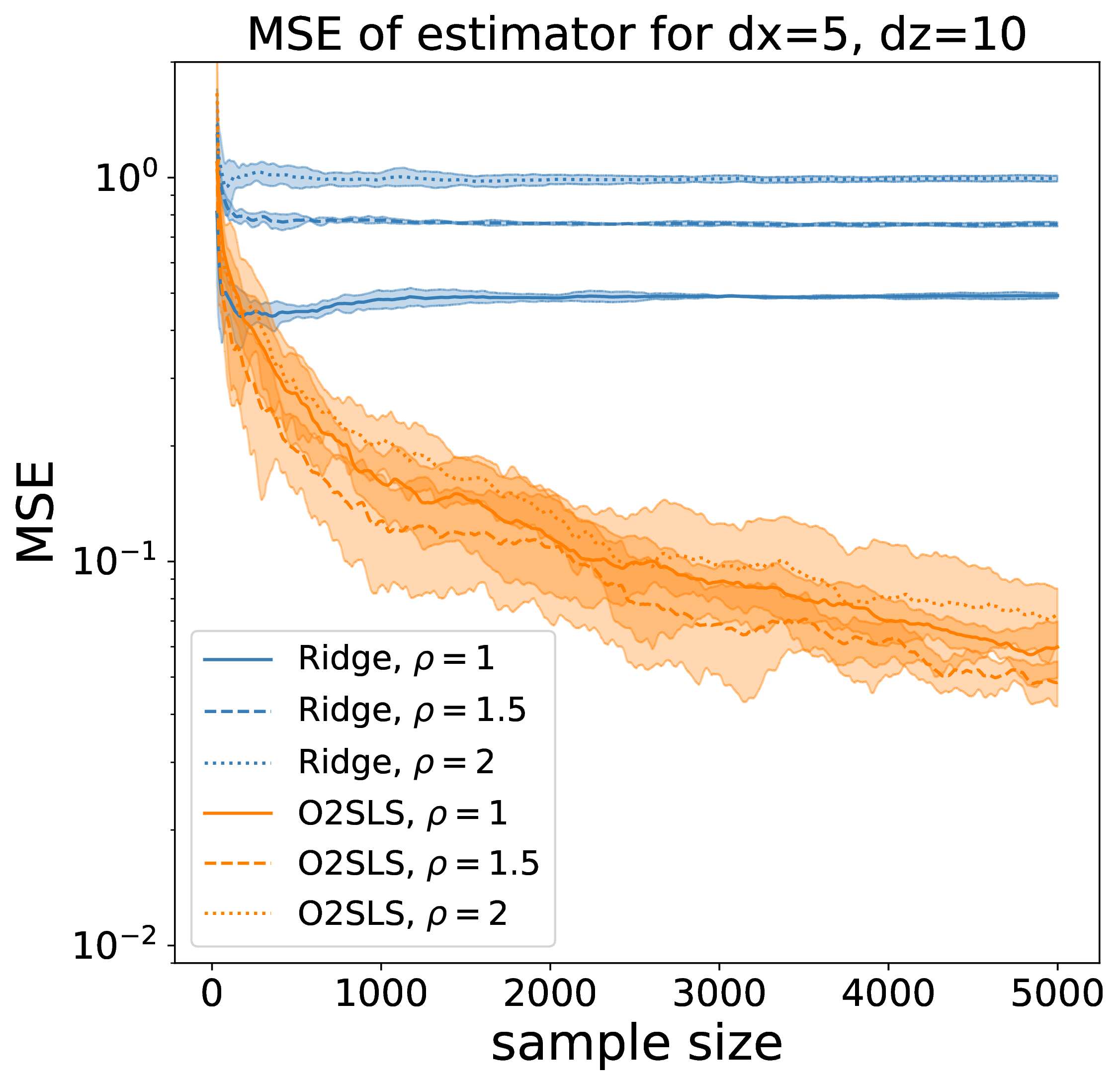}
    \caption{MSE, $d_{\bx}=5, d_{\bz}=10$}
  \label{fig:4b}
\end{subfigure}%
\hfill
\begin{subfigure}[b]{.3\columnwidth}
    \includegraphics[width=\textwidth]{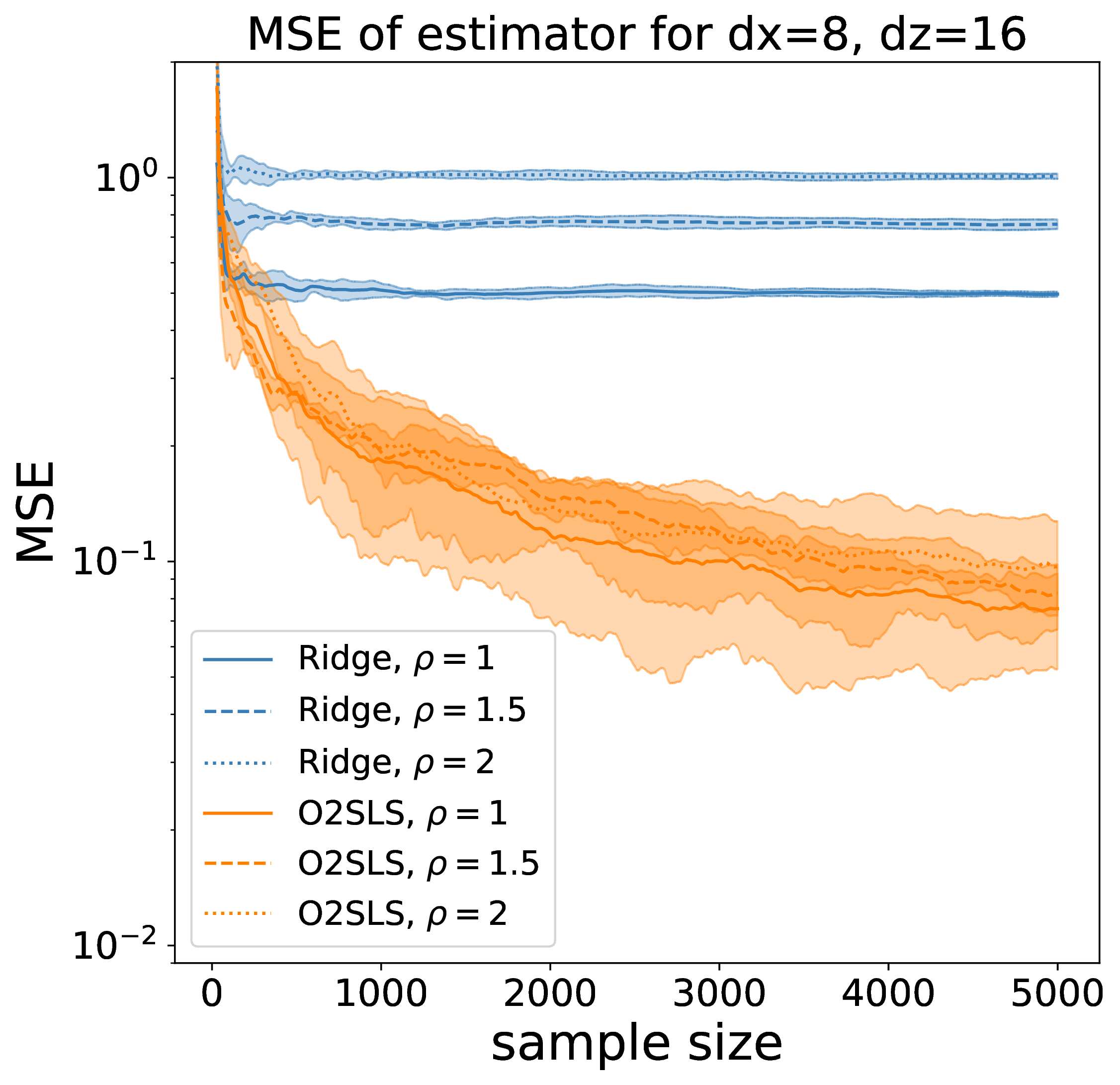}
    \caption{MSE, $d_{\bx}=8, d_{\bz}=16$}
  \label{fig:4c}
\end{subfigure}%
\caption{MSE $=\norm{\bb_t-\bb}_2$ of \oful{} and \ofuliv{} for different endogeneity levels and covariates' dimension.}\label{f:rmse}
\end{figure}

\textbf{Results.} By analysing Figure~\ref{f:rcr} and \Cref{f:rmse}, we observe that when endogeneity is present, both \ridge{} and \vaw{} exhibit similar performances, which are noticeably inferior to that of \otsls{}. Figure~\ref{f:rmse} also shows that the performance gap between \ridge{} and \otsls{} is approximately one order of magnitude in terms of the mean square error obtained by the respective estimators with respect to the true value of the parameter $\bb$. \otsls's gain in performance is reflected in the identification regret.  \textit{\otsls's performance gets increasingly better while compared to \ridge{} and \vaw{} for increasing values of endogeneity ($\rho$)} as shown in Figure~\ref{f:rcr}.

\subsection{Linear bandits with endogeneity on synthetic data}\label{app:experiments_lbe}

Finally, we compare the performance of \ofuliv{} and \oful{}~\citep{abbasi2011improved} for the LBE setting. \oful{} builds a confidence ellipsoid centered at $\bb^{\ridge}_{t}$ to concentrate around $\bb$, while \ofuliv{} uses $\bb^{\otsls{}}_{t}$ to build an accurate estimate of the true $\bb$.
\red{
Our experiments demonstrate the superiority of our instrumental variables approach, which can be used to boost the performances in a linear bandit problem plagued by endogeneity when valid instruments are available. Our approach is superior to the standard \oful{} or even a clever one-stage OFUL reduction.
We beat the previous baselines in terms of regret performances across a wide range of covariate dimensions and endogeneity levels. 
Furthermore, we also experimentally confirm that using the \otsls{} estimators leads to a linear bandit where performances do not depend on an upper bound to the hidden parameters $\bb$ and $\bt$ that we would like to estimate like for \oful{} and one-stage OFUL, where, knowledge of such upper bounds is something unavoidable. This last fact is an additional interesting and unexpected property of our algorithm.

}

\noindent\textbf{Experimental setup.} We adopt the following experimental setup and data generation model.
\begin{align*}
    \bx_{t,a} 
&= 
    \bt \bz_{t,a} +
    \bep_t  \tag{First Stage}
\\
    y_t
&= 
    \bb^\top \bx_{t,A_t} + \eta_t 
=
    \bb^\top \bx_{t,A_t} + (\rho \cdot \epsilon_{t,1}+\widetilde\eta_t) 
    \tag{Second Stage}
\end{align*} 
Here, $\epsilon_{t,1}$ indicates the first component of the vector $\bep_t$, and $\eta_{t} = \rho \cdot \epsilon_{t,1}+\widetilde\eta_{t}$.
We choose $d_{\bx}=\{2,5,8\}$ and $d_{\bz}=\{4,10,16\}$ respectively. In our experiments, we choose arbitrarily $\bb$ as a normalised vector with equal negative entries. 
Therefore, the values in the components are uniquely determined by the dimension $d_{\bx}$.
We choose $\bt=\I_{d_{\bz},d_{\bx}}$ which has ones on  the entries $i=j$ and zeros for $i\neq j$.
Then, we sample at each time $t$ and for every arm $a$ the vectors $\bz_{t,a} \sim \mathcal N_{d_{\bz}}(\vec{0}, \I_{d_{\bz}})$, for each time $t$ the vector noise   $\bep_{t} \sim \mathcal N_{d_{\bx}}(\vec{0}, \I_{d_{\bx}})$, and the scalar noise $\eta_{t} = \widetilde\eta_{t} + \rho \cdot \epsilon_{t,1}$ where $\widetilde\eta_{t} \sim \mathcal{N}_1(0,1)$. We run the algorithms with the same regularisation parameters equal to $\lambda=10^{-1}$.
We repeat our experiments 20 times. 
We average the results, and for each algorithm, we report the mean and standard deviation of the cumulative regret (shaded areas correspond to one standard deviation).
In \Cref{f:bcr} we present the learning curves for the regret $R_T$, while in \Cref{f:bmse} we plot the  $MSE=\norm{\bb_t-\bb}_2$ which shows the estimation error of the two. In addition, we compare them for increasing values of endogeneity, i.e. $\rho$.

\begin{figure}[H]
\centering
\begin{subfigure}[b]{.3\columnwidth}
    \includegraphics[width=\textwidth]{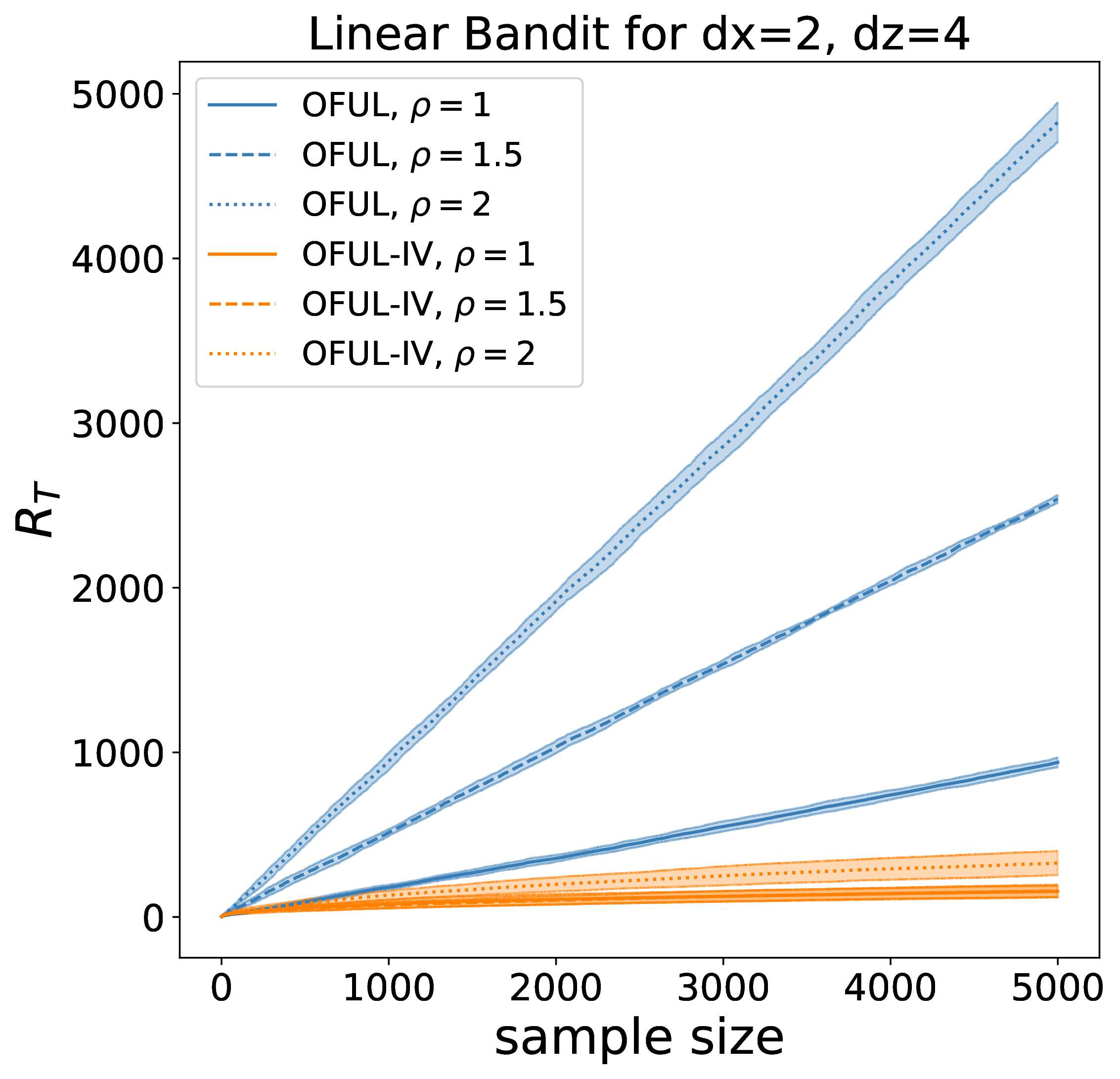}
    \caption{Cum. reg., $d_{\bx}=2, d_{\bz}=4$}
  \label{fig:1a}
\end{subfigure}
\hfill
\begin{subfigure}[b]{.3\columnwidth}
    \includegraphics[width=\columnwidth]{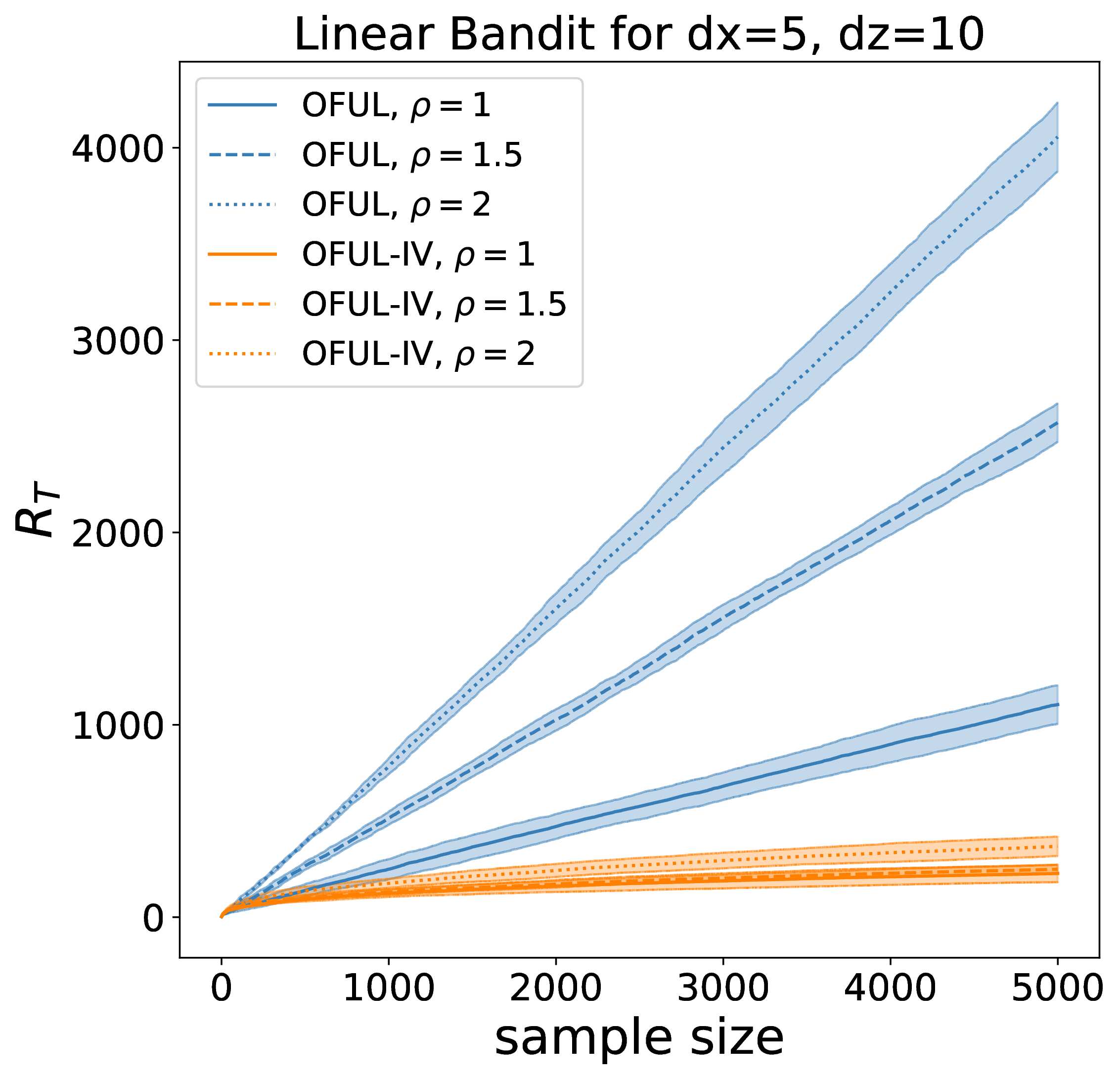}
    \caption{Cum. reg., $d_{\bx}=5, d_{\bz}=10$}
  \label{fig:1b}
\end{subfigure}%
\hfill
\begin{subfigure}[b]{.3\columnwidth}
    \includegraphics[width=\textwidth]{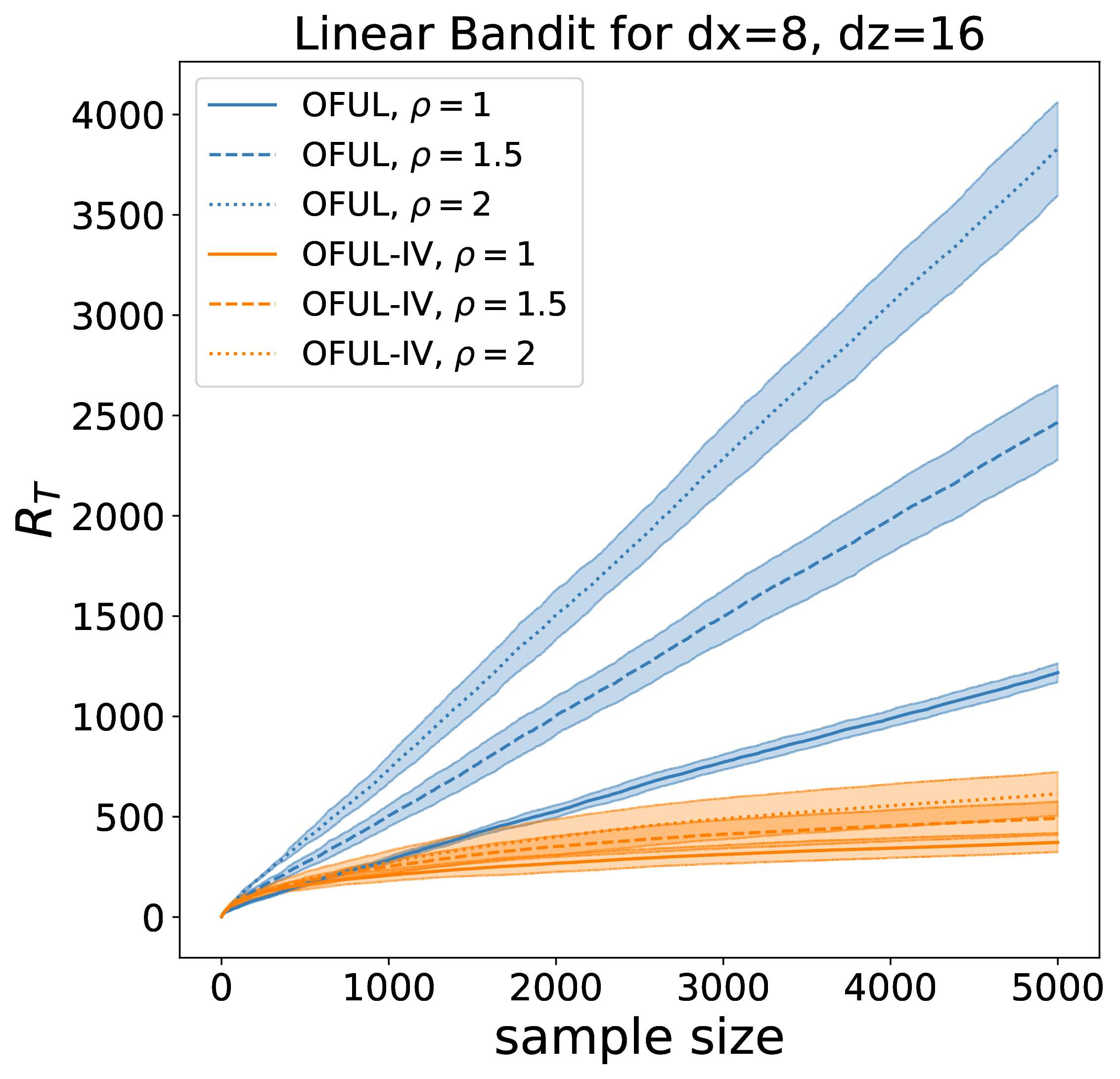}
    \caption{Cum. reg., $d_{\bx}=8, d_{\bz}=16$}
  \label{fig:1c}
\end{subfigure}%
\caption{Cumulative regret of \oful{} and \ofuliv{} for different endogeneity levels and covariates' dimension.}\label{f:bcr}
\end{figure}

\begin{figure}[H]
\centering
\begin{subfigure}[b]{.3\columnwidth}
    \includegraphics[width=\textwidth]{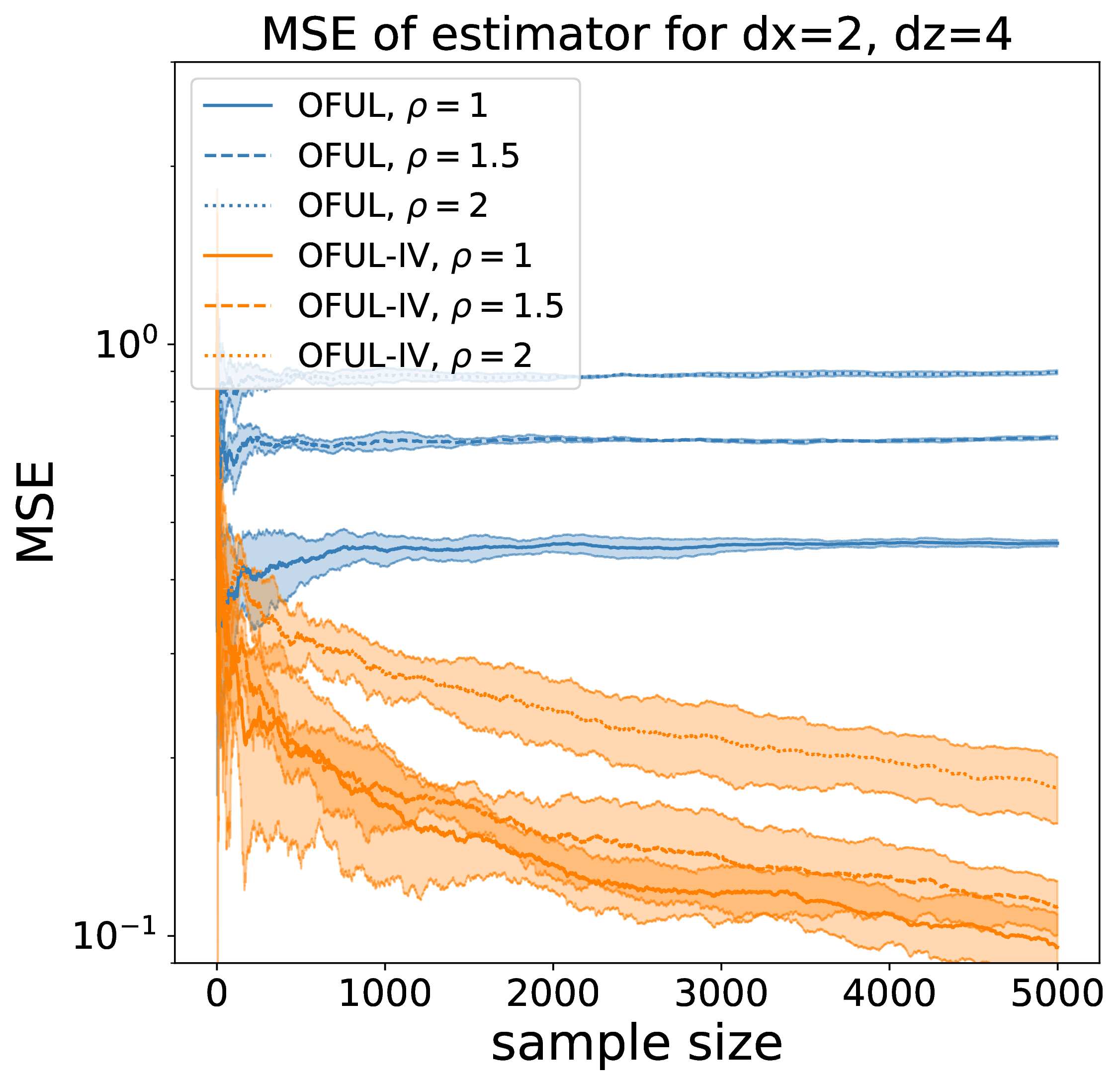}
    \caption{MSE, $d_{\bx}=2, d_{\bz}=4$}
  \label{fig:2a}
\end{subfigure}
\hfill
\begin{subfigure}[b]{.3\columnwidth}
    \includegraphics[width=\columnwidth]{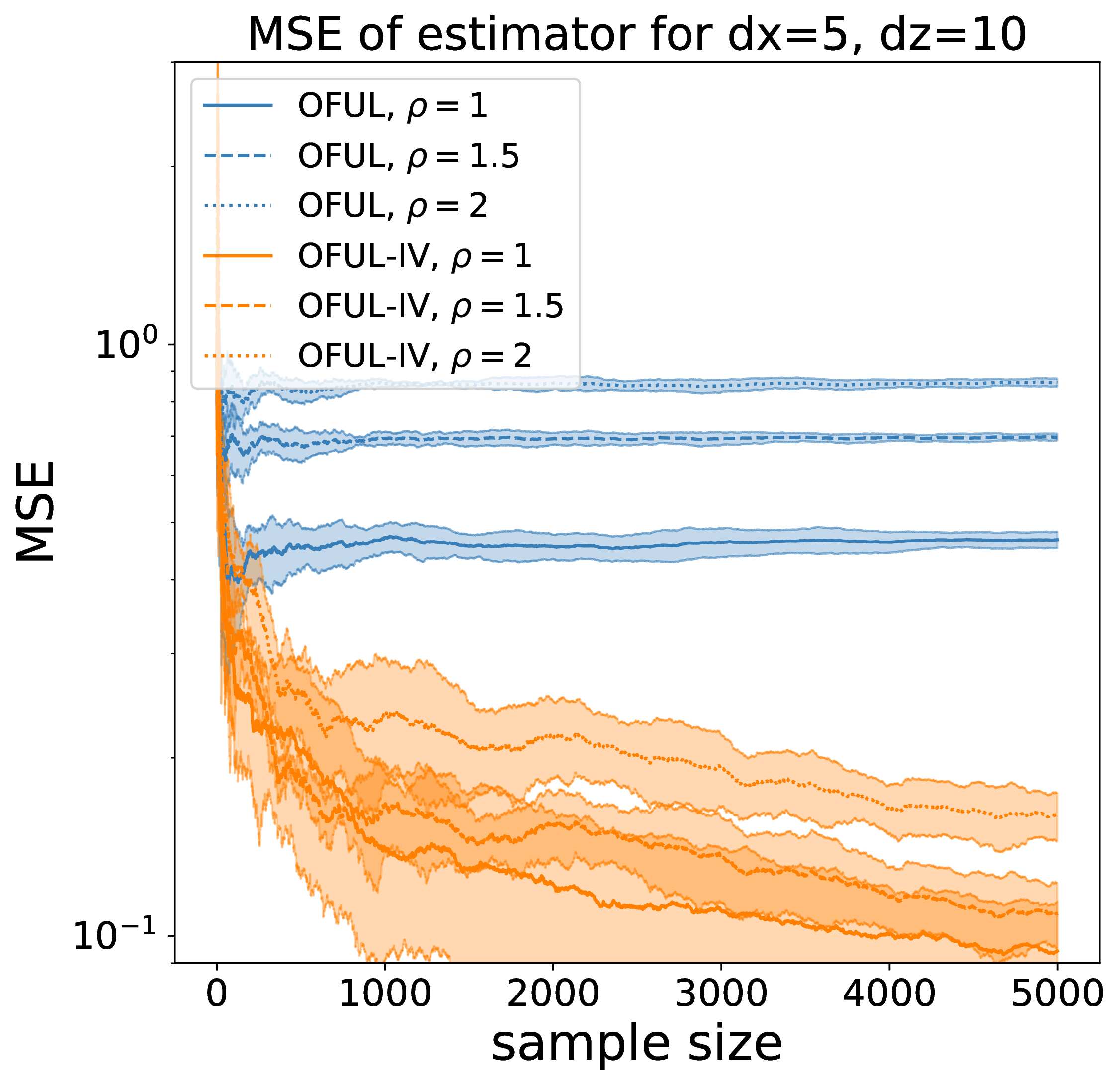}
    \caption{MSE, $d_{\bx}=5, d_{\bz}=10$}
  \label{fig:2b}
\end{subfigure}%
\hfill
\begin{subfigure}[b]{.3\columnwidth}
    \includegraphics[width=\textwidth]{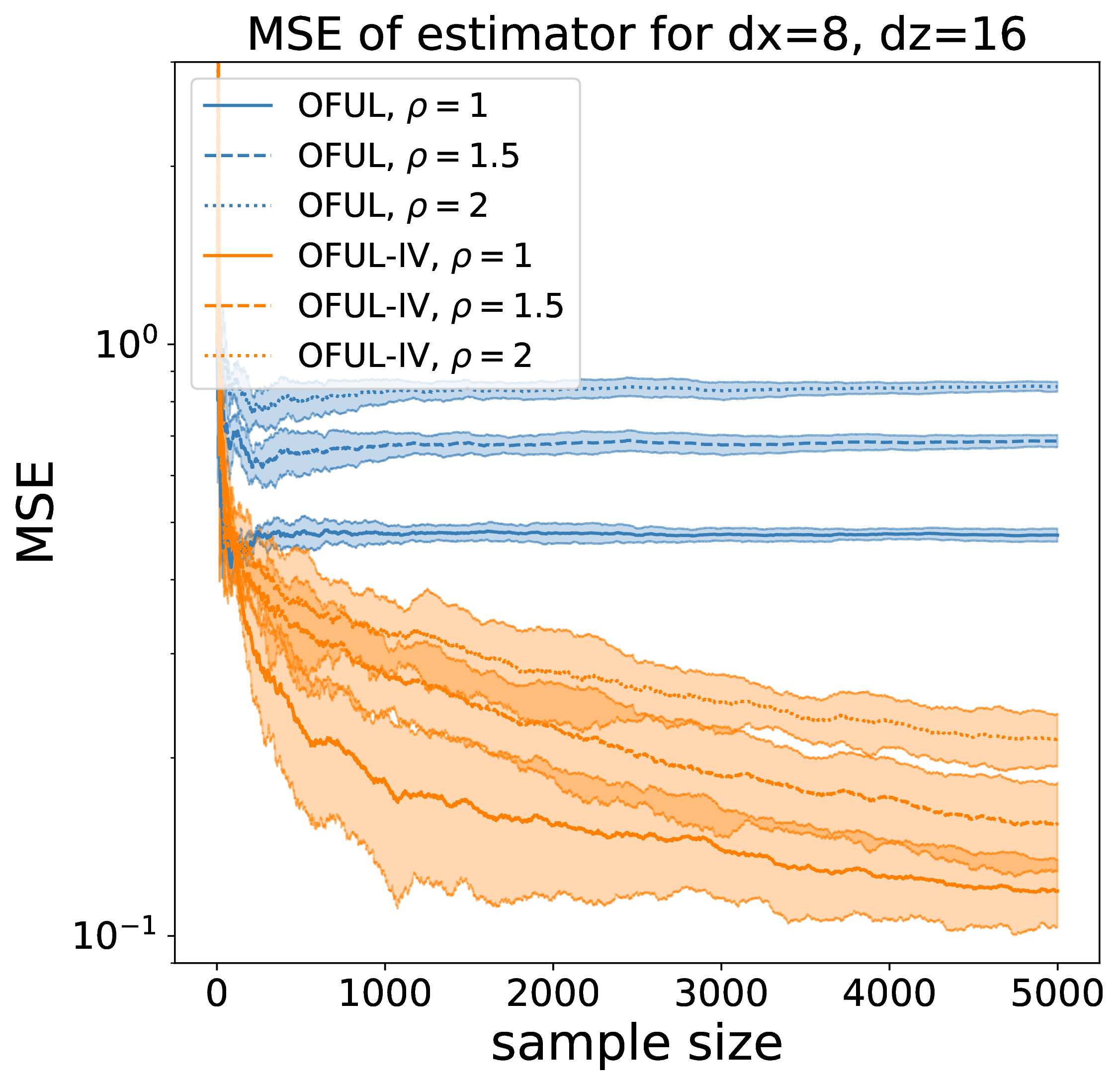}
    \caption{MSE, $d_{\bx}=8, d_{\bz}=16$}
  \label{fig:2c}
\end{subfigure}%
\caption{
MSE $=\norm{\bb_t-\bb}_2$ of \oful{} and \ofuliv{} for different endogeneity levels and covariates' dimension.
}\label{f:bmse}
\end{figure}

\paragraph{Results.} 
The experiments show that for a wide range of dimensions of the covariates and for different values of endogeneity, the MSE of the estimator used by \ofuliv{} is almost 1-order smaller than those of \oful{} over a $T=5 000 $ rounds \Cref{f:bmse}. \textit{Thanks to a better estimate and better confidence intervals, \ofuliv{} has much lower cumulative regret than \oful{} for linear bandits with endogeneity} as illustrated in \Cref{f:bcr}.

\subsubsection{Unavoidable dependence on the norms of hidden parameters for \oful{} and one-stage \oful{} } 

\red{

In this section, we compare \oful{}, one-stage \oful{}, and \ofuliv{} for different endogeneity levels $\rho$ and values of the parameter $S$ defined as  $S\triangleq \norm{\bb}_2=\norm{\bt \bb}_2$, where the equalities follow from the choice  $\bb= -S \cdot \Vec{1}_{d_{\bx}}/\norm{\Vec{1}_{d_{\bx}}}_2 $ and $\bt=\I_{d_{\bz},d_{\bx}}$ in the setting of Appendix~\ref{app:experiments_lbe}. 
We fix the dimension of the covariates to $d_{\bx}=2$ and $d_{\bz}=4$, but similar results hold in general for the over-identified case  $d_{\bz}\geq d_{\bx}$.

 One-stage \oful{} reduction is defined in Section~\ref{sec:ofuliv}, by $y_t =(\bt\bb)^{\top} \bz_t+ \bb^\top \bep_t +\eta_t$. The average performance of one-stage \oful{} is dependent on $\bb$, which directly goes into the noise variance $\sigma_{new}^2 = 2(\|\bb\|_2^2 \sigma_{\bep}^2 + \sigma_{\eta}^2)$, and $\|\bt \bb\|_2$ goes into the index computation of \oful{} through the ellipsoid radius~\citep[Thoerem 2]{abbasi2011improved}.
  \textit{Favourable to \oful{} and one-stage \oful, we use knowledge of $S$ for them, which might be unavailable in reality}. In that case, we might use a larger but misspecified upper-bound as a proxy. \textit{We also do not use and need knowledge of $S$ for \ofuliv.} 
  
  We show how this dependency can cause results of \oful{} and one-stage \oful{} to deteriorate arbitrarily even when we know the tightest possible upper bound on $\|\bb\|_2$ and $\|\bb\bt\|_2$. In principle, the two norms could be different, but the choices of Appendix D of $\bb$ and $\bt$ help us to simplify the ablation study. Thus, we keep the two norms are the same and control them by just one parameter $S$. 
  
\paragraph{Results.}   From our numerical investigation, we observe that the average regret for OFUL-IV is insensitive to $S$.
  On the contrary, $S$ immediately affects one-stage \oful{}, as expected. Also,\textit{ in all the settings with different endogeneity levels $\rho$ and norms $S$, OFUL-IV incurs lower regrets than the others}.}

\begin{figure}[t!]
  \centering
  \begin{subfigure}[b]{0.25\textwidth}
    \includegraphics[width=\textwidth]{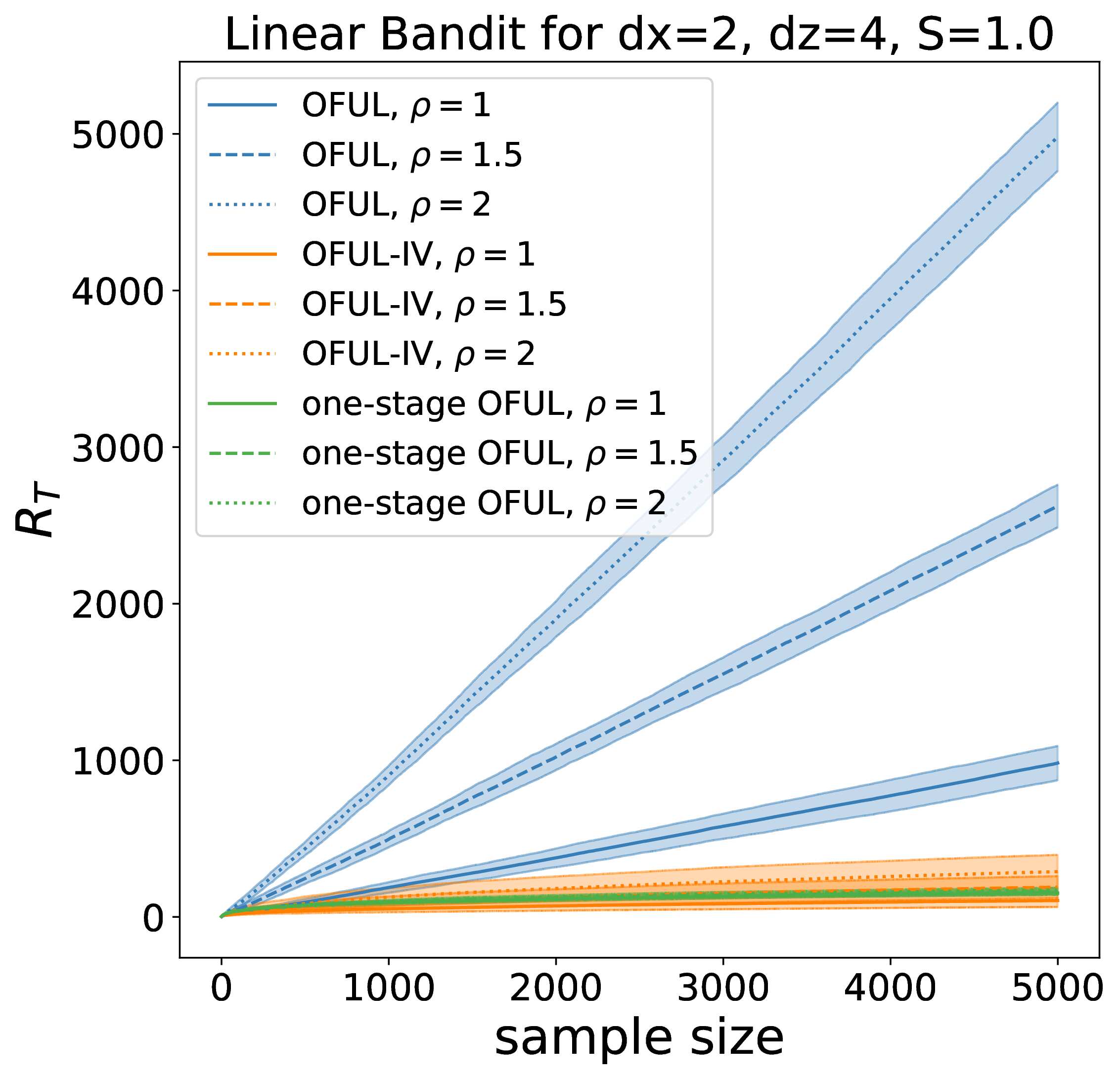}
    \caption{Cum. reg.: $S=1.0$}\label{fig:subfig1}
  \end{subfigure}
  \hfill
  \begin{subfigure}[b]{0.25\textwidth}
    \includegraphics[width=\textwidth]{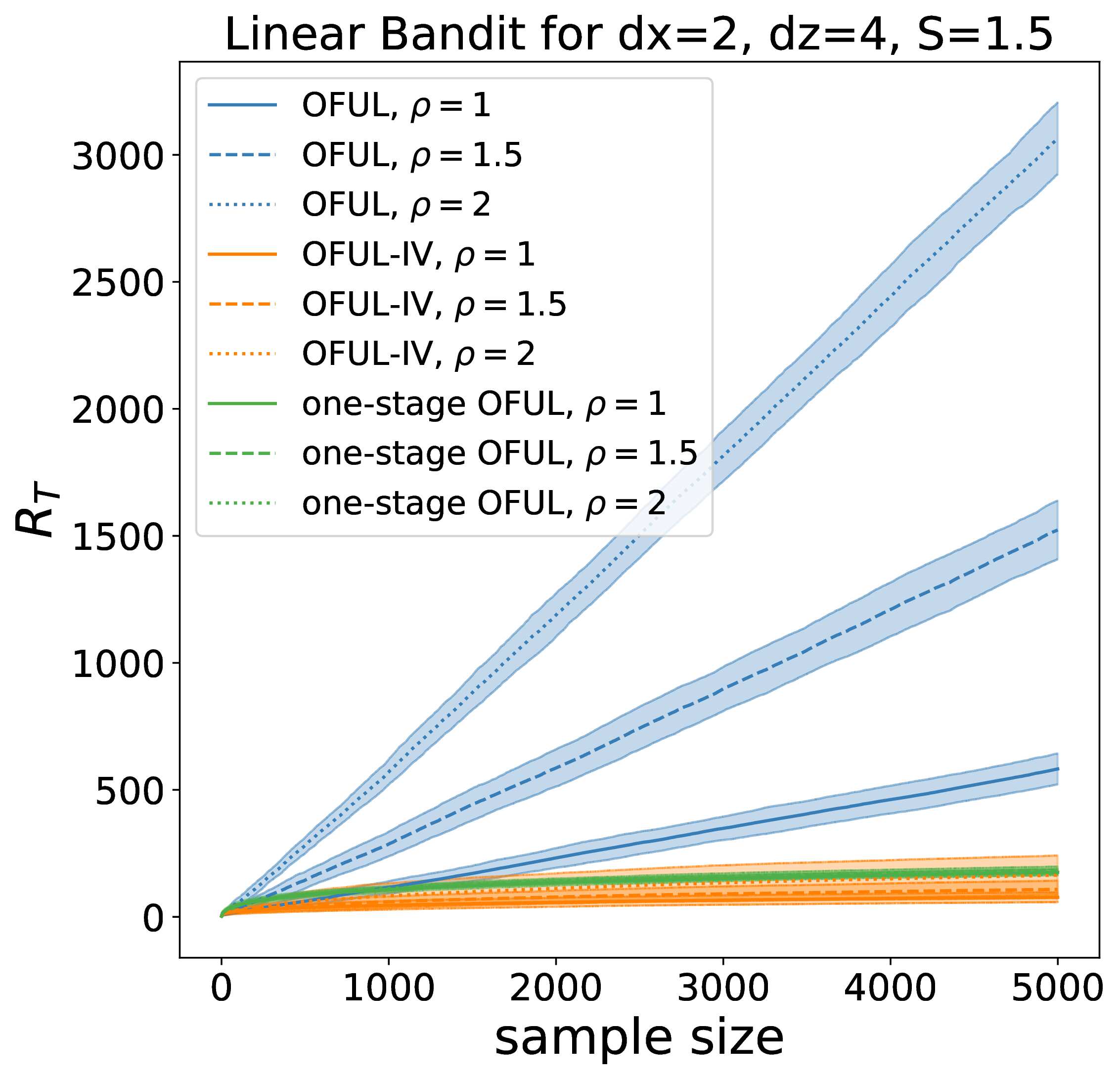}
    \caption{Cum. reg.: $S=1.5$}
    \label{fig:subfig2}
  \end{subfigure}
  \hfill
  \begin{subfigure}[b]{0.25\textwidth}
    \includegraphics[width=\textwidth]{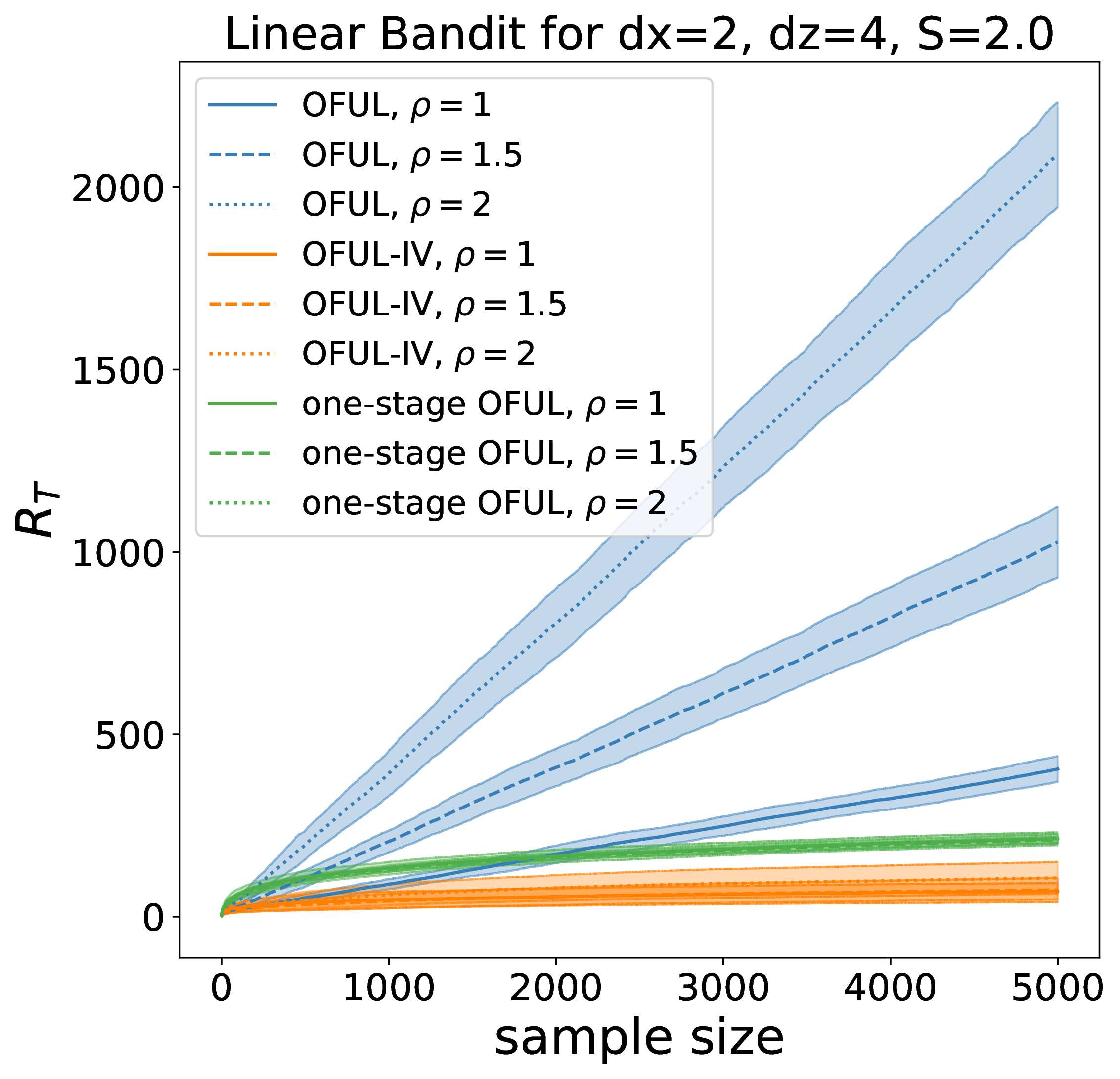}
    \caption{Cum. reg.: $S=2.0$}
    \label{fig:subfig3}
  \end{subfigure}
  
  \vspace*{0.5cm}
  
  \begin{subfigure}[b]{0.25\textwidth}
    \includegraphics[width=\textwidth]{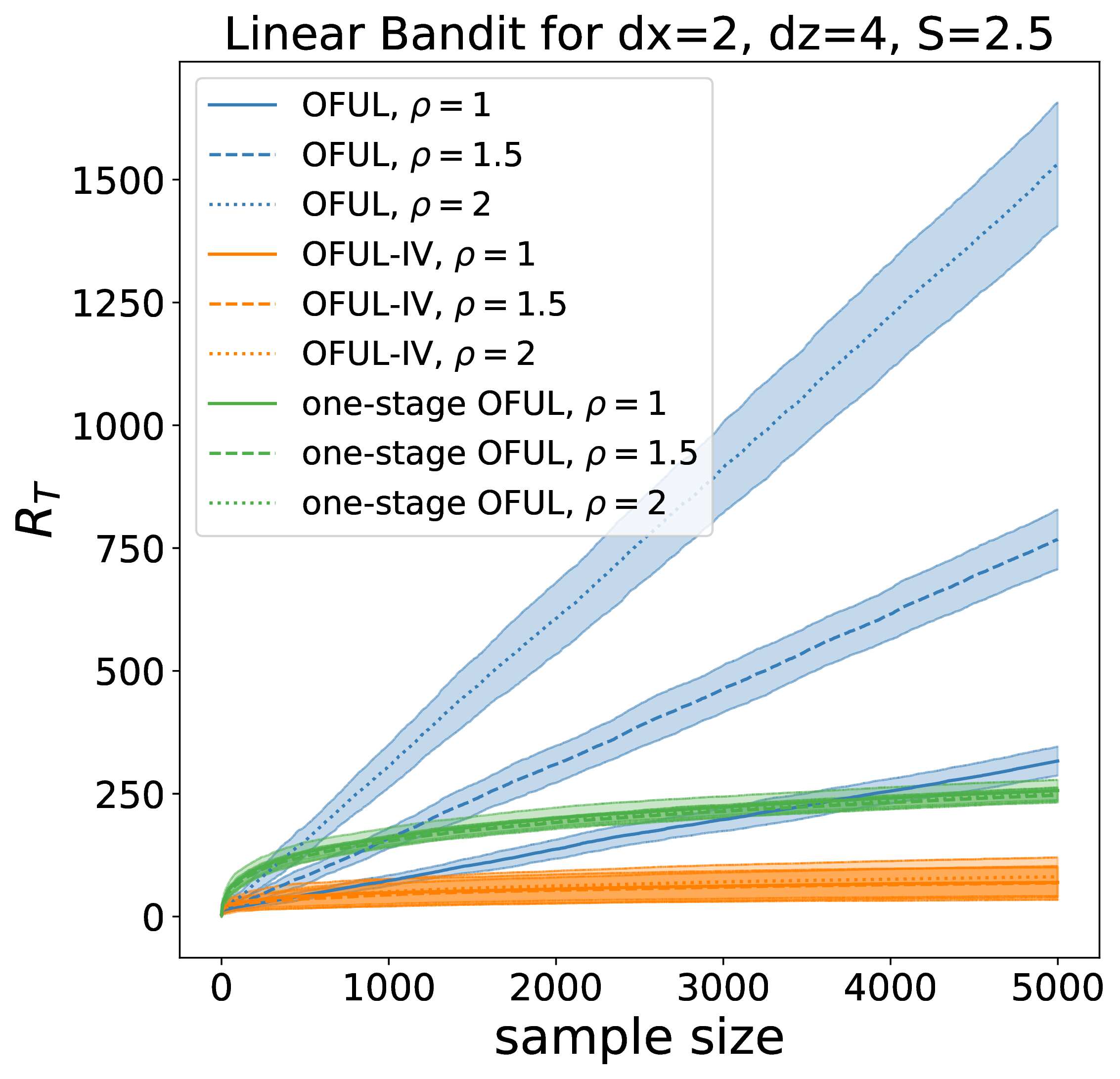}
    \caption{Cum. reg.: $S=2.5$}
    \label{fig:subfig4}
  \end{subfigure}
  \hfill
  \begin{subfigure}[b]{0.25\textwidth}
    \includegraphics[width=\textwidth]{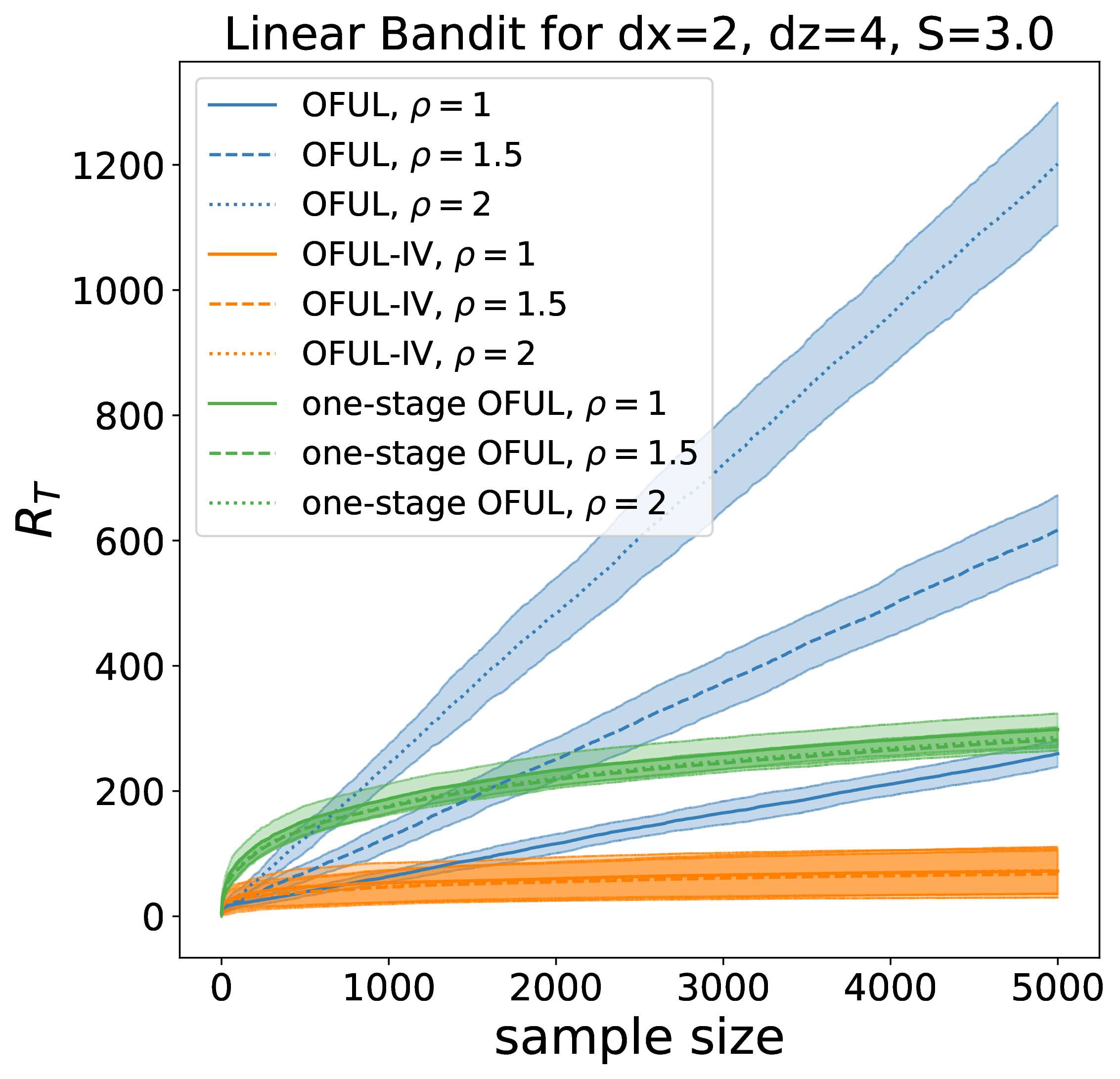}
    \caption{Cum. reg.: $S=3.0$}
    \label{fig:subfig5}
  \end{subfigure}
  \hfill
  \begin{subfigure}[b]{0.25\textwidth}
    \includegraphics[width=\textwidth]{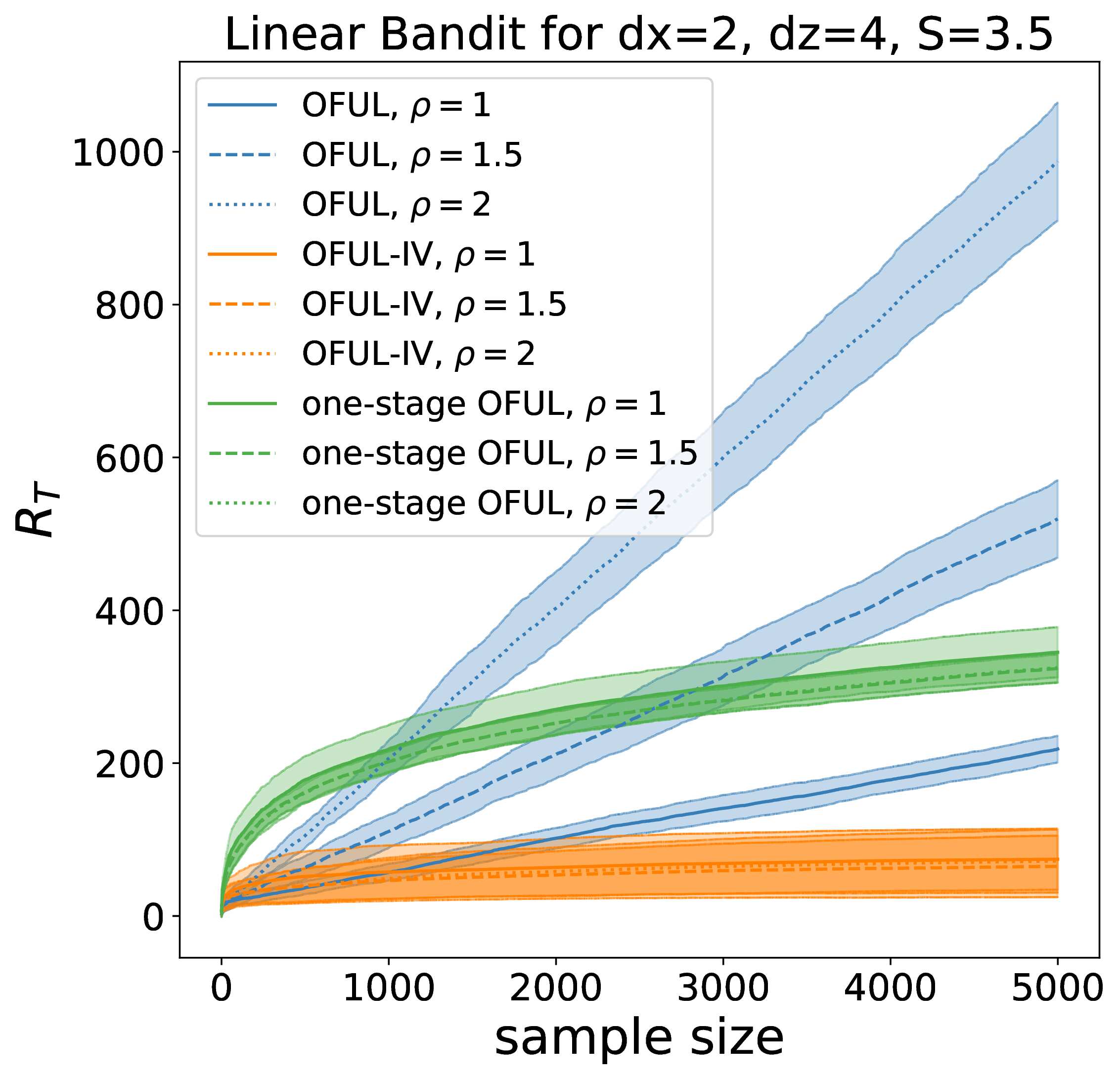}
    \caption{Cum. reg.: $S=3.5$}
    \label{fig:subfig6}
  \end{subfigure}
  
  \vspace*{0.5cm}
  
  \begin{subfigure}[b]{0.25\textwidth}
    \includegraphics[width=\textwidth]{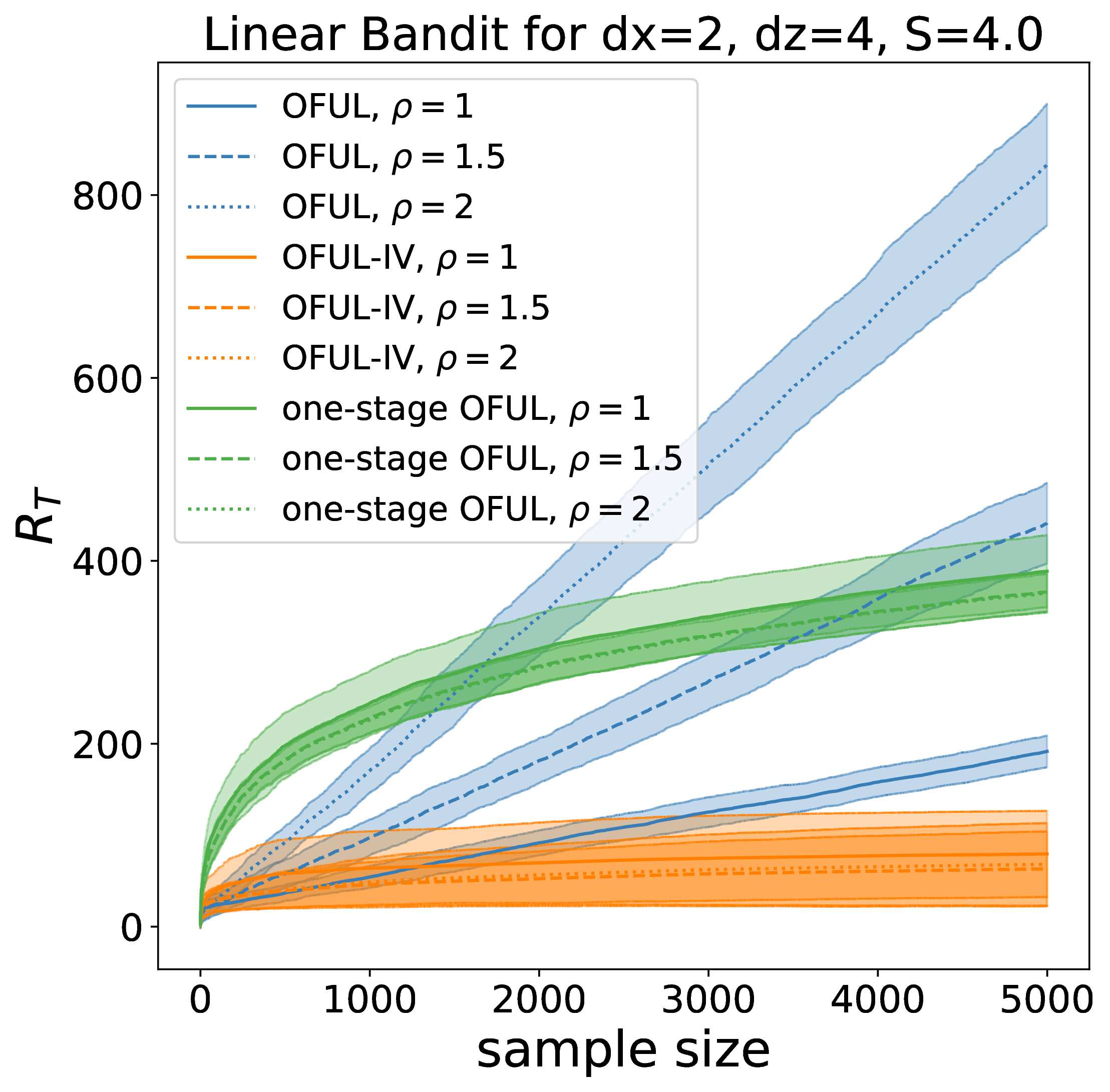}
    \caption{Cum. reg.: $S=4.0$}
    \label{fig:subfig7}
  \end{subfigure}
  \hfill
  \begin{subfigure}[b]{0.25\textwidth}
    \includegraphics[width=\textwidth]{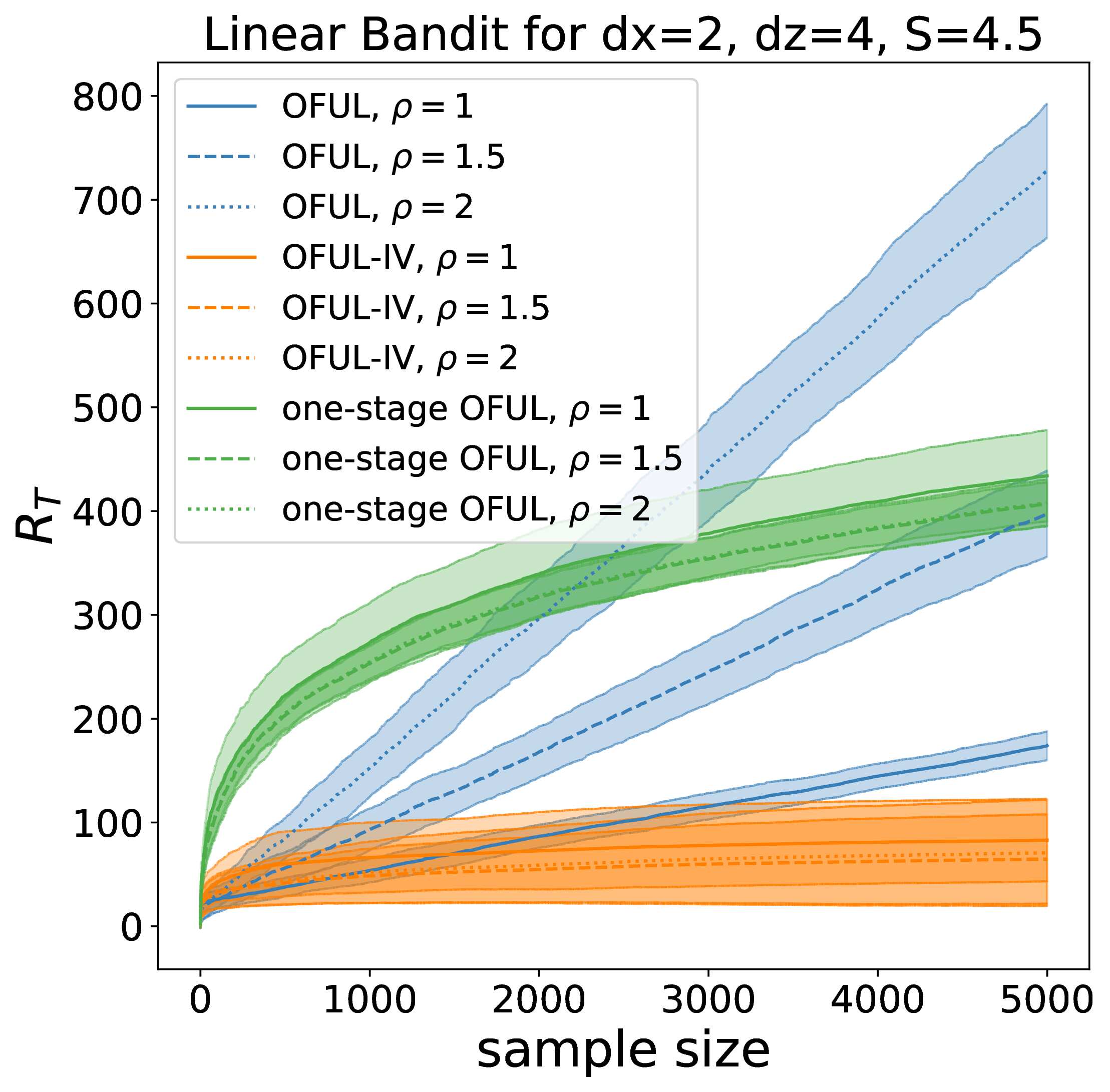}
    \caption{Cum. reg.: $S=4.5$}
    \label{fig:subfig8}
  \end{subfigure}
  \hfill
  \begin{subfigure}[b]{0.25\textwidth}
    \includegraphics[width=\textwidth]{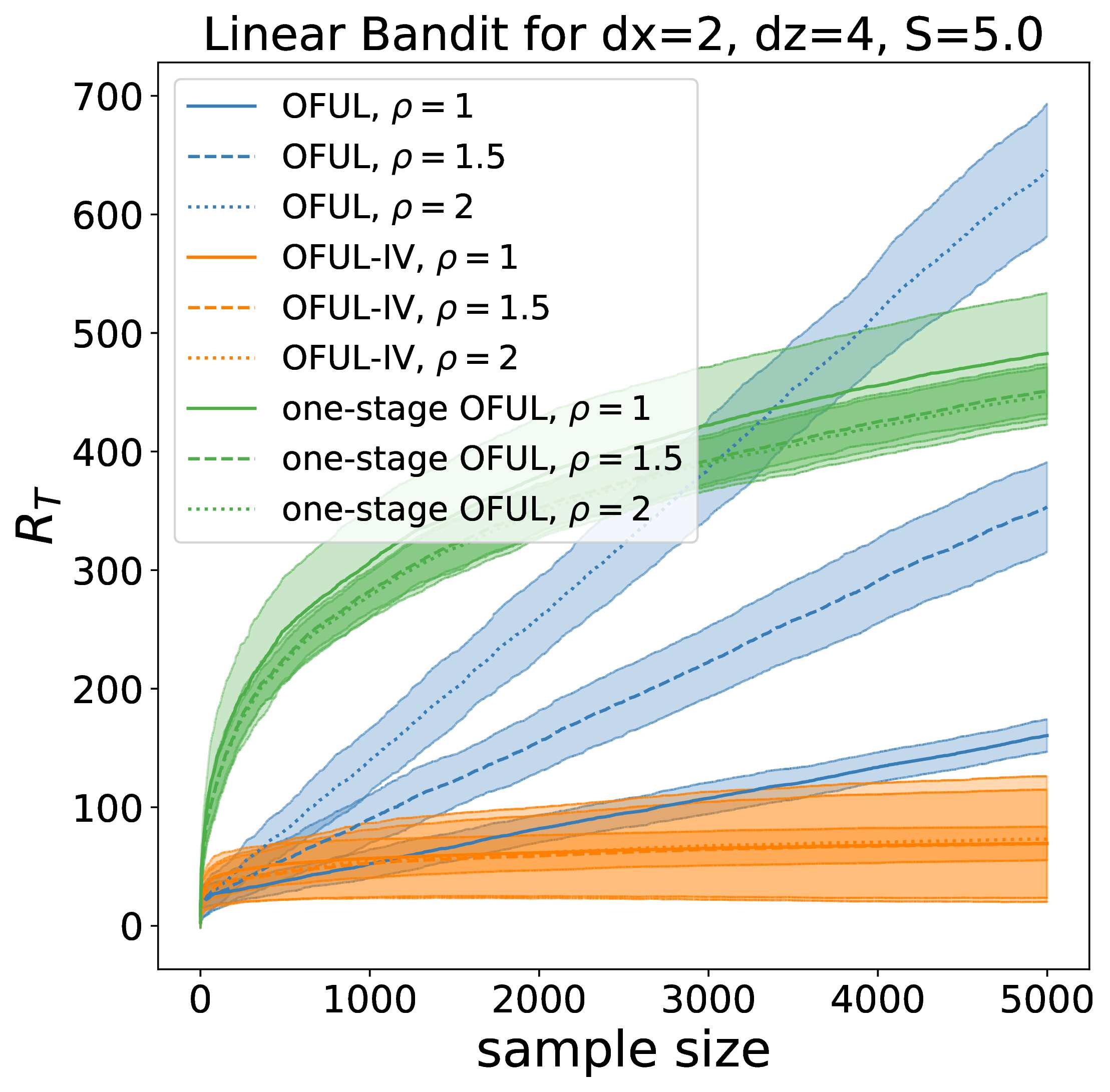}
    \caption{Cum. reg.: $S=5.0$}
    \label{fig:subfig9}
  \end{subfigure}
  \caption{\red{Comparing the evolution of cumulative regrets incurred by \oful{} (blue), \oful{} applied on one-stage reduction of LBE (green), and \ofuliv{} (orange) across different values of the upper bounds of $\bb$, i.e. $S$. For each $S$, we plot the mean and standard deviation of regrets of each of the algorithms across three levels of endogeneity $\rho=1.0, 1.5$, and $2.0$. While performance of \oful{} applied on one-stage reduction of LBE (green) deteriorates with increasing value of $S$, performance of \ofuliv{} is independent of $S$ and incurs the least cumulative regret for all the settings.}}
\end{figure}


\newpage
\subsection{Price-Sales Dynamics (PSD)}

 In \Cref{s:intro}, we introduced in \Cref{example1} a compelling scenario where a market analyst aims to understand the impact of item prices ($\text{Price}_t$) on item sales ($\text{Sales}_t$), using a continuous stream of daily data. However, in this context, assuming exogeneity of price in the model is not viable due to the presence of unobserved confounding variables ($\text{Event}_t$), which remains unaccounted for in the regression model, resulting in an omitted variable. These confounders can influence both the noise term ($\eta_t$) and the price ($\text{Price}_t$) itself. 
In \Cref{sec:ofuliv} we revisit \Cref{example1}, and we formulate it as a decision process by introducing a bandit setting where the agent has to decide on a price ($\text{Price}_{t,a}$) among a feasible set of prices in order to increase the sales ($\text{Sales}_t$). The agent also has access to a set of suppliers (contexts) such that each price corresponds to a compatible material cost ($\text{MaterialCost}_{t,a}$). 

\paragraph{Data generation with PSD.} To investigate these two scenarios, we generate semi-synthetic data according to the generating processes in
in \Cref{a:psdgen}. We take $d_{\bx} = d_{\bz} = 1$, and then we sample for each time $t$ the endogenous $\text{Event}_t\sim \text{Bernoulli}(0.1)$,  and the exogenous noises $\epsilon_{F,t}\sim\mathcal{N}_1(0,0.01)$, 
$\eta_{S,t}\sim\mathcal{N}_1(0,0.1)$. The Random variable $\text{MaterialCost}_t\sim\mathcal{U}_1$ is sampled for each time $t$ for the online regression setting and also for each arm $a$ in the pricing setting.
We fix the hidden constants of the problem to $\theta =1$ and $\beta = -1$.

\begin{table}[ht]
\footnotesize{
    \centering
    \begin{tabular}{l|l|l}
    & \textbf{Online regression with PSD} &  \textbf{Pricing with PSD} \\ \hline
    1st Stage 
    & 
    $\underbrace{\text{Price}_t}_{\text{cvt } {x}_t} = 
    \theta  \underbrace{\text{MaterialCost}_t}_{\text{IV } z_t} + \underbrace{\rho_{F}  \text{Event}_t +\epsilon_{F,t}}_{\text{exogenous noise }\epsilon_t}$ 
    & 
    $\underbrace{\text{Price}_{t,a}}_{\text{arms/cvts } {x}_{t,a}} 
= 
    \theta \underbrace{\text{MaterialCost}_{t,a}}_{\text{arms/IVs } z_{t,a}} + \underbrace{\rho_{F}  \text{Event}_t +\epsilon_{F,t}}_{\text{exogenous noise } \epsilon_t} $
    \\
    2nd Stage 
    & 
    $\underbrace{\text{Sales}_t}_{\text{outcome } y_t} 
    = 
    \beta  \underbrace{\text{Price}_t}_{\text{cvt } {x}_t} + \underbrace{\rho_{S}  \text{Event}_t + \eta_{S,t}}_{\text{endogenous noise }\eta_t}$
    & 
    $\underbrace{\text{Sales}_{t}}_{\text{outcome } y_t} 
= 
    \beta  \underbrace{\text{Price}_{t,A_t}}_{\text{chosen cvt } {x}_{t,A_t}} + \underbrace{\rho_{S}  \text{Event}_t + \eta_{S,t}}_{\text{endogenous noise }\eta_t}$
    \end{tabular}
    \caption{Experimental settings for PSD. We consider two settings. \textbf{Online regression with PSD} aims at learning the dynamic online, and performances are measured by the identification regret. \textbf{Pricing with PSD} is formulated as a linear bandit problem; we devise sampling strategies using a new oracle and confidence ellipsoid, and we study its regret. The term ``cvt'' stands for ``covariate'', and ``cvts'' stands for ``covariates''.}\label{a:psdgen}
    }
\end{table}

\paragraph{Results for online regression with PSD.}

To control the degree of endogeneity, we manipulate the impact of the unobserved variable $\text{Event}_t$ on both $\text{Price}_t$ and $\text{Sales}_t$ by adjusting the values of two parameters: $\rho_F$ and $\rho_S$. 
From the perspective of endogeneity, it is crucial to assess the accuracy of parameter estimation. We investigate this by presenting regret learning curves for varying levels of increasing endogeneity in \Cref{fa:lcps}. In \Cref{fa:heatmap}, we show the final regret across a wide range of parameter values at the time horizon.

\begin{figure}[ht]
\centering
\begin{subfigure}{0.45\textwidth}
  \centering
  \includegraphics[width=0.6\columnwidth]{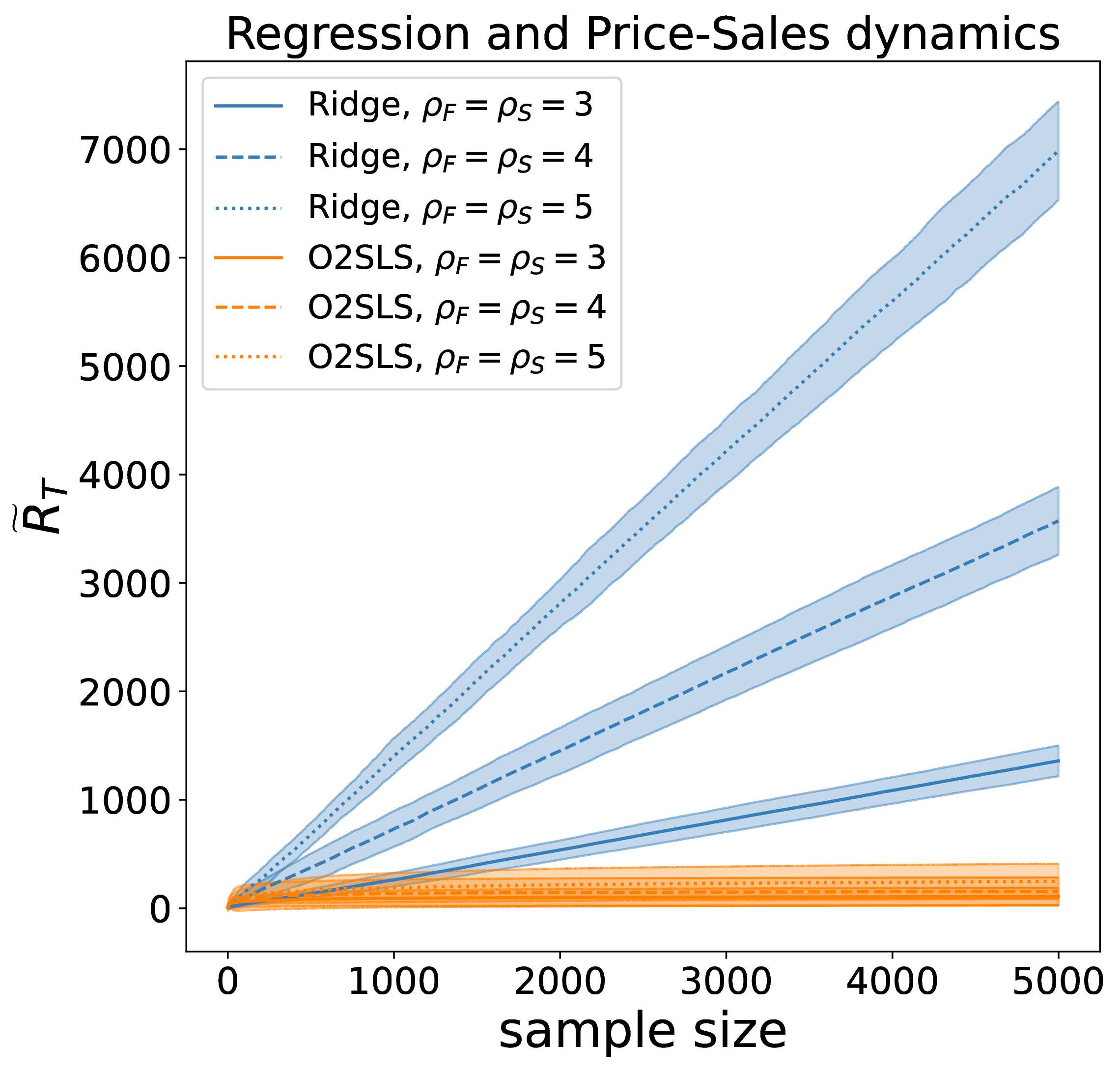}
  \caption{Learning curves for identification regret.}
  \label{fa:lcps}
\end{subfigure}%
\begin{subfigure}{0.55\textwidth}
  \centering
  \includegraphics[width=0.9\columnwidth]{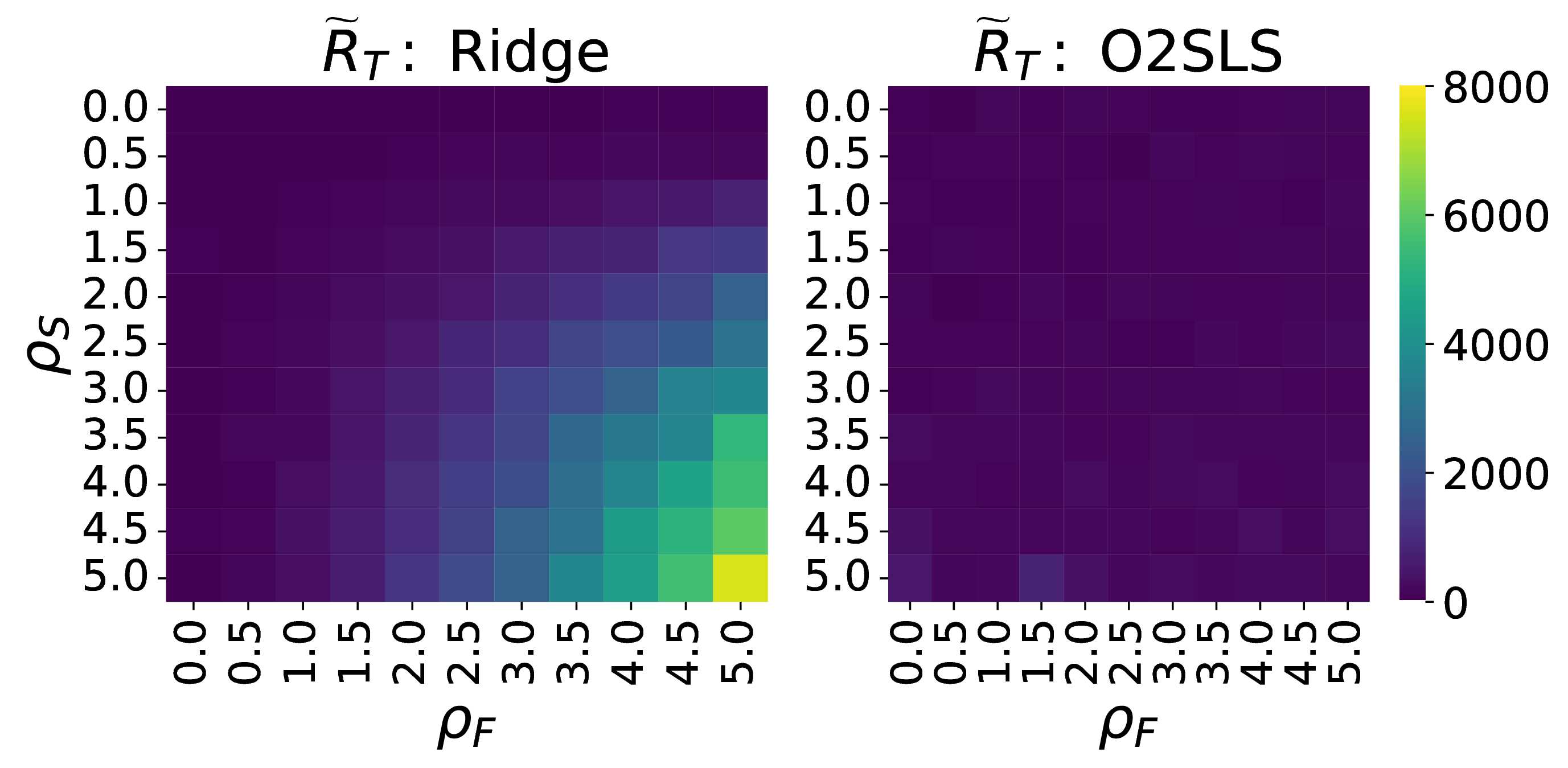}
  \caption{Final step $\widetilde R_T$, different endogeneity ranges.}
  \label{fa:heatmap}
\end{subfigure}
\caption{
In \Cref{fa:lcps} are shown the learning curves for the identification regret $\widetilde R_T$ of Online Ridge and \otsls{} over $T=5000$ steps, and for $\rho_S=\rho_F= 3,4,5$. 
\Cref{fa:heatmap} is the identification regret after $T = 5000$ steps of Online Ridge (left) and \otsls{} (right), for different combination of $\rho_F$ and $\rho_S$.
With the increasing endogeneity, \otsls{} attains lower regret than \ridge{} in all experiments.
}
\end{figure}

In \Cref{fa:lcps}, we examine the learning curves for the identification regret while varying the level of endogeneity. We visually represent the learning curves, capturing the mean values and their corresponding standard deviations. These curves are generated by averaging the results over 20 independent runs for different parameter combinations. Specifically, we consider increasing parameters $\rho_S=\rho_F = 3,4,5$ (higher $(\rho_S,\rho_F)$ indicates higher endogeneity) to show how performances deteriorate for increasing endogeneity levels. 
\otsls{} consistently outperforms Online Ridge (\ridge{}) across the entire parameter space considered.
The performance-gain increases with increasing values of the two parameters, i.e. with increasing levels of endogeneity.

In Figure \ref{fa:heatmap} we show the final regret across a wide range of parameter values at the time horizon $T=5000$. On the left, we present the identification regret after $T=5000$ steps of Online Ridge, while the right side is the regret for \otsls{}. We consider all combinations of $\rho_F$ and $\rho_S$ within the range of values $\rho_S,\rho_F \in \{0, 0.5,1,1.5,2, 2.5, 3,3.5,4,4.5,5 \}$. This range encompasses a significant portion of the parameter space we wish to explore and, thus, a wide range of endogeneity.
Figure~\ref{fa:heatmap} validates the claim that \otsls{} outperforms Online Ridge in achieving lower regret across a wide range of parameter values.

Upon examining the results, a clear pattern emerges. \otsls{} consistently identifies the true parameter correctly across all levels of endogeneity, maintaining accurate estimates. This demonstrates the method's ability to handle endogeneity and provide reliable parameter estimation.
Conversely, Online Ridge displays a notable decline in performance as the level of endogeneity increases. The estimated values deviate increasingly from the true parameter, leading to incorrect estimation. This emphasizes the vulnerability of Online Ridge to the detrimental effects of endogeneity, resulting in biased parameter estimates.
These findings highlight the superior performance of \otsls{} in handling endogeneity and preserving accurate estimation, in contrast to the deteriorating performance of Online Ridge with increasing endogeneity levels. The robustness of \otsls{} positions it as a tool for tackling endogeneity-related challenges in parameter estimation.


\paragraph{Results for pricing with PSD.}

Similarly to the regression setting, we want to investigate the performances of our approach in the case of pricing with PSD.
This can be phrased as a linear bandit problem with endogeneity.
As before, to control the degree of endogeneity, we manipulate the impact of the unobserved variable $\text{Event}_t$ on both the set of prices  $\text{Price}_{t,a}$ and $\text{Sales}_{t}$ by adjusting the values of two parameters: $\rho_F$ and $\rho_S$. 
In this online decision-making scenario, it is crucial to assess parameter estimation accuracy like for online regression with PSD.  This is done in \Cref{fa:bb} and \Cref{fa:bmse11}.
In \Cref{fa:lc11}, we show cumulative regret performances of the two possible approaches: \oful{}, which does not consider the endogeneity of the setting and \ofuliv{}, which does.

In \Cref{fa:bb} and \Cref{fa:bmse11}, we show that for increasing levels of endogeneity, both \oful{} and \ofuliv{} perform worse in estimating correctly the true parameter $\beta=-1$. In particular, when the endogeneity is more severe \oful{} estimates it to have a positive sign mistakenly. This is reflected in very poor performances of \oful{} against \ofuliv{} in terms of cumulative regret in \Cref{fa:lc11}. 

\begin{figure}[ht]
\begin{subfigure}[b]{.3\columnwidth}
    \includegraphics[width=\textwidth]{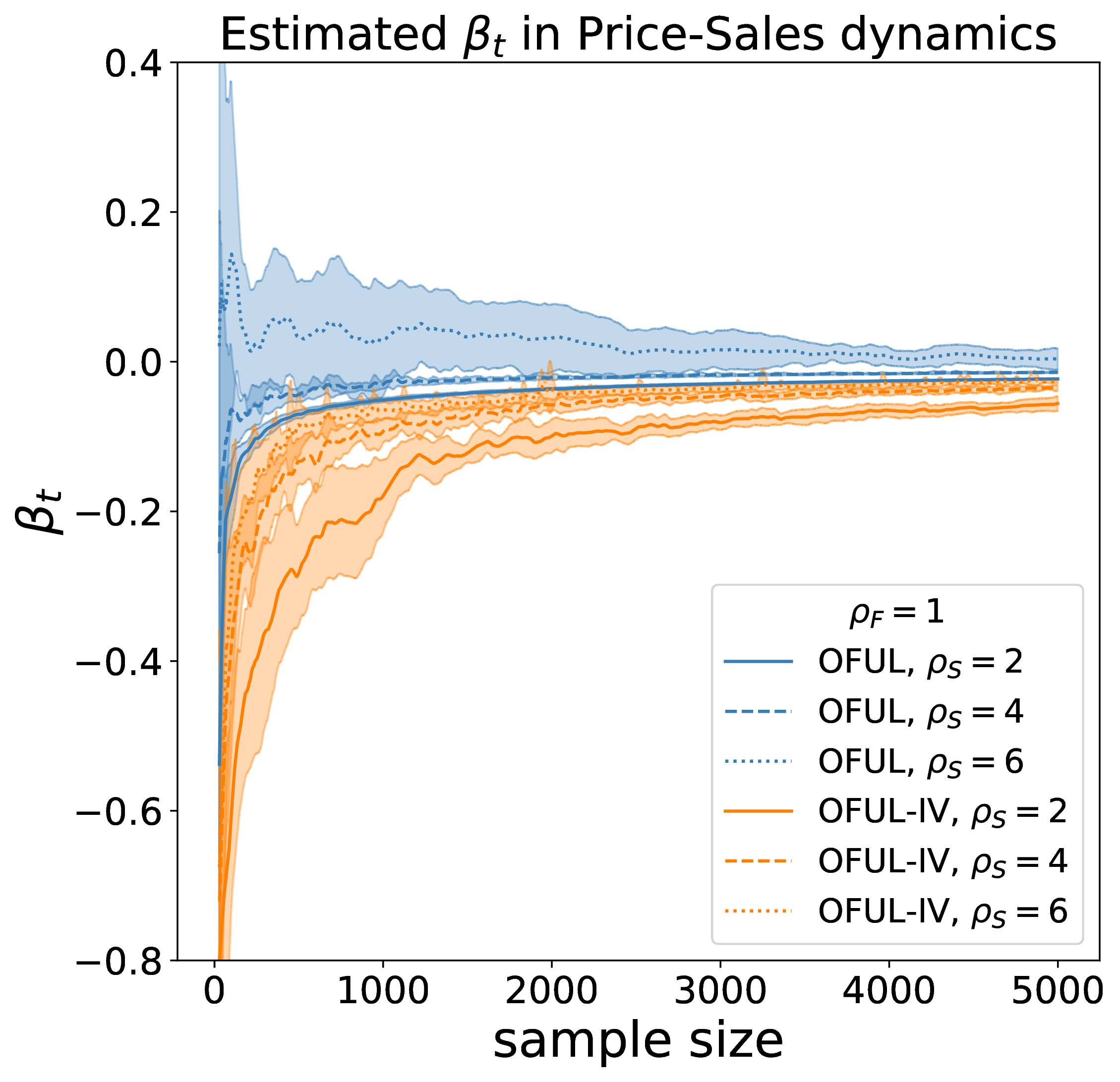}
    \caption{Estimated $\beta_t$.}
  \label{fa:bb}
\end{subfigure}%
\hfill
\begin{subfigure}[b]{.3\columnwidth}
    \includegraphics[width=\columnwidth]{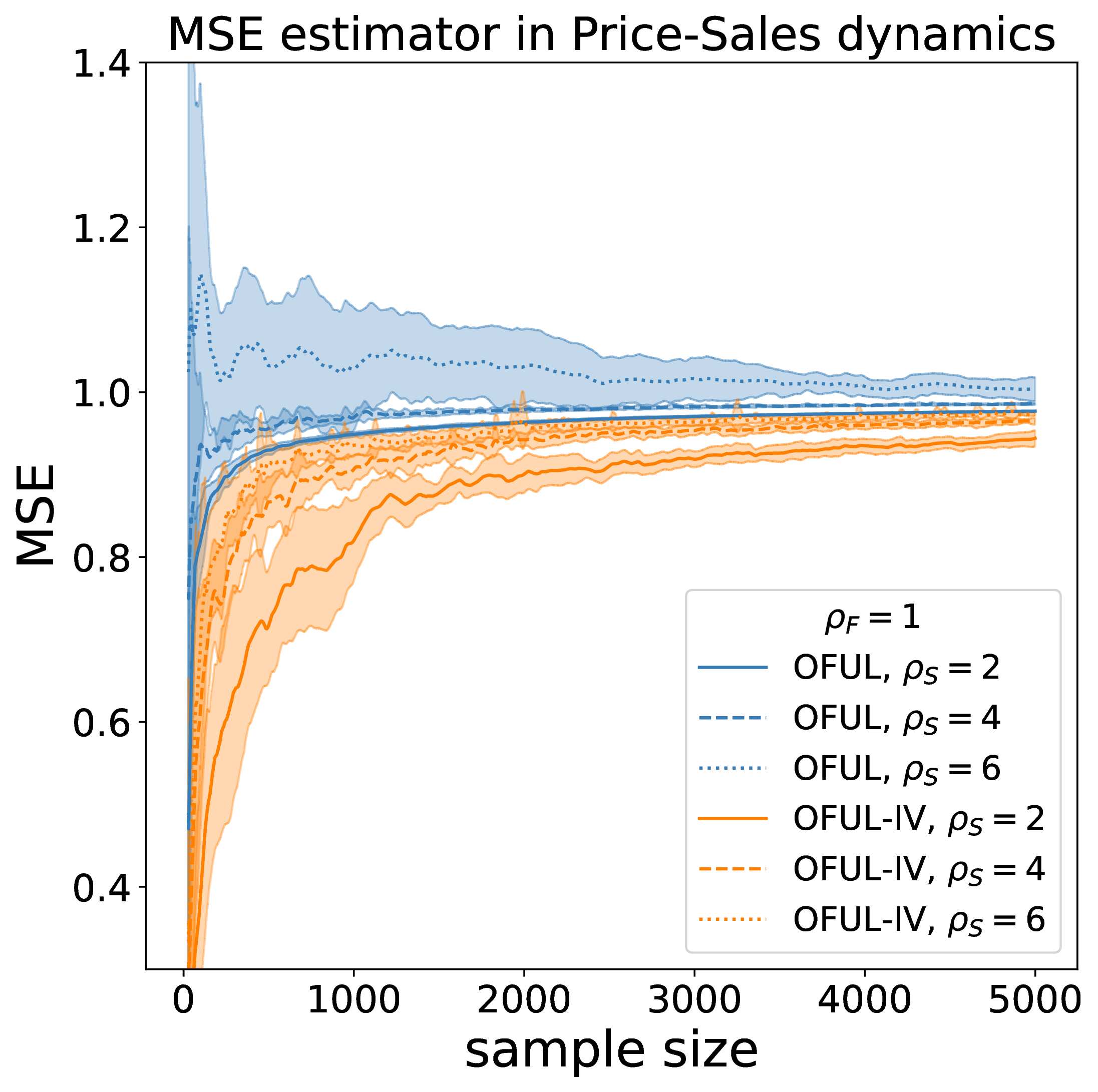}
    \caption{MSE of the estimate $\beta_t$.}
  \label{fa:bmse11}
\end{subfigure}%
\hfill
\begin{subfigure}[b]{.3\columnwidth}
    \includegraphics[width=\textwidth]{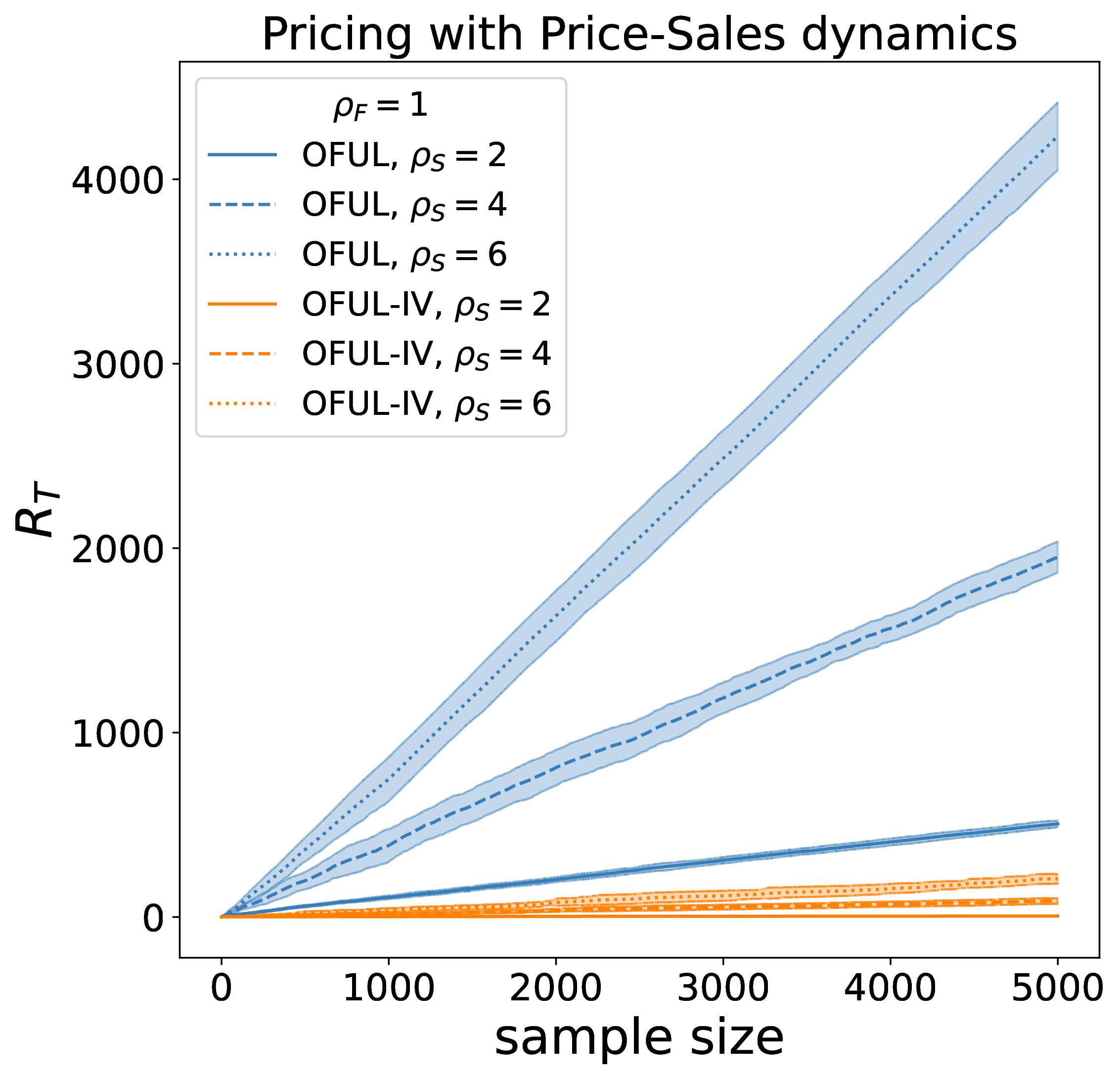}
    \caption{Regret learning curves. }
  \label{fa:lc11}
\end{subfigure}
\caption{Comparison of \oful{} and \ofuliv{} in a pricing with price-sales dynamics setting for a fixed value of $\rho_F=1$ and for increasing endogeneity levels controlled by $\rho_S=2,4,6$. 
Estimated value over a span of $T = 5000$ steps, where shaded regions indicate one standard deviation. 
\Cref{fa:bb} is the estimated value of $\beta_t$ by Online Ridge and \otsls{}, while \Cref{fa:bmse11} is the mean squared error with respect to the true parameter $\beta$.
\Cref{fa:lcps} are learning curves for the regret.
}\label{f:b-psd-cr}
\end{figure}

\newpage
\subsection{An experiment on real data: Online estimation of gasoline consumption}

\paragraph{Motivation in real-life applications.} A real-life example, where applying linear bandits with endogeneity would be important and relevant is the Just-In-Time Adaptive Interventions (JITAI). 

\begin{example}[Just-In-Time Adaptive Interventions (JITAI)~\cite{tewari2017ads}]
JITAI framework is evoked to build mobile health platforms where a number of decisions have to be taken (e.g. ``whether or not to send an activity encouraging message") depending ``on a variety of behavior change domains including alcohol abuse, depression, obesity, and substance abuse", and the data regarding a patient's health condition (``such as GPS location, calendar busyness, and heartrate") is collected through mobile or IoT devices. \citep{tewari2017ads} states that ``Contextual bandit algorithms can be used for personalizing JITAIs" (page 2 of \cite{tewari2017ads}). 

In our contextual bandit formulation, a decision corresponds to an arm, the data regarding a patient form the covariates $X$, and a proximal outcome of a decision (e.g. whether the person gets into an activity or not) leads to the outcome $y$. In this setting, at step $t$, choosing an intervention/decision corresponds to choosing an instrument $Z$. Our goal is to choose a set of instruments at each step $t$ so that the health condition of the patient improves the most. This is an online problem by nature.

Performing a stochastic analysis of an online learning algorithm before deployment is important as it tells us expected gains and limitations, which correspond to the health of actual people. \cite{tewari2017ads} echo our motivation as ``Algorithms designed for the worst-case adversarial framework can perform sub-optimally when data is actually generated stochastically". They also state that one of the ``assumptions that make repeated appearance in the theoretical analyses of contextual bandit algorithms" is ``independence" (or exogeneity as we call it), and ``any candidate online learning algorithm needs to be tested for reasonable departures from these ideal assumptions in simulations before being deployed in a real study with users". 

We exactly aim to do that by studying how online regression behaves under endogeneity and what are the departures from well-known behaviours under exogeneity. \cite{tewari2017ads} further state that ``New algorithms that are more robust to failure of assumptions need to be designed and associated guarantees provided." This discussion makes it evident why studying endogeneity is important and is not a theoretical exercise.
\end{example}

\paragraph{Experiments on LBE with real-world data.} Since we do not have access to datasets like Just-In-Time interventions, which are hardly available in public due to their sensitive nature, we demonstrate the practical experiments on simulated datasets. This is not our construct but a known phenomenon in the existing literature on instrumental variable based bandits~\citep{kallus2018instrument,stirn2018thompson}, or tackling action-independent endogeneity~\citep{krishnamurthy2018semiparametric}. But for regression, we use the publicly available dataset from the ``Economic Report of the President 2000 and Census Bureau and Department of Energy" on motor gasoline consumption in the United States between 1970 and 1999.

\paragraph{Data description.} The dataset used in our analysis spans the period from 1970 to 1999, and pertains to the consumption of motor gasoline in the United States. It encompasses various factors, including the price index, disposable income, and price indices of used cars, new cars, and public transport.
The original source of the data is the \emph{"Economic Report of the President 2000 and Census Bureau and Department of Energy"}. However, for a more comprehensive understanding and analysis of this dataset, we also refer to the work of \cite{heij2004econometric}. In their study, they provide an in-depth examination and econometric analysis of the data, offering valuable insights into the relationships and dynamics among the different variables. We summarise the variables appearing in our model in \Cref{tableapp}. 

\begin{table}[ht!]
\centering
\setlength{\abovecaptionskip}{10pt} 
\begin{tabular}{c|c|c}
Observation  &     Symbol                  & Type\\ \hline
year of observation & $\bullet_t$                    & round\\ 
log real gasoline consumption & GC            & outcome $y_t$ \\ 
log real gasoline price index & PG            & endogenous covariate  $x_{t,1}$\\ 
log real disposable income & RI               & exogenous covariate  $x_{t,2}$ / IV $z_{t,1}$\\ 
log real price index of new cars & RPN         & IV  $z_{t,2}$         \\ 
log real price index of public transport & RPT & IV $z_{t,3}$ \\ 
log real price index of used cars & RPU     & IV $z_{t,4}$ \\ 
\end{tabular}
\caption{Observations appearing in the two-stage equations with the symbols used for them and the type of variable they represent in our regression setting.}
\label{tableapp}
\end{table}

In our study, we adopt an online estimation approach to address the issue of endogeneity in the gasoline price model. Our methodology is motivated by \citep{heij2004econometric}, where instrumental variables are carefully selected based on their relevance and tests for endogeneity. The focus of our analysis is to estimate the price elasticity $\left(\beta_2\right)$ in an online fashion, taking into account the dynamic nature of the data.
It is important to note that the estimation of the price elasticity is carried out using the online Two-Stage Least Squares (\otsls{}) algorithm. This online approach allows us to continually update and refine our estimates as new data becomes available. We leverage instrumental variables, namely RI, RPT, RPN, and RPU, which have been chosen based on their ability to address endogeneity concerns and improve the identification of the price elasticity parameter.
Our approach enables us to capture the dynamic relationship between gasoline consumption and price, taking into account any potential endogeneity issues that may arise.

\begin{align*}
&
    \text{PG}_t 
= 
    c_1+ (\theta_1,\;\theta_2,\;\theta_3,\;\theta_4)\cdot \underbrace{(\text{RI}_t,\; \text{RPT}_t,\; \text{RPN}_t,\; \text{RPU}_t)}_{\text{IVs } \bz_t } +\epsilon_t\tag{First Stage}
\\
&
    \underbrace{\text{GC}_t}_{\text{Outcome } y_t} 
= 
    c_2+ \beta_2 \underbrace{\text{PG}_t}_{\text{Endogenous } x_{t,1}}+\beta_3 \underbrace{\text{RI}_t}_{\text{Exogenous }x_{t,2}}+\eta_t \tag{Second Stage}
\end{align*}

\paragraph{Evaluation metrics.} We use two evaluation metrics to evaluate the goodness of \otsls{} and Online Ridge regressors.
First, we plot the learning curve of \otsls{} and \ridge{} for parameters $\beta_1$ and $\beta_2$, while comparing them with their offline counterparts.
Second, we plot the evolution of $R^2$ of both the estimators, i.e. the proportion of the variation in the dependent variable that is predictable from the independent variables and is mathematically computed in the following way. The sum of squares of residuals, also called the residual sum of squares is
$
\text{SS}_{\mathrm{res}}=\sum_i\left(y_i-\langle \beta, x_i\rangle\right)^2$ and the total sum of squares (proportional to the variance of the data) is
$
\text{SS}_{\text {tot }}=\sum_i\left(y_i-\bar{y}\right)^2
$.
The formal definition of the coefficient of determination is
$
R^2=1-\frac{\text{SS}_{\mathrm{res}}}{\text{SS}_{\mathrm{tot}}}
$.  

\begin{figure*}[ht!]
\centering
    \includegraphics[scale=0.7]{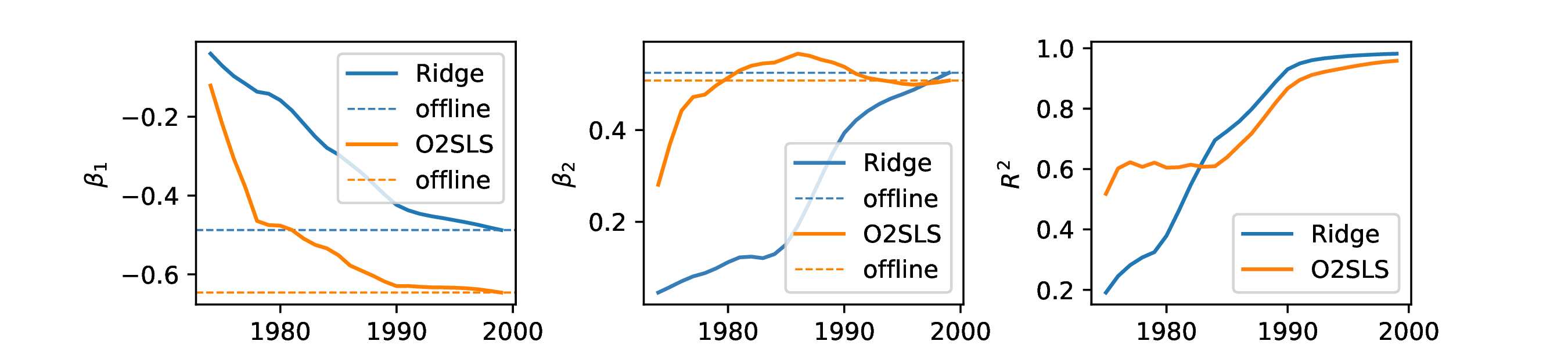}
\caption{Comparison between the estimation of the price elasticity $\beta_1$ (left) and the  coefficient $\beta_2$ of the exogenous variables $\text{RI}_{t}$ (centre) obtained using \otsls{} for online linear estimation and the offline estimate. The dashed line corresponds to the offline estimate obtained by the two Ridge and 2SLS. On the right, we plot the coefficient of determination $R^2$, for the two methods.}\label{gasoline}
\end{figure*}

\paragraph{Results.} Although we do not have a ground truth for the real dataset and cannot average our runs over multiple simulations, several important observations can still be made from the results shown in Figure \ref{gasoline}. Firstly, we observe that the coefficient $\beta_1$ corresponding to the endogenous variable exhibits significant differences between the estimates obtained from Online Ridge and \otsls{}. 
This disparity is expected since \otsls{} is specifically designed to address endogeneity, whereas Online Ridge does not possess this capability. Conversely, we notice that the difference between the estimates for the exogenous variable coefficient $\beta_2$ is less pronounced. 
In both cases, \otsls{} converges to its final value in fewer rounds (years in this dataset) and then remains relatively constant. This behaviour confirms that the dataset is affected by endogeneity, and \otsls{} effectively mitigates this issue.
Secondly, the coefficient of determination approaches values closer to one at earlier stages, indicating that the estimation of the $\beta$ coefficients is more reliable and robust for \otsls{} compared to \ridge{}. This suggests that \otsls{} provides more trustworthy estimates and demonstrates superior performance in handling endogeneity-related challenges.

\vfill

\end{document}